\documentclass{article}
%\pdfoutput=1
\usepackage{arxiv}
\usepackage{graphicx}
\usepackage{float}
\usepackage{hyperref}       % hyperlinks
\usepackage{url}            % simple URL typesetting
\usepackage{booktabs}       % professional-quality tables

\usepackage{tabularx}
\usepackage{amsfonts}       % blackboard math symbols
\usepackage{nicefrac}       % compact symbols for 1/2, etc.
\usepackage{microtype}      % microtypography
\usepackage{xcolor}         % colors
\usepackage{hyperref}       % hyperlinks
\usepackage{url} 
\usepackage{diy}
\usepackage{enumitem}
\usepackage{subcaption}
\usepackage[english]{babel}
\usepackage{romannum}
\usepackage{cleveref}
\usepackage{parskip}
\usepackage{overpic}
\usepackage{geometry}
\geometry{left=2.5cm,right=2.5cm,top=3cm,bottom=3cm}
%%%%%%%%%%%%%%%%%%%%%%%%%%%%%%%%
% THEOREMS
%%%%%%%%%%%%%%%%%%%%%%%%%%%%%%%%
\theoremstyle{remark}
\newtheorem{remark}{Remark}
\theoremstyle{plain}
\newtheorem{theorem}{Theorem}[section]
\newtheorem{lemma}{Lemma}[section]

\theoremstyle{definition}
\newtheorem{definition}{Definition}[section]
\newtheorem{hypothesis}{Induction Hypothesis}[section]

\definecolor{myorange}{RGB}{245,156,74}
\hypersetup{
	colorlinks=true,
	%linkcolor=red,
	filecolor=blue,      
	%urlcolor=red,
	citecolor=-myorange,
}
\usepackage{tcolorbox}
\usepackage{titletoc}
\usepackage[page, header]{appendix}

\usepackage{colortbl}
\definecolor{gray}{rgb}{0.85, 0.85, 0.85}
\newcolumntype{L}{>{\hsize=.6\hsize}X} 
\newcolumntype{R}{>{\hsize=1.5\hsize}X}

\theoremstyle{definition}

%\renewcommand{\appendixpage}{Supplementary Materials}

% \title{Non-Linear Transformers Provably In-Context Learn via Gradient Descent}

% \title{Convergence of In-context Learning of Transformers}

\title{In-Context Convergence of Transformers}

% \author{Yu Huang \\University of Pennsylvania\\ {\tt yuh42@wharton.upenn.edu} 
% \and
% Yuan Cheng  \\National University of Singapore\\ {\tt yuan.cheng@u.nus.edu}
% \and
% Yingbin Liang \\The Ohio State University\\ {\tt  liang889@osu.edu}
% }

\author{Yu Huang\thanks{Department of Statistics and Data Science, University of Pennsylvania. {\tt yuh42@wharton.upenn.edu} }
\and
Yuan Cheng\thanks{National University of Singapore. {\tt yuan.cheng@u.nus.edu}}
\and
Yingbin Liang \thanks{Department of Electrical and Computer Engineering, The Ohio State University. {\tt  liang.889@osu.edu}}
}
% The \author macro works with any number of authors. There are two commands
% used to separate the names and addresses of multiple authors: \And and \AND.
%
% Using \And between authors leaves it to \LaTeX{} to determine where to break
% the lines. Using \AND forces a linebreak at that point. So, if \LaTeX{}
% puts 3 of 4 authors names on the first line, and the last on the second
% line, try using \AND instead of \And before the third author name.

\begin{document}

\maketitle
\begin{center}
    \today
\end{center}
\vspace{0.3cm}
\pagenumbering{arabic}
\begin{abstract}
Transformers have recently revolutionized many domains in modern machine learning and one salient discovery is {their} remarkable in-context learning capability, where models can solve an unseen task by utilizing task-specific prompts without further parameter{s} fine-tuning. 
This also inspired recent theoretical studies aiming to understand the in-context learning mechanism of transformers, which however focused only on {\em linear} transformers. 
%Despite these efforts, the intricate training behavior of \textit{softmax}-based transformers, remains a mystery.  
In this work, we take the first step toward studying the learning dynamics of a one-layer transformer with {\em softmax} attention trained via gradient descent in order to in-context learn linear function classes. 
We consider a structured data model, where each token is randomly sampled from a set of feature vectors in either balanced or imbalanced fashion. For data with balanced features, we establish the finite-time convergence guarantee with near-zero prediction error by navigating our analysis over two phases of the training dynamics of the attention map.
%in the balanced feature case, where all features hold equal importance.  
More notably, for data with imbalanced features, we show that the learning dynamics take a stage-wise convergence process, where the transformer first converges to a near-zero prediction error for the query tokens of dominant features, and then converges later to a near-zero prediction error for the query tokens of under-represented features, respectively via one and four training phases. Our proof features new techniques for analyzing the competing strengths of two types of attention weights, the change of which determines different training phases.

\end{abstract}

\section{Introduction}
Transformers~\cite{vaswani2017attention} have emerged as the foundational architectures in various domains, %over the past few years, 
including natural language processing~\cite{devlin2018bert,openai2023gpt4}, computer vision~\cite{dosovitskiy2020image, he2022masked}, %touvron2021training,
reinforcement learning~\cite{chen2021decision,janner2021offline}, and so on. Recently, large language models (LLMs) based on transformers have exhibited remarkable in-context learning capabilities, where the model can solve a new task solely through inference %without requiring any parameter updates, 
based on prompts of the task without further fine-tuning~\cite{brown2020language}.%black2022gpt}.  

Such striking abilities have inspired a recent line of research to understand the underlying mechanisms of in-context learning from various aspects~\cite{garg2022can,min2022rethinking,wei2023larger,von2023transformers, xie2021explanation}. Among these studies, the pioneering work of \cite{garg2022can} empirically studied in-context learning via an interpretable framework, highlighting the capacity of transformers to acquire in-context knowledge of {linear and some more complex function classes.} Specifically, 
%when discussing a model in-context learning a function class $\cF$, 
they showed that an in-context trained model over a function class $\cF$ can accurately predict the function value $f\left(\xq\right)$ of a new query token $\xq$ for most $f\in\cF$ by using a prompt sequence including in-context input-label pairs along with the query token $\left(x_1, f\left(x_1\right), \ldots, x_N, f\left(x_N\right), \xq\right)$. 

Built on this theoretically amenable setting, many follow-up works explored theoretical properties of in-context learning of transformers from different perspectives such as expressive power~\cite{akyurek2022learning, giannou2023looped}, generalization~\cite{li2023transformers}, internal mechanisms~\cite{von2023transformers,bai2023transformers}, etc. Specially, a few recent studies~\cite{zhang2023trained,mahankali2023one,ahn2023transformers} made interesting progress towards understanding the training dynamics of transformers for in-context learning%\footnote{More detailed discussions for related work can be found in \Cref{App:A}.}
. However, those studies focused only on `linear' transformers, 
%Despite substantial efforts on this topic so far, none of these works have ventured into the intricate training behavior of transformers, particularly with regard to its pivotal component—\textit{softmax attention layer}, which may be ubiquitous for demystifying the ability of transformers over other network architectures. 
%However, none of those works have ventured into the intricate training behavior of transformers, particularly with regard to their pivotal component—\textit{softmax attention layer}, which may be ubiquitous for demystifying the ability of transformers over other network architectures. 
and does not capture the crucial role of the `softmax' mapping, which lies in the core design of 
%without softmax, which is ubiquitous for demystifying the ability of 
transformers to be advantageous over other network architectures.
Therefore, the following fundamental problem still remains largely open:
\begin{center}
   \it How do \textbf{softmax}-based transformers trained via gradient descent learn in-context?
\end{center}
This paper takes the first step toward addressing this problem by investigating the learning dynamics of a single-layer transformer with {\em softmax} attention trained by gradient descent (GD) for in-context learning. We focus on the setting with training prompts generated from linear regression models as in \cite{garg2022can}, and with structured input data, where each token is randomly selected from a set of feature vectors $\{v_k\}_{k=1}^{K}$ with probability $\{p_k\}_{k=1}^{K}$, respectively. We then train the transformer over the squared loss of prediction error using GD. We study the training dynamics under both balanced and imbalanced feature distributions, 
%i.e., balanced and imbalanced, 
and characterize the in-context learning ability for both settings. %We characterize the mechanism behind the emergence of in-context learning ability for both data distributions: we show that under the structured data model 
%the learning parameters of the self-attention module grow in the direction that projects the data
We highlight our contributions as follows.
%under  the setting of linear regression encoded via in-context learning motivated by \cite{garg2022can}.
%a similar in-context learning framework as \cite{garg2022can}.
\paragraph{Our Contributions.}
\begin{itemize}
    \item We first establish the convergence guarantee for the setting with balanced features, where $p_{k}=\Theta\left(\frac{1}{K}\right)$ for each $k\in[K]$, and characterize the training evolution of the attention map into a two-phase dynamic process. In the first phase, for each $k\in[K]$, the parameters of the self-attention module undergo fast growth, aligning the query token featuring $v_k$ with input tokens featuring $v_k$ rapidly disregarding other feature directions. In the second phase, the loss of prediction error converges to a near-minimum value.
    % This phase is followed by convergence.
    %The in-context mechanism behind the balanced case is straightforward: the learning parameters of the self-attention module will constantly grow in the direction that projects the query token featuring $v_k$ into the 
    \item 
    %More importantly, 
    We then prove the convergence for the setting with imbalanced features, where one feature dominates, say $v_1$ with $p_1=\Theta(1)$, while others are under-represented with $p_k=\Theta\left(\frac{1}{K}\right)$ for $k>1$, which serves as a remarkable showcase of the in-context learning capabilities of transformers. We demonstrate that the learning dynamics display a {\em stage-wise} convergence process. Initially, the transformer quickly attains near-zero prediction error for the query tokens of dominant features, and then converges to near-zero prediction error for the query tokens of under-represented features, irrespective of their infrequent occurrence, through one and four phases, respectively.
    
    %We show even when the query token originates from the under-represented class and lacks frequency bias, transformers can achieve near-zero prediction errors. We precisely identify a complex four-phase behavior for the under-represented feature $v_k$ with $k>1$. In particular, initially, the transformer directs the query token featuring $v_k$ in a direction opposite to that of $v_1$ (decrease of the dominant feature). there's a transition as the leading influence shifts to start aligning $v_k$ with itself (switching of leading influence). Following this, the alignment effect persists and strengthens (growth of target feature),  ultimately leading to nearly zero prediction errors (convergence).

% More notably, for the imbalanced feature model, we show that the learning dynamics exhibit a stage-wise convergence process, where the transformer first converges to a near-optimal model for the query tokens of dominant features, and then converges later to a near-optimal model for the query tokens of under-represented features, respectively via one and four training phases. Our proof features new techniques for analyzing the competing strengths of two types of attention weights, the change of which determines different training phases.

     %,  characterizing the attention dynamics concerning the dominant feature $v_1$ and the target underrepresented feature $v_k$ throughout training.
    \item Our analysis hinges on a novel proof technique that %meticulously scrutinizes 
    characterizes the {\em softmax} attention dynamics via the interplay between two types of bilinear attention weights: `weight of query token and its target feature' and `weight of query token and off-target features'. 
    %Comparison between these two weights %determines 
    Which weight plays a dominant role in the attention dynamics can change over the learning process, resulting in different training phases.
     %the query token targeting feature with the other features during training.
   % This approach draws inspiration from feature learning analyses in the theory of neural networks~\cite{allen2020towards}, which enables us to handle the intricate softmax attention structures. 
   Our analysis tools may be of independent interest and hold the potential to study various other problems 
   %contribute to addressing challenges across various problems 
   involving transformer architectures.

% bilinear attention weight of query token and its target feature

% of query token and off-target features.
    
\end{itemize}
\textbf{Notations.} We let $[K]:=\{1,2, \ldots, K\}$. We use capital letters for matrices (e.g., $A$), and lowercase letters for vectors and scalars (e.g., $a$). For a general matrix $A$, we use $A_{i}$ to represent the $i$-th column of $A$ and $A_{i:j}$ to indicate a collection of columns spanning from $i$ to $j$.  %$\mathbf{I}_{d\times d}$ and $\mathbf{0}_{d\times d}$ denote the identity and zero matrix of size $d\times d$. 
We use $\mathbf{1}\{\cdot\}$ to denote the indicator function.
We use $O(K)$, $\Omega(K)$, and $\Theta(K)$ to omit universal constants concerning the variable $K$. %, while the $\widetilde{O}$, $\widetilde{\Omega}$, and $\widetilde{\Theta}$ notations serve to obfuscate polynomial factors of $\log(K)$.
 %We use $O, \Omega, \Theta$ notations to hide universal constants with respect to $K$ or $N$ and $\widetilde{O}, \widetilde{\Omega}, \widetilde{\Theta}$ notations to hide polynomial factors of $\log d$. %We denote $b=o(1)$ if $b \rightarrow 0$ when $K \rightarrow \infty$. 
 We use $\poly(K)$ and $\polylog(K)$ to denote large constant-degree polynomials of $K$ and $\log(K)$, respectively. %$w.h.p$ means with probability at least $1-e^{-\Omega(\poly(K))}$. 
 Given $h(x)\leq 0$ and $g(x)>0$, we denote $h(x)=-\Omega(g(x))$ if there exists some constant $C_1>0$ and $a_1$, s.t. $|h(x)|\geq C_1g(x)$ for all $x\geq a_1$; $h(x)=-O(g(x))$ if there exists some constant $C_2>0$ and $a_2$, s.t. $|h(x)|\leq C_2g(x)$ for all $x\geq a_2$.
\section{Problem Setup}
In this section, we present our problem formulations, including the in-context learning framework, one-layer transformer architecture, and the training settings we consider in this paper. 
\subsection{In-Context Learning Framework}
We adopt the well-established in-context learning framework as given in~\cite{garg2022can}. The objective is to enable the training of models capable of in-context learning 
within a specified function class $\mathcal{F}$, where the functions and input data are sampled respectively by the distributions $D_{\mathcal{F}}$ and $D_{\mathcal{X}}$. Specifically, the process is initiated by generating random training prompts as follows. For each prompt, we first sample a random function $f$ from the class according to the distribution $D_{\mathcal{F}}$. We then create a set of random inputs $x_1, \ldots, x_{N}$ and query $x_{\text {query }}$, all drawn independently by $D_{\mathcal{X}}$. Finally, we compute the value of function $f$ on these inputs to construct the prompt $P=\left(x_1, y_1, \ldots, x_{N}, y_N, \xq\right)$, where $y_{i}=f(x_i)$. The goal for an in-context learner is to use the prompt to form a prediction $\widehat{y}\left(\xq\right)$ for the query such that $\widehat{y}\left(\xq\right) \approx f\left(\xq\right)$.

\paragraph{Task Distribution.} In this work, our focus is on the task of linear functions  defined as $\mathcal{F}=\left\{f: \cX\rightarrow \mathbb{R} 
 \mid f(x)=\langle w,  x \rangle \text{ with } w \in 
 \mathbb{R}^d, \cX\subset \mathbb{R}^d \right\}$, which is widely adopted in recent studies for in-context learning~\cite{ahn2023transformers,zhang2023trained,mahankali2023one}. For each prompt, the task-specific weight $w$ is independently drawn from a task distribution $\cD_{\Omega}$ with zero mean and identity covariance matrix $\mathbf{I}_{d\times d}$. %\yh{recheck this part.} %For each prompt, the task weight $w$ for $f(\cdot)$ is drawn independently from a task distribution $\cD_{\Omega}$ with a zero mean and identity covariance matrix $\mathbf{I}_{d\times d}$.  %We implement the following sampling procedure for our training prompt, structured as $P=\left(x_{1}, f\left(x_{1}\right), \ldots, x_{N}, f\left(x_{N}\right), \xq\right)$. Here, the task weight $w$ for $f(\cdot)$ is drawn independently from a task distribution $\cD_{\Omega}$ with a zero mean and identity covariance matrix $\mathbf{I}_{d\times d}$. Both inputs $\{x_{i}\}_{i=1}^{N}$ and query $\xq$ are i.i.d samples from some distribution $\cD_{\cX}$, where the sample space $\cX\subset\mathbb{R}^d$. %s: Each training prompt, indexed by $\tau \in [\cT]$ where $\cT\in\mathbb{N}$,  is structured as $P_\tau=\left(x_{\tau, 1}, f_\tau\left(x_{\tau, 1}\right), \ldots, x_{\tau, N}, f_\tau\left(x_{\tau, N}\right), x_{\tau, \text{query}}\right)$, where task weights $w_\tau $ are i.i.d. sampled by a task distribution $\cD_{\Omega}$ with a zero mean and covariance matrix $\mathbf{I}_{d\times d}$, inputs $\{x_{\tau, i}\}_{i=1}^{N}$ and query $x_{\tau, \text{query}}$ are i.i.d samples drawn by some distribution $\cD_{\cX}$, where the sample space $\cX\subset\mathbb{R}^d$. The collection of training prompts is denoted as $\{P_{\tau}\}_{\tau = 1}^{\cT}$.  %, and labels $h_\tau(x)=\left\langle w_\tau, x\right\rangle$.
\paragraph{Data Distribution.} To specify the data distribution $\cD_{\cX}$, we consider a set of distinct features $\{v_k\in \mathbb{R}^{d}, k=1,\ldots,K\}$, where all features are orthonormal vectors. Each data point $x$ is sampled from the feature set with the probability ${p_k}$ for sampling $v_k$, where $p_k\in (0,1)$ for $k\in[K]$ and $\sum_{k\in[K]}p_k=1$.
%We introduce a set of probabilities ${p_k}$, where $p_k\in (0,1)$ for $k\in[K]$, satisfying the constraint $\sum_{k\in[K]}p_k=1$. For every $x$ sampled from $\mathcal{D}_{\cX}$, $x$ is associated with the $k$-th feature $v_k$ with the probability of $p_k$ for $k\in[K].$ 
%This multi-view \yl{why this is multi-view?} 
Such a data model has been widely employed in the theoretical studies %exploration of various algorithmic facets 
of deep learning, including ensemble methods~\cite{allen2022towards}, multi-modal learning~\cite{huang2022modality}, vision transformers~\cite{li2023theoretical}, etc.

\subsection{One-Layer Transformer Architecture}
To present the one-layer transformer model we consider in this work, we first introduce the self-attention mechanism~\cite{bahdanau2014neural,vaswani2017attention} for the transformer model.

\begin{definition}[Self-Attention (SA) Mechanism]
    A self-attention layer~\cite{bahdanau2014neural,vaswani2017attention} in the single-head case with width $d_e$ consists of the following components: a key matrix $W_{K}\in \mathbb{R}^{d_e\times d_e}$, a query matrix $W_{Q}\in \mathbb{R}^{d_e\times d_e}$, and a value matrix $W_{V}\in \mathbb{R}^{d_e\times d_e}$.
    %is  defined as follow. 
    Given a prompt %$P=\left(x_1, y_1, \ldots, x_N, y_N, x_{\text {query }}\right)$ 
    $P$ of length $N$, let $E\in\mathbb{R}^{d_e\times d_N}$ be an embedding matrix of the prompt $P$, and the self-attention mechanism will output:
    \begin{align}
        F_{\text{SA}}\left(E ; W^K, W^Q, W^V\right) = W^V E \cdot \operatorname{softmax}\left({\left(W^K E\right)^{\top} W^Q E}\right),\label{SA}
    \end{align}
    where %$\rho$ is a normalized factor \yl{no $\rho$ in the above equation}, and
    the $\operatorname{softmax}(\cdot)$ function is applied column-wisely, i.e., for a vector input $z$, the $i$-th entry of $\operatorname{softmax}(z)$ is given by $e^{z_i} / \sum_s e^{z_s}$.
\end{definition}

\paragraph{Embeddings.} For in-context learning, given a prompt $P=\left(x_1, y_1, \ldots, x_N, y_N, x_{\text {query }}\right)$, a natural token embedding is to stack $x_i\in \mathbb{R}^{d}$ and $y_i$
into the first $N$ columns. The final column consists of  $\xq \in \mathbb{R}^{d}$ and $0$. Formally,
$$
E=E(P)=\left(\begin{array}{llllc}
x_1 & x_2 & \cdots & x_N & \xq \\
y_1 & y_2 & \cdots & y_N & 0
\end{array}\right) \in \mathbb{R}^{(d+1) \times(N+1)}.
$$
Therefore,  $d_{N}=N+1$ and $d_{e}=d+1$ in the above embedding. Let us further denote the first $d$ rows of $E$ as $E^x(P)\in\mathbb{R}^{d\times (N+1)}$  and the last row of $E$ as $E^y(P)\in\mathbb{R}^{1\times (N+1)}$. Then we write $E(P)=\{E^x(P),E^y(P)\}$. We omit the dependency on $P$ for $E(P)$, $E^x(P)$ and $E^x(P)$ when there is no ambiguity. %\yh{check this part}%We write $E^x(P)$ as $E^x$ and $E^y(P)$ as $E^y$ for simplicity. We also write $E(P)$ as $E$ when there is no ambiguity.

We next instantiate additional operations and certain parameter settings based on the general SA mechanism (\ref{SA}) for our one-layer transformer model to mitigate unnecessary complications in theoretical analysis while keeping the most critical component of the SA mechanism. 
\paragraph{Masking.} Let $M(\cdot)$ denote the masking operation, which masks (removes) the last column of the entry matrix. In other words, for a given matrix $A\in \mathbb{R}^{(d+1)\times (N+1)}$, $M(A)$ yields  $A_{1:N}\in\mathbb{R}^{(d+1)\times N}$.  
We will first mask the embedding matrix $E$ before its multiplication with the key matrix $W^{K}$ and the value matrix $W^V$, which results in $W^{K}M(E)$ and $W^{V}M(E)$,
%We will apply the masking operation to the embedding matrix $E$ before its multiplication with the key matrix $W^{K}$ and the value matrix $W^V$, i.e., $W^{K}M(E)$ and $W^{V}M(E)$, 
in order to prevent the query token from attending to itself. This approach has been commonly taken in previous works~\cite{tian2023scan,mahankali2023one,von2023transformers,kitaev2019reformer}.
\paragraph{Reparameterization.}
We consolidate the query and key matrices into one matrix denoted as $W^{KQ}\in \mathbb{R}^{(d+1)\times (d+1)}$, often taken in recent theoretical frameworks 
%to elucidate the capabilities of attention models
~\cite{zhang2023trained,jelassi2022vision,tian2023scan}. Furthermore, we consider $W^V$ and $W^{KQ}$ in the following specific forms:
\begin{align}
    W^{V}=\left(\begin{array}{cc}
        0_{d \times d} & 0_d \\
        0_d^{\top} & \nu
        \end{array}\right), \quad W^{K Q}=\left(\begin{array}{cc}
    Q & 0_d \\
        0_d^{\top} & 0
        \end{array}\right),\label{specialform}
\end{align}
where $\nu\in\mathbb{R}$ and $Q\in \mathbb{R}^{d\times d}$. The above structures of $W^{V}$ and $W^{KQ}$ are inspired by the recent study~\cite{zhang2023trained}, which showed that such structured matrices achieve the global optimum in the {\em linear} SA model. Furthermore, we set $\nu=1$ (where $\nu$ is the only parameter in $W^V$) and do not update it during the training. The reason is twofold: 1) this aligns with the common practice in theoretical studies of deep learning, where the last linear layer is often kept fixed to focus on the analysis of hidden layers. Our objective remains highly nonconvex and challenging even with a fixed $\nu$; and 2) the form of the global optimum outlined in recent work~\cite{zhang2023trained} suggests that for {\em linear} SA, the optimal solution for $\nu$ serves as a scaling factor to normalize the output of linear attention. In our case, the output of {\em softmax} attention is already inherently normalized.   
% \paragraph{Remark.}(Nealy no loss of optimality) Despite the specific form of $\{W^V, W^{KQ}\}$ that we take, the minimum of the loss function $L^{*}=\Theta(e^{-\poly(K)})$ (as shown in \Cref{thm: bal K}) implies that such a specific form at most incurs an error of $\Theta(e^{-\poly(K)})$ that vanishes exponentially with $K$,  compared to the minimum loss over the general parameter space $\{W^V, W^{K}, W^{Q}\}$. Therefore,  for our nonlinear {\em softmax} SA, such specific parameterization does not lose optimality. 
\begin{remark}[Nealy no loss of optimality]
\em Despite the specific form of $\{W^V, W^{KQ}\}$ that we take, the minimum of the loss function $L^{*}=\Theta(e^{-\poly(K)})$ (as shown in \Cref{thm: bal K}) implies that such a specific form at most incurs an error of $\Theta(e^{-\poly(K)})$ that vanishes exponentially with $K$,  compared to the minimum loss over the general parameter space $\{W^V, W^{K}, W^{Q}\}$. Therefore,  for our nonlinear {\em softmax} SA, such specific parameterization does not lose optimality. %up to only a non-dominant error of $ \Theta(e^{-\poly(K)})$, which vanishes as $K$ becomes large. 
\end{remark}

% Furthermore, we do not update the parameter $\nu$ during the training, and the reason is twofold: 1) this aligns with the common practice in theoretical studies of deep learning, where the last linear layer is often kept fixed to focus on the analysis of hidden layers. Our objective remains highly nonconvex and challenging even with a fixed $\nu$; and 2) the form of the global optimum outlined in recent work~\cite{zhang2023trained} suggests that for {\em linear} SA, the optimal solution for $\nu$ serves as a scaling factor to normalize the output of linear attention. In our case, the output of {\em softmax} attention is already inherently normalized.

%\textbf{Justification of special forms of $W^{V}$ and $W^{KQ}$ in \cref{specialform}}. Despite the specific form of $\theta=\{1,Q\}$ that we take, the minimum of the loss function $L^{*}=\Theta(e^{-\poly(K)})$ implies that such specific form at most incurs an error of $\Theta(e^{-\poly(K)})$ that vanishes exponentially with $K$,  compared to the minimum loss over the general parameter space $\{W^V, W^{KQ}\}$.

With the aforementioned masking operations and reparameterization, the overall transformer model consisting of a single SA layer can be recast in the parameterization of $\theta=\{1, Q\}$ as follows: %and is summarized as follows:
\begin{equation}\label{model}
F_{\text {SA}}\left(E ;\theta\right)= {M(E^y)}\cdot \operatorname{softmax}\left({{M(E^x)}^{\top} Q E^x}%{\rho}
\right).
\end{equation}
Such a reparameterization separates the label $E^{y}$ from the $\operatorname{softmax}$ operator while maintaining simultaneous processing of both input $E^{x}$ and label $E^{y}$ information. 
The prediction for the token $\xq$ %denoted as $\widehat{y}_{\text{query}}$ 
will be the last entry of $F_{\text{SA}}$, namely,
$$
\yq=\yq(E;\theta)=\left[F_{\text{SA}}(E ; \theta)\right]_{(N+1)} \text {. }
$$

Henceforth, we may omit the reference to $E$ and $\theta$,  and use  $\widehat{y}_{\text{query}}$ if it is not ambiguous.

\subsection{Training Settings}
 %\ylfix{should we directly introduce population loss without the finite-sum loss?}
% Given the collection of training prompts $\{P_{\tau}
% \}_{\tau = 1}^{\cT}$, we define the empirical squared loss for the linear task function of the transformer model $F_{\text{SA}}$ as follows:
% \begin{align}
%     \hat{L}(\theta)=\frac{1}{2}\sum^{\cT}_{\tau=1}\left[\left(\hat{y}_{\tau, \text{query}}-\left\langle w_\tau, x_{\tau, \text{query}}\right\rangle\right)^2\right].\label{eq:empirical-obj}
% \end{align}
%Instead of directly analyzing (\ref{eq:empirical-obj}), we introduce a proxy where we minimize the population risk:
%We consider minimizing the population risk squared loss for the linear task function of the transformer model $F_{\text{SA}}$, as a proxy objective for analytical convenience~\cite{zhang2023trained,ahn2023transformers},
%which is defined as follows:
\paragraph{Loss Function.} To train the transformer model $F_{\text{SA}}$ over linear regression tasks, we minimize the following squared loss of the prediction error, which has also been taken by~\cite{zhang2023trained,ahn2023transformers}:
%For the transformer model $F_{\text{SA}}$ applied to linear tasks, we consider minimizing the population risk squared loss, which is a proxy objective for analytical convenience~\cite{zhang2023trained,ahn2023transformers}. Formally, the loss is defined as:
\begin{align}
    L(\theta)=\frac{1}{2} \mathbb{E}_{w\sim \cD_{\Omega},\left\{x_{i}\right\}_{i=1}^N \cup\left\{x_{\text {query }}\right\}\sim\mathcal{D}_{\cX}^{N+1}}\left[\left(\widehat{y}_{\text { query }}-\left\langle w, x_{\text { query }}\right\rangle\right)^2\right]\label{eq:obj}
\end{align}
where the expectation is taken with respect to the prompt $P$ including input and query tokens $\left\{x_{i}\right\}_{i=1}^N \cup\left\{x_{\text {query }}\right\}$ and the weight vector $w$. In the following, we omit subscripts of the expectation to simplify the notation.

\paragraph{Training Algorithm.} The above learning objective in \cref{eq:obj} is minimized via GD with the learning rate $\eta$. At $t=0$, we initialize $Q^{(0)}$ as zero matrix $\mathbf{0}_{d\times d}$. The parameter is updated as follows:
\begin{align*}
   \theta^{(t+1)} =\theta^{(t)}-\eta \nabla_{\theta} L(\theta^{(t)}).
\end{align*}
%\textbf{Initializations.}\label{para: init}
%At $t=0$, we initialize $Q^{(0)}$ as zero matrix $\mathbf{0}_{d\times d}$. %such that $v_{m}^{\top}Q^{(0)}v_n=0$ for any $m,n\in[K]$ and $\nu^{(0)}=1$. \yl{Does such $Q^{(0)}$ exist all the time?} It is worth noting that this particular initialization does not, in fact, sacrifice generality. If one initializes every entry of $Q$ from i.i.d Gaussian, i.e.  $Q^{(0)}_{i, j} \stackrel{\text { i.i.d }}{\sim} \mathcal{N}\left(0, \sigma^2\right)$, which is standard in practice and consider $d=\operatorname{poly}(K)$ and $\sigma^2=\frac{1}{d}$, then with high probability, $\left|v_{m}^{\top}Q^{(0)}v_n\right|=\widetilde{O}\left(\frac{1}{\operatorname{poly}(K)}\right)$ for any $m,n\in[K]$ . It is reasonable to set $v_{m}^{\top}Q^{(0)}v_n=0$ since $K \gg 1$. 
%Furthermore, we do not update the parameter $\nu$ during the training, and the reason is twofold: 1) this aligns with the common practice in theoretical studies of deep learning, where the last linear layer is often kept fixed to focus on the analysis of hidden layers. Our objective remains highly nonconvex and challenging even with a fixed $\nu$; and 2) the form of the global optimum outlined in recent work~\cite{zhang2023trained} suggests that for {\em linear} SA, the optimal solution for $\nu$ serves as a scaling factor to normalize the output of linear attention. In our case, the output of {\em softmax} attention is already inherently normalized.

\section{Main Results}

%In this section, we shall present our main theoretical results.  We start with the case of the balanced features, wherein each feature is sampled in an almost uniformly random manner when drawing $x$ from $\cD_{\cX}$. Subsequently, we delve into a more intricate scenario involving unbalanced features, where certain features exhibit dominance, i.e., they have a significantly higher probability of being sampled compared to other features.

In this section, we characterize the convergence of in-context learning by GD for the settings with balanced and imbalanced features, respectively.
%of the balanced features, wherein each feature is sampled in an almost uniformly random manner when drawing $x$ from $\cD_{\cX}$. Subsequently, we delve into a more intricate scenario involving unbalanced features, where certain features exhibit dominance, i.e., they have a significantly higher probability of being sampled compared to other features.

To measure the degree to which the query token $\xq$ attends to the specific input token and to a certain class of features, we define the following notions of the attention scores.
\begin{definition}[Attention Score]\label{attn} Given a prompt $P=(x_1,y_1,\cdots,x_N,y_N, x_{\text{query}})$ and its corresponding embedding $E$, then at time $t$, for $F_{\text{SA}}$ with parameter $\theta^{(t)}$, we define the attention score as follows.
  \begin{enumerate}[label={\arabic*.}]
      \item Given $i\in[N]$, the attention score for the $i$-th token $x_{i}$ is 
$$\attn_{i}(\theta^{(t)};E):=\left[\operatorname{softmax}(M(E^x)^{\top}Q^{(t)}E^{x})\right]_{i}=\frac{e^{{E^{x}_{i}}^{\top}Q^{(t)}E^{x}_{N+1}}}{\sum_{j\in [N]}e^{{E^{x}_{j}}^{\top}Q^{(t)}E^{x}_{N+1}}}.
$$

\item For $k\in[K]$, denote $\cV_k(P)\subset[N]$ as the index set for input tokens, such that $x_i=v_k$ for $i\in\cV_k(P)$. %We omit the notation for $P$ in the following analysis.   
Then the attention score for the $k$-th feature is given by
$$
\Attn_{k}(\theta^{(t)};E):=\sum_{i\in\cV_k(P)}\attn_{i}(\theta^{(t)};E).
$$
  \end{enumerate}
  For simplicity, we represent $\attn_{i}(\theta^{(t)};E)$ and $\Attn_{k}(\theta^{(t)};E)$ as $\attn_{i}^{(t)}$ and $\Attn_{k}^{(t)}$, respectively, and denote $\cV_k(P)$ as $\cV_{k}$. We also rewrite the prediction output at time $t$ as follows:%\vspace{-0.35cm}
  \begin{align}\label{eq:output}
      \hat{y}^{(t)}_{\text {query}}
      &  =\sum_{i\in[N]} \attn^{(t)}_{i} {y}_{i}=\sum_{k\in[K]}\Attn^{(t)}_k\langle w,v_k\rangle.
  \end{align}
\end{definition}
%\vspace{-0.35cm}
% At the core of our results and analysis, it is crucial to consider the we first introduce the following notations and concepts. 

%, i.e. $p_{k}=\Theta\left(\frac{1}{K}\right)$ for each $k\in[K]$.
\subsection{In-Context Learning with Balanced Features}\label{sec3.1}
% After characterizing the training dynamics of a 1-layer transformer with 2 features via gradient descent over population loss, we extend the result to general $K$ features. 

In this subsection, we study in-context learning with {\em balanced} features, where the probabilities of sampling all $K$ features are in the same order, i.e., $p_k=\Theta\left(\frac{1}{K}\right)$ for each $k \in [K]$. In such a setting, each feature appears equally likely in the prompt, ensuring their equal recognition. %Formally, in the balanced case, when drawing data, the probabilities related to features are in the same order, i.e. $p_k=\Theta\left(\frac{1}{K}\right)$ for each $k \in [K]$. 
The following theorem characterizes the convergence of GD.
\begin{theorem}[In-context Learning with Balanced Features]\label{thm: bal K} 
Suppose $p_k=\Theta\left(\frac{1}{K}\right)$ for $k \in [K]$.
For any $0<\epsilon<1$, suppose $N\geq \poly(K)$ and $\polylog(K)\gg \log(\frac{1}{\epsilon}) $. We apply GD to train the loss function given in \cref{eq:obj}.
%a one-layer transformer with softmax attention. 
Then with at most $T^*=O(\frac{\log(K)K^2}{\eta}+\frac{K \log\left(K\epsilon^{-\frac{1}{2}}\right)}{\epsilon\eta})$ iterations, we have
\begin{enumerate}[label={\arabic*.}]
%\begin{enumerate}[topsep=0pt, left=0pt] 
\item The loss converges: $L(\theta^{(T^*)})-L^* \leq \epsilon$, where $L^{*}=\Theta(e^{-\poly(K)})$ is the global minimum of the population loss in \cref{eq:obj}. 
\item Attention score concentrates: if $\xq=v_k$, then with probability at least  $1-e^{-\Omega(\poly(K))}$\footnote{The randomness originates from the first $N$ input tokens in the test prompt.},%\yl{with what probability?} 
 the one-layer transformer nearly ``pays all attention" to input tokens featuring $v_k$, i.e., $(1-\Attn_k^{(T^*)})^2 \leq O(\epsilon)$. 
\end{enumerate}
%Here $L^{*}=\Theta(e^{-\poly(K)})$ stands for the infimum of \cref{eq:obj}.
\end{theorem}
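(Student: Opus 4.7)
The plan is to exploit the permutation symmetry of the balanced setting to reduce the matrix dynamics of $Q^{(t)}$ to a one-dimensional ODE in $\Delta^{(t)}:=a^{(t)}-b^{(t)}$, where $a^{(t)}=v_m^\top Q^{(t)} v_m$ and $b^{(t)}=v_l^\top Q^{(t)} v_m$ (for $l\neq m$) are exactly the paper's two ``competing'' bilinear attention weights, and then to split the ODE into two regimes matching the two pieces of $T^*$.

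First, I would set up the symmetry reduction. Because the population loss is invariant under every permutation of $\{v_k\}$ (orthonormal features, equal $p_k$, zero-mean identity-covariance $w$), because $Q^{(0)}=0$ is symmetric, and because every gradient step adds outer products $x_i x_{\text{query}}^\top$ supported in $\mathrm{span}\{v_l v_m^\top\}$, each population iterate $Q^{(t)}$ stays of the form $a^{(t)}\sum_m v_m v_m^\top + b^{(t)}\sum_{l\neq m} v_l v_m^\top$. Conditioning on $\xq=v_m$ and using the concentration $|\cV_k|=(1\pm o(1))Np_k$ that holds with probability $1-e^{-\Omega(\poly(K))}$ whenever $N\ge\poly(K)$, the softmax scores collapse to
\begin{align*}
\Attn_m^{(t)}\approx \frac{u}{u+(K-1)v}, \qquad \Attn_l^{(t)}\approx \frac{v}{u+(K-1)v}\ \text{ for } l\neq m,
\end{align*}
where $u:=e^{a^{(t)}}$, $v:=e^{b^{(t)}}$, $Z:=u+(K-1)v$, so everything is driven by the scalar $\Delta^{(t)}$.

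Next, I would compute the population gradient in closed form. Taking the expectation over $w$ first yields $\mathbb{E}_w[(\hat y_{\text{query}}-\langle w,\xq\rangle)^2\mid \xq=v_m]=(1-\Attn_m)^2+\sum_{k\neq m}\Attn_k^2$. A direct computation of $\nabla_Q L$, combined with the symmetry/concentration above, then gives to leading order
\begin{align*}
\frac{a^{(t+1)}-a^{(t)}}{\eta}\,\asymp\,\frac{K(K-1)\,u v^2}{Z^3}, \qquad \frac{b^{(t+1)}-b^{(t)}}{\eta}\,\asymp\,-\frac{K\,u v^2}{Z^3}.
\end{align*}
The crucial algebraic fact is the factor $K-1$ between the two rates: the target weight $a^{(t)}$ grows roughly $K$ times faster than the off-target weight $b^{(t)}$ decays, so $\Delta^{(t)}$ evolves at rate $\asymp Kuv^2/Z^3$ per unit of $\eta$. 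In Phase 1, while $\Delta^{(t)}\le\log K$ so $Z\asymp (K-1)v$, this rate reduces to $e^{\Delta}/K^2$, meaning $e^{-\Delta^{(t)}}$ drops by $\asymp \eta/K^2$ per step and falls from $1$ to $1/K$ in $O(K^2\log K/\eta)$ iterations (the attention alignment phase). In Phase 2, with $\Delta^{(t)}>\log K$ so $Z\asymp u$, the rate becomes $K e^{-2\Delta}$, so $e^{2\Delta^{(t)}}$ grows by $\asymp K\eta$ per step and reaches $K^2/\epsilon$ within $O(K\log(K\epsilon^{-1/2})/(\epsilon\eta))$ additional iterations. Summing yields the stated $T^*$.

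Finally, I would translate Phase 2 into both conclusions. Under the symmetric parametrization the population loss is $\Theta(K^2 e^{-2\Delta^{(t)}})$ plus the irreducible $|\cV_k|$-fluctuation contribution that pins down $L^*=\Theta(e^{-\poly(K)})$, so reaching $\Delta^{(T^*)}\ge\log(K\epsilon^{-1/2})$ delivers $L(\theta^{(T^*)})-L^*\le\epsilon$; on any test prompt whose feature counts concentrate (probability $1-e^{-\Omega(\poly(K))}$), the same bound on $\Delta^{(T^*)}$ yields $(1-\Attn_k^{(T^*)})^2\le O(\epsilon)$. The hardest part is propagating the symmetry and concentration approximations cleanly into Phase 2, where the driving signal $K e^{-2\Delta}$ is already exponentially small and could be drowned by $|\cV_k|$-fluctuations or by the discrete-GD Taylor remainder. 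I would handle this by picking $N\ge\poly(K)$ large enough and $\eta$ small enough that both error sources are $o(Ke^{-2\Delta})$ uniformly across Phase 2, and by maintaining, as an inductive invariant on $t$, sharp two-sided bounds on $\Delta^{(t)}$ robust to these perturbations.
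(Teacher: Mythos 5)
Your proposal takes a genuinely different route from the paper, but it has a real gap in scope: the permutation-symmetry reduction you rely on requires $p_k = 1/K$ \emph{exactly} for all $k$, whereas the theorem is stated for the weaker condition $p_k = \Theta(1/K)$, where the sampling probabilities can differ by constant factors. Under the theorem's hypothesis the data distribution, and hence the population loss, is \emph{not} invariant under permuting the features, so the iterates $Q^{(t)}$ need not stay inside the two-parameter family $a\sum_m v_m v_m^\top + b\sum_{l\neq m} v_l v_m^\top$. Your entire reduction to the scalar $\Delta^{(t)} = a^{(t)} - b^{(t)}$ collapses once this symmetry is broken. The paper avoids this by never invoking exchangeability: it tracks each pair $(A_k^{(t)}, \{B_{k,n}^{(t)}\}_n)$ under phase-wise induction hypotheses (Hypotheses B.1--B.3 in the appendix), which tolerate $\Theta(1)$ asymmetries in $p_k$ and in the realized counts $|\mathcal{V}_k|$ because they only assert order-of-magnitude bounds like $|B_{k,n}^{(t)}| = O(A_k^{(t)}/K)$.

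Two smaller issues. First, your displayed rates $\frac{a^{(t+1)}-a^{(t)}}{\eta}\asymp K(K-1)uv^2/Z^3$ and $\frac{b^{(t+1)}-b^{(t)}}{\eta}\asymp -Kuv^2/Z^3$ are each off by a factor of $K$: with the $p_k \approx 1/K$ prefactor from the indicator $\mathbf{1}\{\xq = v_k\}$ in $\alpha_k$ and $\beta_{k,n}$, the correct scalings are $\asymp (K-1)uv^2/Z^3$ and $\asymp -uv^2/Z^3$. You silently recover the right answer later (e.g., the rate $e^\Delta/K^2$ in Phase 1), but only because you re-insert the missing $1/K$ when evaluating $Z\asymp Kv$; this should be cleaned up. Second, the closing paragraph identifies the right concern --- that in Phase 2 the drift $Ke^{-2\Delta}$ is exponentially small and could be drowned by the $\{|\mathcal{V}_k|\}$ fluctuations or the $\mathbb{P}(\pit\notin\esb)$ tail --- but ``pick $N$ large and $\eta$ small and maintain two-sided bounds'' is exactly where the paper's proof spends its effort (Lemmas B.15--B.18), proving step-by-step that $\alpha_k^{(t)} \geq \Omega(\epsilon/K) \gg \exp(-\mathcal{O}(N/K^2))$ and that $-\beta_{k,n}^{(t)} \in [0, O(\alpha_k^{(t)}/K)]$; calling it an ``inductive invariant'' is a plan, not a proof.

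On the positive side: if the theorem were stated only for the exchangeable case $p_k = 1/K$, your reduction is correct and cleaner than the paper's. The factor $K-1$ separating the rates of $a^{(t)}$ and $b^{(t)}$ is exactly the same algebraic fact the paper exploits implicitly through the bound $|\beta_{k,n}^{(t)}| \leq O(\alpha_k^{(t)}/K)$, and your two-regime split ($Z\asymp Kv$ vs.~$Z\asymp u$, separated at $\Delta = \log K$) matches the paper's Phase I / Phase II boundary $A_k = \log K$. The symmetry reduction buys you a one-dimensional picture that makes the phase transition transparent; what it gives up is exactly the ability to handle the non-exchangeable balanced case, which is the version the theorem claims.
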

\Cref{thm: bal K} shows that training a one-layer transformer with softmax attention can converge to the minimum of the objective loss in the reparameterization space via GD, with polynomial time efficiency with respect to $K$ and $\frac{1}{\epsilon}$. The learning dynamics for such a case with balanced features exhibit a {\bf two-phase behavior}. (i) The first term of $T^{*}$ captures the duration of phase I, where the network actively aligns the query token (suppose $\xq=v_k$) with those tokens featuring $v_k$ itself, thus substantially increasing $\Attn_{k}^{(t)}$ to a constant level. %when $\xq=v_k$. 
(ii) The second term captures the duration of phase II, where the loss converges to the near-zero prediction error. 

 \textbf{In-context Learning Ability}. For the obtained model with $\theta^{{(T^{*})}}$, let us evaluate a test prompt associated with a linear task $w$, which might not be drawn from the support of $\cD_{\Omega}$ (i.e., $w$ may not be present in the training process), but has its data drawn by $\cD_{\cX}$. Suppose the query token is $\xq=v_k$. Following from the attention score concentration principle in \Cref{thm: bal K}, \cref{eq:output} yields that with high probability the query prediction is given by 
 \begin{align*}
     \yq^{{(T^{*})}}= \Attn^{{(T^{*})}}_{k}\langle w,v_k\rangle+\sum_{m\not=k}\Attn^{{(T^{*})}}_{m}\langle w,v_m\rangle\approx\langle w,v_k\rangle.
 \end{align*}
This implies that the in-context learned model can still well approximate the test prompt even if the task model $w$ does not lie in the support of the training task distribution $\cD_{\Omega}$ and was {\em unseen} during training. This showcases the remarkable in-context learning capability of trained transformers.

\subsection{ In-Context Learning with Imbalanced Features}
In real-world datasets, skewed distributions are common, where a few classes or features dominate in data while others are under-represented. It is typically difficult to train models to perform well on features that have limited representation in those datasets~\cite{cui2019class,chou2020remix}. In this subsection, we investigate the setting with imbalanced features, where the dominant feature $v_1$ is sampled with the probability $p_1=\Theta(1)$, and all other features are sampled with $p_k=\Theta\left(\frac{1}{K}\right)$ for $2\leq k \leq K$. We will show that somewhat remarkably, in-context learning is less sensitive to imbalanced features and can achieve a near-zero error even when the query token takes an under-represented feature.

% In the unbalanced case, we should be interested in the prediction error given query token is a specific class of feature instead of the total loss $L$, thus we consider the following quantity:
%In the imbalanced scenario, our focus should shift from the total loss $L(\theta)$ to the prediction error when the query token belongs to a specific class of feature $v_k$. Therefore, we examine the following quantity:

To investigate the performance for the imbalanced scenario, we focus on the following prediction error for each feature $v_k$:
\begin{align}
    \cL_{k}(\theta) &= \frac{1}{2}\mathbb{E}\left[\left(\yq-\left\langle w, \xq \right\rangle\right)^2\big| \xq=v_k\right].\label{eq-obj-k}
\end{align}
%Now we present the result of in-context learning with imbalanced features.
The following theorem characterizes the convergence of GD.
\begin{theorem}[In-context Learning with Imbalanced Features]\label{thm: unblc} 
Suppose $p_1=\Theta(1)$ and $p_k=\Theta\left(\frac{1}{K}\right)$ for $2\leq k \leq K$. For any $0<\epsilon<1$, suppose $N\geq \poly(K)$, and $\polylog(K)\gg \log(\frac{1}{\epsilon})$. We apply GD to train the loss function given in \cref{eq:obj}. Then the following results hold.
%the 1-layer transformer with softmax attention trained via GD yields the following results: 
\begin{enumerate}[label={\arabic*.}]
    \item The prediction error for the \textbf{dominant} feature converges: for $v_1$, with at most $T_1=O(\frac{\log(\epsilon^{-\frac{1}{2}})}{\eta\epsilon})$ GD iterations, %we shall have \cref{eq-obj-k} converges: 
    $\cL_1(\theta^{(T_1)})\leq \cL_{1}^{*}+\epsilon$, where $\cL^{*}_{1}=\Theta(e^{-\poly(K)})$ is the global minimum of \cref{eq-obj-k} for $k=1$; %Moreover,
    \item The prediction error for the \textbf{under-represented} features converges: for $v_k$ with $2\leq k \leq K$, 
    with at most 
    $T_k=O(\frac{\log(K)K^2}{\eta}+\frac{K \log\left(K\epsilon^{-\frac{1}{2}}\right)}{\epsilon\eta})$ GD iterations, %\cref{eq-obj-k} also converges:  
    $\cL_k(\theta^{(T_k)})\leq \cL_{k}^{*}+\epsilon$, where $\cL^{*}_{k}=\Theta(e^{-\poly(K)})$ is the global minimum of \cref{eq-obj-k};
    \item Attention score concentrates: for each
 $k\in[K]$, if the query token is $v_k$, then after $T_{k}$ 
 iterations, with probability at least  $1-e^{-\Omega(\poly(K))}$, the one-layer transformer nearly ``pays all attention" to input tokens featuring $v_k$:  $(1-\Attn^{(T_{k})}_k)^2 \leq O(\epsilon)$.
\end{enumerate}
%Here $\cL^{*}_{k}=\Theta(e^{-\poly(K)})$ stands for the infimum of \cref{eq-obj-k}.
%
%Given stepsize $\eta>0$ and any error $\epsilon >0$, consider the training dynamics of the 1-layer transformer with SA defined in eq.(3) over the population loss $L$ in \cref{eq:obj} via gradient descend in eq. (3). If we take the initialization $Q^{(0)}$ and $v^{(0)}$ in \Cref{para: init}, then the population loss defined in \cref{eq:obj} converges to a global minimum. Specially, after at most $T^*=O(\frac{\log(K)K^2}{\eta}+\frac{K \log\left(K\epsilon^{-\frac{1}{2}}\right)}{\epsilon\eta})$ iterations,    
    % \begin{enumerate}[label={\arabic*.}]
    % \item With high probability, the 1-layer transformer almost attends to the features same as the query: $\Attn_k=\Omega(1)$ and $(1-\Attn_k)^2 \leq \epsilon$ if the query $x_{\text {query }}=v_k$.
    % \item The loss convergence to a global minimum: GD can find a point $Q^{(T^*)}$ achieving that $L(\theta^{(T^*)})-L^* \leq \epsilon$.
    % \end{enumerate}
    \end{theorem}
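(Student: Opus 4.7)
The plan is to reduce the matrix-valued gradient dynamics to a small system of scalar updates by exploiting the orthonormality of the features $\{v_k\}$ and the symmetry within the under-represented class, and then to trace the attention scores through the one-phase (dominant) and four-phase (under-represented) processes asserted by the statement. Because $v_j^{\top} Q v_k$ enters the logit of the input token $v_j$ only when the query token is $v_k$, and only such bilinear entries appear in the attention weights, the entire dynamics is captured by five scalars: $\alpha_1(t) = v_1^{\top} Q^{(t)} v_1$, $\alpha(t) = v_k^{\top} Q^{(t)} v_k$ (common to all $k \ge 2$ by symmetry), $\beta(t) = v_k^{\top} Q^{(t)} v_1$, $\gamma(t) = v_1^{\top} Q^{(t)} v_k$, and $\delta(t) = v_j^{\top} Q^{(t)} v_k$ for $j \neq k$, $j, k \ge 2$. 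Using $\mathbb{E}_{w}[w w^{\top}] = I$ and the orthonormality of $\{v_k\}$ to carry out the $w$-expectation analytically, then concentrating $|\cV_k|$ around $N p_k$ (a high-probability event for $N \ge \poly(K)$), I would obtain population-level update rules in which the drift of each scalar is proportional to the sampling probability of the relevant query class: updates of $\alpha_1,\beta$ carry a factor $p_1 = \Theta(1)$, whereas updates of $\alpha,\gamma,\delta$ carry the small factor $p_k = \Theta(1/K)$, reflecting that under-represented events supply much less training signal per GD step.

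For the dominant feature, the starting point is that at $Q^{(0)} = 0$ the attention score $\Attn_1$ already equals $\Theta(1)$ merely through the prior $p_1$, so no alignment phase is needed. I would show that the gradient grows the gap $\alpha_1 - \beta$ from the first step and that the per-prompt residual $(1 - \Attn_1)^2 + \sum_{k \ge 2} \Attn_k^2$ contracts at a rate governed by the softmax self-coupling $\Attn_1 (1 - \Attn_1)$. Because every gradient step receives $\Theta(1)$ mass from the event $\xq = v_1$, the usual $K$-factor slowdown is absent, and the local quadratic flattening of the loss as $\Attn_1 \to 1$ yields the stated $O(\log(\epsilon^{-1/2})/(\eta\epsilon))$ bound on $T_1$, reusing the structure of the balanced case's second phase but without the $K$.

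For an under-represented feature ($\xq = v_k$, $k \ge 2$) I would track the two weight gaps $\alpha - \gamma$ (target vs.\ dominant off-target) and $\alpha - \delta$ (target vs.\ under-represented off-targets) across four stages. In Phase~I the loss $\cL_k$ is dominated by the leakage $\Attn_1 \approx p_1$, whose large magnitude produces an $\Omega(1)$ residual-error factor in the gradient of $\gamma$; since the softmax denominator is in turn governed by $|\cV_1| \approx N p_1$, the effective drift of $\alpha - \gamma$ is of order $\eta/K$, so $\alpha - \gamma$ reaches $\polylog(K)$ and $\Attn_1$ drops below $1/K$ in $O(K \log(K)/\eta)$ iterations. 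Phase~II is the analogue of the balanced case's first phase: with the dominant leakage suppressed, $\alpha - \delta$ must grow from $0$ to $\polylog(K)$ against a signal of order only $\eta/K^2$, which costs the dominant $O(K^2 \log(K)/\eta)$ term. Phase~III is a short transition in which $\Attn_k$ reaches a constant, after which Phase~IV performs the residual contraction that mirrors the balanced case's second phase and contributes the $O(K \log(K \epsilon^{-1/2})/(\epsilon \eta))$ term, together producing the stated bound on $T_k$. The attention-concentration claim follows a posteriori from $\alpha - \gamma, \alpha - \delta \gg \log(K)$ at the end of Phase~IV, which forces $1 - \Attn_k^{(T_k)} = O(\sqrt{\epsilon})$ on the high-probability event $|\cV_m| = \Theta(N p_m)$ for all $m$.

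The main obstacle will be showing that these two competing gaps evolve in the correct order without either overshooting or stalling the other. Because the gradients of $\gamma$ and $\delta$ each depend on the current attention profile through the softmax Jacobian, their updates are coupled; I would carry a phase-wise induction on monotone invariants such as $\alpha - \gamma \ge \alpha - \delta$ throughout the relevant window and a uniform bound on $|\beta|$, so that the dominant-class updates (already analyzed in the $\xq = v_1$ thread) cannot leak back into the under-represented analysis. Controlling the concentration error in $|\cV_k|$ and the tiny per-step drift of order $\eta/K$ to polylogarithmic precision at each phase boundary, while keeping the invariants stable under the softmax coupling between $\alpha, \gamma, \delta$, is where the bulk of the technical work would lie.
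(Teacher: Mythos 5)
Your five-scalar reduction matches the paper's bilinear attention weights $A_k=v_k^\top Q v_k$, $B_{k,n}=v_n^\top Q v_k$ (your added symmetry among the under-represented classes is a harmless simplification), and your dominant-feature argument --- no alignment phase because $\Attn_1=\Theta(1)$ at $Q^{(0)}=0$, residual contraction at rate $\Omega(\epsilon)$ without the $K$ factor --- is correct and is essentially the paper's.

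The under-represented analysis has a gap in your Phase I that would stall everything downstream. You claim the drift of $A_k-B_{k,1}$ is uniformly $\Theta(\eta/K)$ and that $\Attn_1$ falls below $1/K$ within $O(K\log K/\eta)$ iterations. But $|\beta_{k,1}|$ is not constant: it scales like $p_k\,\Attn_1(\Attn_1+\cdots)\approx K^{-1}e^{2B_{k,1}}$, so the drift decays rapidly as $B_{k,1}$ decreases. Setting $v=e^{-2B_{k,1}}$, the resulting ODE gives $v(t)\approx 1+\Theta(\eta t/K)$, so reaching $\Attn_1\sim 1/K$ (i.e. $v\sim K^2$) costs $t\sim K^3/\eta$ --- exceeding the stated total budget $O(K^2\log K/\eta)$. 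The paper instead ends its Phase I at $B_{k,1}\approx -0.49\log K$, where $\Attn_1\sim K^{-0.49}$, after $O(K^{1.98}\log K/\eta)$ iterations. The dominant feature is thus only \emph{partially} suppressed: $\Attn_1\sim K^{-0.5}$ is still much larger than the $\Theta(1/K)$ scale of an off-target feature in the balanced case, and at that point $|\beta_{k,1}|$ and $\alpha_k$ are both $\Theta(K^{-2})$, so neither the decay of $B_{k,1}$ nor the growth of $A_k$ clearly dominates. Your assertion that the next step is simply ``the analogue of the balanced case's first phase'' therefore does not go through. What is missing is precisely the paper's ``switching of leading influence'' argument (its Phase II): a contradiction argument showing that any further decrease of $B_{k,1}$ attenuates $|\beta_{k,1}|$ relative to $\alpha_k$ and forces $A_k$ to take over. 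Without that step, your four-phase skeleton has the right total iteration count but cannot be proved as written; with it, your Phase I endpoint must also be weakened to the $K^{-0.49}$ level rather than $1/K$.
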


%    \Cref{thm: unblc} demonstrates that the one-layer transformer can accurately capture the attention relationships among all features. 
    
    \Cref{thm: unblc} shows that the GD dynamics of the in-context training exhibit {\em `stage-wise'} convergence. The trained transformer rapidly (within $T_1$) converges to a model that achieves a near-zero prediction error $\cL_1$ for the dominant feature; and then takes a much longer time (up to $T_k\gg T_1$) to converge to a model that attains a near-zero prediction error $\cL_k$ for the under-represented features. Our analysis captures the later learning dynamics associated with the under-represented features into a four-phase behavior as further described in the subsequent section. Despite the longer convergence time it takes, in-context learning still achieves the same accurate prediction for under-represented features as that for the dominant feature.

%It reveals that throughout training, the model presents a stagewise convergence for the dominant and the under-represented features. Initially, it rapidly learns the pattern associated with the dominant feature, leading to an almost negligible prediction error-a near-optimal $\cL_1$. On the other hand,  for the under-represented feature $v_k$ with $k>1$, after a sufficiently long duration ($T_{k}\gg T_1$), attains an almost negligible prediction error $\cL_k$ even in the presence of significant occurrence imbalance. The learning dynamics for the under-represented feature display a complex four-phase behavior, which we will delve into in the subsequent section.
    
    % the model will first learn the pattern that achieves almost negligible prediction error when the query token corresponds to dominant feature $v_1$, i.e. nearly optimal $\cL_1$; meanwhile for under-represented feature $v_k$ with $k>1$, after a sufficiently long duration ($T_{k}\gg T_1$), the model can capture the pattern that leads to nearly zero prediction error $\cL_k$ even in the presence of significant occurrence imbalance. 
  
% i.e. $p_k=\Theta(\frac{1}{K^{\alpha}})$ for each $k \in [K]$ and $\alpha\in(0,1)$.
%\section{Overview of the Analysis}

\section{Overview of Training Phases}
In this section, we explain our key ideas for analyzing the in-context learning capabilities of transformers. We will first characterize the training process of the setting with imbalanced features for under-represented features in \Cref{sec:under}, which comprehensively exhibits four phases. Other scenarios take only one or two of those phases, which we will briefly describe in \Cref{sec:other}. The complete proofs of all the results are provided in the appendix.
%other scenarios (that with imbalanced features for the dominant feature and that with balanced features), whose learning exhibits one or two those phase

% Our approach hinges on a meticulous dissection of the training process, dividing the complete learning process into different phases, and separately discussing each of these phases.
%  \yl{need rewording}

%We note that by \Cref{attn}, the prediction output $\hat{y}_{\text{query}}$ is a linear combination of $y_i$ weighted by the attention score $\attn_i$. Furthermore, considering our data distribution $\cD_{\cX}$,  $\attn_i$ has the form $e^{v_{m}^{\top}Qv_{l}}$, \cy{is proportional to?} where $v_m$ and $v_l$ are the corresponding features of input $x_i$ and query $\xq$. Consequently, at the core of our analysis is to analyze the dynamics of the following quantities.

We will first provide the general training dynamics for the {\em bilinear attention weights} (defined in \Cref{def:dynamics} below), which is useful for analyzing all learning phases. These quantities are the key elements in the attention scores $\attn^{(t)}_i$ for $1\leq i \leq N$, which play an important role in determining the prediction $\yq^{(t)}$. Hence, our analysis mainly tracks the training dynamics of those bilinear attention weights. 
\begin{definition}\label{def:dynamics}(Bilinear Attention Weights)
  Given $k,n\in[K]$, where $k\not=n$, for $t\geq 0$, we define the bilinear attention weights as follows:
\begin{align*}
&A_{k}^{(t)}:=v_{k}^{\top}Q^{(t)} v_{k},%\qquad\alpha_{k}^{(t)}=-v_{k}^{\top}\frac{\partial L}{\partial Q^{(t)}} v_{k} 
\quad %A_{k}^{(0)}=0\\
%&
B_{k,n}^{(t)}:=v_{n}^{\top}Q^{(t)} v_{k}.%,\quad B_{k,k^{\prime}}^{(0)}=0
%\qquad \beta_{k,k^{\prime}}^{(t)}=-v_{k^{\prime}}^{\top}\frac{\partial L}{\partial Q^{(t)}}  v_{k}
\end{align*}
By our initialization, we have $A_{k}^{(0)}=B_{k,n}^{(0)}=0$.
%By gradient descent update, we have
% \begin{align*}
%    A_{k}^{(t+1)}&:= A_{k}^{(t)}+\eta \alpha_{k}^{(t)}\\
%    B_{k,k^{\prime}}^{(t+1)}&:= B_{k,k^{\prime}}^{(t)}+\eta \beta_{k,k^{\prime}}^{(t)}
% \end{align*}
\end{definition}
To further interpret these weights, suppose the query token corresponds to the feature $v_k$. Then $e^{A_{k}^{(t)}}$ serves as the (un-normalized) weight for the input token featuring $v_k$, while $e^{B_{k,n}^{(t)}}$ captures the weight for the input token featuring a different vector $v_{n}$ with $n\not=k$. Having a larger $A_{k}^{(t)}$ compared to other $B_{k,n}^{(t)}$ indicates a better capture of the target feature $v_k$. As shown in \cref{eq:output}, this condition implies a higher `attention' towards input tokens featuring $v_k$, resulting in $\yq^{(t)}\approx \sum\limits_{i\in\cV_k}\attn^{(t)}_iy_i\approx\langle w,v_k\rangle$, where the prediction well approximates the ground truth. %It can  be seen  that a larger $A_{k}^{(t)}$ compared to other $B_{k,n}^{(t)}$ suggests better grasp of the target feature $v_k$: from \cref{eq:output}, this condition implies more "attention" to the input tokens featuring $v_k$, which makes $\hat{y}_{\text {query}}\approx \sum\limits_{i\in\cV_k}\attn_{i}y_i\approx\langle w,v_k\rangle$, the groudtruth result.

The following lemma provides the GD updates of the bilinear attention weights $A_{k}^{(t)}$ and $B_{k,n}^{(t)}$.
\begin{lemma}\label{lem:gd}
    Let $t\geq 0$. For $k,n\in[K]$, where $k\not=n$, $A_{k}^{(t)}$ and $B_{k,n}^{(t)}$ satisfy:
    \begin{align*}
&A_{k}^{(t+1)}=A_{k}^{(t)}+\eta\alpha_{k}^{(t)},  \qquad B_{k,n}^{(t+1)}=B_{k,n}^{(t)}+\eta\beta_{k,n}^{(t)},\\
&\alpha_{k}^{(t)}=\mathbb{E}\left[\mathbf{1}\{x_{\text {query }}=v_k\}\Attn^{(t)}_{k }\cdot \left(
 \sum_{m\not= k}{\Attn^{(t)}_{m}}^2+(1-\Attn^{(t)}_k)^2\right)\right],\\
 &\beta_{k,n}^{(t)}=\mathbb{E}\left[\mathbf{1}\{x_{\text {query }}=v_k\}\Attn^{(t)}_{n}\cdot \left(
 \sum_{m\not= k}{\Attn^{(t)}_{m}}^2-\Attn^{(t)}_{n} -\Attn^{(t)}_{k}(1-\Attn^{(t)}_k)\right)\right].
    \end{align*}
\end{lemma}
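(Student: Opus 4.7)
The plan is to differentiate the population loss in \cref{eq:obj} directly with respect to $Q$, then take the bilinear forms $v_k^\top(\cdot)v_k$ and $v_n^\top(\cdot)v_k$, exploiting the orthonormality of the feature set $\{v_k\}$ to collapse the resulting sums. Recall that $A_k^{(t+1)}=A_k^{(t)}-\eta\,v_k^\top\nabla_Q L(\theta^{(t)})\,v_k$ and $B_{k,n}^{(t+1)}=B_{k,n}^{(t)}-\eta\,v_n^\top\nabla_Q L(\theta^{(t)})\,v_k$, so the task reduces to computing those two scalars and matching them with $\alpha_k^{(t)}$ and $\beta_{k,n}^{(t)}$.

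First I would compute $\nabla_Q \hat{y}_{\text{query}}$. Writing $u_i=x_i^\top Q\,x_{\text{query}}$ and using the standard softmax Jacobian, $\nabla_Q \attn_i=\attn_i\bigl(x_i-\sum_j\attn_j x_j\bigr)x_{\text{query}}^\top$. Summing against $y_i$ and setting $\bar x=\sum_j\attn_j x_j$ gives
\begin{equation*}
\nabla_Q \hat{y}_{\text{query}}=\Bigl(\sum_i y_i\,\attn_i x_i-\hat{y}_{\text{query}}\,\bar x\Bigr)x_{\text{query}}^\top.
\end{equation*}
Next, because each $x_i\in\{v_1,\dots,v_K\}$ and the features are orthonormal, $v_m^\top x_i=\mathbf 1\{x_i=v_m\}$. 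Grouping input tokens by feature and recalling $y_i=\langle w,v_{k_i}\rangle$, I get $v_m^\top\sum_i y_i\attn_i x_i=\langle w,v_m\rangle\,\Attn_m$ and $v_m^\top\bar x=\Attn_m$. Likewise $x_{\text{query}}^\top v_k=\mathbf 1\{x_{\text{query}}=v_k\}$, which is precisely where the indicator in the lemma enters. Hence
\begin{equation*}
v_m^\top\nabla_Q\hat{y}_{\text{query}}\,v_k=\mathbf 1\{x_{\text{query}}=v_k\}\,\Attn_m\bigl(\langle w,v_m\rangle-\hat{y}_{\text{query}}\bigr).
\end{equation*}

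Now I insert this into $\nabla_Q L=\mathbb E[(\hat{y}_{\text{query}}-\langle w,x_{\text{query}}\rangle)\nabla_Q\hat{y}_{\text{query}}]$ and integrate out $w$. The crucial point is that the attention scores $\Attn_m^{(t)}$ depend only on $Q^{(t)}$, the input tokens, and $x_{\text{query}}$; they are independent of $w$. Setting $\xi_j:=\langle w,v_j\rangle$, orthonormality of $\{v_j\}$ together with $\mathbb E[ww^\top]=I$ yields $\mathbb E[\xi_j\xi_l]=\mathbf 1\{j=l\}$, so a direct expansion of $\hat{y}_{\text{query}}=\sum_j\Attn_j\xi_j$ gives the two identities I need: $\mathbb E_w[(\hat{y}_{\text{query}}-\xi_k)^2]=(1-\Attn_k)^2+\sum_{m\neq k}\Attn_m^2$ and, for $n\neq k$, $\mathbb E_w[(\hat{y}_{\text{query}}-\xi_k)(\xi_n-\hat{y}_{\text{query}})]=\Attn_k+\Attn_n-\sum_j\Attn_j^2$. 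Rewriting $\sum_j\Attn_j^2-\Attn_k=\sum_{m\neq k}\Attn_m^2-\Attn_k(1-\Attn_k)$ produces the bracketed factor in $\beta_{k,n}^{(t)}$, while the first identity gives the bracketed factor in $\alpha_k^{(t)}$.

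Combining the two computations, $-v_k^\top\nabla_Q L\,v_k=\mathbb E[\mathbf 1\{x_{\text{query}}=v_k\}\Attn_k((1-\Attn_k)^2+\sum_{m\neq k}\Attn_m^2)]=\alpha_k^{(t)}$ and $-v_n^\top\nabla_Q L\,v_k=\mathbb E[\mathbf 1\{x_{\text{query}}=v_k\}\Attn_n(\sum_{m\neq k}\Attn_m^2-\Attn_n-\Attn_k(1-\Attn_k))]=\beta_{k,n}^{(t)}$, which is the claim. The only nontrivial step is the careful bookkeeping in the softmax derivative and the factor $x_{\text{query}}^\top v_k$ that forces the indicator; everything else is algebra. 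I do not expect any real obstacle here — this lemma is essentially a derivation rather than an inequality, and the orthonormality of the features was chosen precisely so these sums collapse cleanly.
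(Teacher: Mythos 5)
Your proof is correct and essentially follows the same route as the paper's: differentiate the softmax via its Jacobian, project onto feature directions to pick up the indicator $\mathbf{1}\{x_{\text{query}}=v_k\}$, then integrate out $w$ using $\mathbb{E}[ww^\top]=I$ together with orthonormality of $\{v_j\}$. The only cosmetic difference is bookkeeping: the paper carries a double sum $\sum_{i,j}\attn_i\attn_j(E^x_i-E^x_j){E^x_{N+1}}^\top y_i$ and keeps things in vector form as $(v_k-\sum_n\Attn_n v_n)^\top(v_{k'}-\sum_m\Attn_m v_m)$, whereas you collapse to $\bar{x}=\sum_j\attn_j x_j$ early and phrase the $w$-integration in terms of scalars $\xi_j=\langle w,v_j\rangle$; both lead to the same two expansions for the $k'=k$ and $k'\neq k$ cases.
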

\Cref{lem:gd} shows that $A_{k}^{(t)}$ is monotonically increasing at any time since $\alpha_{k}^{(t)}\geq 0$, whereas the monotonicity does not always hold for $B_{k,n}^{(t)}$. Therefore, we need to analyze whether $B_{k,n}^{(t)}$ decreases and determine its rate of change compared to $A_{k}^{(t)}$. Such a comparison between $B_{k,n}^{(t)}$ and $A_{k}^{(t)}$ determines which bilinear weight plays a dominant role in the attention dynamics, and the change of the leading weight over the learning process results in different training phases.

% which attention scores (between $\alpha_{k}^{(t)}$ and $\beta_{k,n}^{(t)}$)\yh{need clarification} plays a dominant role in the prediction of the query, which can change over the training process, resulting differences phases of the training dynamics we need to characterize.

%     Which weight plays a dominant role in the attention dynamics can change over the learning process, resulting in different training phases.
%With this insight, we partition the training process into phases defined by relationships between the rates of $\alpha_{k}^{(t)}$ and $\beta_{k,n}^{(t)}$. 

% We mainly concentrate on the analysis of the under-represented features within the imbalanced case, which entails intricate dynamics necessitating a precise four-phase decomposition. We will also offer brief insights into the dominant feature and balanced scenarios, where $A_{k}^{(t)}$ consistently takes the lead, enabling a streamlined analysis that seamlessly integrates with the study of under-represented features. \yl{need rewording}
\begin{figure}
    \centering
    \begin{overpic}[width=\textwidth]{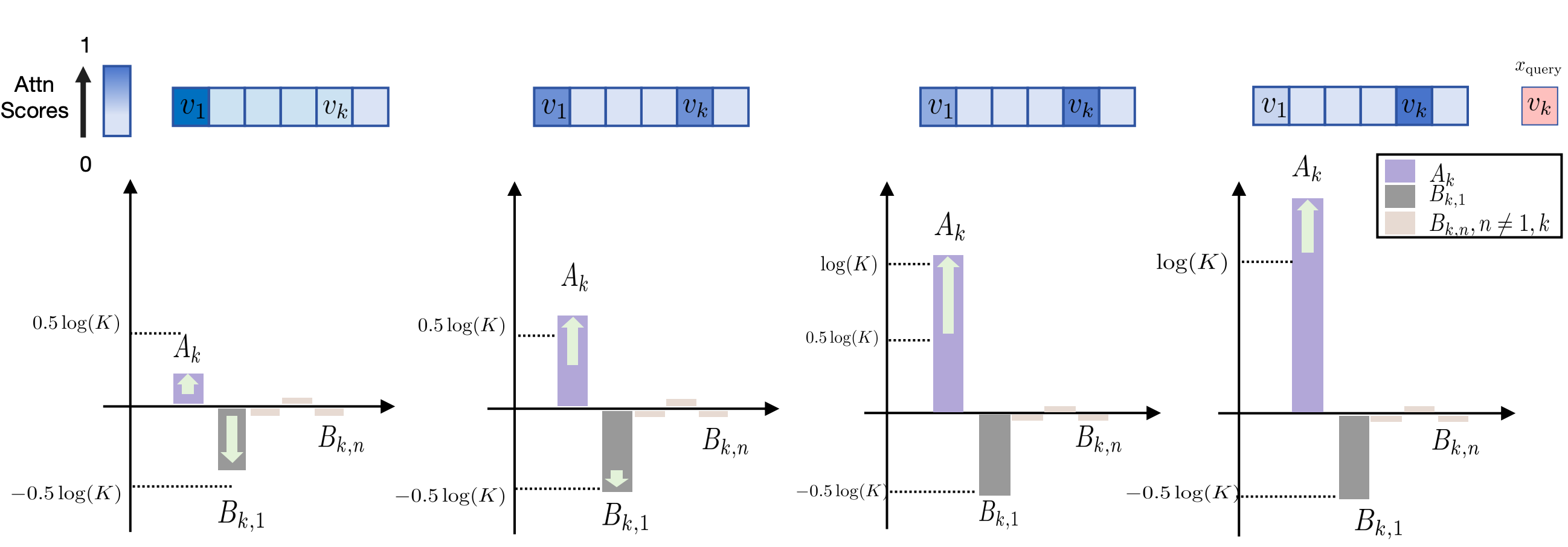} 
    \put(7.5,32){ \fontsize{6.3pt}{10pt}\selectfont(a) Decrease of Dominant Feature} 
    \put(32,32){ \fontsize{6.3pt}{10pt}\selectfont (b) Switching of Leading Influence}
    \put(56,32){ \fontsize{6.3pt}{10pt}\selectfont (c) Growth of Target Feature}
    \put(78,32){ \fontsize{6.3pt}{10pt}\selectfont (d) Convergence}
    \end{overpic}
    \caption{Overview of the dynamics of attention scores and bilinear attention weights for under-represented features. Assume the query token is $v_k$ with $2\leq k\leq K$. %is the target feature and $v_1$ is the dominant feature. 
    The {\em top} row depicts the trend of the attention score $\Attn^{(t)}_m$ for each feature $v_m$, where a darker color corresponds to a higher score. The {\em bottom} row shows the interplay and leading effect among bilinear attention weights $A^{(t)}_k,B^{(t)}_{k,1}$, and $B^{(t)}_{k,n}$ (where $n \neq 1,k$) in different training phases. %, with which the attention scores of tokens featuring $v_k, v_1$ and $v_n$ with $n \neq 1,k$ monotonously increases, respectively. 
    \textbf{(a)} Phase I: $B^{(t)}_{k,1}$ significantly decreases and the attention on tokens with the dominant feature $v_1$ is suppressed (\Cref{sssec: 4-1-p1}); %, however, $\Attn_1$ still dominates because of the larger number of tokens featuring $v_1$ (\Cref{sssec: 4-1-p1}). 
     \textbf{(b)} Phase II: With the suppression of $\Attn_{1}^{(t)}$, the decreasing rate for $B^{(t)}_{k,1}$ drops and the growth of $A_{k}^{(t)}$ becomes the leading influence (\Cref{sssec: 4-1-p2}); %With sustained growth of $A_k$ and reduction of $B_{k,1}$, $A_k$ becomes the leading influence and $\Attn_k$ surpasses $\Attn_1$ (\Cref{sssec: 4-1-p2}).  %
    \textbf{(c)} Phase III: $A^{(t)}_k$ rapidly grows and $\Attn^{(t)}_k$ reaches $\Omega(1)$  (\Cref{sssec: 4-1-p3}); \textbf{(d)} Phase IV: $\Attn^{(t)}_k$ nearly grows to $1$ and the prediction error converges to a global minimum (\Cref{sssec: 4-1-p4}).}
    \label{fig:enter-label}
    \vspace{-0.2cm}
\end{figure}
%Then we delve into the more complex scenario of imbalanced data, which presents intricate dynamics necessitating a precise phase decomposition that cannot be straightforwardly generalized from the balanced case.
\subsection{Learning Process for Under-represented Features}\label{sec:under}

We consider the setting with imbalanced features and focus on the under-represented features.

Given a prompt $P=(x_1,y_1,\cdots,x_N,y_N, x_{\text{query}})$, denote $\pit$ to be the collection of input tokes, i.e., $\{x_i\}_{i=1}^{N}$. Recall that $|\cV_k|$ is the number of input tokens featuring $v_k$. Based on our data generation setup, we can show that for imbalanced data, with high probability, $\pit$ belongs to  
\begin{align*}   \esi:=\left\{\pit: |\cV_1|=\Theta(N), |\cV_k|=\Theta\Big(\frac{N}{K}\Big)\text{ for   } 2\leq k\leq K\right\}.
\end{align*}
In the following, we focus on the event that $\pit\in\esi$ unless otherwise specified. We next characterize the learning process for under-represented features $v_k$ with $k>1$ by four phases. An illustration of these four phases is provided in \Cref{fig:enter-label}.%: \textit{pattern emerging phase} and \textit{convergence phase}.

\subsubsection{Phase I: Decrease of Dominant Feature.} \label{sssec: 4-1-p1}

%\yl{possibly a new name for phase I}
Consider the query token featuring $v_k$ for some $k>1$. At $t=0$, $A_{k}^{(0)}=B_{k,n}^{(0)}=0$, and hence $\attn^{(0)}_i=\frac{1}{N}$ for $i\in[N]$ %and $\Attn_{m}^{(t)}=\sum\limits_{i\in\cV_{m}}\attn^{(t)}_i\approx \Theta\left(\frac{1}{K}\right)$ for $m\in[K]$, 
which implies that the transformer equally attends each input token. However, due to the imbalanced occurrence of features in $\esi$, the number of tokens featuring $v_1$ is much larger than others. Hence, $\Attn_{1}^{(0)}= \frac{|\cV_1|}{N}\geq \Omega(1)$ while $\Attn_{m}^{(0)}= \Theta\left(\frac{1}{K}\right)$ for $m>1$. Therefore, by \Cref{lem:gd}, we obtain
\begin{align*}
   \beta_{k,1}^{(0)}=&\mathbb{E}\left[\mathbf{1}\{x_{\text {query }}=v_k\}\Attn^{(0)}_{1}\right.\\
   & \left.\cdot \left(
 \sum_{m\not= k,1}{\Attn^{(0)}_{m}}^2-\Attn^{(0)}_{1}(1-\Attn^{(0)}_{1}) -\Attn^{(0)}_{k}(1-\Attn^{(0)}_k)\right)\right]\leq -\Omega\left(\frac{1}{K}\right),
\end{align*}
% $$\beta^{(0)}_{k,1}\approx -\Theta(\frac{1}{K^3})$$
%$A_{k}^{(t)}$ enjoys a 
% much larger gradient 
whereas $\alpha^{(0)}_{k}, |\beta^{(0)}_{k,n}|\approx \Theta(\frac{1}{K^2})$ for $n\not=k,1$. Therefore, $B_{k,1}^{(t)}$ enjoys a much larger decreasing rate initially. It can be shown that the decrease of $B_{k,1}^{(t)}$ will dominate %\yl{the attention score?} 
for a certain time period that defines phase I. %despite $\Attn_{1}^{(t)}$ decrease and $\Attn_{k}^{(t)}$ increase %until $\Attn_{k,1}^{(t)}$ reaches $\log(K)$. 
The following lemma
summarizes our main result in this phase.
\begin{lemma}[Informal]\label{lem:im:p1}
    Under the same conditions as \Cref{thm: unblc}, given $k>1$, there exists $T_{1,k}=O(\frac{\log(K)K^{1.98}}{\eta})$, such that
  for all  $0\leq  t\leq T_{1,k}$
          \begin{align*}
        \beta_{k,1}^{(t)}\leq -\Omega\left(\frac{1}{K^{1.98}}\right),\quad \alpha^{(t)}_k =\Theta\left(\frac{1}{K^2}\right), \quad  
 |\beta^{(t)}_{k,n}|\leq O\left(\frac{\alpha^{(t)}_{k}+|\beta_{k,1}^{(t)}|}{K}\right) \quad\text{  for all }n\not=k,1.
     \end{align*}
% At time $t=T_{1,k}+1$, 
$ B_{k,1}^{(T_{1,k}+1)}\leq -0.49\log(K)$, while $A_{k}^{(T_{1,k}+1)}$ and $B^{(T_{1,k}+1)}_{k,n}$ for $n\not=k,1$ remain close to zero.  %while $A_{k}^{(T_{1,k}+1)}$ and $B^{(T_{1,k}+1)}_{k,n}$ for $n\not=k,1$ remain close to zero. %,\quad  A_{k}^{(T_{1,k}+1)}\leq O(\frac{\log(K)}{K^{0.02}}),\quad|B^{(t)}_{k,n}|\leq O(\frac{\log(K)}{K}) \quad\text{  for all }n\not=k,1.$
   % \end{align*}
    % $B_{k,1}^{(T_{1,k}+1)}\leq -0.49\log(K)$, $A_{k}^{(T_{1,k}+1)}\leq O(\frac{\log(K)}{K^{0.02}})$
    % \begin{itemize}
    % \item $\beta_{k,1}^{(t)}\leq -\Omega\left(\frac{1}{K^{1.98}}\right)$ for all $0\leq t\leq T_{1,k}$.
    % \item  $\alpha^{(t)}_k =\Theta(\frac{1}{K^2})$ for all  $0\leq  t\leq T_{1,k}$.
    %     \item $|\beta^{(t)}_{k,n}|\leq O(\frac{\alpha^{(t)}_{k}+|\beta_{k,1}^{(t)}|}{K})$ for all $n\not=k,1$ and $0\leq t\leq T_{1,k}$.
    %     \item At time $t=T_{1,k}+1$, we have $B_{k,1}^{(T_{1,k}+1)}\leq -0.49\log(K)$.
    %     % \item $\alpha^{(t)}_k\geq \Omega(\frac{\epsilon}{K})$ for all  $T_{1,k}< t\leq T_{2,k}$.
    %     % \item At time $t=T_{2,k}+1$, we have $\frac{1}{2}\mathbb{E}    \left[\mathbf{1}\{ x_{\text { query }}=v_k\}\left(\widehat{y}_{\text { query }}-\left\langle w, x_{\text { query }}\right\rangle\right)^2 \right]\leq p_k\epsilon$ and $(1-\Attn_k^{(t)})^2 \leq O(\epsilon)$ w.h.p.
    % \end{itemize}
\end{lemma}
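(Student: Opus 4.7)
The plan is to prove the claim by induction on $t \in \{0,1,\ldots,T_{1,k}\}$, carrying forward the invariants
\[
-0.49\log K \leq B_{k,1}^{(t)} \leq 0, \qquad 0 \leq A_k^{(t)} = o(1), \qquad |B_{k,n}^{(t)}| = o(1) \text{ for } n \neq 1,k.
\]
Throughout I would condition on the high-probability event $\pit \in \esi$, which by standard multinomial concentration holds with probability at least $1 - e^{-\Omega(\poly(K))}$ given $N \geq \poly(K)$. The argument then proceeds in two stages: first, translate the invariants into attention-score estimates through the softmax; second, plug those estimates into \Cref{lem:gd} to obtain the claimed gradient bounds.

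\textbf{From invariants to attention scores.} Under the induction hypothesis, the softmax normalizer reduces to $(1+o(1))(|\cV_1|e^{B_{k,1}^{(t)}} + (N - |\cV_1|))$, because the contributions from the near-zero weights $A_k^{(t)}$ and $\{B_{k,n}^{(t)}\}_{n \neq 1,k}$ absorb into the $(1+o(1))$ factor. Using $|\cV_1| = \Theta(N)$ and $|\cV_n| = \Theta(N/K)$ on $\esi$, this gives
\[
\Attn_1^{(t)} = \Theta\!\Bigl(\tfrac{e^{B_{k,1}^{(t)}}}{1+e^{B_{k,1}^{(t)}}}\Bigr) \in \bigl[\Omega(K^{-0.49}),\, \Theta(1)\bigr], \qquad \Attn_k^{(t)},\, \Attn_n^{(t)} = \Theta(1/K) \text{ for } n \neq 1,k.
\]

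\textbf{From attention scores to gradient bounds.} Substituting into \Cref{lem:gd}, the bracket in $\beta_{k,1}^{(t)}$ equals $-\Attn_1^{(t)}(1-\Attn_1^{(t)}) + O(1/K) = -\Theta(\Attn_1^{(t)})$ throughout Phase I (since $\Attn_1^{(t)} \gg 1/K$), so $\beta_{k,1}^{(t)} = -\Theta(p_k(\Attn_1^{(t)})^2) \leq -\Omega(K^{-1.98})$. The $(1-\Attn_k^{(t)})^2 \approx 1$ term dominates $\alpha_k^{(t)}$ and yields $\alpha_k^{(t)} = \Theta(p_k \Attn_k^{(t)}) = \Theta(K^{-2})$. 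For $n \neq 1,k$, the leading factor $\Attn_n^{(t)} = \Theta(1/K)$ supplies an extra $1/K$ relative to $\beta_{k,1}^{(t)}$ while the bracket is $O((\Attn_1^{(t)})^2 + 1/K)$, giving $|\beta_{k,n}^{(t)}| \leq O(p_k K^{-1}((\Attn_1^{(t)})^2 + K^{-1})) = O((|\beta_{k,1}^{(t)}| + \alpha_k^{(t)})/K)$. To close the induction, $\sum_{s \leq T_{1,k}} \eta \alpha_k^{(s)} = O(T_{1,k} \eta K^{-2}) = O(K^{-0.02} \log K) = o(1)$ preserves invariant (ii), an analogous bound preserves (iii), and the lower bound $|\beta_{k,1}^{(t)}| \geq \Omega(K^{-1.98})$ forces $B_{k,1}^{(t)}$ to reach $-0.49 \log K$ within $T_{1,k} = O(K^{1.98} \log K / \eta)$ GD steps, preserving (i) until the stopping time.

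\textbf{Main obstacle.} The delicate point is sustaining the two-sided estimate $\Attn_1^{(t)} \in [\Omega(K^{-0.49}), \Theta(1)]$ uniformly throughout the phase: $B_{k,1}^{(t)}$ is itself the only weight moving substantially, and a single oversized GD step could push it below the $-0.49 \log K$ threshold prematurely and break the lower bound on $|\beta_{k,1}^{(t)}|$ needed for subsequent iterations. I would handle this by observing that $\eta |\beta_{k,1}^{(t)}| \leq O(\eta / K) = o(1) \ll \log K$ under the learning-rate regime implicit in \Cref{thm: unblc}, so the decrease per step is negligible relative to the phase-I window and the invariants cannot be violated discontinuously. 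A secondary subtlety is that the $o(1)$ drifts of $A_k^{(t)}$ and $\{B_{k,n}^{(t)}\}_{n \neq 1,k}$ must not perturb the sign or leading order of the bracket driving $\beta_{k,1}^{(t)}$; the $0.49$ (rather than $0.5$) buffer in the threshold is exactly the slack that absorbs these lower-order corrections while still leaving $\Attn_1^{(t)} \geq \Omega(K^{-0.49})$ at the end of the phase.
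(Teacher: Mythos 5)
Your proposal is correct and follows essentially the same route as the paper's Appendix C\S1 (the four lemmas \textit{pk1a}--\textit{pk1c} and the end-of-phase-I closure under \textit{Induction Hypothesis C.1}): you maintain an induction hypothesis bounding $A_k^{(t)}$, $B_{k,1}^{(t)}$, and $\{B_{k,n}^{(t)}\}_{n\neq 1,k}$, translate it into softmax-attention estimates on the event $\esi$, and plug those into \Cref{lem:gd} to get exactly the three gradient bounds before closing the induction via the ratio $\alpha_k^{(t)}/|\beta_{k,1}^{(t)}| = O(K^{-0.02})$. The only real difference is cosmetic: you carry $A_k^{(t)}, B_{k,n}^{(t)} = o(1)$ where the paper tracks the explicit rates $O(\log K / K^{0.02})$ and $O(\log K / K)$, and your "main obstacle" (overshooting the $-0.49\log K$ threshold in one step) is in fact handled automatically by the paper's stopping-time definition of $T_{1,k}$ as the last $t$ with $B_{k,1}^{(t)} \geq -0.49\log K$, so no separate step-size argument is needed there.
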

During phase I, $B_{k,1}^{(t)}$ significantly decreases, leading to a reduction in $\Attn_{1}^{(t)}$, whereas other $\Attn_{n}^{(t)}$ with $n>1$ remain at the level of $\Theta\left(\frac{1}{K}\right)$.  By the end of this phase, $(\Attn_{1}^{(t)})^2$ drops to $O(\frac{1}{K^{0.98}})$, resulting in a decrease in $|\beta^{(t)}_{k,1}|$ as it approaches $\alpha_{k}^{(t)}$. Phase II then begins.

%, which implies $|\beta^{(0)}_{k,n}|\leq O(\frac{\alpha^{(0)}_{k}}{K})$ and the growth of $A_{k}^{(t)}$ dominates at the very beginning.

\subsubsection{Phase II: Switching of Leading Influence}\label{sssec: 4-1-p2}
Soon after entering this phase, 
%the prior guarantee of $B_{k,1}^{(t)}$ reduction dominance 
the dominance role of $B_{k,1}^{(t)}$ diminishes as $|\beta^{(t)}_{k,1}|$ reaches the same order of magnitude as $\alpha_{k}^{(t)}$. The following result captures the shift of the leading influence, where the growth of $A_{k}^{(t)}$ takes dominance. 
%When this occurs, we present the following result, demonstrating that the leading influence shifts, with the growth of $A_{k}^{(t)}$ taking dominance. 
\begin{lemma}[Informal]\label{lem:im:p2}
   Under the same conditions as \Cref{thm: unblc}, given $k>1$, there exists $T_{2,k}= T_{1,k}+O(\frac{\log(K)K^{2}}{\eta})$, such that at iteration $t=T_{2,k}+1$, we have 
   \begin{align*}
       A_{k}^{(T_{2,k}+1)}\geq 0.5\log(K),\quad B_{k,1}^{(T_{2,k}+1)}\in[-0.51\log(K), -0.49\log(K)]
   \end{align*}
  and $B^{(T_{2,k}+1)}_{k,n}$ for $n\not=k,1$ remain close to zero.
   %  \begin{enumerate}[label={\alph*}.]
   %  \item $A_{k}^{(T_{2,k}+1)}\geq 0.5\log(K)$;
   %  \item $B_{k}^{(T_{2,k}+1)}\geq -0.51\log(K)$
   % % \item $\Attn_{k}=\Omega\left(\frac{1}{K^{0.49}}\right)$ if $\xq=v_k$ and $P\in\cE^*$.
   %  % \item $|B_{k,n}^{(T_{1,k})}|=O(\frac{\log(K)}{K})$;
   %  % \item $\beta_{k,n}^{(T_{1,k})}<0$.
   %  \end{enumerate}
\end{lemma}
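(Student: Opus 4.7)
The plan is to prove \Cref{lem:im:p2} by a step-by-step induction on $t \in \{T_{1,k}+1, \ldots, T_{2,k}+1\}$, maintaining three invariants throughout Phase II: (H1) $B_{k,1}^{(t)} \in [-0.51\log(K), -0.49\log(K)]$; (H2) $A_k^{(t)}$ is non-decreasing and lies in $[0, 0.5\log(K)]$; and (H3) $|B_{k,n}^{(t)}| = o(1)$ for every $n \neq 1, k$. The base case at $t = T_{1,k}+1$ is supplied by \Cref{lem:im:p1}. I define $T_{2,k}$ as the first iteration at which $A_k^{(t)}$ attains $0.5\log(K)$, so the core task becomes showing that (H1) and (H3) persist up to that moment and that $T_{2,k} - T_{1,k} = O(K^2\log(K)/\eta)$.

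The central quantitative step is to convert (H1)--(H3) on the high-probability event $\esi$ into sharp estimates of the attention scores. Because the softmax normalizer is dominated by the $(K-2)\cdot|\cV_n| = \Theta(N)$ tokens of off-target non-dominant features (each carrying unnormalized weight $e^{B_{k,n}^{(t)}} = 1 \pm o(1)$), one obtains
\begin{align*}
\Attn_k^{(t)} = \Theta\!\left(\tfrac{e^{A_k^{(t)}}}{K}\right), \quad \Attn_1^{(t)} = \Theta\!\left(e^{B_{k,1}^{(t)}}\right) = O(K^{-0.49}), \quad \Attn_n^{(t)} = \Theta(1/K) \text{ for } n \neq 1, k.
\end{align*}
Plugging these into \Cref{lem:gd} and using $(1-\Attn_k^{(t)})^2 = 1 - o(1)$ for $A_k^{(t)} \leq 0.5\log(K)$, I get $\alpha_k^{(t)} = \Theta(e^{A_k^{(t)}}/K^2)$, $\beta_{k,1}^{(t)} = -\Theta(e^{2B_{k,1}^{(t)}}/K)$, and $|\beta_{k,n}^{(t)}| = O(K^{-2.5})$ for $n \neq 1, k$ (the bracketed factor in $\beta_{k,n}^{(t)}$ is controlled by $\Attn_1^{(t)}$ and $\Attn_k^{(t)}$, each of size at most $O(K^{-0.49})$).

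From these rates, (H3) is immediate: summing $\eta|\beta_{k,n}^{(t)}|$ over $O(K^2\log(K)/\eta)$ iterations yields a total drift of $O(K^{-0.5}\log(K)) = o(1)$. For (H2), a discrete analogue of the ODE $\dot{A}_k = \eta e^{A_k}/K^2$, namely $\Delta e^{-A_k^{(t)}} = -\Theta(\eta/K^2)$, shows that $e^{-A_k^{(t)}}$ decreases from $1$ to $K^{-0.5}$ within $O(K^2\log(K)/\eta)$ iterations; this simultaneously bounds $T_{2,k} - T_{1,k}$ and delivers $A_k^{(T_{2,k}+1)} \geq 0.5\log(K)$. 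The upper half of (H1), $B_{k,1}^{(t)} \leq -0.49\log(K)$, is automatic since $\beta_{k,1}^{(t)} < 0$ keeps $B_{k,1}^{(t)}$ non-increasing throughout Phase II.

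The main obstacle is the lower bound $B_{k,1}^{(t)} \geq -0.51\log(K)$: it demands showing that the exponential negative feedback (a smaller $B_{k,1}^{(t)}$ forces an exponentially smaller $|\beta_{k,1}^{(t)}|$) is fast enough to prevent accumulation over the $O(K^2\log(K)/\eta)$ iterations of Phase II. I will control this via the discrete counterpart of the closed-form $e^{-2B_{k,1}(t)} = e^{-2 B_{k,1}(T_{1,k}+1)} + \Theta(\eta(t - T_{1,k})/K)$, obtained by substituting $u = e^{-2B_{k,1}}$ to linearize the dynamics. Inserting $t - T_{1,k} = O(K^2\log(K)/\eta)$ and the Phase I output $e^{-2B_{k,1}(T_{1,k}+1)} \geq K^{0.98}$, the additive correction is only $O(K\log(K))$, yielding
\begin{align*}
B_{k,1}^{(T_{2,k}+1)} \geq -\tfrac{1}{2}\log\!\bigl(K^{0.98} + O(K\log(K))\bigr) \geq -0.51\log(K)
\end{align*}
for $K$ large enough, using $\log\log(K) \ll \log(K)$. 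The discrete-to-continuous comparison is justified because each step size $\eta|\beta_{k,1}^{(t)}| = O(\eta/K)$ is negligible relative to the scale of the invariant, so the discrete sum tracks the integral. Combining these pieces yields $A_k^{(T_{2,k}+1)} \geq 0.5\log(K)$, $B_{k,1}^{(T_{2,k}+1)} \in [-0.51\log(K), -0.49\log(K)]$, and $|B_{k,n}^{(T_{2,k}+1)}| = o(1)$ for $n \neq 1, k$, completing the induction and hence the lemma.
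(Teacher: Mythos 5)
Your proposal follows the same overall induction framework as the paper and derives the same order-of-magnitude rates $\alpha_k^{(t)} = \Theta(e^{A_k^{(t)}}/K^2)$, $\beta_{k,1}^{(t)} < 0$ with $|\beta_{k,1}^{(t)}| = \Theta(\Attn_1^{(t)}(\Attn_1^{(t)} + \Attn_k^{(t)})/K)$, and $|\beta_{k,n}^{(t)}| = O(K^{-2.5})$. The place where you genuinely diverge is the critical lower bound $B_{k,1}^{(T_{2,k}+1)} \geq -0.51\log K$. The paper defines $T_{2,k}$ via the joint quantity $A_k^{(t)} - B_{k,1}^{(t)} \leq 1.01\log K$ and proves the bound by contradiction: assuming $A_k^{(T_{2,k}+1)} < 0.5\log K$ forces $B_{k,1}^{(T_{2,k}+1)} < -0.51\log K$, so there is a time $\tilde T$ where $B_{k,1}$ crosses $-0.501\log K$; after $\tilde T$, $\Attn_1^{(t)} \leq O(K^{-0.501})$ makes $|\beta_{k,1}^{(t)}| \leq O(\alpha_k^{(t)}/K^{0.01})$, so a further $\Omega(\log K)$ drop in $B_{k,1}$ would force $A_k$ to blow up to $\Omega(K^{0.01}\log K)$, a contradiction. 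You instead define $T_{2,k}$ solely by $A_k$ reaching $0.5\log K$ and track $B_{k,1}$ directly by the substitution $u = e^{-2B_{k,1}}$: since $\Delta u = \Theta(\eta/K)$ per step, over $O(K^2\log K/\eta)$ steps $u$ only accumulates $O(K\log K)$ beyond its Phase-I value $\geq K^{0.98}$, so $u \leq O(K\log K)$ and $B_{k,1} \geq -0.51\log K$ for large $K$. This is a more explicit, self-contained account of the self-limiting decay of $B_{k,1}$, whereas the paper's crossing-time argument only needs a one-sided ratio bound; both give the same $O(K^2\log K/\eta)$ iteration count.

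One quantitative point you gloss over: from the gradient formula $\beta_{k,1}^{(t)} = -\Theta(\Attn_1^{(t)}(\Attn_1^{(t)} + \Attn_k^{(t)})/K)$, the closed form $\beta_{k,1}^{(t)} = -\Theta(e^{2B_{k,1}^{(t)}}/K)$ is exact only while $\Attn_k^{(t)} \lesssim \Attn_1^{(t)}$, i.e., while $A_k^{(t)} - B_{k,1}^{(t)} \lesssim \log K$. Near the end of Phase II, when $A_k$ approaches $0.5\log K$ and $B_{k,1}$ approaches $-0.51\log K$, the $\Attn_k$ contribution adds an extra $e^{A_k - B_{k,1}}/K \leq K^{0.01}$ factor, so the per-step increment of $u$ is in the worst case $O(\eta K^{0.01}/K)$ rather than $O(\eta/K)$. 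The final budget then becomes $u \leq O(K^{1.01}\log K)$, which still gives $B_{k,1} \geq -(0.505 + o(1))\log K \geq -0.51\log K$ for $K$ large, so your conclusion survives, but the derivation as written is slightly imprecise and should carry this correction through.
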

\Cref{lem:im:p2} shows that by the end of phase II, $A_{k}^{(t)}$ matches the magnitude of $B_{k,1}^{(t)}$,
%while $B_{k,1}^{(t)}$ latter showing a minor change 
and during phase II $B_{k,1}^{(t)}$ changes only slightly from the end of phase I. This suggests that, at certain moments in this phase, $A_{k}^{(t)}$ significantly increases and its growth becomes the dominant factor. We next provide some insights into the reasons behind this transition. Once $B_{k,1}^{(t)}$ decreases to $-0.5\log(K)$, we observe that $|\beta^{(t)}_{k,1}|\approx \alpha_{k}^{(t)}=\Theta(\frac{1}{K^2})$. After this point, it becomes challenging for $B_{k,1}^{(t)}$ to decrease significantly compared to the increase in $A_{k}^{(t)}$. 
To illustrate, let us suppose a minimal decrease of $B_{k,1}^{(t)}$ by an amount of $0.01\log(K)$. This would yield that $\Attn^{(t)}_1\leq O(\frac{1}{K^{0.501}})$ and $\beta_{k,1}^{(t)}\leq O(\frac{1}{K^{2.01}})$, while $\Attn^{(t)}_k\geq \Omega\left(\frac{1}{K}\right)$ and $\alpha_{k}^{(t)}\geq \Omega\left(\frac{1}{K^2}\right)$, establishing a situation where $\alpha_{k}^{(t)}\gg \beta_{k,1}^{(t)}$. Such a discrepancy leads to the switching of the dominant effect.

%We provide some insights into the underlying reasons for this transition. Once $B_{k,1}^{(t)}$ decreases to $-0.5\log(K)$, we will have $|\beta^{(t)}_{k,1}|\approx \alpha_{k}^{(t)}=\Theta(\frac{1}{K^2})$. At this point, $B_{k,1}^{(t)}$ cannot decrease too much during the increase of $A_{k}^{(t)}$, since assuming it only decreases a small number say $0.01\log(K)$, will lead to $\Attn^{(t)}_1\leq O(\frac{1}{K^{0.501}})$ and $\beta_{k,1}^{(t)}\leq O(\frac{1}{K^{2.01}})$, while in contrast $\Attn^{(t)}_k\geq \Omega\left(\frac{1}{K}\right)$ and $\alpha_{k}^{(t)}\geq \Omega\left(\frac{1}{K^2}\right)$, which makes $\alpha_{k}^{(t)}\gg \beta_{k,1}^{(t)}$, thus the transition occurs.

\subsubsection{Phase III: Growth of Target Feature }\label{sssec: 4-1-p3} After a transition phase, we observe that  $A_{k}^{(t)}$ enjoys a larger gradient 
  $\alpha^{(t)}_{k}\approx \Theta(\frac{1}{K^{1.5}})$ compared to  $|\beta^{(t)}_{k,1}|\leq O(\frac{1}{K^{1.98}})$
and $|\beta^{(t)}_{k,n}|\leq O\left(\frac{1}{K^3}\right)$ with $n\not=k,1$. This gap between $\alpha_{k}^{(t)}$ and $\beta_{k,n}^{(t)}$ remains over the period, and the gradient $\alpha_{k}^{(t)}$ continues to grow, driving the rapid growth of $A_{k}^{(t)}$ with $B_{k,n}^{(t)}$ being relatively unchanged. The following lemma
summarizes our main results in this phase.
\begin{lemma}[Informal]\label{lem:im:p3}
    Under the same conditions as \Cref{thm: unblc}, given $k>1$,  there exists $T_{3,k}=O(\frac{\log(K)K^{1.5}}{\eta})$, such that for all $T_{2,k}<t\leq T_{3,k}$
    \begin{align*}
\alpha_{k}^{(t)}\geq \Omega\left(\frac{1}{K^{1.5}}\right), \beta_{k,1}^{(t)}\in  \left[- O\left(\frac{\alpha^{(t)}_k}{K^{0.48}}\right),-\Omega\left(\frac{1}{K^{2.01}}\right)\right], 
 |\beta_{k,n}^{(t)}|\leq O\left(\frac{\alpha_{k}^{(t)}+|\beta_{k,1}^{(t)}|}{K}\right)\text{ with } n\not=k,1.
    \end{align*}
   At time $t=T_{3,k}+1$, we have $A_{k}^{(T_{3,k}+1)}\geq \log(K)$.
    % \begin{itemize}
    %     \item $|\beta^{(t)}_{k,n}|\leq O(\frac{\alpha^{(t)}_{k}}{K})$ for all $n\not=k$ and $0\leq t\leq T_{1,k}$.
    %     \item $\alpha^{(t)}_k\geq \Omega\left(\frac{1}{K^2}\right)$ for all $0\leq t\leq T_{1,k}$.
    %     \item $A_{k}^{(T_{1,k}+1)}\geq \log(K)$.
    % \end{itemize}
\end{lemma}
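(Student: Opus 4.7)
The proof should be a careful induction over $t\in(T_{2,k},T_{3,k}]$ using a set of invariants that precisely pin down the orderings among $A_k^{(t)},\,B_{k,1}^{(t)},\,B_{k,n}^{(t)}$. The starting conditions come from Lemma~4.3: $A_k^{(T_{2,k}+1)}\ge 0.5\log(K)$, $B_{k,1}^{(T_{2,k}+1)}\in[-0.51\log(K),-0.49\log(K)]$, and $B_{k,n}^{(T_{2,k}+1)}\approx 0$ for $n\neq k,1$. The natural invariants to carry through the phase are (i) $A_k^{(t)}\in[0.5\log(K),\log(K))$, monotonically increasing, (ii) $B_{k,1}^{(t)}\in[-0.52\log(K),-0.49\log(K)]$ (still confined to a narrow band), and (iii) $|B_{k,n}^{(t)}|=o(1)$ for $n\neq k,1$. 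Conditioning on $\pit\in\esi$ (so $|\cV_1|=\Theta(N)$, $|\cV_n|=\Theta(N/K)$), these invariants translate, via the softmax formula, into the attention-score bounds $\Attn_k^{(t)}\ge \Omega(K^{-1/2})$, $\Attn_1^{(t)}\le O(K^{-0.49})$, and $\Attn_n^{(t)}=\Theta(K^{-1})$ for $n\neq k,1$.

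\textbf{Gradient bounds.} Plugging these attention-score estimates into Lemma~2.3 is the core computation. For $\alpha_k^{(t)}$, the $(1-\Attn_k^{(t)})^2$ term is $\Omega(1)$ as long as $\Attn_k^{(t)}$ is bounded away from $1$, which it is while $A_k^{(t)}\le\log(K)$. Combined with $p_k=\Theta(1/K)$ and $\Attn_k^{(t)}\ge\Omega(K^{-1/2})$, this yields $\alpha_k^{(t)}\ge\Omega(K^{-1.5})$. For $\beta_{k,1}^{(t)}$, the bracketed term is dominated by $-\Attn_k^{(t)}(1-\Attn_k^{(t)})$, giving $\beta_{k,1}^{(t)}\asymp -p_k\,\Attn_1^{(t)}\,\Attn_k^{(t)}(1-\Attn_k^{(t)})$; taking the ratio against $\alpha_k^{(t)}$ produces $|\beta_{k,1}^{(t)}|\le O(\Attn_1^{(t)}\alpha_k^{(t)})\le O(\alpha_k^{(t)}/K^{0.48})$, while the lower bound $|\beta_{k,1}^{(t)}|\ge\Omega(K^{-2.01})$ comes from the same product using $\Attn_1^{(t)}\ge\Omega(K^{-0.51})$ (since $B_{k,1}^{(t)}\ge-0.52\log K$). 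For $n\neq k,1$, the factor $\Attn_n^{(t)}=\Theta(K^{-1})$ produces the claimed extra $1/K$ savings.

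\textbf{Propagation and time count.} Given these one-step gradient bounds, the invariants at step $t+1$ follow by summing. The total growth of $A_k^{(t)}$ is $\eta\sum\alpha_k^{(t)}$, which reaches $0.5\log(K)$ after at most $O(\log(K)K^{1.5}/\eta)$ steps, defining $T_{3,k}$. Over that horizon the accumulated drift of $B_{k,1}^{(t)}$ is bounded by $\sum \eta\,|\beta_{k,1}^{(t)}|\le K^{-0.48}\sum\eta\,\alpha_k^{(t)}\le O(\log(K)K^{-0.48})=o(1)$, which keeps $B_{k,1}^{(t)}$ inside its narrow band; similarly, the drift of each $B_{k,n}^{(t)}$ is negligible since $|\beta_{k,n}^{(t)}|$ carries an additional $1/K$ factor. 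This closes the induction and yields $A_k^{(T_{3,k}+1)}\ge\log(K)$.

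\textbf{Main obstacle.} The delicate point is not establishing any single bound but the simultaneous self-consistency of the bounds $\alpha_k^{(t)}\ge\Omega(K^{-1.5})$ and $|\beta_{k,1}^{(t)}|\le O(\alpha_k^{(t)}/K^{0.48})$ with a matching lower bound $|\beta_{k,1}^{(t)}|\ge\Omega(K^{-2.01})$: one must verify that $B_{k,1}^{(t)}$ neither drifts so far down that $\Attn_1^{(t)}$ becomes negligible (losing the $\Omega(K^{-2.01})$ lower bound) nor stays so close to $-0.49\log(K)$ that $\Attn_1^{(t)}$ stays too large (spoiling the $K^{-0.48}$ ratio). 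The argument above threads this needle because the aggregate drift of $B_{k,1}^{(t)}$ is controlled by $\Delta A_k^{(t)}/K^{0.48}=o(1)$, so $B_{k,1}^{(t)}$ is pinned in a narrow window of width $o(\log K)$, keeping $\Attn_1^{(t)}=\Theta(K^{-0.5\pm o(1)})$ throughout the phase and making both directions of the $\beta_{k,1}^{(t)}$ estimate consistent.
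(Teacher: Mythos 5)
Your proposal is correct and follows essentially the same route as the paper: the same induction hypothesis ($A_k^{(t)}$ in $[0.5\log K,\log K]$ increasing, $B_{k,1}^{(t)}$ confined to an $o(1)$-width band near $-0.5\log K$, $B_{k,n}^{(t)}=O(\log K/K)$), the same translation to attention-score estimates $\Attn_k^{(t)}\ge\Omega(K^{-0.5})$, $\Attn_1^{(t)}=\Theta(K^{-0.5\pm o(1)})$, $\Attn_n^{(t)}=\Theta(K^{-1})$, the same gradient bounds, and the same drift-vs-growth bookkeeping for the time count. The only small slip is the interim justification "$\Attn_1^{(t)}\ge\Omega(K^{-0.51})$ since $B_{k,1}^{(t)}\ge-0.52\log K$" (the stated invariant alone only gives $\Omega(K^{-0.52})$), but your drift analysis already shows $B_{k,1}^{(t)}\ge -0.51\log K - O(\log K/K^{0.48})$, which is exactly what the paper uses and repairs the lower bound.
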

\Cref{lem:im:p3} follows because the continuous growth of $\alpha_{k}^{(t)}$ is mainly driven by $\Attn_{k}^{(t)}$, where $1-\Attn_{k}^{(t)}$ remains at the constant order.
%the training period with constant-order $1-\Attn_{k}^{(t)}$ to facilitate the continuous growth of $\alpha_{k}^{(t)}$ driven by $\Attn_{k}^{(t)}$. 
However, as $A_{k}^{(t)}$ reaches $\log(K)$, $\Attn_{k}^{(t)}$ is above $\Omega(1)$, necessitating a more detailed analysis to control $\alpha_{k}^{(t)}$, which starts the final phase.
%to control the update of $\alpha_{k}^{(t)}$. 

%It can be shown that such a gap between the magnitude of $\alpha_{k}^{(t)}$ and $\beta_{k,n}^{(t)}$ holds until $A_{k}^{(t)}$ reaches $\log(K)$. 

%which means $A_{k}^{(t)}$ enjoys a much larger gradient than $B_{k,n}^{(t)}$.  
\subsubsection{Phase IV: Convergence}\label{sssec: 4-1-p4}

After learning the target feature $v_k$ at a certain level, 
%our subsequent analysis revolves around its convergence. 
the prediction error converges.
We characterize this in the following lemma, where we establish a connection between $\alpha^{(t)}_k$ and the prediction error via analyzing the change of $1-\Attn^{(t)}_{k}$ that diminishes during this phase.

%where we resolve the previously discussed issue of $1-\Attn^{(t)}_{k}$ no longer maintaining a constant level by establishing a connection between $\alpha^{(t)}_k$ and the loss parameter $\epsilon$. \yl{need rewording}

% After $A_{k}^{(t)}$ reaches $\log(K)$, by the analysis in phase IV, we know that $B_{k,n}^{(t)}$ still stick around $0$. In the following iterations, we can show that $|\beta^{(t)}_{k,n}|\leq O(\frac{\alpha^{(t)}_{k}}{K})$ still holds,  but we need to take further consideration into controlling $\alpha^{(t)}_k$ since $\Attn^{(t)}_{k}$ reaches $\Omega(1)$ after $T_{1,k}$ and we can no longer regard $1-\Attn^{(t)}_{k}$ as a constant order when bounding $\alpha^{(t)}_{k}$. To solve this, we connect the $\alpha^{(t)}_k$ with the loss parameter $\epsilon$ and have the following result in this phase.
\begin{lemma}[Informal]\label{lem:im:p4}
    Under the same conditions as \Cref{thm: unblc}, given $0<\epsilon<1$, for each $k>1$, there exists $T_{4,k}=T_{3,k}+O(\frac{K\log(K\epsilon^{-\frac{1}{2}})}{\eta\epsilon})$, such that  for all  $T_{3,k}< t\leq T_{4,k}$
  $$    \alpha^{(t)}_k\geq \Omega\left(\frac{\epsilon}{K}\right),\quad \beta^{(t)}_{k,n}\in \left[- O\left(\frac{\alpha^{(t)}_k}{K^{0.49}}\right),0\right],\quad \beta^{(t)}_{k,n}\in \left[- O\left(\frac{\alpha^{(t)}_k}{K}\right),0\right] \text{ 
 with } n\not=k,1. $$
    % \begin{itemize}
    %     \item $|\beta^{(t)}_{k,n}|\leq O(\frac{\alpha^{(t)}_{k}}{K})$ for all $n\not=k$ and $T_{1,k}< t\leq T_{2,k}$.
    %     \item $\alpha^{(t)}_k\geq \Omega(\frac{\epsilon}{K})$ for all  $T_{1,k}< t\leq T_{2,k}$.
    %     \item At time $t=T_{2,k}+1$, we have $\frac{1}{2}\mathbb{E}    \left[\mathbf{1}\{ x_{\text { query }}=v_k\}\left(\widehat{y}_{\text { query }}-\left\langle w, x_{\text { query }}\right\rangle\right)^2 \right]\leq p_k\epsilon$ and $(1-\Attn_k^{(t)})^2 \leq O(\epsilon)$ w.h.p.
    % \end{itemize}
    At time $t=T_{4,k}+1$, we have $\cL_{k}(\theta^{(T_{4,k}+1)})-\cL_{k}^{*}<\epsilon$ and $(1-\Attn_k^{(t)})^2 \leq O(\epsilon)$, if $\xq=v_k$ and $\pit\in\esi$.
\end{lemma}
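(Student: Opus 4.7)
The plan is an inductive analysis that propagates the bilinear-weight estimates from the end of Phase III (\Cref{lem:im:p3}) forward through Phase IV, and then converts the resulting attention-score concentration into a loss bound. Throughout, I would condition on the high-probability event $\{\pit\in\esi\}$; its complement contributes at most $e^{-\Omega(\poly(K))}$ to $\cL_k$ and is absorbed into the $\cL_k^{*}=\Theta(e^{-\poly(K)})$ baseline. On $\esi$ the attention score admits the explicit form
\begin{align*}
1-\Attn_k^{(t)} \;=\; \frac{\sum_{n\neq k} |\cV_n|\,e^{B_{k,n}^{(t)}}}{|\cV_k|\,e^{A_k^{(t)}} + \sum_{n\neq k} |\cV_n|\,e^{B_{k,n}^{(t)}}},
\end{align*}
so the whole argument reduces to showing that $A_k^{(t)}$ grows fast enough against the near-stationary denominator $S^{(t)} := \sum_{n\neq k} |\cV_n|e^{B_{k,n}^{(t)}}$, and then plugging the resulting bound on $(1-\Attn_k^{(t)})^2$ into the expression for $\cL_k$.

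For the induction I would maintain, for every $T_{3,k} < s \leq t$, that $A_k^{(s)} \geq \log K$ (monotone from \Cref{lem:im:p3}), $B_{k,1}^{(s)} \in [-0.51\log K,\,-0.49\log K]$, and $B_{k,n}^{(s)} \leq 0$ with $|B_{k,n}^{(s)}|$ at most an $O(K^{-0.49})$ fraction of $A_k^{(s)}$ for $n \neq 1,k$. Under these bounds \Cref{lem:gd} gives, since $\Attn_k^{(s)}$ is bounded below by a constant and $\sum_{m\neq k}(\Attn_m^{(s)})^2 + (1-\Attn_k^{(s)})^2 = \Theta((1-\Attn_k^{(s)})^2)$, the estimate $\alpha_k^{(s)} = \Theta((1-\Attn_k^{(s)})^2/K)$, which stays $\geq \Omega(\epsilon/K)$ so long as $(1-\Attn_k^{(s)})^2 \geq \Omega(\epsilon)$. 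For $\beta_{k,n}^{(s)}$, the product $\Attn_k^{(s)}(1-\Attn_k^{(s)})$ dominates $\sum_m (\Attn_m^{(s)})^2$ because a constantly-sized $\Attn_k^{(s)}$ beats the sum of squares of the remaining small scores; hence the parenthesized factor in \Cref{lem:gd} is negative, and matching magnitudes against $\alpha_k^{(s)}$ yields the stated $O(\alpha_k^{(s)}/K^{0.49})$ bound for $n=1$ and $O(\alpha_k^{(s)}/K)$ for $n\neq 1,k$, closing the induction. In particular each $B_{k,n}$ moves by at most a $K^{-0.49}$ fraction of the move of $A_k$, so $S^{(t)}$ drifts by only $o(1)$ over the whole phase.

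Because $\log(1-\Attn_k^{(t)}) = -A_k^{(t)} + O(\log K)$ by the near-stationarity of $S^{(t)}$, the discrete dynamics of $\delta^{(t)} := 1-\Attn_k^{(t)}$ obey $\log\delta^{(t+1)} - \log\delta^{(t)} = -\eta\alpha_k^{(t)}(1+o(1))$. Plugging in $\alpha_k^{(t)} = \Theta((\delta^{(t)})^2/K)$ and summing (or passing to the continuous envelope $d(\delta^{-2})/dt \asymp 2\eta/K$) shows that $T_{4,k} - T_{3,k} = O(K\log(K\epsilon^{-1/2})/(\eta\epsilon))$ iterations suffice to drive $(\delta^{(T_{4,k}+1)})^2 \leq O(\epsilon)$, with the logarithmic factor absorbing discretization and $S$-drift slack. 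To conclude, by orthonormality of $\{v_m\}$ and $\mathbb{E}[ww^\top]=I$ one obtains
\begin{align*}
\cL_k(\theta) \;=\; \tfrac12\,\mathbb{E}\!\left[(1-\Attn_k)^2 + \sum_{m\neq k}(\Attn_m)^2 \,\Big|\, \xq = v_k\right],
\end{align*}
and the elementary inequality $\sum_{m\neq k}(\Attn_m)^2 \leq (\sum_{m\neq k}\Attn_m)^2 = (1-\Attn_k)^2$ reduces the loss to the controlled quantity $(\delta^{(T_{4,k}+1)})^2 = O(\epsilon)$, with the $\esi^c$ residual absorbed into $\cL_k^*$. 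The main obstacle is closing the induction at the very onset of Phase IV, when $\Attn_k^{(t)}$ is only constantly bounded away from $1$: the three terms $\sum_m (\Attn_m^{(t)})^2$, $\Attn_n^{(t)}$, and $\Attn_k^{(t)}(1-\Attn_k^{(t)})$ appearing inside $\beta_{k,n}^{(t)}$ are all of the same order, so extracting a clean sign and the tight $K^{-0.49}$ versus $K^{-1}$ factors requires carefully separating the $B_{k,1}$ contribution from the $B_{k,n\neq 1}$ contributions in $S^{(t)}$ and verifying that the resulting gradient estimates survive uniformly over the $\Theta(K\log(K\epsilon^{-1/2})/(\eta\epsilon))$ iterations without the inductive range for $B_{k,1}$ being violated.
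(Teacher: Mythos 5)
Your proposal follows the paper's Phase-IV argument in all essentials: you carry forward the same inductive estimates on $A_k^{(t)}$, $B_{k,1}^{(t)}$, and $B_{k,n}^{(t)}$ from the end of Phase III, derive from them the same attention-score bounds ($\Attn_k^{(t)}=\Omega(1)$, $\Attn_1^{(t)}=\Theta(K^{-1/2\pm 0.01}(1-\Attn_k^{(t)}))$, $\Attn_n^{(t)}=\Theta((1-\Attn_k^{(t)})/K)$), extract the same gradient estimates $\alpha_k^{(t)}\geq\Omega(\epsilon/K)$ and $\beta_{k,1}^{(t)},\beta_{k,n}^{(t)}\in[-O(\alpha_k^{(t)}/K^{0.49}),0]$ resp.~$[-O(\alpha_k^{(t)}/K),0]$, and conclude the loss bound from attention concentration together with the $\esi^c$ tail and Cauchy--Schwarz. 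This is the paper's proof.

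The one place where your route genuinely diverges is the time bound. The paper defines $T^{\epsilon}_{4,k}$ as the last iteration with $A_k^{(t)}$ below an explicit threshold of size $\log(\Theta(K)\,\epsilon^{-1/2})$; the duration then falls out immediately from distance divided by the worst-case rate $\eta\alpha_k^{(t)}\geq\Omega(\eta\epsilon/K)$, and this stopping rule simultaneously supplies the upper cap on $A_k^{(t)}$ needed to close Induction Hypothesis~\ref{hpk4} and the lower bound $(1-\Attn_k^{(t)})^2\geq\Omega(\epsilon)$ that keeps $\alpha_k^{(t)}$ from collapsing prematurely. You instead integrate the envelope $d(\delta^{-2})/dt\asymp\eta/K$ using the sharper estimate $\alpha_k^{(t)}=\Theta(\delta^2/K)$. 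That heuristic is sound and actually gives a bound of $O(K/(\eta\epsilon))$ without the extra logarithm, so the claimed $O(K\log(K\epsilon^{-1/2})/(\eta\epsilon))$ holds with room to spare; but a rigorous version still needs an explicit stopping criterion to (i) cap $A_k^{(t)}$ at $O(\log(K/\epsilon))$ so the induction stays well-posed and (ii) guarantee $\alpha_k^{(t)}\geq\Omega(\epsilon/K)$ uniformly up to that stopping time. The paper's threshold on $A_k^{(t)}$ packages both of these cleanly, which is why it is used; your ``$S$-drift slack absorbed by the log factor'' remark correctly signals the gap but does not fill it. Otherwise the argument is the same one.
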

The convergence result for $k>1$ stated in \Cref{thm: unblc} directly follows by choosing $T_{k}^*=T_{4,k}+1$. 
%A summary of these four phases is provided in \Cref{fig:enter-label}.%\hfill$\square$ 
\subsection{Training Dynamics of Other Settings}\label{sec:other}
We next describe the training dynamics of other settings, which take the phases similar to those discussed in \Cref{sec:under}.

\paragraph{Imbalanced Setting for the Dominant Feature.}
%The training dynamics for the dominant feature $v_1$ in the imbalanced case is relatively straightforward. Initially, overall attention to the target feature, denoted as $\Attn_{1}^{(0)}$, already reaches $\Omega(1)$ due to the abundance of tokens featuring $v_1$ in $\esi$. Consequently, training directly enters the convergence stage, as summarized in the following lemma.

For the dominant feature $v_1$ in the imbalanced setting, since the overall attention $\Attn_{1}^{(0)}$ to the target feature already reaches $\Omega(1)$ due to the abundance of tokens featuring $v_1$ in $\esi$, the training directly enters the convergence stage, as summarized in the following lemma.
\begin{lemma}[Informal]\label{lem:im:v1}
    Under the same conditions as \Cref{thm: unblc}, given $k>1$,  there exists $T_{1}=O(\frac{\log(\epsilon^{-\frac{1}{2}})}{\eta\epsilon})$, such that for all $t\leq T_{1}$
    \begin{align*}        \alpha_{1}^{(t)}\geq \Omega(\epsilon), \quad \beta_{1,n}^{(t)}\in [- O\left(\frac{\alpha^{(t)}_n}{K}\right),0] \text{ with } n>1.
    \end{align*} 
Further $\cL_{1}(\theta^{(T_{1}+1)})-\cL_{1}^{*}<\epsilon$,  and $(1-\Attn_1^{(T_{1}+1)})^2 \leq O(\epsilon)$ if $\xq=v_1$ and $\pit\in\esi$.
    % \begin{itemize}
    %     \item $|\beta^{(t)}_{k,n}|\leq O(\frac{\alpha^{(t)}_{k}}{K})$ for all $n\not=k$ and $0\leq t\leq T_{1,k}$.
    %     \item $\alpha^{(t)}_k\geq \Omega\left(\frac{1}{K^2}\right)$ for all $0\leq t\leq T_{1,k}$.
    %     \item $A_{k}^{(T_{1,k}+1)}\geq \log(K)$.
    % \end{itemize}
\end{lemma}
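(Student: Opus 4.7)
The plan is to exploit the key observation that, on the conditioning event $\pit \in \esi$, the dominant feature already commands a constant share of the initial attention: $\Attn_{1}^{(0)} = |\cV_1|/N = \Theta(p_1) = \Theta(1)$. Unlike the under-represented case, there is no ``warm-up'' that first has to suppress distractor tokens. I would therefore aim for a single-phase argument: establish by induction a handful of invariants that persist along the entire trajectory, use them to lower bound $\alpha_1^{(t)}$ in terms of the excess loss, and then close with a descent-style argument.

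Concretely, I would maintain three invariants for $0 \leq t \leq T_1$: (I1) $A_{1}^{(t)}$ is non-decreasing (immediate from $\alpha_1^{(t)} \geq 0$ in \Cref{lem:gd}); (I2) $B_{1,n}^{(t)}$ is non-increasing for every $n > 1$, i.e.\ $\beta_{1,n}^{(t)} \leq 0$; and (I3) $\Attn_{1}^{(t)} \geq c_0$ on $\esi$ for a universal constant $c_0 > 0$. The crux is (I2). Reading \Cref{lem:gd}, the sign of $\beta_{1,n}^{(t)}$ is governed by $\sum_{m\neq 1}(\Attn_m^{(t)})^2 - \Attn_n^{(t)} - \Attn_1^{(t)}(1-\Attn_1^{(t)})$; since $\sum_{m\neq 1}(\Attn_m^{(t)})^2 \leq (1-\Attn_1^{(t)})^2$ by Cauchy--Schwarz, and $\Attn_1^{(t)} \geq c_0$ by (I3), the last term dominates so long as $c_0$ is a large enough constant relative to $1-\Attn_1^{(t)}$. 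At initialization $\Attn_1^{(0)} = p_1 = \Theta(1)$ provides the necessary slack, and invariants (I1)--(I2) then feed (I3) forward because $A_1^{(t)}$ only grows while every $B_{1,n}^{(t)}$ only shrinks, so $\Attn_{1}^{(t)}$ can only increase along the trajectory on $\esi$.

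Once the invariants are in place, the rest is a softmax computation plus a descent bound. Writing $\Attn_1^{(t)} = |\cV_1|e^{A_1^{(t)}}/\bigl(|\cV_1|e^{A_1^{(t)}} + \sum_{n>1}|\cV_n|e^{B_{1,n}^{(t)}}\bigr)$ and using $B_{1,n}^{(t)} \leq 0$ with $|\cV_1|/|\cV_n| = \Theta(K)$ on $\esi$ yields $1 - \Attn_1^{(t)} \leq C\,e^{-A_1^{(t)}}$. A bias--variance decomposition using orthonormality of $\{v_k\}$ and the identity covariance of $w$ gives $\cL_1(\theta^{(t)}) - \cL_1^{*} \leq C' e^{-2A_1^{(t)}}$. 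In parallel, pulling out $\Attn_1^{(t)} \geq c_0$ in the formula for $\alpha_1^{(t)}$ and applying the same decomposition delivers $\alpha_1^{(t)} \geq 2 p_1 c_0\,(\cL_1(\theta^{(t)}) - \cL_1^{*})$. Hence as long as the excess loss exceeds $\epsilon$, each GD step increases $A_1^{(t)}$ by at least $\Omega(\eta\epsilon)$, so after at most $T_1 = O(\log(\epsilon^{-1/2})/(\eta\epsilon))$ iterations $A_1^{(t)}$ crosses $\tfrac12\log(1/\epsilon) + O(1)$, forcing both $\cL_1(\theta^{(T_1+1)}) - \cL_1^{*} \leq \epsilon$ and $(1-\Attn_1^{(T_1+1)})^2 \leq O(\epsilon)$ on $\esi$. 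The sharper two-sided estimate on $\beta_{1,n}^{(t)}$ drops out by comparing $\Attn_n^{(t)}\cdot \Attn_1^{(t)}(1-\Attn_1^{(t)})$ against $\Attn_1^{(t)}(1-\Attn_1^{(t)})^2$ using $\Attn_n^{(t)} = O(1/K)$ on $\esi$.

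The main obstacle I anticipate is the interdependence between (I2) and (I3): the sign $\beta_{1,n}^{(t)} \leq 0$ needs $\Attn_1^{(t)}$ bounded away from zero, while the lower bound on $\Attn_1^{(t)}$ relies on the monotonicity of $A_1^{(t)}$ and $B_{1,n}^{(t)}$ up to time $t$. Closing this loop cleanly requires either a simultaneous induction over all invariants or an explicit ``safety margin'' at initialization, namely the gap between $p_1$ and the threshold above which $\beta_{1,n}^{(t)} \leq 0$ is guaranteed, which is consumed only monotonically. A secondary technical point is reconciling the population gradient (an expectation over all prompts) with the high-probability event $\pit \in \esi$: since $\mathbb{P}(\pit \notin \esi) = e^{-\Omega(\poly(K))}$ by a Chernoff bound on the multinomial counts $|\cV_k|$, the complementary contribution to $\alpha_1^{(t)}$ and $\beta_{1,n}^{(t)}$ is exponentially small and absorbed into constants, but this bookkeeping must be done explicitly so that the inequalities underlying (I1)--(I3) remain valid at every step.
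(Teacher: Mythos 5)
Your overall roadmap — treating the dominant-feature case as a single convergence phase, running a simultaneous induction on the monotonicity of $A_1^{(t)}$ and $B_{1,n}^{(t)}$ together with the constant lower bound on $\Attn_1^{(t)}$, and then tying $\alpha_1^{(t)}$ to the excess loss for a descent bound — is exactly the paper's structure (its \Cref{hpi1} and the surrounding lemmas). You also correctly flag the two bookkeeping points: the circular dependence among the invariants and absorbing the exponentially small $\pit\notin\esi$ tail into the gradient formulas. However, the specific argument you give for the sign $\beta_{1,n}^{(t)}\leq 0$ has a genuine gap, and it is load-bearing.

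To show the bracketed factor $\sum_{m\neq 1}(\Attn^{(t)}_m)^2 - \Attn^{(t)}_n - \Attn^{(t)}_1(1-\Attn^{(t)}_1)$ is negative, you bound $\sum_{m\neq 1}(\Attn^{(t)}_m)^2 \leq (1-\Attn_1^{(t)})^2$ (this is just nonnegativity of a square of a sum, not Cauchy–Schwarz, but set that aside) and then try to absorb it by $-\Attn_1^{(t)}(1-\Attn_1^{(t)})$. That absorption works only when $1-\Attn_1^{(t)} < \Attn_1^{(t)}$, i.e.\ $\Attn_1^{(t)} > 1/2$. The hypothesis is only $p_1=\Theta(1)$: for instance $p_1=0.3$ with $p_k\approx 0.7/(K-1)$ is a valid instance, and then $\Attn_1^{(0)}\approx p_1<1/2$, so the ``safety margin'' you rely on at initialization does not exist. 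The loose bound throws away the crucial structure that each off-target score on $\esi$ satisfies $\Attn_m^{(t)}=O\bigl((1-\Attn_1^{(t)})/K\bigr)$, whence $\sum_{m\neq 1}(\Attn^{(t)}_m)^2\leq \max_{m\neq 1}\Attn_m^{(t)}\sum_{m\neq 1}\Attn_m^{(t)} = O\bigl((1-\Attn_1^{(t)})^2/K\bigr)$, a factor of $K$ smaller. With that tighter estimate the bracket is $\leq -(1-\Attn_1^{(t)})\bigl(\Attn_1^{(t)}-\max_{m\neq 1}\Attn_m^{(t)}\bigr)\leq -\Omega(1-\Attn_1^{(t)})$, negative whenever $\Attn_1^{(t)}=\Omega(1)$, which is exactly the estimate the paper uses in \Cref{pi2b}. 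The fix is therefore to promote the off-target bound $\Attn_n^{(t)}=\Theta\bigl((1-\Attn_1^{(t)})/K\bigr)$ to an explicit lemma in the induction; you already invoke an $O(1/K)$ bound later only to sharpen the two-sided estimate, not realizing it is indispensable even for the sign. Relatedly, to maintain that off-target bound along the trajectory you also need the two-sided invariant $-O(A_1^{(t)}/K)\leq B_{1,n}^{(t)}\leq 0$ (so the $B_{1,n}^{(t)}$ stay balanced among themselves), and for the final two-sided gradient estimate you actually need $\Attn_n^{(t)}=O\bigl((1-\Attn_1^{(t)})/K\bigr)$ rather than the cruder $O(1/K)$ you cite, otherwise the ratio $|\beta_{1,n}^{(t)}|/\alpha_1^{(t)}$ picks up an extra $1/(1-\Attn_1^{(t)})$ factor and is not $O(1/K)$ near the end of the phase.
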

\paragraph{Balanced Scenarios.}
Similarly to imbalanced settings, we can show that for balanced data, with high probability, $\pit$ belongs to 
$$\esb:=\left\{\pit: |\cV_k|=\Theta\left(\frac{N}{K}\right)\text{ for all  }k\in[K]\right\}.$$
At initialization, the transformer uniformly assigns attention to each token, i.e., $\attn^{(0)}_i=\frac{1}{N}$ for $i\in[N]$. Unlike the imbalanced case, here, due to $\pit\in\esb$, we have that  $\Attn_{m}^{(0)}=\Theta\left(\frac{1}{K}\right)$ for $m\in[K]$, indicating nearly equal attention to each feature. Consequently, as \Cref{lem:gd}, we observe a significantly larger gradient in $A_{k}^{(t)}$ at the outset, with $\alpha^{(0)}_{k}\approx \Theta(\frac{1}{K^2})$, compared to $|\beta^{(0)}_{k,n}|\approx \Theta(\frac{1}{K^3})$ for $n\not=k$. This behavior mirrors the observations from phase III for under-represented features, allowing us to directly generalize the analysis.

\section{Additional Related Work}\label{App:A}
\paragraph{In-Context Learning.} 
% Recently, a line of work tried to investigate transformer's ability of in-context learning on the empirical side \cite{DBLP:conf/emnlp/MinLHALHZ22,wei2023larger,akyurek2022learning,von2023transformers,dai2023can}. However, theoretical understanding of in-context learning has lagged. 
Recent studies explored theoretical properties of transformers for in-context learning from various perspectives. Focusing on expressive capacity,  \cite{akyurek2022learning} studied linear regression tasks and showed that trained in-context learners can represent GD of ridge regression and exact least-squares regression. \cite{giannou2023looped} proved the existence of a looped transformer that can emulate in-context learning algorithms. \cite{von2023transformers,dai2023can} also showed that transformer trained in-context implements the GD. \cite{bai2023transformers} further provided comprehensive results of transformers including the expressive power, in-context prediction power, and sample complexity of pretraining, and then constructed two general mechanisms for algorithm selection.%: Pre-ICL testing and Post-ICL validation, where the transformer selects the task and the base in-context learning algorithm, respectively.
 \cite{li2023transformers} analyzed the generalization error of trained in-context learning transformers. Another line of work considered in-context learning from a different perspective within the Bayesian framework~\cite{xie2021explanation,zhang2023and,wang2023large,jiang2023latent,han2023context,wies2023learnability,ahuja2023context}. 
% regards ICL as implicit Bayesian inference, with transformers learning a shared latent concept between prompts and test data, and they prove the ICL property when the training distribution is a mixture of HMMs

% \cite{zhang2023and}
Closely related to our work is the line of research by \cite{zhang2023trained,mahankali2023one,ahn2023transformers}, which investigated the training dynamics of in-context learning. Specifically, \cite{mahankali2023one} considered linear regression tasks and showed that the one-layer transformer that minimizes the pre-training loss implements one step of GD. 
\cite{zhang2023trained} investigated in-context learning of transformers with a single linear self-attention layer trained by gradient flow on linear regression tasks, and showed that gradient flow finds a global minimum. \cite{ahn2023transformers} investigated the landscape of the loss function for linear transformers trained over random instances of linear regression. 
However, all those works considered only transformers with {\em linear} self-attention layers and do not capture the crucial role of the {\em softmax} mapping, which lies in the core design of 
%without softmax, which is ubiquitous for demystifying the ability of 
transformers to be advantageous over other network architectures.
% At this global minimum, when given a test prompt of labeled examples from a new prediction task, the transformer achieves prediction error competitive with the best linear predictor over the test prompt distribution
Our work focuses on nonlinear transformers with {\em softmax attention} and characterizes their training dynamics for in-context learning.

\paragraph{Training Dynamics of Transformers.} 
% (check simon du)
% \yh{ dynamics: 
% % \cite{li2023theoretical}: shallow, limit;
% % \cite{jelassi2022vision} vit spatial, limit: positional attention simplified; 
% svm, ;simon du, no convergence, no phase guarantee}
% First proposed by \cite{vaswani2017attention}, transformers have demonstrated great power in natural language processing problems and especially in large language models\cite{devlin2018bert,openai2023gpt4}. Due to the great success in application, recent work has tried to understand transformers and their core structure: attention layer theoretically in several aspects. 

% Recent work has tried to understand transformers and their core structure: attention layer on the theoretical side in many aspects. We only
%  focus on those focusing on training dynamics.
 
 % For expressive capabilities, \cite{yun2019transformers} shows that multi-layer transformers can approximate any permutation-equivariant sequence-to-sequence function. \cite{fu2023can} further investigated using a single attention layer to approximate sequence-to-scaler functions, and also explored the generalization property. 
 \cite{jelassi2022vision} proposed a simplified Vision Transformers (ViT) model in which the attention matrix solely depends on the positional embeddings and showed that the trained model by GD can learn spatial structure. \cite{li2023theoretical} studied the training of shallow ViT for a classification task and %depicts the evolution of the attention map during the training and 
 characterized the sample complexity to achieve a desirable generalization performance. However, their analysis relied on a good initialization near the target pattern, which may not be feasible in practice.
 %result depended on initialization error and required a good initialization. 
 \cite{tian2023scan} analyzed the SGD training dynamics for a one-layer transformer with one self-attention plus one decoder layer and showed how the self-attention layer combines input tokens during the training, but this work did not provide the convergence guarantee for SGD. Recently, \cite{tarzanagh2023transformers} established an equivalence between the optimization geometry of self-attention and a hard-margin SVM problem that separates optimal input tokens from non-optimal tokens using linear constraints on the outer-products of token pairs. While the mathematical setup of these problems is different from in-context learning, some of our analysis techniques may be useful for studying the training dynamics of these problems.

% \cite{wei2022statistically} show that transformers can meaningfully approximate
% Turing machines with sample complexity logarithmic in the computation time.

\section{Conclusions}
In this work, we investigated the training dynamics of a one-layer transformer with softmax attention trained by GD for in-context learning.
%within a well-established in-context learning framework. 
We analyzed two settings respectively with balanced and imbalanced features, and proved the guaranteed convergence to a vanishing in-context prediction error by detailing the evolution of attention dynamics for both settings. Interestingly, we characterized a four-phase behavior for the imbalanced settings that sheds light on the intricate attention dynamics between dominant and target under-represented features during training. To our knowledge, this is the first work that rigorously analyzed the {\em softmax} attention dynamics for in-context learning. Our approach features novel ideas for phase decomposition based on the changes of the dominant role between two types of bilinear attention weights in the learning process, and has the potential to facilitate further theoretical understanding of how transformers perform in other algorithms and learning paradigms.

\bibliography{mybib}
\bibliographystyle{plain}
\newpage
\appendix

\appendixpage
\allowdisplaybreaks
\startcontents[section]
{
\hypersetup{linkcolor=blue}
\printcontents[section]{l}{1}{\setcounter{tocdepth}{2}}
}
\newpage
\section{Preliminaries}
In this section, we will introduce warm-up gradient computations and probabilistic lemmas that establish essential properties of the data and the loss function, which are pivotal for the technical proofs in the upcoming sections. Towards the conclusion of this section, we will also provide a summary of the key notations introduced in both the main content and these preliminary sections. These notations will be frequently adopted in our subsequent analyses.
% \subsection{Notations.}
% Let's start by summarizing the notations that have been introduced in the main content and will be frequently employed in our subsequent analysis.
% \begin{definition}
%   For $k,n\in[K]$ and $n\not=k$, define the following  quantities for $t\geq 0$:
% \begin{align*}
% A_{k}^{(t)}&:=v_{k}^{\top}Q^{(t)} v_{k}\qquad\alpha_{k}^{(t)}=-v_{k}^{\top}{\nabla_{Q} L(Q^{(t)})}v_{k} \\    B_{k,n}^{(t)}&:=v_{n}^{\top}Q^{(t)} v_{k}\qquad \beta_{k,n}^{(t)}=-v_{n}^{\top}\nabla_{Q} L(Q^{(t)})  v_{k}
% \end{align*}
% By gradient descent update, we have
% \begin{align*}
%    A_{k}^{(t+1)}&:= A_{k}^{(t)}+\eta \alpha_{k}^{(t)}\\
%    B_{k,n}^{(t+1)}&:= B_{k,n}^{(t)}+\eta \beta_{k,n}^{(t)}
% \end{align*}
% Moreover, by our initialization, since $Q^{(0)}=\mathbf{0}_{d\times d}$,   we have $A_{k}^{(0)}=B_{k,n}^{(0)}=0$ for any $k,n\in[K]$ with $n\not=k$.

% %If we consider the initialization: $Q^{(0)}=\mathbf{I}_d$, then $\Lambda^{(0)}_{m}=1$, $\Gamma^{(0)}_{m}=0$ for $m\in[2]$. 
% %By gradient computation, we have:
% \end{definition}

% Given a prompt $P=(x_1,y_1,\cdots,x_N,y_N, x_{\text{query}})$, let us denote $\pit$ the collection of input tokes, i.e. $\{x_i\}_{i=1}^{N}$.

% \paragraph{Parameter Settings.} In our analysis, we represent parameters as a function $K$ (the number of distinct features), and we consider the case when $K$ is sufficiently large.
% \begin{itemize}
%     \item $N=\operatorname{poly}(K)$,
%     \item $p_k=\Theta\left(\frac{1}{K}\right)$.
% \end{itemize}
\subsection{Gradient Computations}
%Since we don not update the $\nu$ during out training, we just need to  
We first calculate the gradient with respect to $Q$ (note that we do not update the parameter $\nu$ during the training). We omit the superscript `$(t)$' and write $L(\theta)$ as $L$ here for simplicity.
\begin{lemma}\label{app:lem:gdQ}
The gradient of the loss function with respect to $Q$ is given by
    \begin{align*}
  \nabla_{Q} L
&=\mathbb{E}\left[\left(\widehat{y}_{\text {query}}-\left\langle w, \xq \right\rangle\right)\sum_{i,j\in [N]}\attn_{ i} \attn_{j} (E^{x}_{i}-E^{x}_{j}){E^{x}_{N+1}}^{\top}y_i\right]. 
\end{align*}
\end{lemma}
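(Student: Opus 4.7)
The plan is to reduce this lemma to a routine application of the chain rule plus the standard softmax Jacobian, applied to the model expression from \cref{model}. First I would write the loss as $L=\tfrac{1}{2}\mathbb{E}[(\hat{y}_{\text{query}}-\langle w,x_{\text{query}}\rangle)^2]$ and pull the gradient inside the expectation (justified by dominated convergence, since $\hat{y}_{\text{query}}$ is smooth in $Q$ and the tokens live in a fixed finite feature set), giving
\[
\nabla_Q L=\mathbb{E}\!\left[(\hat{y}_{\text{query}}-\langle w,x_{\text{query}}\rangle)\,\nabla_Q\hat{y}_{\text{query}}\right].
\]
Since $W^V$ is fixed with $\nu=1$ and the masking removes the last column, the prediction reads $\hat{y}_{\text{query}}=\sum_{i\in[N]}\mathrm{attn}_i\,y_i$ with $\mathrm{attn}_i=e^{z_i}/\sum_{j\in[N]}e^{z_j}$ and $z_i={E^x_i}^\top Q E^x_{N+1}$, so it suffices to compute $\nabla_Q\mathrm{attn}_i$.

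Next I would carry out the softmax-Jacobian step. Using $\partial z_k/\partial Q=E^x_k{E^x_{N+1}}^\top$ and the standard identity $\partial\mathrm{attn}_i/\partial z_k=\mathrm{attn}_i(\delta_{ik}-\mathrm{attn}_k)$, I get
\[
\nabla_Q\mathrm{attn}_i=\mathrm{attn}_i\Bigl(E^x_i-\sum_{k\in[N]}\mathrm{attn}_k E^x_k\Bigr){E^x_{N+1}}^\top .
\]
Using $\sum_k\mathrm{attn}_k=1$ to rewrite $E^x_i-\sum_k\mathrm{attn}_k E^x_k=\sum_k\mathrm{attn}_k(E^x_i-E^x_k)$, I then multiply by $y_i$ and sum over $i\in[N]$, and relabel the inner index $k\mapsto j$, to obtain
\[
\nabla_Q\hat{y}_{\text{query}}=\sum_{i,j\in[N]}\mathrm{attn}_i\,\mathrm{attn}_j\,(E^x_i-E^x_j){E^x_{N+1}}^\top y_i .
\]
Plugging this back into the chain-rule expression yields exactly the claimed formula.

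There is essentially no hard step here; the only place where one has to be a bit careful is the masking convention, which forces the sums over token positions to run over $[N]$ rather than $[N+1]$, and the identification of $\hat{y}_{\text{query}}$ with the $(N{+}1)$-th column of $F_{\text{SA}}$. I would include a one-line remark that the query's own position $E^x_{N+1}$ contributes only through the key/query inner product $z_i$ (not through the value stream), which is why $E^x_{N+1}$ appears as an outer factor while the sums range over input tokens only. With these bookkeeping points in place, the identity follows directly.
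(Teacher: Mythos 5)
Your proposal is correct and follows essentially the same route as the paper: chain rule to reduce to $\nabla_Q \mathrm{attn}_i$, then the softmax Jacobian to get $\mathrm{attn}_i\bigl(E^x_i-\sum_j\mathrm{attn}_j E^x_j\bigr){E^x_{N+1}}^\top$, then the normalization identity $\sum_j\mathrm{attn}_j=1$ to recast this as the double sum. The paper derives the softmax Jacobian by an explicit entrywise computation via two auxiliary maps rather than invoking the standard identity $\partial\mathrm{attn}_i/\partial z_k=\mathrm{attn}_i(\delta_{ik}-\mathrm{attn}_k)$, but this is a cosmetic difference only.
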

\begin{proof}
We obtain:
    \begin{align}
    \nabla_{Q} L &= \mathbb{E}[\left(\yq-\left\langle w, \xq \right\rangle\right)\frac{\partial \yq }{\partial Q}]=\mathbb{E}\left[\left(\yq-\left\langle w, \xq \right\rangle\right)\sum_{i\in[N]}\frac{\partial \attn_{i} }{\partial Q}y_i\right].\label{eq1:gradient}
\end{align}
Denote $Q_{j,k}$ as the entry in $j$-th row and $k$-th column of $Q$, and define $f: \mathbb{R}^{d \times d} \to  \mathbb{R}^{N}$ as $f(Q)=\left(e^{{E_1^x}^\top Q{E_{N+1}^x}},\cdots,e^{{E_N^x}^\top Q{E_{N+1}^x}}\right)^\top$, and $g: \mathbb{R}^{N} \to \mathbb{R}$ as $ g(y)=\frac{y_i}{\sum_{j \in [N]}y_j}$. By the chain rule, we have
\begin{align*}
    \frac{\partial \attn_{ i} }{\partial Q_{j,k}}&= \mathrm{Tr}\left[(\frac{\partial g(y)}{\partial y} \big |_{y=f(Q)})^\top \frac{\partial f(Q)}{\partial Q_{j,k}}\right]\\
    & = \sum_{n\neq i}-\frac{e^{{E_i^x}^\top Q{E_{N+1}^x}}}{\left(\sum_{n\in[N]}e^{{E_n^x}^\top Q{E_{N+1}^x}}\right)^2} \cdot e^{{E_n^x}^\top Q{E_{N+1}^x}}(E_n^x)_j(E_{N+1}^x)_k\\
    & \quad + \frac{\sum_{n \in [N]}e^{{E_n^x}^\top Q{E_{N+1}^x}}-e^{{E_i^x}^\top Q{E_{N+1}^x}}}{\left(\sum_{n\in[N]}e^{{E_n^x}^\top Q{E_{N+1}^x}}\right)^2} \cdot e^{{E_i^x}^\top Q{E_{N+1}^x}}(E_i^x)_j(E_{N+1}^x)_k\\
    & = \attn_i \left((E_i^x)_j(E_{N+1}^x)_k-\sum_{n=1}^N\attn_n (E_n^x)_n=j(E_{N+1}^x)_k\right) \\
    & = \attn_i \left(\sum_{n=1}^N\attn_n\left((E_i^x)_j- (E_n^x)_j\right)(E_{N+1}^x)_k\right). 
\end{align*}
Then we reorganize these derivatives into a matrix, and have
\begin{align*}
    \frac{\partial \attn_{ i} }{\partial Q}=\attn_{i}\sum_{j\in[N]} \attn_{j}(E^{x}_{i}-E^{x}_{j}){E^{x}_{N+1}}^{\top}.
\end{align*}

Substituting the above equation into \cref{eq1:gradient}, we have 
\begin{align*}
  \nabla_{Q} L
&=\mathbb{E}\left[\left(\yq-\left\langle w_\tau, \xq\right\rangle\right)\sum_{i,j\in [N]}\attn_{ i} \attn_{j} (E^{x}_{i}-E^{x}_{j}){E^{x}_{N+1}}^{\top}y_i\right]. 
\end{align*}
\end{proof}
Recall that the quantities $A_{k}$ and $B_{k,n}$ are defined in \Cref{def:dynamics}. These quantities are associated with the attention weights for each token, and they play a crucial role in our analysis of learning dynamics. We will restate their definitions here for clarity.
\begin{definition}\label{app:def:dynamics}
  For $k,n\in[K]$ and $n\not=k$, define the following  quantities for $t\geq 0$:
\begin{align*}
A_{k}^{(t)}&:=v_{k}^{\top}Q^{(t)} v_{k}\qquad\alpha_{k}^{(t)}=-v_{k}^{\top}{\nabla_{Q} L(Q^{(t)})}v_{k} \\    B_{k,n}^{(t)}&:=v_{n}^{\top}Q^{(t)} v_{k}\qquad \beta_{k,n}^{(t)}=-v_{n}^{\top}\nabla_{Q} L(Q^{(t)})  v_{k}
\end{align*}
By GD update, we have
\begin{align*}
   A_{k}^{(t+1)}&:= A_{k}^{(t)}+\eta \alpha_{k}^{(t)}\\
   B_{k,n}^{(t+1)}&:= B_{k,n}^{(t)}+\eta \beta_{k,n}^{(t)}
\end{align*}
Moreover, by our initialization of $Q^{(0)}=\mathbf{0}_{d\times d}$,   we have $A_{k}^{(0)}=B_{k,n}^{(0)}=0$ for all $k,n\in[K]$ with $n\not=k$.

%If we consider the initialization: $Q^{(0)}=\mathbf{I}_d$, then $\Lambda^{(0)}_{m}=1$, $\Gamma^{(0)}_{m}=0$ for $m\in[2]$. 
%By gradient computation, we have:
\end{definition}

Next, we apply the expression in \Cref{app:lem:gdQ} to compute the gradient projected onto the feature directions, i.e., $\alpha_{k}^{(t)}$ and $\beta_{k,n}^{(t)}$.
\begin{lemma}\label{app:lem:gd}
For $k,k^{\prime}\in[K]$, where $k\not=k^{\prime}$, we have
    \begin{align*}
&\alpha_{k}^{(t)}=\mathbb{E}\left[\mathbf{1}\{x_{\text {query }}=v_k\}\Attn^{(t)}_{k }\cdot \left(
 \sum_{m\not= k}{\Attn^{(t)}_{m}}^2+(1-\Attn^{(t)}_k)^2\right)\right]\\
 &\beta_{k,k^{\prime}}^{(t)}=\mathbb{E}\left[\mathbf{1}\{x_{\text {query }}=v_k\}\Attn^{(t)}_{k^{\prime}}\cdot \left(
 \sum_{m\not= k}{\Attn^{(t)}_{m}}^2-\Attn^{(t)}_{k^{\prime}} -\Attn^{(t)}_{k}(1-\Attn^{(t)}_k)\right)\right].
    \end{align*}
\end{lemma}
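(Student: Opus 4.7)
The plan is to substitute the closed-form gradient $\nabla_Q L$ from Lemma~A.1 into the definitions $\alpha_k^{(t)} = -v_k^\top \nabla_Q L(Q^{(t)}) v_k$ and $\beta_{k,k'}^{(t)} = -v_{k'}^\top \nabla_Q L(Q^{(t)}) v_k$, and then exploit the orthonormality of $\{v_m\}_{m\in[K]}$ together with the zero-mean, identity-covariance structure of $w$ to reduce the resulting bilinear forms to the stated expressions.

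First, since each $x_i$ and $x_{\text{query}}$ lies in the orthonormal set $\{v_m\}$, one has $(E^x_{N+1})^\top v_k = \mathbf{1}\{x_{\text{query}}=v_k\}$ and $v_{k'}^\top(E^x_i - E^x_j) = \mathbf{1}\{x_i=v_{k'}\} - \mathbf{1}\{x_j=v_{k'}\}$. Projecting the gradient on the left by $v_{k'}^\top$ and on the right by $v_k$ therefore isolates the event $\{x_{\text{query}}=v_k\}$ and restricts the inner double sum to indices lying in $\cV_{k'}(P)$. Using the identities $\sum_j \attn_j^{(t)} = 1$, $\sum_i \attn_i^{(t)} y_i = \hat{y}_{\text{query}}^{(t)}$, and $y_i = \langle w, v_{k'}\rangle$ for $i\in\cV_{k'}$, the sum $\sum_{i,j}\attn_i^{(t)}\attn_j^{(t)}(\mathbf{1}\{x_i=v_{k'}\}-\mathbf{1}\{x_j=v_{k'}\})y_i$ telescopes to $\Attn_{k'}^{(t)}(\langle w, v_{k'}\rangle - \hat{y}_{\text{query}}^{(t)})$. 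Combined with the $(\hat{y}_{\text{query}}^{(t)} - \langle w, x_{\text{query}}\rangle)$ factor already carried by $\nabla_Q L$, and noting that the minus signs in the definitions of $\alpha_k^{(t)},\beta_{k,k'}^{(t)}$ cancel, this yields
\begin{align*}
\alpha_k^{(t)} &= \mathbb{E}\!\left[\mathbf{1}\{x_{\text{query}}=v_k\}\,\Attn_k^{(t)}\,(\hat{y}_{\text{query}}^{(t)} - \langle w, v_k\rangle)^2\right],\\
\beta_{k,k'}^{(t)} &= \mathbb{E}\!\left[\mathbf{1}\{x_{\text{query}}=v_k\}\,\Attn_{k'}^{(t)}\,(\hat{y}_{\text{query}}^{(t)} - \langle w, v_k\rangle)(\hat{y}_{\text{query}}^{(t)} - \langle w, v_{k'}\rangle)\right].
\end{align*}

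Next I would take the inner expectation over $w\sim\cD_\Omega$ while holding the tokens fixed. Writing $\hat{y}_{\text{query}}^{(t)} = \sum_m \Attn_m^{(t)}\langle w, v_m\rangle$ and expanding, the orthonormality of the features gives $\mathbb{E}_w[\langle w, v_a\rangle\langle w, v_b\rangle] = \delta_{ab}$, so every cross term vanishes. Straightforward bookkeeping then produces $\mathbb{E}_w[(\hat{y}_{\text{query}}^{(t)} - \langle w, v_k\rangle)^2] = (1 - \Attn_k^{(t)})^2 + \sum_{m\ne k}(\Attn_m^{(t)})^2$ and $\mathbb{E}_w[(\hat{y}_{\text{query}}^{(t)} - \langle w, v_k\rangle)(\hat{y}_{\text{query}}^{(t)} - \langle w, v_{k'}\rangle)] = \sum_m(\Attn_m^{(t)})^2 - \Attn_k^{(t)} - \Attn_{k'}^{(t)}$, and the latter rewrites as $\sum_{m\ne k}(\Attn_m^{(t)})^2 - \Attn_{k'}^{(t)} - \Attn_k^{(t)}(1-\Attn_k^{(t)})$. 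Taking the outer expectation over the tokens reproduces the formulas claimed in the lemma.

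The main obstacle is not conceptual but purely notational: one must carefully track which indicator picks out which token class when projecting the $d\times d$ gradient matrix on both sides, and correctly telescope the double sum in which the summand still carries a factor $y_i$. Once these bookkeeping steps are handled, the remaining computation is a direct application of linearity of expectation and the identity-covariance assumption on the task weight $w$.
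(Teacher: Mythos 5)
Your proposal is correct and follows essentially the same route as the paper's own proof: project the gradient expression from the preceding lemma onto $v_{k'}^\top(\cdot)v_k$, collapse the double sum over tokens to the factor $\Attn_{k'}^{(t)}(\langle w,v_{k'}\rangle-\hat{y}_{\text{query}}^{(t)})$ (the paper writes this equivalently as $\Attn_{k'}\sum_m\Attn_m\langle w,v_{k'}-v_m\rangle$), then take the conditional expectation over $w$ using $\mathbb{E}[ww^\top]=I$ and orthonormality of the features. The intermediate identities and the final algebraic rewriting of $\sum_m\Attn_m^2-\Attn_k-\Attn_{k'}$ as $\sum_{m\ne k}\Attn_m^2-\Attn_{k'}-\Attn_k(1-\Attn_k)$ all check out.
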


\begin{proof}
For any $k,k^{\prime}\in[K]$, apply the previous gradient expression in \Cref{app:lem:gdQ}, and note that only when $E_{N+1}^{x}=\xq=v_k$, we have ${E_{N+1}^{x}}^{\top}v_k\not=0$. Thus, we obtain  
\begin{align*}
&  v_{k^{\prime}}^{\top}\nabla_{Q} L  v_k
\\
&=\mathbb{E}\left[\mathbf{1}\{\xq=v_k\}\left(\yq-\left\langle w, \xq \right\rangle\right)\sum_{i,j\in[N]}\attn_{i} \attn_{ j}{y}_i v_{k^{\prime}}^{\top}(E^{x}_{i}-E^{x}_{j})\right]\\
&=\mathbb{E}\left[\mathbf{1}\{\xq=v_k\}\left(\yq-\left\langle w, 
\xq \right\rangle\right)\sum_{m,n\in[K]}\sum_{i\in \cV_{m}}\sum_{j\in\cV_n}\attn_{i} \attn_{ j}{y}_i v_{k^{\prime}}^{\top}(v_m-v_n)\right]\\
&=\mathbb{E}\left[\mathbf{1}\{\xq=v_k\}\left(\yq-\left\langle w, \xq\right\rangle\right)\sum_{n\in[K]}\sum_{i\in \cV_{k^{\prime}}}\sum_{j\in\cV_n}\attn_{i} \attn_{ j}{y}_i v_{k^{\prime}}^{\top}(v_{k^{\prime}}-v_n)\right]\\
&\quad +\mathbb{E}\left[\mathbf{1}\{\xq=v_k\}\left(\yq-\left\langle w, \xq\right\rangle\right)\sum_{m\in[K]}\sum_{i\in \cV_{m}}\sum_{j\in\cV_{k^{\prime}}}\attn_{i} \attn_{ j}{y}_i v_{k^{\prime}}^{\top}(v_m-v_{k^{\prime}})\right]\\
&=\mathbb{E}\left[\mathbf{1}\{\xq=v_k\}\left(\yq-\left\langle w, \xq \right\rangle\right)\Attn_{k^{\prime}}\langle w,v_{k^{\prime}}\rangle\sum_{n\in[K]} \Attn_{n} \right]\\
&\quad-\mathbb{E}\left[\mathbf{1}\{\xq=v_k\}\left(\yq-\left\langle w, \xq \right\rangle\right)\Attn_{k^{\prime}}\sum_{m\in[K]} \Attn_{m} \langle w,v_{m}\rangle\right]\\
&=\mathbb{E}\left[\mathbf{1}\{\xq=v_k\}\left(\yq-\left\langle w, \xq\right\rangle\right)\Attn_{k^{\prime}}\sum_{m\in[K]} \Attn_{m} \langle w,v_{k^{\prime}} -v_{m}\rangle\right].
\end{align*}
 Note that $$\yq
    =\sum_{i\in[N]} \attn_{ i} {y}_{i}=\sum_{m\in[K]} \Attn_{m}\langle w,v_m\rangle.$$ Thus when $\xq=v_k$, we have $$\yq-\left\langle w, \xq\right\rangle=-\sum_{m\in[K] } \Attn_{m} \langle w,v_{k} -v_{m}\rangle .$$
Substituting this into the above equation, we have
\begin{align*}
v_{k^{\prime}}^{\top}&\nabla_{Q}L v_{k}\\
&=-\mathbb{E}\left[\mathbf{1}\{\xq =v_k\}\Attn_{k^{\prime}}\left(\sum_{n\in[K] } \Attn_{n} \langle w,v_{k} -v_{n}\rangle\right)\left(\sum_{m\in[K]} \Attn_{m} \langle w,v_{k^{\prime}} -v_{m}\rangle\right)\right]\\
&=-\mathbb{E}\left[\mathbf{1}\{\xq=v_k\}\Attn_{k^{\prime}}\left(\sum_{n\in[K] }\sum_{m\in[K]} \Attn_{m} \Attn_{n} \langle w,v_{k} -v_{n}\rangle \langle w,v_{k^{\prime}} -v_{m}\rangle\right)\right]\\
&=-\mathbb{E}\left[\mathbf{1}\{\xq=v_k\}\Attn_{k^{\prime}}\left(\sum_{n\in[K]}\sum_{m\in[K]} \Attn_{m} \Attn_{n} (v_{k} -v_{n})^{\top} w w^{\top}(v_{k^{\prime}} -v_{m})\right)\right]\\
&=-\mathbb{E}\left[\mathbf{1}\{\xq=v_k\}\Attn_{k^{\prime}}\cdot\right.\\
&\qquad\left.\left(\sum_{n\in[K]}\sum_{m\in[K]} \Attn_{m} \Attn_{n} (v_{k} -v_{n})^{\top} \mathbb{E}[w w^{\top}\mid \pit\cup \{\xq\}](v_{k^{\prime}} -v_{m})\right)\right]\\
&=-\mathbb{E}\left[\mathbf{1}\{\xq=v_k\}\Attn_{k^{\prime}}\left(\sum_{n\in[K]}\sum_{m\in[K]} \Attn_{m} \Attn_{n} (v_{k} -v_{n})^{\top} (v_{k^{\prime}} -v_{m})\right)\right]\\
&=-\mathbb{E}\left[\mathbf{1}\{\xq=v_k\}\Attn_{k^{\prime}}\left( (v_{k} -\sum_{n\in[K]} \Attn_{n}v_{n})^{\top} (v_{k^{\prime}} -\sum_{m\in[K]}\Attn_{m} v_{m})\right)\right].
\end{align*} 
When $k^{\prime}=k$, we obtain

\begin{align*}
 \alpha_{k}= -v_{k}^{\top}\nabla_{Q}Lv_{k}
&=\mathbb{E}\left[\mathbf{1}\{\xq=v_k\}\Attn_{k}\|v_{k} -\sum_{n} \Attn_{n}v_{n}\|^2\right]\\
&=\mathbb{E}\left[\mathbf{1}\{\xq=v_k\}\Attn_{k}\left((1-\Attn_{k})^2+\sum_{m\not= k } \Attn_{m}^2\right)\right].
\end{align*} 
When $k^{\prime}\not=k$, we have
\begin{align*}
\beta_{k,k^{\prime}}&=-v_{k^{\prime}}^{\top}\nabla_{Q}L v_{k}\\
&=\mathbb{E}\left[\mathbf{1}\{\xq=v_k\}\Attn_{k^{\prime}}\left(\sum_{m\not=k,k^{\prime}} \Attn_{m}^2-\Attn_{k}(1-\Attn_k)-\Attn_{k^{\prime}}(1-\Attn_{k^{\prime}}) \right)\right]\\
&=\mathbb{E}\left[\mathbf{1}\{\xq=v_k\}\Attn_{k^{\prime}}\left(\sum_{m\not=k} \Attn_{m}^2-\Attn_{k}(1-\Attn_k)-\Attn_{k^{\prime}}\right)\right].
\end{align*}
\end{proof}

\subsection{Useful Probabilistic Lemmas for Prompt}
Recall that given a prompt $P = (x_1, y_1, \ldots, x_N, y_N, x_{\text{query}})$, we denote $\pit$ as the collection of input tokens, i.e., $\{x_i\}_{i=1}^{N}$. It is worth noting that, based on our data distribution, the occurrence count of the $k$-th feature in the first $N$ input tokens from $\pit$, denoted as $|\cV_k|$, follows a multinomial distribution. Leveraging the concentration property inherent to multinomial distributions, we can identify a high-probability event to which $\pit$ belongs. This event constitutes the crux of our subsequent analysis.
 
We first introduce the following tail bound for multinomial distributions.
\begin{lemma}[Tail Bound of Multinomial Distribution~\cite{devroye1983equivalence}]\label{app:lem:tail}
 Let $\left(X_1, \cdots, X_K\right)$ be a multinomial $\left(N, p_1, \cdots, p_K\right)$ random vector. For all $\varepsilon \in(0,1)$ and all $K$ satisfying $ K/ N \leq \varepsilon^2 / 20$, we have
$$
P\left(\sum_{i=1}^K\left|X_i-\mathbb{E}\left(X_i\right)\right|>N \varepsilon\right) \leq 3 \exp \left(-N \varepsilon^2 / 25\right).
$$
\end{lemma}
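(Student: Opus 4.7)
The plan is to reduce the control of the $\ell_1$ deviation $\sum_{i=1}^{K}|X_i - \mathbb{E}[X_i]|$ to a collection of one-sided binomial tail bounds and then to leverage the assumption $K/N \leq \varepsilon^{2}/20$ to absorb the union bound over subsets of $[K]$. The first observation is an algebraic identity: since $\sum_{i=1}^{K}(X_i - Np_i) = 0$, splitting the terms by sign gives
\begin{equation*}
\sum_{i=1}^{K}\lvert X_i - Np_i\rvert \;=\; 2\max_{S \subseteq [K]} \sum_{i \in S}(X_i - Np_i).
\end{equation*}
Thus the event $\{\sum_i |X_i - \mathbb{E}[X_i]| > N\varepsilon\}$ is contained in the union, over the $2^{K}$ subsets $S$, of the events $\{Z_S - \mathbb{E}[Z_S] > N\varepsilon/2\}$ where $Z_S := \sum_{i \in S} X_i$.

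Next I would use the marginal structure of the multinomial: for every fixed $S \subseteq [K]$, the variable $Z_S$ is distributed as $\mathrm{Binomial}(N, q_S)$ with $q_S = \sum_{i \in S} p_i$, so in particular $Z_S$ is a sum of $N$ i.i.d.\ bounded indicators. Applying Hoeffding's inequality gives
\begin{equation*}
\Pr\bigl(Z_S - \mathbb{E}[Z_S] > N\varepsilon/2 \bigr) \;\leq\; \exp\!\left(-N\varepsilon^{2}/2\right),
\end{equation*}
independent of $q_S$. A union bound over the $2^{K}$ subsets then yields
\begin{equation*}
\Pr\!\Bigl(\sum_{i=1}^{K}|X_i - Np_i| > N\varepsilon\Bigr) \;\leq\; 2^{K}\exp\!\left(-N\varepsilon^{2}/2\right).
\end{equation*}

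Finally I would combine this with the hypothesis $K \leq N\varepsilon^{2}/20$. Writing $2^{K} = \exp(K \log 2)$ and substituting the assumed upper bound on $K$ gives
\begin{equation*}
2^{K}\exp\!\left(-\tfrac{N\varepsilon^{2}}{2}\right) \;\leq\; \exp\!\left(\tfrac{N\varepsilon^{2}\log 2}{20} - \tfrac{N\varepsilon^{2}}{2}\right),
\end{equation*}
and since $\tfrac{1}{2} - \tfrac{\log 2}{20} > \tfrac{1}{25}$, this is bounded by $\exp(-N\varepsilon^{2}/25)$, well within the stated constant (the factor $3$ in front is loose for my argument). The main obstacle I anticipate is squeezing the constants: the naive Hoeffding bound is slightly lossy, and if one wanted the precise form with $\varepsilon^{2}/25$ to come out sharply in a regime where $K$ saturates $N\varepsilon^{2}/20$, one may need to replace Hoeffding by a Bernstein- or Chernoff-type bound that accounts for $q_S(1-q_S) \leq 1/4$, or alternatively to use Poissonization (representing the multinomial as $K$ independent Poissons conditioned on their sum) to avoid the crude $2^{K}$ union bound entirely. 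For the present application, however, the argument sketched above already delivers the stated bound up to harmless constants, so I would not pursue the refinement further.
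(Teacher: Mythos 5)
The paper cites this as Lemma~3 of~\cite{devroye1983equivalence} and does not reprove it, so there is no in-paper proof to compare against. Your argument is correct and self-contained, and it is essentially the standard proof from Devroye's paper: the algebraic identity $\sum_{i}|X_i-Np_i| = 2\max_{S\subseteq[K]}\sum_{i\in S}(X_i-Np_i)$ (which holds because the signed deviations sum to zero) reduces the $\ell_1$ tail to a supremum of centered binomials $Z_S\sim\mathrm{Binomial}(N,q_S)$; Hoeffding gives $\Pr(Z_S-Nq_S>N\varepsilon/2)\le\exp(-N\varepsilon^2/2)$ uniformly in $S$; and the union bound over the $2^K$ subsets costs a factor $\exp(K\log 2)\le\exp(N\varepsilon^2\log 2/20)$ under the hypothesis $K/N\le\varepsilon^2/20$, leaving $\exp\bigl(-N\varepsilon^2(\tfrac12-\tfrac{\log 2}{20})\bigr)\le\exp(-N\varepsilon^2/25)$. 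Your constant bookkeeping is right (and slightly better than needed: you do not even use the leading factor of~$3$), and your closing remark about refinements is fair but unnecessary for the stated bound.
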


Now we present our characterization of a high-probability event for $\pit$.
\begin{lemma}[High-probability Event for Balanced
 Data]\label{app:lem:prob-b}
  Suppose that $p_k=\Theta\left(\frac{1}{K}\right)$ for any $k\in[K]$ and  $K^3\ll N$. For some constant  $\cb\geqslant \sqrt{\frac{20 K^3}{N}} $, define
$$
\esb:=\left\{\pit: |\cV_k|\in \left[p_k N-\frac{\cb N}{K}, p_k N+\frac{\cb N}{K}\right]\text{ for }k\in[K]\right\}.
$$Then %for constant any $c_0\geqslant \sqrt{\frac{20 K^3}{N}} $, with probability at least $ 1-3 \exp \left(-\frac{c_0^2 N}{25 K^2}\right)$, 
, we have
\begin{align*}
    \mathbb{P}(\pit\in\esb )\geq 1-3 \exp \left(-\frac{\cb^2 N}{25 K^2}\right).
\end{align*}
Let us denote $\Lba_k=p_k K-\cb$ and $\Uba_k= p_k K+\cb$. Note that $\Lba_k, \Uba_k$ are at the order of  the constant level since $p_k=\Theta\left(\frac{1}{K}\right)$. Then for any $\pit$ belonging to $\esb$, $|\cV_k|\in[\frac{\Lba_kN}{K},\frac{\Uba_kN}{K}]=\Theta(\frac{N}{K})$. Note that  we can properly choose $\cb$ to guarantee $\Lba_k> 0$ for $k\in[K]$.
% X_i \in\left[p_i N-\frac{c_0 N}{K}, p_i N+\frac{c_0 N}{K}\right]=\theta\left(\frac{N}{K}\right) .
% $$
\end{lemma}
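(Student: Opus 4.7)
The plan is to apply the multinomial tail bound (the lemma by Devroye cited just above the statement) with an appropriate choice of $\varepsilon$, together with the trivial observation that a bound on the $\ell_1$-deviation of a vector controls each coordinate individually.

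First I would observe that under the data-generating process, for each $k \in [K]$, each of the $N$ input tokens independently equals $v_k$ with probability $p_k$. Hence the count vector $(|\mathcal{V}_1|, \ldots, |\mathcal{V}_K|)$ is exactly a $\mathrm{Multinomial}(N, p_1, \ldots, p_K)$ random vector, with $\mathbb{E}[|\mathcal{V}_k|] = p_k N$. This puts us in the setting of the Devroye tail bound stated above.

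Next I would choose $\varepsilon = c_b / K$ and verify the hypothesis $K/N \leq \varepsilon^2 / 20$. This inequality rearranges to $c_b \geq \sqrt{20 K^3 / N}$, which is exactly the standing assumption on $c_b$. Also, since $K^3 \ll N$, we have $c_b/K \in (0,1)$ for the choice of $c_b$ made in the lemma (we may in addition pick $c_b$ a fixed constant so that $\Lba_k = p_k K - c_b > 0$, which is possible because $p_k K = \Theta(1)$). Invoking the multinomial tail bound with this $\varepsilon$ yields
\begin{align*}
\mathbb{P}\!\left(\sum_{k=1}^{K} \bigl| |\mathcal{V}_k| - p_k N \bigr| > \frac{c_b N}{K}\right) \leq 3 \exp\!\left(-\frac{c_b^2 N}{25 K^2}\right).
\end{align*}

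Finally I would pass from the $\ell_1$-deviation bound to the coordinatewise bound: on the complement of the event above, $\sum_{k} \bigl| |\mathcal{V}_k| - p_k N \bigr| \leq c_b N / K$, so a fortiori $\bigl| |\mathcal{V}_k| - p_k N \bigr| \leq c_b N / K$ for every $k \in [K]$, which is exactly the event $\mathcal{E}_b$. This gives $\mathbb{P}(\mathcal{P}_{\text{in}} \in \mathcal{E}_b) \geq 1 - 3 \exp(-c_b^2 N / (25 K^2))$. The second statement, namely $|\mathcal{V}_k| \in [\Lba_k N / K, \Uba_k N / K] = \Theta(N/K)$, follows by substituting $p_k N = (p_k K)(N/K)$ and recalling $\Lba_k = p_k K - c_b$, $\Uba_k = p_k K + c_b$, both of which are $\Theta(1)$ under the balanced assumption $p_k = \Theta(1/K)$.

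There is essentially no real obstacle here: the only thing to be careful about is matching $\varepsilon$ to $c_b/K$ so that the hypothesis $K/N \leq \varepsilon^2 / 20$ of the Devroye bound coincides with the assumed lower bound $c_b \geq \sqrt{20 K^3 / N}$, and ensuring that the constant $c_b$ can be chosen small enough that $\Lba_k > 0$ while still satisfying the lower bound — both of which are compatible because $K^3 \ll N$ makes $\sqrt{20 K^3 / N}$ arbitrarily small and $p_k K = \Theta(1)$ is bounded away from zero.
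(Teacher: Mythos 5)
Your proof is correct and follows essentially the same route as the paper: identify $(|\cV_1|,\ldots,|\cV_K|)$ as multinomial, invoke Devroye's tail bound with $\varepsilon = \cb/K$ (matching the hypothesis $K/N \le \varepsilon^2/20$ to the assumed lower bound on $\cb$), and then pass from the $\ell_1$-deviation event to the coordinatewise event $\esb$. If anything, your final step is stated a bit more carefully than the paper's, which writes an intersection where a union is meant when bounding $\mathbb{P}(\esb^c)$; your ``a fortiori'' argument handles the same point cleanly.
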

\begin{proof}
    Denote $\left|\cV_k\right|=X_k$. Then $\left(X_1, \cdots, X_K\right)\sim$ multinomial $\left(N, p_1, \cdots, p_K\right)$.  Noting that $\frac{\cb^2}{20K^2}\geq \frac{K}{N}$ by our choice of $\cb$, and then letting $\epsilon=\frac{\cb}{K}$, we have $\epsilon^2/20\geq \frac{K}{N}$. By multinomial tail bound in \Cref{app:lem:tail}, we obtain
$$
P\left(\sum_{i=1}^K\left|X_i-\mathbb{E}\left(X_i\right)\right|>\cb \frac{N}{K}\right) \leq 3 \exp \left(-\frac{\cb^2 N}{25 K^2}\right).
$$
Then, since $\mathbb{E}\left(X_i\right)=p_i N$, we have 
\begin{align*}
  P\left(\cap_{i=1}^K\left\{\left|X_i-p_iN\right|>\frac{\cb N}{K}\right\}\right)& \leq P\left(\sum_{i=1}^K\left|X_i-\mathbb{E}\left(X_i\right)\right|>\cb \frac{N}{K}\right) \\
    &\leq 3 \exp \left(-\frac{\cb^2 N}{25 K^2}\right). 
\end{align*}
% $$
% P\left(\cap_{i=1}^K\left\{\left|X_i-p_iN\right|>\frac{c_0 N}{K}\right\}\right) \leqslant 3 \exp \left(-\frac{c_0^2 N}{25 K^2}\right).
% $$
\end{proof}

\begin{lemma}[High-probability Event for Imbalanced Data]\label{app:lem:prob-im}
    Suppose that $p_1=\Theta(1)$, $p_k=\Theta\left(\frac{1}{K}\right)$ for $2\leq k\leq K$, and $K^3\ll N$. Then for some constant  $\ci\geqslant \sqrt{\frac{20 K^3}{N}}$,  there exist constants $\Ui_{k}>\Li_{k}>0$ for any $k\in[K]$, such that letting
%     \begin{align*}
%     \esi:=\left\{\pit: |\cV_1|=\Theta(N), |\cV_k|=\Theta(\frac{N}{K})\text{ for   }k\in[K]\setminus\{1\}\right\}.
% \end{align*}
    \begin{align*}
        \esi:=\left\{\pit: |\cV_1|\in [\Li_{1}N, \Ui_1 N] \text{ and } |\cV_k|\in \left[\frac{\Li_{k}N}{K}, \frac{\Ui_{k}N}{K}\right]\text{ for }2\leq k\leq K\right\},
    \end{align*}
    we have
    \begin{align*}
        \mathbb{P}(\pit\in\esi)\geq 1-3 \exp \left(-\frac{\ci^2 N}{25 K^2}\right).
    \end{align*}

\end{lemma}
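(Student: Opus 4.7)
The proof plan is to mirror the argument of Lemma A.3 (the balanced case) while paying careful attention to the two different probability scales: $p_1=\Theta(1)$ vs. $p_k=\Theta(1/K)$ for $k\geq 2$. The key observation is that $(|\cV_1|,\ldots,|\cV_K|)$ is distributed as a multinomial $(N,p_1,\ldots,p_K)$ random vector, so Lemma A.2 can be invoked directly.

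First, I would set $\epsilon=\ci/K$ in Lemma A.2 and verify the admissibility condition $K/N\leq \epsilon^2/20$. This reduces to $\ci^2/(20K^2)\geq K/N$, i.e., $\ci\geq \sqrt{20K^3/N}$, which is exactly the hypothesis. Applying Lemma A.2 then yields
\begin{align*}
\mathbb{P}\!\left(\sum_{i=1}^K \bigl||\cV_i|-p_iN\bigr| > \frac{\ci N}{K}\right) \leq 3\exp\!\left(-\frac{\ci^2 N}{25K^2}\right).
\end{align*}
On the complementary event, every coordinate satisfies $\bigl||\cV_i|-p_iN\bigr|\leq \ci N/K$ simultaneously, since if even one coordinate exceeded the threshold the full sum would too.

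Next, I would translate this coordinate-wise bound into intervals matching the statement. For $k\geq 2$, writing $p_k=c_k/K$ with $c_k=\Theta(1)$, we obtain $|\cV_k|/N \in [(c_k-\ci)/K,(c_k+\ci)/K]$, so we may set $\Li_k=c_k-\ci$ and $\Ui_k=c_k+\ci$; since $K^3\ll N$, the lower bound $\sqrt{20K^3/N}$ on $\ci$ is itself a small quantity, so $\ci$ can be chosen as a small constant with $\ci<\min_{k\geq 2}c_k$, ensuring $\Li_k>0$. For $k=1$, we have $p_1=\Theta(1)$ and the slack $\ci/K$ is an even lower-order perturbation, so setting $\Li_1=p_1-\ci/K$ and $\Ui_1=p_1+\ci/K$ gives an interval of the asserted form $[\Li_1 N,\Ui_1 N]$ with $\Li_1>0$ automatically.

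The argument is essentially routine once Lemma A.2 is in hand; the only mild subtlety, and the thing I would be most careful about, is the uniform choice of $\ci$ so that $\Li_k>0$ simultaneously for every $k\in[K]$ (and in particular for the smallest $c_k$ among the under-represented features). Since there are only finitely many $c_k=\Theta(1)$, such a choice exists, and it can be made independent of $N$ as long as $N$ is large enough relative to $K^3$, which is the standing assumption. No additional probabilistic machinery beyond Lemma A.2 and a union-free coordinate bound is required.
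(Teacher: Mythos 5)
Your proposal is correct and follows essentially the same route as the paper: invoke the multinomial tail bound (Lemma A.2) with $\varepsilon=\ci/K$, check the admissibility condition, and read off the per-coordinate intervals from the fact that if any one $\bigl||\cV_i|-p_iN\bigr|$ exceeds $\ci N/K$ then so does the sum. The only cosmetic difference is in the $k=1$ constants — you take $\Li_1=p_1-\ci/K$ and $\Ui_1=p_1+\ci/K$, whereas the paper widens to $\Li_1=p_1-0.01\ci$ and $\Ui_1=p_1+0.01\ci$ so that the constants are $K$-independent; both choices satisfy the stated inclusion, and your interval is simply tighter.
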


\begin{proof}
    Similarly to the proof for \Cref{app:lem:prob-b}, we have 
    \begin{align*}
P\left(\cap_{i=1}^K\left\{\left|X_i-p_iN\right|>\frac{\ci  N}{K}\right\}\right) \leq 3 \exp \left(-\frac{\ci^2 N}{25 K^2}\right).
    \end{align*}
 For $k>1$, let us denote $\Li_k=p_kK-\ci$ and $\Ui_k= p_kK+\ci$. Since $p_k=\Theta\left(\frac{1}{K}\right)$, we can easily conclude that $\Li_k, \Ui_k$ for $k>1$ are constant level.  Furthermore, for $k=1$, let $\Li_1=p_1-0.01\ci$ and $\Ui_1=p_1+0.01\ci$. Since $p_1$ is at the order of the $\Theta(1)$,   we have $$\left[p_1N-\frac{\ci  N}{K},p_1N+\frac{\ci  N}{K}\right]=\left[(p_1- \frac{\ci}{K})N, (p_1+\frac{\ci}{K})p_1N\right]\subset\left[\Li_1N,\Ui_1N\right]$$ for sufficiently large $K$. 
\end{proof}

\subsection{Properties of Loss Function and Prediction Error}
Recall the population loss we consider is given by:
\begin{align}
    L(\theta)=\frac{1}{2} \mathbb{E}%_{w\sim \cD_{\Omega},\left\{x_{i}\right\}_{i=1}^N \cup\left\{x_{\text {query }}\right\}\sim\mathcal{D}_{\cX}^{N+1}}
    \left[\left(\widehat{y}_{\text { query }}-\left\langle w, x_{\text { query }}\right\rangle\right)^2\right]. \label{app:eq:obj}
\end{align}
In this part, we will present several important lemmas for such a training objective. We first introduce the following lemma, which connects the loss form with the attention score when the query token takes a certain feature.
\begin{lemma}[Loss Calculation]\label{app:lem:loss1}
    The population loss $ L(\theta)$ can be decomposed into the following form:
        \begin{align*}
       L(\theta)
&=\frac{1}{2}\sum_{k=1}^{K}
\mathbb{E}\left[\mathbf{1}\{\xq=v_{k}\}\left(
 \sum_{m\not= k}\Attn^2_{m}+(1-\Attn_k)^2\right)\right].
 \end{align*}
\end{lemma}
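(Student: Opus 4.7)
The plan is to start from the definition of $L(\theta)$ in \cref{app:eq:obj}, decompose the expectation over $\xq$ via $\mathbf{1}\{\xq = v_k\}$ for $k\in[K]$ (these indicators sum to $1$ almost surely), and then compute the conditional expectation over $w$ using the orthonormality of the feature set $\{v_k\}$ together with the fact that $w\sim\cD_\Omega$ has zero mean and identity covariance.

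First I will use the identity $\yq = \sum_{m\in[K]}\Attn_m\langle w, v_m\rangle$ from \cref{eq:output}. On the event $\{\xq = v_k\}$ I would rewrite
\begin{align*}
\yq - \langle w, \xq\rangle = (\Attn_k-1)\langle w, v_k\rangle + \sum_{m\neq k}\Attn_m\langle w, v_m\rangle.
\end{align*}
Squaring yields three types of terms: $(\Attn_k-1)^2\langle w, v_k\rangle^2$, cross terms of the form $(\Attn_k-1)\Attn_m\langle w, v_k\rangle\langle w, v_m\rangle$ for $m\neq k$, and pure off-diagonal terms $\Attn_m\Attn_{m'}\langle w, v_m\rangle\langle w, v_{m'}\rangle$ for $m, m'\neq k$.

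Next I would take expectation over $w$ conditional on the prompt tokens (noting $w$ is independent of $\pit\cup\{\xq\}$, so the $\Attn_m$'s can be treated as constants in this conditional expectation). Since $\mathbb{E}[ww^\top] = \mathbf{I}$ and the $v_m$'s are orthonormal, $\mathbb{E}[\langle w, v_m\rangle\langle w, v_{m'}\rangle] = v_m^\top v_{m'} = \delta_{mm'}$. Hence all cross terms vanish, and the conditional expectation collapses to $(1-\Attn_k)^2 + \sum_{m\neq k}\Attn_m^2$. Putting this back into $L(\theta) = \frac{1}{2}\sum_k \mathbb{E}[\mathbf{1}\{\xq = v_k\}(\yq - \langle w, v_k\rangle)^2]$ and applying the tower property (inner expectation over $w$, outer over the tokens) produces the claimed formula.

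There is no real obstacle here; the lemma is essentially a bias-variance style calculation made clean by the orthonormality of the features and the isotropy of $\cD_\Omega$. The one thing to be careful about is the order of conditioning: one must justify integrating out $w$ first so that $\Attn_m$ becomes a measurable function of the remaining randomness, which is valid because $w$ is drawn independently of all tokens in $P$ and the $\Attn_m$'s depend only on $\pit\cup\{\xq\}$ and $\theta$.
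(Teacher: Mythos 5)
Your proposal is correct and follows essentially the same route as the paper: decompose by the event $\{\xq=v_k\}$, expand $\yq-\langle w,\xq\rangle$ in terms of $\Attn_m$ and $\langle w,v_m\rangle$, then integrate out $w$ first (using independence of $w$ from the tokens and $\mathbb{E}[ww^\top]=\mathbf{I}$ together with orthonormality) so the cross terms vanish. The paper routes through the intermediate expression $\|v_k-\sum_n\Attn_n v_n\|^2$, whereas you expand directly, but these are the same computation.
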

\begin{proof}
    Following the calculations similar to those in \Cref{app:lem:gd}, we have 
    \begin{align*}
     L(\theta)&=\frac{1}{2}\sum_{k=1}^{K}
\mathbb{E}\left[\mathbf{1}\{\xq=v_{k}\}\left(\yq-\left\langle w, \xq\right\rangle\right)^2\right]\\
&=\frac{1}{2}\sum_{k=1}^{K} \mathbb{E}\left[\mathbf{1}\{\xq=v_k\}\left(\sum_{n\in[K] } \Attn_{n} \langle w,v_{k} -v_{n}\rangle\right)\left(\sum_{m\in[K] } \Attn_{m} \langle w,v_{k} -v_{m}\rangle\right)\right]\\
&=\frac{1}{2}\sum_{k=1}^{K}\mathbb{E}\left[\mathbf{1}\{\xq=v_k\}\|v_{k} -\sum_{n\in[K]} \Attn_{n}v_{n}\|^2\right]\\
&=\frac{1}{2}\sum_{k=1}^{K}\mathbb{E}\left[\mathbf{1}\{\xq=v_k\}\left((1-\Attn_{k})^2+\sum_{m\not= k } \Attn_{m}^2\right)\right].
    \end{align*}
\end{proof}

\subsubsection{Loss Characterization for the Balanced Case}
We first introduce some additional crucial notations for the loss objectives.
\paragraph{Notations for the balanced case.}
\begin{align}
L^{*}&=\min_{\theta}L(\theta)=\min_{\theta}\frac{1}{2} \mathbb{E}%_{w\sim \cD_{\Omega},\left\{x_{i}\right\}_{i=1}^N \cup\left\{x_{\text {query }}\right\}\sim\mathcal{D}_{\cX}^{N+1}}
    \left[\left(\widehat{y}_{\text { query }}-\left\langle w, x_{\text { query }}\right\rangle\right)^2\right],\label{app:eq:infb}\\
    \Lol&=\frac{1}{2}\left(1+\frac{1}{K-1}\right)\sum_{k=1}^{K}
\mathbb{P}\left(\xq=v_{k}\cap|\cV_{k}|=0\right).\label{app:eq:lowb}
\end{align}
%From now on, we'll denote the minimum as $\theta$ in the form $\{1, Q\}$, with our attention solely on the variation of $Q$. 
%Here, 
$L^{*}$ denotes the minimum value of the population loss in \cref{app:eq:obj} by minimizing over $\theta$ in the form of $\{1, Q\}$, and $\Lol$ represents the sum of unavoidable errors for each $k\in[K]$, given that the query token is the $k$-th feature but has not been seen in the first $N$ training samples. We will show that  $\Lol$ serves as a lower bound for $L^{*}$, and demonstrate that the network trained with GD will attain nearly zero error compared to $\Lol$. Our convergence will be established by the suboptimality gap with respect to $\Lol$, which necessarily implies the convergence to $L^{*}$. (It also implies $L^{*}-\Lol$ is small.)
%which naturally leads to convergence to $L^{*}$. %For this purpose, 
We further introduce the following quantities to facilitate our analysis of the loss function.
\begin{align*}
  & L(\theta) = \sum_{k=1}^{K}  L_{k}(\theta),\\
   \text{ where }
 &L_{k}(\theta)=
\frac{1}{2}\mathbb{E}\left[\mathbf{1}\{\xq=v_{k}\}\left(\yq-\left\langle w, \xq\right\rangle\right)^2\right].\\
&\Lol_k=\frac{1}{2}\left(1+\frac{1}{K-1}\right)
\mathbb{P}\left(\xq=v_{k}\cap|\cV_{k}|=0\right),\\
    &\tilde{L}_{k}(\theta)=
\frac{1}{2}\mathbb{E}\left[\mathbf{1}\{\xq=v_{k}\cap \pit\in\esb\}\left(\yq-\left\langle w, \xq\right\rangle\right)^2\right].
\end{align*}
\begin{lemma}\label{app:lem:optb1}
    For $L^{*}$ and $\Lol$ defined in \cref{app:eq:infb} and \cref{app:eq:lowb}, respectively,  we have $\Lol\leq L^{*}$ and they are both at the order of $\Theta(e^{-\operatorname{poly}(K)})$ for the balanced data.
\end{lemma}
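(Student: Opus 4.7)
The plan is to prove the two claims, $\Lol\le L^*$ and both being of order $\Theta(e^{-\poly(K)})$, by separately (i) lower bounding $L(\theta)$ pointwise, (ii) estimating $\Lol$ directly, and (iii) exhibiting an explicit $\theta$ whose loss is exponentially small.

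\textbf{Step 1 (Pointwise lower bound giving $\Lol\le L^*$).} By \Cref{app:lem:loss1},
\[
L(\theta)=\tfrac{1}{2}\sum_{k=1}^K\mathbb{E}\!\left[\mathbf{1}\{\xq=v_k\}\Big((1-\Attn_k)^2+\sum_{m\neq k}\Attn_m^2\Big)\right].
\]
I would split the inner expectation on the event $\{|\cV_k|=0\}$ versus its complement. On $\{|\cV_k|=0\}$ there is no input token featuring $v_k$, so the attention score $\Attn_k$ vanishes identically regardless of $Q$; thus $(1-\Attn_k)^2=1$. Since the remaining scores still satisfy $\sum_{m\neq k}\Attn_m=1$, Cauchy--Schwarz (or the power-mean inequality) gives $\sum_{m\neq k}\Attn_m^2\ge \frac{1}{K-1}$. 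This lower bounds the integrand by $1+\frac{1}{K-1}$ on this event, and by $0$ elsewhere, so $L(\theta)\ge \Lol$ for every $\theta$, hence $L^*\ge \Lol$.

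\textbf{Step 2 (Order of $\Lol$).} Since $|\cV_k|$ is binomial with parameters $(N,p_k)$ and $p_k=\Theta(1/K)$, direct calculation yields
\[
\mathbb{P}(\xq=v_k\cap |\cV_k|=0)=p_k(1-p_k)^N=\Theta(1/K)\cdot e^{-\Theta(N/K)}.
\]
Summing over $k$ and using $N\ge\poly(K)$ (with a sufficiently large polynomial), the factor $e^{-\Theta(N/K)}$ dominates, giving $\Lol=\Theta(e^{-\poly(K)})$. The exponential lower bound on $\Lol$ (not just upper) is what ensures the $\Theta$ in the claim; it holds because $N$ is only polynomial in $K$, so $e^{-\Theta(N/K)}$ is not super-exponentially small.

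\textbf{Step 3 (Matching upper bound for $L^*$).} I would exhibit an explicit admissible parameter $\theta^\star=\{1,Q^\star\}$ with $Q^\star=c\sum_{k=1}^K v_kv_k^\top$ for $c=\poly(K)$ chosen large enough. Orthonormality of $\{v_k\}$ gives $A_k^\star=c$ and $B_{k,n}^\star=0$, so when $\xq=v_k$ each input token featuring $v_k$ receives unnormalized weight $e^c$ and every other token receives weight $1$. On $\{|\cV_k|\ge 1\}$ this yields $1-\Attn_k\le (N-|\cV_k|)e^{-c}$ and $\sum_{m\neq k}\Attn_m^2\le e^{-2c}N^2$, so the integrand is $O(N^2 e^{-c})=O(e^{-\poly(K)})$ by choosing $c$ as a suitably large polynomial in $K$. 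On $\{|\cV_k|=0\}$ the integrand is uniformly $O(1)$, but its contribution is damped by the probability factor $p_k(1-p_k)^N=e^{-\Theta(N/K)}$ from Step 2. Combining gives $L(\theta^\star)\le e^{-\poly(K)}$, hence $L^*\le e^{-\poly(K)}$. Together with $L^*\ge\Lol=\Theta(e^{-\poly(K)})$, this shows $L^*=\Theta(e^{-\poly(K)})$.

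\textbf{Main obstacle.} The only genuinely delicate point is calibrating the polynomial degree $c$ in the construction of $Q^\star$: it must grow fast enough with $K$ that the ``clean'' event $\{|\cV_k|\ge 1\}$ contributes $e^{-\poly(K)}$ despite the $N^2=\poly(K)$ prefactor arising from uniform bounds on the off-target attention mass, while $Q^\star$ remains in the admissible parameterization. Once $c$ is fixed, the estimate on the ``bad'' event $\{|\cV_k|=0\}$ is handled purely by the binomial tail computed in Step~2, and no finer concentration (e.g., \Cref{app:lem:prob-b}) is needed for this particular lemma.
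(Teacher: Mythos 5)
Your proposal is correct and takes essentially the same route as the paper: lower-bound $L(\theta)$ pointwise by restricting to $\{|\cV_k|=0\}$ and applying Cauchy--Schwarz to get $\Lol\le L^*$, compute $\Lol$ directly from the binomial probability, and exhibit an explicit large-scale $Q$ to upper-bound $L^*$ by $e^{-\poly(K)}$. The paper uses $Q=\sigma\mathbf{I}_{d\times d}$ with $\sigma=\poly(N)$ where you use $Q^\star=c\sum_k v_k v_k^\top$ with $c=\poly(K)$, but since the features are orthonormal and only $v_j^\top Q v_k$ enters the attention, these constructions are functionally identical and both yield the same bound.
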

\begin{proof}
   We first prove $\Lol\leq L^{*}$:
\begin{align*}
    L^{*}&=\min_{\theta}\frac{1}{2}\sum_{k=1}^{K}\mathbb{E}\left[\mathbf{1}\{\xq=v_{k}\}\left(\yq-\left\langle w, \xq\right\rangle\right)^2\right]\\
    &\geq \min_{\theta}\frac{1}{2}\sum_{k=1}^{K}\mathbb{E}\left[\mathbf{1}\{\xq=v_{k}\cap |\cV_{k}|=0\}\left(\yq-\left\langle w, \xq\right\rangle\right)^2\right]\\
&=\min_{\theta}\frac{1}{2}\sum_{k=1}^{K}
\mathbb{E}\left[\mathbf{1}\{\xq=v_{k}\cap|\cV_{k}|=0\}\left(
 \sum_{m\not= k}\Attn^2_{m}+(1-\Attn_k)^2\right)\right]
%  \\
% &=\frac{1}{2}\sum_{k=1}^{K}
% \mathbb{E}\left[\mathbf{1}\{\xq=v_{k}\cap|\cV_{k}|=0\}\cdot \Theta(1)\right]\\
% &={K}\cdot \Theta\left(\frac{1}{K}\right)\cdot (1-\Theta\left(\frac{1}{K}\right))^{N}
 \end{align*}
 Notice that when the query token is the $k$-th feature but has not been seen in the first $N$ training samples, $\Attn_k=0$. Moreover, $\sum\limits_{m\not= k}\Attn^2_{m}\geq \frac{1}{K-1}$%\leq (1-\Attn_k)\max\limits_{m\not=k}\Attn_{m}\leq 1$
   by Cauchy–Schwarz inequality. Thus
 \begin{align*}
      L^{*}\geq \frac{1}{2}\left(1+\frac{1}{K-1}\right)\sum_{k=1}^{K}
\mathbb{E}\left[\mathbf{1}\{\xq=v_{k}\cap|\cV_{k}|=0\}\right]=\Lol.
 \end{align*}
 Furthermore, since $\xq$ and $\pit$ are independently sampled, 
    \begin{align*}
        \Lol
&={K}\cdot \Theta\left(\frac{1}{K}\right)\cdot \left(1-\Theta\left(\frac{1}{K}\right)\right)^{N}=\Theta\left(e^{-\poly(K)}\right).
 \end{align*}
where the last equality follows because $N\gg K^3$, and hence  $(1-\Theta\left(\frac{1}{K}\right))^{N}=\Theta\left(e^{-\poly(K)}\right)$.

We next only need to show $L^{*}=O(e^{-\poly(K)})$. We have 
\begin{align*}
    L^{*}=&\min_{\theta}\left(\frac{1}{2}\sum_{k=1}^{K}\mathbb{E}\left[\mathbf{1}\{\xq=v_{k}\cap |\cV_{k}|>0\}\left(
 \sum_{m\not= k}\Attn^2_{m}+(1-\Attn_k)^2\right)\right]\right.\\
    &+\left.\frac{1}{2}\sum_{k=1}^{K}\mathbb{E}\left[\mathbf{1}\{\xq=v_{k}\cap |\cV_{k}|=0\}\left(
 \sum_{m\not= k}\Attn^2_{m}+1\right)\right]\right)
\end{align*}
Consider $Q=\sigma \mathbf{I}_{d\times d}$. If $\xq=v_{k}\cap |\cV_{k}|>0$ holds,  we have 
\begin{align*}
    \sum_{m\not= k}&\Attn^2_{m}+(1-\Attn_k)^2\\
   &\leq (1-\Attn_k)\max_{m\not=k }\Attn_{m}+(1-\Attn_k)^2\\
    &\leq 2(1-\Attn_k)^2=2\left(\frac{N-|\cV_k|}{N-|\cV_k|+|V_k|e^{\sigma}}\right)^2
\leq 2 \left(\frac{N}{N+e^{\sigma}}\right)^2
\end{align*}
Taking $\sigma=\poly(N)$, then we have 
\begin{align*}
    L^{*}\leq O(e^{-\poly(N)})+O(e^{-\poly(K)})=O(e^{-\poly(K)}).
\end{align*}
\end{proof}
% Denote
% \begin{align*}
%    & L(\theta)-L^{*} = \sum_{k=1}^{K}  L_{k}(\theta)\\
%     \text{ where }&L_{k}(\theta)=
% \frac{1}{2}\mathbb{E}\left[\mathbf{1}\{\xq=v_{k}\cap|\cV_{k}|>0\}\left(\yq-\left\langle w, \xq\right\rangle\right)^2\right].
% \end{align*}
% Also, denote
% \begin{align*}
%     &\tilde{L}_{k}(\theta)=
% \frac{1}{2}\mathbb{E}\left[\mathbf{1}\{\xq=v_{k}\cap \pit\in\esb\}\left(\yq-\left\langle w, \xq\right\rangle\right)^2\right].
% \end{align*}
\begin{lemma}\label{app:lem:optb2}
For the balanced data, given $k\in[K]$,  for any $\theta$,  we have
    \begin{align*}
     \tilde{L}_{k}(\theta)\leq L_{k}(\theta)-\Lol_k \leq   \tilde{L}_{k}(\theta)+3 p_k\exp \left(-\frac{\cb^2 N}{25 K^2}\right).
    \end{align*}
\end{lemma}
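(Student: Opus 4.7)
The plan is to decompose $L_k(\theta)$ according to whether the input-token configuration $\pit$ lies in the high-probability event $\esb$ or not, and then bound the two pieces separately against $\tilde{L}_k(\theta)$ and $\Lol_k$. First I would apply the reduction from Lemma~\ref{app:lem:loss1} (specialized to the $k$-th summand) to rewrite
\begin{align*}
L_k(\theta) = \frac{1}{2}\mathbb{E}\!\left[\mathbf{1}\{\xq=v_k\}\Big((1-\Attn_k)^2 + \sum_{m\neq k}\Attn_m^2\Big)\right],
\end{align*}
so the loss depends only on the attention scores after the $w$-integration has been performed. Splitting on $\esb$ gives $L_k(\theta) = \tilde{L}_k(\theta) + R_k(\theta)$, where the remainder $R_k(\theta)$ is the same expectation restricted to $\{\xq=v_k,\ \pit\notin\esb\}$.

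Next I would control $R_k(\theta)-\Lol_k$ from both sides. Note that on $\esb$ one has $|\cV_k|\geq \Lba_k N/K>0$, so $\{|\cV_k|=0\}\subseteq\{\pit\notin\esb\}$; this lets me further split $R_k(\theta)$ according to whether $|\cV_k|=0$ or $|\cV_k|>0$. For the \emph{lower} bound, on $\{\xq=v_k,\ |\cV_k|=0\}$ we have $\Attn_k=0$ and $\sum_{m\neq k}\Attn_m=1$, so Cauchy--Schwarz gives $\sum_{m\neq k}\Attn_m^2\geq \tfrac{1}{K-1}$ and therefore the integrand is at least $1+\tfrac{1}{K-1}$; integrating yields $R_k(\theta)\geq \Lol_k$, which establishes $\tilde{L}_k(\theta)\leq L_k(\theta)-\Lol_k$.

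For the \emph{upper} bound, the elementary inequality $\sum_{m\neq k}\Attn_m^2\leq (1-\Attn_k)^2$ (since each $\Attn_m\leq 1-\Attn_k$ and $\sum_{m\neq k}\Attn_m=1-\Attn_k$) shows the integrand is bounded by $2$ pointwise. Using this uniform bound together with the independence of $\xq$ and $\pit$, I get
\begin{align*}
R_k(\theta) \leq \tfrac{1}{2}\cdot 2\cdot \mathbb{P}(\xq=v_k,\ \pit\notin\esb) = p_k\,\mathbb{P}(\pit\notin\esb).
\end{align*}
Applying Lemma~\ref{app:lem:prob-b} to bound $\mathbb{P}(\pit\notin\esb)\leq 3\exp(-\cb^2 N/(25K^2))$, and then using $\Lol_k\geq 0$ to discard it on the left, gives $L_k(\theta)-\tilde{L}_k(\theta)-\Lol_k\leq 3p_k\exp(-\cb^2 N/(25K^2))$, as required.

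I do not expect a substantive obstacle: the argument is essentially a probabilistic truncation plus the two elementary attention-score inequalities (Cauchy--Schwarz for the lower bound and the uniform bound by $2$ for the upper bound). The only point demanding a little care is recognizing that $\{|\cV_k|=0\}\subseteq\{\pit\notin\esb\}$, which is what allows the $\Lol_k$ correction to be absorbed entirely into the remainder $R_k(\theta)$ rather than requiring a separate third case.
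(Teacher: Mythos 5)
Your proposal is correct and follows essentially the same route as the paper's proof: decompose on $\esb$ versus $\esb^c$, observe $\{|\cV_k|=0\}\subseteq\esb^c$ so that Cauchy--Schwarz on that sub-event gives $R_k(\theta)\geq\Lol_k$ for the lower bound, and bound the integrand by $2$ together with the multinomial tail bound of Lemma~\ref{app:lem:prob-b} for the upper bound. The only cosmetic difference is that you bound $\sum_{m\neq k}\Attn_m^2\leq(1-\Attn_k)^2$ directly, whereas the paper passes through $(1-\Attn_k)\max_{m\neq k}\Attn_m$, but both yield the same uniform bound of $2$.
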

\begin{proof} We proceed the derivation as follows.
    \begin{align*}
        L_{k}(\theta)- \tilde{L}_{k}(\theta)&=\frac{1}{2}\mathbb{E}\left[\mathbf{1}\{\xq=v_{k}\cap \pit\in{\esb}^c\}\left(\yq-\left\langle w, \xq\right\rangle\right)^2\right]\\
        &=\frac{1}{2}\mathbb{E}\left[\mathbf{1}\{\xq=v_{k}\cap \pit\in{\esb}^c\}\left(
            \sum_{m\not= k}\Attn^2_{m}+(1-\Attn_k)^2\right)\right]\\
            &\stackrel{(a)}{\leq} \frac{1}{2}\cdot 2\mathbb{P}\left(\xq=v_{k}\cap \pit\in{\esb}^c\right)\\
            &\stackrel{(b)}{\leq} p_k\cdot 3 \exp \left(-\frac{\cb^2 N}{25 K^2}\right)\\&=3p_k\exp \left(-\frac{\cb^2 N}{25 K^2}\right).
    \end{align*}  
    where $(a)$ follows from the fact that  $$\sum_{m\not= k}\Attn^2_{m}+(1-\Attn_k)^2\leq  (1-\Attn_k)\max_{m\not= k}\Attn_{m}+(1-\Attn_k)^2\leq 2,$$
    and $(b)$ holds by \Cref{app:lem:prob-b}.

    On the other hand,     \begin{align*}
        L_{k}(\theta)- \tilde{L}_{k}(\theta)&\geq \frac{1}{2}\mathbb{E}\left[\mathbf{1}\{\xq=v_{k}\cap |\cV_k|=0\}\left(
            \sum_{m\not= k}\Attn^2_{m}+(1-\Attn_k)^2\right)\right]\\
            &\geq  \frac{1}{2}\frac{K}{K-1}\mathbb{E}\left[\mathbf{1}\{\xq=v_{k}\cap |\cV_k|=0\}\right]=\Lol_{k}.
    \end{align*}  
\end{proof}
Consequently, for each $k\in[K]$, $\tilde{L}_{k}(\theta)$ closely tracks the deviation between ${L}_{k}(\theta)$ and $\Lol_{k}$, which is what we will primarily focus on bounding in the subsequent analysis.
\subsubsection{Loss Characterization for the Imbalanced Case}
%In the imbalanced case, we have $p_1=\Theta(1)$, $p_k=\Theta\left(\frac{1}{K}\right)$ for $2\leq k\leq K$.
\paragraph{Notations for the imbalanced case.}In the imbalanced case, we are interested in the prediction error for the query corresponding to each given feature $k\in [K]$. Thus we consider the following conditional prediction error for each $k\in[K]$:
\begin{align}
    \cL_{k}(\theta) &= \frac{1}{2}\mathbb{E}\left[\left(\yq-\left\langle w, \xq\right\rangle\right)^2\bigg|\xq=v_k\right].\label{app:eq:obj-i}
    % \\ &= \frac{1}{2} \mathbb{E}\left[\left(
    %     \sum_{m\not= k}{\Attn_{m}}^2+(1-\Attn_k)^2\right)\mid\xq=v_k\right].
\end{align}
Similarly, we define the minimum and the unavoidable values for such conditional prediction error:
    \begin{align}
        \cL^{*}_{k}&=
\min_{\theta}\frac{1}{2}\mathbb{E}\left[\left(\yq-\left\langle w, \xq\right\rangle\right)^2\bigg|\xq=v_{k} \right],\label{app:eq:infi}
\\
     \Loi_{k}&=
\frac{1}{2}\left(1+\frac{1}{K-1}\right)\mathbb{P}\left(|\cV_{k}|=0\right),\label{app:eq:lowi}
 \end{align}
\begin{align*}
%    &\tilde{\Li_{k}}(\theta)=\frac{1}{2}
%\mathbb{E}\left[\mathbf{1}\{\xq=v_{k}\cap \pit\in\esi\}\left(\yq-\left\langle w, \xq\right\rangle\right)^2\right]\\
&\tilde{\cL}_{k}(\theta)=\frac{1}{2}
\mathbb{E}\left[\mathbf{1}\{\pit\in\esi\}\left(\yq-\left\langle w, \xq\right\rangle\right)^2\bigg|\xq=v_{k}\right].
\end{align*}
% Since the given input $\pit$ is independent of the query token, we have
% \begin{align*}
%   \tilde{\Li_{k}}(\theta)= p_k\cdot\tilde{\cL}_{k}(\theta).
% \end{align*}

% In the unbalanced case, we should be interested in the prediction error for the query corresponding to the given feature $k\in [K]$, thus we consider the following quantity:
% \begin{align*}
%     \cL_{k}(\theta) &= \frac{1}{2}\mathbb{E}\left[\left(\yq-\left\langle w, \xq\right\rangle\right)^2\mid \xq=v_k\right]\\
%     &= \frac{1}{2} \mathbb{E}\left[\left(
%         \sum_{m\not= k}{\Attn_{m}}^2+(1-\Attn_k)^2\right)\mid\xq=v_k\right].
% \end{align*}
\begin{lemma}\label{app:lem:opti1}
Given $k\in[K]$, for $\cL_k^{*}$ and $\Loi_k$ defined in \cref{app:eq:infi} and \cref{app:eq:lowi}, respectively, we have $\Loi_k\leq \cL_k^{*}$ and they are both at the order of $\Theta(e^{-\operatorname{poly}(K)})$ for the imbalanced data.
%   For the imbalanced data, given $k\in[K]$, let $\cL_{k}^{*}$ denote the global optimum of the population objective $\cL_{k}(\theta)$, then we have $\cL_k^{*}=\Theta(e^{-\poly(K)})$.
\end{lemma}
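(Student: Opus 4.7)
The plan is to mirror the proof of \Cref{app:lem:optb1}, adapting it to the conditional loss $\cL_k$. Because the conditioning on $\xq = v_k$ only eliminates the indicator $\mathbf{1}\{\xq=v_k\}$ from the loss decomposition of \Cref{app:lem:loss1}, we may rewrite
\begin{align*}
\cL_k(\theta) = \frac{1}{2}\mathbb{E}\!\left[(1-\Attn_k)^2 + \sum_{m\neq k}\Attn_m^2 \;\Big|\; \xq = v_k\right],
\end{align*}
with the remaining randomness over $\pit$ and $w$. Note also that since $\xq$ is sampled independently of $\pit$, $\mathbb{P}(|\cV_k|=0 \mid \xq = v_k) = \mathbb{P}(|\cV_k|=0)$.

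First, to prove $\Loi_k \leq \cL_k^*$, I would restrict the conditional expectation to the event $\{|\cV_k| = 0\}$. On this event no input token equals $v_k$, hence $\Attn_k = 0$ for every $\theta$; meanwhile $\sum_{m\neq k}\Attn_m = 1$, so by Cauchy--Schwarz $\sum_{m\neq k}\Attn_m^2 \geq \tfrac{1}{K-1}$. Dropping the complementary event gives $\cL_k(\theta) \geq \tfrac12(1+\tfrac{1}{K-1})\,\mathbb{P}(|\cV_k|=0) = \Loi_k$, and taking the infimum over $\theta$ delivers the inequality.

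Next, for the upper bound $\cL_k^* = O(e^{-\poly(K)})$, I would exhibit the explicit choice $Q = \sigma\,\mathbf{I}_{d\times d}$ with $\sigma = \poly(N)$ and split the expectation by $\mathbf{1}\{|\cV_k|>0\}$ versus $\mathbf{1}\{|\cV_k|=0\}$. On $\{|\cV_k|>0\}$, orthonormality of the features gives $\Attn_k \geq \tfrac{|\cV_k|\,e^\sigma}{N-|\cV_k|+|\cV_k|e^\sigma} \geq 1 - \tfrac{N}{e^\sigma}$, so both $(1-\Attn_k)^2$ and $\sum_{m\neq k}\Attn_m^2$ are $O(e^{-\poly(N)})$. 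On $\{|\cV_k|=0\}$ the integrand is bounded by $2$. Combining yields $\cL_k^* \leq O(e^{-\poly(N)}) + 2\,\mathbb{P}(|\cV_k|=0)$.

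Finally, for the order estimates, I would compute $\mathbb{P}(|\cV_k|=0) = (1-p_k)^N$, which equals $e^{-\Theta(N)}$ for $k=1$ with $p_1=\Theta(1)$ and $e^{-\Theta(N/K)}$ for $k>1$ with $p_k=\Theta(1/K)$. Under the assumption $N \geq \poly(K)$ (and in particular $N/K$ super-polynomial in $K$), both are $\Theta(e^{-\poly(K)})$, matching the claimed order of $\Loi_k$. The main obstacle is keeping the two bounds aligned: I must choose $\sigma$ polynomial in $N$ so that the $\{|\cV_k|>0\}$ contribution $e^{-\poly(N)}$ is strictly dominated by the $\{|\cV_k|=0\}$ contribution $e^{-\poly(K)}$; because $N \gg K^3$ makes $\poly(N)$ super-polynomial in $K$, the dominance is automatic, and the matching lower and upper bounds give $\cL_k^* = \Theta(\Loi_k) = \Theta(e^{-\poly(K)})$.
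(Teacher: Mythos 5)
Your proof is correct and takes essentially the same route as the paper: the paper's own proof of \Cref{app:lem:opti1} simply points back to the argument for the balanced case (\Cref{app:lem:optb1}), and only spells out the computation $\Loi_k = \Theta(1)\,(1-p_k)^N = \Theta(e^{-\poly(K)})$. You have filled in exactly that adaptation — the $|\cV_k|=0$ restriction plus Cauchy–Schwarz for the lower bound, the $Q=\sigma\mathbf{I}$ construction with the $\{|\cV_k|>0\}/\{|\cV_k|=0\}$ split for the upper bound, and the $(1-p_k)^N$ rate calculation — which matches the paper's intent.
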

\begin{proof}
 The analysis is similar as \Cref{app:lem:optb1}, we only  show $\Loi_k=\Theta(e^{-\poly(K)})$.
    \begin{align*}
        \Loi_{k}&=
\frac{1}{2}\left(1+\frac{1}{K-1}\right)
\mathbb{P}\left(|\cV_{k}|=0\right)\\
&=\Theta(1) (1-p_k)^{N}.
 \end{align*}
 For $k=1$, $(1-p_1)^{N}=\Theta(\exp(-N))=\Theta\left(e^{-\poly(K)}\right)$. 
For $k>1$, since $N\gg K^3$, then $(1-p_k)^{N}=(1-\Theta\left(\frac{1}{K}\right))^{N}=\Theta\left(e^{-\poly(K)}\right)$, which completes the proof.
\end{proof}
% Further denote
% \begin{align*}
%     &\tilde{\Li_{k}}(\theta)=\frac{1}{2}
% \mathbb{E}\left[\mathbf{1}\{\xq=v_{k}\cap \pit\in\esi\}\left(\yq-\left\langle w, \xq\right\rangle\right)^2\right]\\
% &\tilde{\cL}_{k}(\theta)=\frac{1}{2}
% \mathbb{E}\left[\mathbf{1}\{\pit\in\esi\}\left(\yq-\left\langle w, \xq\right\rangle\right)^2\mid \xq=v_{k}\right]
% \end{align*}
% Since the given input $\pit$ is independent of the query token, we have
% \begin{align*}
%   \tilde{\Li_{k}}(\theta)= p_k\cdot\tilde{\cL}_{k}(\theta).
% \end{align*}
% Denote
% \begin{align*}
%     &L(Q^{(t)})-L^{*} = \sum_{k=1}^{K}  L_{k}^{(t)}\\
%     &L_{k}^{(t)}=
% \mathbb{E}\left[\mathbf{1}\{\xq=v_{k}\cap|\cV_{k}|>0\}\left(\yq-\left\langle w_\tau, \xq\right\rangle\right)^2\right]
% \end{align*}
% Also denote
% \begin{align*}
%     &\tilde{L}_{k}^{(t)}=
% \mathbb{E}\left[\mathbf{1}\{\xq=v_{k}\cap P\in\cE^{*}\}\left(\yq-\left\langle w_\tau, \xq\right\rangle\right)^2\right]
% \end{align*}
\begin{lemma}\label{app:lem:opti2}
   For the imbalanced data, given $k\in[K]$, for any $\theta$, we have
    \begin{align*}
       \tilde{\cL}_{k}(\theta) \leq \cL_{k}(\theta)-\Loi_{k} \leq   \tilde{\cL}_{k}(\theta)+3 \exp \left(-\frac{\ci^2 N}{25 K^2}\right).
    \end{align*}
\end{lemma}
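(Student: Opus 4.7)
The plan is to mirror the proof of Lemma \ref{app:lem:optb2} almost verbatim, replacing the balanced high-probability event $\esb$ and its tail bound with the imbalanced event $\esi$ and the tail bound from Lemma \ref{app:lem:prob-im}. The crucial structural observation is that $\{|\cV_k|=0\}\subset \esi^c$ for every $k\in[K]$, since every $\pit\in\esi$ satisfies $|\cV_k|\geq \Li_k N/K>0$ (for $k>1$) or $|\cV_1|\geq \Li_1 N>0$. This inclusion is exactly what lets the same ``unavoidable error'' term $\Loi_k$ appear on both sides.

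First I would write down the difference directly:
\[
\cL_k(\theta)-\tilde{\cL}_k(\theta)=\frac{1}{2}\mathbb{E}\!\left[\mathbf{1}\{\pit\in\esi^{c}\}\bigl(\yq-\langle w,\xq\rangle\bigr)^{2}\,\big|\,\xq=v_k\right].
\]
For the upper bound, I would apply the pointwise estimate $(\yq-\langle w,\xq\rangle)^2=(1-\Attn_k)^2+\sum_{m\neq k}\Attn_m^2\leq 2$ obtained exactly as in the derivation used for Lemma \ref{app:lem:loss1} (after using $\mathbb{E}[ww^\top]=\mathbf{I}$), then pull it out of the expectation. Using the independence of $\pit$ from $\xq$, this reduces the bound to $\mathbb{P}(\pit\in\esi^{c})$, which by Lemma \ref{app:lem:prob-im} is at most $3\exp(-\ci^{2}N/(25K^{2}))$. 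Since $\Loi_k\geq 0$, adding $\Loi_k$ to the right side preserves the inequality, giving the stated upper bound.

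For the lower bound, I would restrict the indicator to the sub-event $\{|\cV_k|=0\}\subset\esi^{c}$:
\[
\cL_k(\theta)-\tilde{\cL}_k(\theta)\geq \frac{1}{2}\mathbb{E}\!\left[\mathbf{1}\{|\cV_k|=0\}\Bigl((1-\Attn_k)^{2}+\sum_{m\neq k}\Attn_m^{2}\Bigr)\,\Big|\,\xq=v_k\right].
\]
On this event $\Attn_k=0$ and $\sum_{m\neq k}\Attn_m=1$, so Cauchy--Schwarz yields $\sum_{m\neq k}\Attn_m^{2}\geq \tfrac{1}{K-1}$ and the bracketed quantity is at least $1+\tfrac{1}{K-1}$. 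Using once more the independence of $\pit$ and $\xq$, the right-hand side equals $\tfrac{1}{2}(1+\tfrac{1}{K-1})\mathbb{P}(|\cV_k|=0)=\Loi_k$, which gives $\tilde{\cL}_k(\theta)\leq \cL_k(\theta)-\Loi_k$.

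There is no real obstacle here; the only point requiring a little care is verifying the inclusion $\{|\cV_k|=0\}\subset \esi^{c}$ uniformly in $k$, which follows immediately from the strictly positive constants $\Li_k$ constructed in the proof of Lemma \ref{app:lem:prob-im}. Everything else is identical in form to Lemma \ref{app:lem:optb2}, with the multinomial tail replaced by its imbalanced counterpart and no $p_k$ prefactor appearing because the definitions of $\cL_k$ and $\tilde{\cL}_k$ are already conditional on $\xq=v_k$.
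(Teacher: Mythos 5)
Your proposal is correct and matches the paper's argument essentially step by step: you write $\cL_k-\tilde{\cL}_k$ as a conditional expectation over $\{\pit\in\esi^c\}$, bound the integrand by $2$ (after integrating out $w$) and invoke Lemma~\ref{app:lem:prob-im} for the upper bound, and restrict to $\{|\cV_k|=0\}\subset\esi^c$ with $\Attn_k=0$ and Cauchy--Schwarz for the lower bound, exactly as in Lemma~\ref{app:lem:optb2}. The only cosmetic issue is calling $(\yq-\langle w,\xq\rangle)^2=(1-\Attn_k)^2+\sum_{m\neq k}\Attn_m^2$ a ``pointwise'' estimate when it in fact holds only after taking the conditional expectation over $w$, but your parenthetical shows you are aware of this.
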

\begin{proof}
The proof of the first inequality is similar to that for \Cref{app:lem:optb2}. We next show the second inequality.
    \begin{align*}
        \cL_{k}(\theta)- \Loi_{k}&\leq   \tilde{\cL}_{k}(\theta)+\frac{1}{2}\mathbb{E}\left[\mathbf{1}\{ \pit\in{\esi}^c\}\left(\yq-\left\langle w, \xq\right\rangle\right)^2\mid \xq=v_{k}\right]\\
        &=\tilde{\cL}_{k}(\theta)+\frac{1}{2}\mathbb{E}\left[\mathbf{1}\{\pit\in{\esi}^c\}\left(
            \sum_{m\not= k}\Attn^2_{m}+(1-\Attn_k)^2\right)\mid \xq=v_{k}\right]\\
            &\leq \tilde{\cL}_{k}(\theta)+ \mathbb{P}\left(\pit\in{\esi}^c\right)\\
            &\leq \tilde{\cL}_{k}(\theta)+  3 \exp \left(-\frac{\ci^2 N}{25 K^2}\right).
    \end{align*}  
\end{proof}
\subsection{Notations and Parameters}
In \Cref{tab:my_table}, we summarize the notations introduced throughout the main content and in the preliminary section. Throughout all the proofs in our paper, we consider $N=\poly(K)\gg K^3$,  and $K$ is sufficiently large.
% Let's start by summarizing the notations that have been introduced in the main content and will be frequently employed in our subsequent analysis.
% \begin{definition}
%   For $k,n\in[K]$ and $n\not=k$, define the following  quantities for $t\geq 0$:
% \begin{align*}
% A_{k}^{(t)}&:=v_{k}^{\top}Q^{(t)} v_{k}\qquad\alpha_{k}^{(t)}=-v_{k}^{\top}{\nabla_{Q} L(Q^{(t)})}v_{k} \\    B_{k,n}^{(t)}&:=v_{n}^{\top}Q^{(t)} v_{k}\qquad \beta_{k,n}^{(t)}=-v_{n}^{\top}\nabla_{Q} L(Q^{(t)})  v_{k}
% \end{align*}
% By gradient descent update, we have
% \begin{align*}
%    A_{k}^{(t+1)}&:= A_{k}^{(t)}+\eta \alpha_{k}^{(t)}\\
%    B_{k,n}^{(t+1)}&:= B_{k,n}^{(t)}+\eta \beta_{k,n}^{(t)}
% \end{align*}
% Moreover, by our initialization, since $Q^{(0)}=\mathbf{0}_{d\times d}$,   we have $A_{k}^{(0)}=B_{k,n}^{(0)}=0$ for any $k,n\in[K]$ with $n\not=k$.

% %If we consider the initialization: $Q^{(0)}=\mathbf{I}_d$, then $\Lambda^{(0)}_{m}=1$, $\Gamma^{(0)}_{m}=0$ for $m\in[2]$. 
% %By gradient computation, we have:
% \end{definition}

% Given a prompt $P=(x_1,y_1,\cdots,x_N,y_N, x_{\text{query}})$, let us denote $\pit$ the collection of input tokes, i.e. $\{x_i\}_{i=1}^{N}$.
\begin{table}
    \centering
    \caption{Summary of Notations}
    \begin{tabularx}{\textwidth}{L|R}
        \toprule
        \rowcolor{gray}
        \textbf{Notations} & \textbf{Descriptions} \\
        \midrule
        $\attn_{i}^{(t)}$, $\Attn_{k}^{(t)}$ & The attention scores for the $i$-token and $k$-th feature, where $i\in[N]$ and $k\in [K]$. \\
         \midrule
        $A_{k}^{(t)}$, $B_{k,n}^{(t)}$& The bilinear attention  weights when  $\xq=v_k$: $A_{k}^{(t)}=e^{v_k^{\top}Q^{(t)}v_k}$, $B_{k,n}^{(t)}=e^{v_n^{\top}Q^{(t)}v_k}$ for $n\not=k$.\\\midrule
        $\alpha_{k}^{(t)}$, $\beta_{k,n}^{(t)}$& 
        The gradient updates respectively for $A_{k}^{(t)}$ and $B_{k,n}^{(t)}$.\\\midrule
        $\pit$ & The input tokens in the prompt, i.e., $\{x_i\}_{i=1}^{N}$.
        \\\midrule
        $\esb$, $\esi$ & The high-probability events that $\pit$ belongs to respectively for the balanced and imbalanced data.
        \\\midrule
        $L^{*}$, $\Lol$& The minimum value and lower bound on the population loss $L(\theta)$ (\ref{app:eq:obj}).
        \\\midrule $L_{k}(\theta)$, $\tilde{L}_{k}(\theta)$, $\Lol_k$ & The loss functions on the event $\{\xq=v_k\}$, $\{\xq=v_k\}\cap\{\pit\in\esb\}$, and the lower bound on $L_k$.   \\\midrule
           $\cL_k^{*}$, $\Loi_k$ ({\bf Imbalanced})& The minimum value and lower bound of prediction error conditioned on $\xq=v_k$, i.e., $\cL_k(\theta)$ (\ref{app:eq:obj-i}).
           \\\midrule
          $\tilde{\cL}_k(\theta)$ ({\bf  Imbalanced})& The conditional prediction error on the event $\{\pit\in\esi\}$.
        \\
        \bottomrule
    \end{tabularx}
    \label{tab:my_table}
\end{table}

\clearpage

\section{Analysis for the Balanced Case}

In this section, we present the analysis for the balanced case, we first discuss the outline of our proof.

\subsection{Roadmap of the Proof}
We will analyze the convergence of the training process via two phases of dynamics. 
At the beginning of each phase, we will establish an induction hypothesis, which we expect to remain valid throughout that phase. Subsequently, we will analyze the dynamics under such a hypothesis within the phase, aiming to provide proof of the hypothesis by the end of the phase.

The main idea of the proof lies in analyzing the GD dynamics of $A_{k}^{(t)}$ and $B_{k,n}^{(t)}$. From \Cref{app:def:dynamics,app:lem:gd}, we have
\begin{align*}
   A_{k}^{(t+1)}&= A_{k}^{(t)}+\eta \alpha_{k}^{(t)},\\
   B_{k,n}^{(t+1)}&= B_{k,n}^{(t)}+\eta \beta_{k,n}^{(t)},
\end{align*}
where
    \begin{align*}
&\alpha_{k}^{(t)}=\mathbb{E}\left[\mathbf{1}\{\xq=v_k\}\Attn^{(t)}_{k }\cdot \left(
 \sum_{m\not= k}{\Attn^{(t)}_{m}}^2+(1-\Attn^{(t)}_k)^2\right)\right],\\
 &\beta_{k,n}^{(t)}=\mathbb{E}\left[\mathbf{1}\{\xq=v_k\}\Attn^{(t)}_{n}\cdot \left(
 \sum_{m\not= k}{\Attn^{(t)}_{m}}^2-\Attn^{(t)}_{n} -\Attn^{(t)}_{k}(1-\Attn^{(t)}_k)\right)\right].
    \end{align*}
We divide the learning process of any feature $k$ in the balanced case into the following two phases.
\begin{itemize}
    \item \textbf{Phase I} ($t\in[0,T_{1,k}]$, \Cref{app:bal:p1}): At initialization, $A_k^{(t)}$ keeps growing at a rate at least of $\frac{\eta}{K^2}$, while $B_{k,n}^{(t)}$ oscillates with a smaller rate of $\frac{\eta}{K^3}$. Therefore, the increase in $A_k^{(t)}$ will dominate the learning dynamics during phase I.
    \item \textbf{Phase II} ($t\in(T_{1,k},T_{2,k}^\epsilon]$, \Cref{app:bal:p2-s1,app:bal:p2-s2}): After rapid growth of self-attention module parameters in phase I, the query token featuring $v_k$ is aligned with these input tokens also featuring $v_k$ effectively and disregards other features. Then the process proceeds to the convergence phase, where $A_k^{(t)}$ monotonically increases and $B_{k,n}^{(t)}$ monotonically decreases, which finally contributes to the convergence of the loss. Based on the variation rates of $A_k^{(t)}$ and $B_{k,n}^{(t)}$, the convergence phase further has two sub-stages as follows. 
    \begin{itemize}
        \item \textbf{Stage I} 
         ($t\in(T_{1,k},\tilde{T}_{2,k}^\epsilon]$, \Cref{app:bal:p2-s1}): the increase of $A_k^{(t)}$ is as fast as $\Omega(\frac{\epsilon}{K})$ while the decrease of $B_{k,n}^{(t)}$ is slow, and the gap $A
_k^{(t)}-\max_{m \neq k}B_{k,m}^{(t)}$ stays within $O(\log(\frac{K}{\epsilon^{\frac{1}{2}}}))$. 
         \item \textbf{Stage II} 
         ($t\in(\tilde{T}_{2,k}^\epsilon,T_{2,k}^\epsilon]$, \Cref{app:bal:p2-s2}): the increase of $A_k^{(t)}$ and the decrease of $B_{k,n}^{(t)}$ both are relatively steady and the attention nearly focuses on the target feature, leading to the convergence of the loss. 
    \end{itemize}
\end{itemize}
We finally combine all results in the above two phases to prove the convergence of the training process given in \Cref{thm: bal K} (\Cref{app:bal:proof}).

\subsection{Phase I: Growth of Target Feature}\label{app:bal:p1}
In this section, we shall study the initial phase of learning the relationship between the query token and its corresponding feature. Firstly, we present the induction hypothesis in this phase.   For the $k$-th feature $v_k$, we define the \textbf{Phase I} as all iterations $0 \leq t\leq T_{1,k}$, where
$$
T_{1,k} \triangleq \max \left\{t: A_{k}^{(t)} \leq \log(K)\right\}.
$$
We state the following induction hypothesis, which will hold throughout Phase I. This hypothesis is ultimately proved in \Cref{app:b:p1}.
\begin{hypothesis}\label{hp1}
For each $0 \leq t \leq T_{1,k}$, the following holds:
\begin{enumerate}[label={\alph*}.]
    \item $A_{k}^{(t)}$ is monotonically increasing and $A_{k}^{(t)}\in [0,\log(K)]$;
    \item $|B_{k,n}^{(t)}|=O(\frac{A_{k}^{(t)}}{K})$ for any $n\not=k$.
    % \item At time $T_1$, 
    % A_k^{(T_1)}-\max _{n \neq k} B_{k,n}^{(T_1)} \geqslant \Lambda_k$ for any $k\in[K]$.
\end{enumerate}
\end{hypothesis} 
 
\subsubsection{Technical Lemmas}
We first introduce several useful technical lemmas. %that will be used for the proof of \Cref{hp1}.
\begin{lemma}\label{lem1t1}
    Suppose \Cref{hp1} holds at iteration $0\leq t\leq T_{k,1}$. %then for $k\in[K]$. 
    If $\xq=v_k$ and $\pit\in\esb$, the following holds
    \begin{enumerate}
    \item  $\Attn^{(t)}_k=\Omega\left(\frac{1}{K}\right)$;
        \item $1-\Attn^{(t)}_{k}\geq \Omega(1)$.
    \end{enumerate}
\end{lemma}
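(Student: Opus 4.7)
\textbf{Proof plan for Lemma~\ref{lem1t1}.}

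The plan is to unfold the definition of the attention scores under the event $\xq = v_k$, substitute the bilinear attention weights $A_k^{(t)}$ and $B_{k,n}^{(t)}$ as the only two possible values of $(E_i^x)^\top Q^{(t)} E_{N+1}^x$, and then use the range controls from \Cref{hp1} together with the token-count control from $\pit \in \esb$ to squeeze $\Attn_k^{(t)}$ between two explicit quantities. First I would write
\begin{equation*}
\Attn_k^{(t)} \;=\; \frac{|\cV_k|\, e^{A_k^{(t)}}}{|\cV_k|\, e^{A_k^{(t)}} \;+\; \sum_{n\neq k} |\cV_n|\, e^{B_{k,n}^{(t)}}},
\end{equation*}
which follows from \Cref{attn} together with $E_{N+1}^x = v_k$ and the orthonormality of the features.

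Next I would plug in the induction hypothesis bounds. From \Cref{hp1}(a) we have $A_k^{(t)} \in [0, \log K]$, hence $e^{A_k^{(t)}} \in [1, K]$. From \Cref{hp1}(b) we have $|B_{k,n}^{(t)}| = O(A_k^{(t)}/K) = O(\log K / K)$ uniformly in $n \neq k$, so $e^{B_{k,n}^{(t)}} = \Theta(1)$ for every such $n$ (the bound is constant-sandwiched for large $K$). From $\pit \in \esb$ (via \Cref{app:lem:prob-b}) we have $|\cV_m| = \Theta(N/K)$ for each $m \in [K]$, and therefore $\sum_{n\neq k} |\cV_n| \cdot \Theta(1) = \Theta(N)$.

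Combining these estimates in the denominator,
\begin{equation*}
\Attn_k^{(t)} \;=\; \frac{\Theta(N/K)\, e^{A_k^{(t)}}}{\Theta(N/K)\, e^{A_k^{(t)}} + \Theta(N)} \;=\; \frac{e^{A_k^{(t)}}}{e^{A_k^{(t)}} + \Theta(K)}.
\end{equation*}
Using $e^{A_k^{(t)}} \geq 1$ in the numerator and $e^{A_k^{(t)}} \leq K$ in the denominator gives $\Attn_k^{(t)} \geq 1/\Theta(K) = \Omega(1/K)$, which is claim~1. For claim~2, using $e^{A_k^{(t)}} \leq K$ in the numerator and the $\Theta(K)$ term in the denominator yields $1 - \Attn_k^{(t)} = \Theta(K) / (e^{A_k^{(t)}} + \Theta(K)) \geq \Theta(K)/(K + \Theta(K)) = \Omega(1)$.

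No real obstacle arises here: this lemma is essentially a bookkeeping consequence of the hypothesis. The only subtle point I would be careful about is making the ``$\Theta$'' on the off-diagonal contribution explicit, i.e.\ using that $|B_{k,n}^{(t)}| \leq c \log(K)/K$ for some constant $c$ so that $e^{B_{k,n}^{(t)}} \in [e^{-c\log K/K}, e^{c\log K /K}]$, both of which tend to $1$; this keeps the implicit constants in the $\Omega$/$\Theta$ bounds uniform over $t \in [0, T_{1,k}]$, which is what the subsequent Phase~I analysis will need.
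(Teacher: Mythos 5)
Your proposal is correct and follows essentially the same route as the paper's proof: write $\Attn_k^{(t)}$ as a ratio involving $e^{A_k^{(t)}}$ and $e^{B_{k,n}^{(t)}}$, use $A_k^{(t)}\in[0,\log K]$ and $|B_{k,n}^{(t)}|=O(\log K/K)$ from \Cref{hp1}, and combine with $|\cV_m|=\Theta(N/K)$ from $\esb$. The only cosmetic difference is that the paper divides numerator and denominator by $|\cV_k|e^{A_k^{(t)}}$, writing $\Attn_k^{(t)}=\bigl(\sum_{m\neq k}\frac{|\cV_m|}{|\cV_k|}\exp(B^{(t)}_{k,m}-A_k^{(t)})+1\bigr)^{-1}$, whereas you keep the un-normalized form; the resulting estimates are identical.
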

\begin{proof}
    Since $\xq=v_k$, then we have
    \begin{align*}
\Attn^{(t)}_{k}&=\frac{|\cV_k|e^{{v_k}^{\top}Q^{(t)}v_{k}}}{\sum_{j\in [N]}e^{{E^{x}_{j}}^{\top}Q^{(t)}v_{k}}}\\
&=\frac{|\cV_k|\exp(A^{(t)}_k)}{\sum_{m\not=k}|\cV_m|\exp(B^{(t)}_{k,m})+|\cV_k|\exp(A^{(t)}_k)}\\
        &=\frac{1}{\sum_{m\not=k}\frac{|\cV_m|}{|\cV_k|}\exp(B^{(t)}_{k,m}-A^{(t)}_k)+1}.
    \end{align*}
    By \Cref{hp1}, $$e^{-\left(\log(K)+O(\frac{\log(K)}{K})\right)}\leq \exp(B^{(t)}_{k,m}-A^{(t)}_k)\leq e^{O(\frac{\log(K)}{K})}.$$ Thus
     \begin{align*}
\Attn^{(t)}_{k} &\geq\frac{1}{e^{O(\frac{\log(K)}{K})}(\frac{N}{|\cV_k|}-1)+1}\geq \frac{1}{e^{O(\frac{\log(K)}{K})}(K/\Lba_k-1)+1}=\Omega\left(\frac{1}{K}\right),
    \end{align*}
    where the second inequality follows because $\pit \in \esb$.

    On the other hand, 
    \begin{align*}
\Attn^{(t)}_{k} &\leq\frac{1}{e^{-\left(\log(K)+O(\frac{\log(K)}{K})\right)}(\frac{N}{|\cV_k|}-1)+1}\leq \frac{1}{e^{-1}(\frac{1}{\Uba_k}-\frac{1}{K})+1}.
    \end{align*}
    Considering $\Uba = \Theta (1)$, we have 
    \begin{align*}
       1- \Attn^{(t)}_{k} \geq \frac{(\frac{1}{\Uba_k}-\frac{1}{K})}{(\frac{1}{\Uba_k}-\frac{1}{K})+e}\geq \Omega(1).
            \end{align*}
   % where $0<C_k<1$ is some constant. Thus $1-\Attn_k=\Omega(1)$.
\end{proof}
\begin{lemma}\label{lem1t2}
     Suppose \Cref{hp1} holds at iteration $0 \leq t\leq T_{1,k}$. If $\xq=v_k$ and $\pit \in \esb$,  for $n\not=k$, the following holds
     %\begin{enumerate}
        % \item  
        $$\Attn^{(t)}_n=\Theta\left(\frac{1-\Attn^{(t)}_{k}}{K}\right)=\Theta\left(\frac{1}{K}\right).$$
     %    \item Combining with the \Cref{lem1t1}, we immediately have $\Attn^{(t)}_n=\Theta\left(\frac{1}{K}\right) $.
    % \end{enumerate}
\end{lemma}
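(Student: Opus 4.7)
The plan is to derive $\Attn^{(t)}_n$ as an explicit ratio involving the bilinear attention weights, then show that every exponential factor is $\Theta(1)$ under the induction hypothesis, and finally combine with the cardinality bounds guaranteed by the event $\pit \in \esb$. The whole argument parallels the opening of \Cref{lem1t1}, only now the quantity of interest is located in the numerator rather than implicit in the denominator.

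First I would substitute the bilinear-weight notation into the definition of the attention score. Since $\xq = v_k$, grouping tokens by the feature they carry gives
$$\Attn^{(t)}_n = \frac{|\cV_n|\exp(B^{(t)}_{k,n})}{|\cV_k|\exp(A^{(t)}_k) + \sum_{m\neq k}|\cV_m|\exp(B^{(t)}_{k,m})},$$
and the analogous identity from the proof of \Cref{lem1t1} is
$$1 - \Attn^{(t)}_k = \frac{\sum_{m\neq k}|\cV_m|\exp(B^{(t)}_{k,m})}{|\cV_k|\exp(A^{(t)}_k) + \sum_{m\neq k}|\cV_m|\exp(B^{(t)}_{k,m})}.$$
Taking the ratio cancels the common denominator and reduces the claim to comparing the single term $|\cV_n|\exp(B^{(t)}_{k,n})$ with the sum $\sum_{m\neq k}|\cV_m|\exp(B^{(t)}_{k,m})$.

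Next I would invoke Hypothesis 1 to control the exponentials: it gives $|B^{(t)}_{k,m}| = O(A^{(t)}_k/K) = O(\log(K)/K) = o(1)$, so $\exp(B^{(t)}_{k,m}) = \Theta(1)$ uniformly in $m \neq k$ and $t \leq T_{1,k}$. Combining this with the cardinality bounds $|\cV_m| = \Theta(N/K)$ for every $m \in [K]$ that follow from $\pit \in \esb$, the numerator is of order $N/K$ while the denominator sum (consisting of $K-1$ terms each of order $N/K$) is of order $N$. This immediately yields the first equality
$$\frac{\Attn^{(t)}_n}{1 - \Attn^{(t)}_k} = \Theta\!\left(\frac{1}{K}\right).$$

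To conclude the second equality $\Attn^{(t)}_n = \Theta(1/K)$, I would appeal to \Cref{lem1t1}, which under the same assumptions already establishes $1 - \Attn^{(t)}_k = \Omega(1)$; combined with the trivial bound $1 - \Attn^{(t)}_k \leq 1$ this gives $1 - \Attn^{(t)}_k = \Theta(1)$, and the claim follows. There is no real obstacle here; the only thing to be careful about is ensuring the $O(\log(K)/K)$ bound in Hypothesis 1 translates uniformly into $\Theta(1)$ exponentials, which is immediate since $K$ is taken sufficiently large under the paper's conventions.
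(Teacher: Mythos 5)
Your proposal is correct and follows essentially the same route as the paper's proof: express $\Attn^{(t)}_n/(1-\Attn^{(t)}_k)$ as a ratio of sums, use Induction Hypothesis 1 to show the exponential factors are $\Theta(1)$, invoke the $\esb$ cardinality bounds, and then close with Lemma \ref{lem1t1} (plus the trivial bound $1-\Attn^{(t)}_k\le 1$) to get $1-\Attn^{(t)}_k=\Theta(1)$. The only cosmetic difference is that the paper bounds the ratios $\exp(B^{(t)}_{k,m}-B^{(t)}_{k,n})$ rather than each $\exp(B^{(t)}_{k,m})$ individually, which is equivalent here.
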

\begin{proof}
    To show the first equality, since $\xq=v_k$,  we have
    \begin{align*}
\Attn^{(t)}_{n}&=\frac{|\cV_n|e^{{v_n}^{\top}Q^{(t)}v_{k}}}{\sum_{j\in [N]}e^{{E^{x}_{j}}^{\top}Q^{(t)}v_{k}}}\\&=\frac{|\cV_n|\exp(B^{(t)}_{k,n})}{\sum_{m\not=k}|\cV_m|\exp(B^{(t)}_{k,m})+|\cV_k|\exp(A^{(t)}_k)}.
 %       &=\frac{1}{\sum_{m\not=k}\frac{|\cV_m|}{|\cV_n|}\exp(B^{(t)}_{k,m}-B^{(t)}_{k,n})+\frac{|\cV_k|}{|\cV_n|}\exp(A^{(t)}_k-B^{(t)}_{k,n})}
    \end{align*}
    By \Cref{hp1}, $e^{-O(\frac{\log(K)}{K})}\leq \exp(B^{(t)}_{k,m}-B^{(t)}_{k,n})\leq e^{O(\frac{\log(K)}{K})}$. Combining with the fact that $\frac{|\cV_m|}{|\cV_n|}=\Theta(1)$ when $\pit \in \esb$, %and  $e^{-O(\frac{\log(K)}{K})}\leq \exp(A^{(t)}_k-B^{(t)}_{k,n})\leq e^{\left(O(\log(K))+O(\frac{\log(K)}{K})\right)}$ 
    we have
%     \begin{align*}
% \Attn^{(t)}_{k} &\leq\frac{1}{e^{-O(\frac{\log(K)}{K})}(\frac{N}{|\cV_n|}-\frac{|\cV_k|}{|\cV_n|})+e^{-O(\frac{\log(K)}{K})}\frac{|\cV_k|}{|\cV_n|}}\leq \frac{U_ne^{O(\frac{\log(K)}{K})}}{ K}=O\left(\frac{1}{K}\right).
%     \end{align*}
    %For the second claim, 
    \begin{align*}
       \frac{\Attn^{(t)}_n}{1-\Attn^{(t)}_{k}} =\frac{|\cV_n|\exp(B^{(t)}_{k,n})}{\sum_{m\not=k}|\cV_m|\exp(B^{(t)}_{k,m})}= \frac{1}{\sum_{m\not=k}\frac{|\cV_m|}{|\cV_n|}\exp(B^{(t)}_{k,m}-B^{(t)}_{k,n})}=\Theta\left(\frac{1}{K}\right).
    \end{align*}
    Combining with the \Cref{lem1t1}, we immediately have $\Attn^{(t)}_n=\Theta\left(\frac{1}{K}\right) $.
\end{proof}
\subsubsection{Controlling Gradient Updates in Phase I}
\begin{lemma}\label{p1a}
    Given any fixed $k \in [K]$, if \Cref{hp1} holds at iteration $0 \leq t\leq T_{1,k}$, then $\alpha_k^{(t)}\geq0$ and satisfies 
    \begin{align*}
        \alpha_{k}^{(t)}\geq \Omega\left(\frac{1}{K^2}\right).
    \end{align*}
\end{lemma}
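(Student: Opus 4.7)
The plan is to lower-bound the expectation defining $\alpha_k^{(t)}$ by restricting to the favorable event $\{\xq = v_k\} \cap \{\pit \in \esb\}$, on which the integrand is already controlled by the two technical lemmas (\Cref{lem1t1} and \Cref{lem1t2}) established for Phase I.

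First I would note non-negativity: the integrand
\[
\Attn_k^{(t)} \Big(\sum_{m \neq k} (\Attn_m^{(t)})^2 + (1-\Attn_k^{(t)})^2\Big)
\]
is a product of non-negative quantities, so $\alpha_k^{(t)} \geq 0$ immediately from the formula in \Cref{app:lem:gd}.

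For the quantitative lower bound, I would drop all terms except $\Attn_k^{(t)} (1-\Attn_k^{(t)})^2$ inside the expectation and restrict to the event $\{\xq = v_k\} \cap \{\pit \in \esb\}$, yielding
\[
\alpha_k^{(t)} \geq \mathbb{E}\!\left[\mathbf{1}\{\xq = v_k,\ \pit \in \esb\}\, \Attn_k^{(t)}(1-\Attn_k^{(t)})^2\right].
\]
On this event, \Cref{lem1t1} gives $\Attn_k^{(t)} \geq \Omega(1/K)$ and $1 - \Attn_k^{(t)} \geq \Omega(1)$, so the integrand is at least $\Omega(1/K)$ pointwise. Since $\xq$ and $\pit$ are independent, and $p_k = \Theta(1/K)$, together with $\mathbb{P}(\pit \in \esb) \geq 1 - 3\exp(-\cb^2 N / (25K^2)) = 1 - o(1)$ from \Cref{app:lem:prob-b}, the probability of the conditioning event is $\Theta(1/K)$. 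Multiplying gives $\alpha_k^{(t)} \geq \Omega(1/K) \cdot \Theta(1/K) = \Omega(1/K^2)$.

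There is no real obstacle here: the only subtlety is confirming that the induction hypothesis \Cref{hp1} is exactly what \Cref{lem1t1} requires as input (it is, since both are stated for $0 \leq t \leq T_{1,k}$), and that restricting to $\esb$ only loses a $1 - o(1)$ factor, which is absorbed into the $\Omega(\cdot)$ constant. Thus the proof is a clean two-line estimate once the technical lemmas of the subsection are in hand.
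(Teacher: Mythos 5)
Your proof is correct and follows essentially the same route as the paper's: both drop the $\sum_{m\neq k}(\Attn_m^{(t)})^2$ term, restrict to the event $\{\xq=v_k\}\cap\{\pit\in\esb\}$ using independence and \Cref{app:lem:prob-b}, and apply the pointwise bounds $\Attn_k^{(t)}=\Omega(1/K)$ and $1-\Attn_k^{(t)}=\Omega(1)$ from \Cref{lem1t1} together with $p_k=\Theta(1/K)$ to conclude $\alpha_k^{(t)}\geq\Omega(1/K^2)$.
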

\begin{proof}
    By the gradient expression in \Cref{app:lem:gd},
    \begin{align}
  \alpha_{k}^{(t)}&=
\mathbb{E}\left[\mathbf{1}\{\xq=v_k\}\Attn^{(t)}_{k }\cdot \left(
 \sum_{m\not= k}{\Attn_{m}^{(t)}}^2+(1-\Attn^{(t)}_k)^2\right)\right]\nonumber\\
&=\mathbb{E}\left[\mathbf{1}\{\xq=v_k\cap\pit \in \esb\}\Attn^{(t)}_{k }\cdot \left(
 \sum_{m\not= k}{\Attn_{m}^{(t)}}^2+(1-\Attn^{(t)}_k)^2\right)\right]\nonumber\\
 &\quad +\mathbb{E}\left[\mathbf{1}\{\xq=v_k\cap{\pit \in \esb}^c\}\Attn^{(t)}_{k }\cdot \left(
 \sum_{m\not= k}{\Attn_{m}^{(t)}}^2+(1-\Attn^{(t)}_k)^2\right)\right]\nonumber\\
 &\overset{(a)}{\geq} p_k\cdot\mathbb{P}(\pit \in \esb) \nonumber\\
 & \quad  \times\mathbb{E}\left[\Attn^{(t)}_{k }\cdot \left(
 \sum_{m\not= k}{\Attn_{m}^{(t)}}^2+(1-\Attn^{(t)}_k)^2\right)\bigg|\{\xq=v_k\} \cap \{\pit \in \esb\} \right]\nonumber\\
&\geq p_k\cdot\mathbb{P}(\pit \in \esb)\times \mathbb{E}\left[\Attn^{(t)}_{k }\cdot (1-\Attn^{(t)}_k)^2\bigg|\{\xq=v_k\} \cap \{\pit \in \esb\} \right] \label{app: eq: bal-alpha}\\
&\overset{(b)}{\geq} \Omega\left(\frac{1}{K^2}\right),\nonumber
\end{align}
where $(a)$ follows from the fact that $\xq$ is independent with $\pit$ and the second term is non-negative, $(b)$ follows from \Cref{app:lem:prob-b},  \Cref{lem1t1} and the fact that $p_k = \Theta\left(\frac{1}{K}\right)$ in the balanced case and $N \gg K^3$.
\end{proof}
\begin{lemma}\label{p1b}
    Given any fixed $k \in [K]$, if \Cref{hp1} holds at iteration $0 \leq t\leq T_{1,k}$, then for any $n\not=k$, $\beta_{k,n}^{(t)}$ satisfies 
    \begin{align*}
        |\beta_{k,n}^{(t)}|\leq O\left(\frac{\alpha^{(t)}_k}{K}\right).
    \end{align*}
\end{lemma}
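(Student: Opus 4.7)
The plan is to bound $|\beta_{k,n}^{(t)}|$ by the triangle inequality and then show that on the high-probability event $\pit\in\esb$ the integrand is dominated by $\Attn_n^{(t)}\cdot\Attn_k^{(t)}(1-\Attn_k^{(t)})$, while the bad-event contribution is exponentially small.

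First I would write
\[
|\beta_{k,n}^{(t)}|\;\leq\;\mathbb{E}\!\left[\mathbf{1}\{\xq=v_k\}\,\Attn_n^{(t)}\left(\sum_{m\neq k}{\Attn_m^{(t)}}^2+\Attn_n^{(t)}+\Attn_k^{(t)}(1-\Attn_k^{(t)})\right)\right],
\]
and split the expectation into the events $\{\pit\in\esb\}$ and $\{\pit\in\esb^{c}\}$. On the good event I would invoke \Cref{lem1t1} and \Cref{lem1t2} to get $\Attn_n^{(t)}=\Theta((1-\Attn_k^{(t)})/K)$ and $\max_m\Attn_m^{(t)}=O(1/K)$ on $\{m\neq k\}$, so that
\[
\sum_{m\neq k}{\Attn_m^{(t)}}^2\leq(1-\Attn_k^{(t)})\max_{m\neq k}\Attn_m^{(t)}=O((1-\Attn_k^{(t)})/K),
\]
which is dominated by $\Attn_k^{(t)}(1-\Attn_k^{(t)})=\Omega((1-\Attn_k^{(t)})/K)$ (using $\Attn_k^{(t)}=\Omega(1/K)$). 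Hence the bracket in the integrand is $O(\Attn_k^{(t)}(1-\Attn_k^{(t)}))$, and the whole conditional expectation on the good event becomes
\[
O\!\left(\frac{1}{K}\right)\cdot\mathbb{E}\!\left[\mathbf{1}\{\xq=v_k\cap\pit\in\esb\}\,\Attn_k^{(t)}(1-\Attn_k^{(t)})^2\right].
\]
By inspecting the derivation of \Cref{p1a}, the right-hand side above is exactly $O(\alpha_k^{(t)}/K)$ because \cref{app: eq: bal-alpha} lower bounds $\alpha_k^{(t)}$ by precisely this quantity (times a constant $p_k\,\mathbb{P}(\pit\in\esb)/p_k\,\mathbb{P}(\pit\in\esb)=1$ after normalization).

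For the bad event, I would use the crude bound that the parenthesized factor is $\leq 2$ and $\Attn_n^{(t)}\leq 1$, so the bad-event contribution is at most $p_k\,\mathbb{P}(\pit\in\esb^{c})=O(K^{-1}\exp(-\Omega(N/K^{2})))$ via \Cref{app:lem:prob-b}. Since $N\gg K^{3}$ this is exponentially small in $\poly(K)$, whereas $\alpha_k^{(t)}/K=\Omega(1/K^{3})$ by \Cref{p1a}, so the bad-event contribution is absorbed into $O(\alpha_k^{(t)}/K)$.

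The main obstacle, though a mild one, is verifying that the ``$\Attn_k^{(t)}(1-\Attn_k^{(t)})$'' term really is the dominant summand in the parenthesized expression, which requires using both sides of the induction hypothesis: $\Attn_k^{(t)}\geq\Omega(1/K)$ gives a matching lower order of magnitude, and $1-\Attn_k^{(t)}\geq\Omega(1)$ ensures the factor $(1-\Attn_k^{(t)})^{2}$ appearing in the lower bound for $\alpha_k^{(t)}$ does not collapse. Once this dominance is established, matching the two expectations termwise and absorbing the exponentially small tail gives the claimed bound $|\beta_{k,n}^{(t)}|\leq O(\alpha_k^{(t)}/K)$.
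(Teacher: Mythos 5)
Your proof is correct and follows essentially the same approach as the paper's: the paper bounds $\beta_{k,n}^{(t)}$ and $-\beta_{k,n}^{(t)}$ via separate inequalities, obtaining $O(1/K^3)$ for one side and $O(\alpha_k^{(t)}/K)$ for the other, then takes the maximum; you combine the two into a single triangle-inequality bound and observe that each term in the bracket is $O(\Attn_k^{(t)}(1-\Attn_k^{(t)}))$ on the good event, arriving at the same conclusion after absorbing the exponentially small bad-event contribution.
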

\begin{proof}
    By the gradient expression in \Cref{app:lem:gd}, we have 
    \begin{align}
        \beta_{k,n}^{(t)}&\leq \mathbb{E}\left[\mathbf{1}\{\xq=v_k\}\Attn^{(t)}_{n}\cdot \left(
 \sum_{m\not= k}{\Attn^{(t)}_{m}}^2\right)\right],\label{I1}\\
 -\beta_{k,n}^{(t)}&\leq \mathbb{E}\left[\mathbf{1}\{\xq=v_k\}\Attn^{(t)}_{n}\cdot \left(
 \Attn^{(t)}_{n} +\Attn^{(t)}_{k}(1-\Attn^{(t)}_k)\right)\right].\label{I2}
    \end{align}
    For \cref{I1},  we further derive
    \begin{align}
      \beta_{k,n}^{(t)}&\leq \mathbb{E}\left[\mathbf{1}\{\xq=v_k\cap \pit \in \esb\}\Attn^{(t)}_{n}\cdot \left(
        \sum_{m\not= k}{\Attn^{(t)}_{m}}^2\right)\right]\nonumber\\
        &\quad +\mathbb{E}\left[\mathbf{1}\{\xq=v_k\cap {\pit \in \esb}^{c}\}\Attn^{(t)}_{n}\cdot \left(
        \sum_{m\not= k}{\Attn^{(t)}_{m}}^2\right)\right]\nonumber\\
        &\stackrel{(a)}{\leq}p_k\cdot\mathbb{P}(\pit \in \esb)\cdot\mathbb{E}\left[\Attn^{(t)}_{n}\cdot \left(
 \max_{m\not= k}{\Attn^{(t)}_{m}}\right)\bigg|\{\xq=v_k\} \cap \{\pit \in \esb\} \right]\nonumber\\
  & \quad +p_k\cdot\mathbb{P}(\pit \in {\esb}^{c})\nonumber\\
 &\stackrel{(b)}{\leq} p_k\mathbb{E}\left[\Attn^{(t)}_{n}\cdot \left(
 \max_{m\not= k}\Attn^{(t)}_{m}\right)\bigg|\{\xq=v_k\} \cap \{\pit \in \esb\} \right]+3 p_k\exp \left(-\frac{\cb^2 N}{25 K^2}\right)\nonumber\\
 &\overset{(c)}{\leq} O\left(\frac{1}{K^3}\right), \label{app:eq:bal-beta-1}
    \end{align}
    where $(a)$ follows from the fact that $\xq$ is independent with $\pit$, $\Attn^{(t)}_{n}\leq 1$ and $ \sum_{m\not= k}{\Attn^{(t)}_{m}}^2\leq \max_{m\not= k}\Attn^{(t)}_{m}\cdot \sum_{m\not= k}{\Attn^{(t)}_{m}}\leq \max_{m\not= k}\Attn^{(t)}_{m}$, $(b)$ follows from \Cref{app:lem:prob-b}, and $(c)$ follows from \Cref{lem1t2} and the fact that $p_k = \Theta\left(\frac{1}{K}\right)$ and $N \gg K^3$.

    For \cref{I2}, similarly to the derivation above, we have 
    \begin{align}
      &-\beta_{k,n}^{(t)}\nonumber\\
      & \leq p_k\mathbb{E}\left[\Attn^{(t)}_{n}\cdot \left(
 \Attn^{(t)}_{n} +\Attn^{(t)}_{k}(1-\Attn^{(t)}_k)\right)\bigg|\{\xq=v_k\} \cap \esb \right] +2p_k\cdot\mathbb{P}(\pit \in {\esb}^{c})\nonumber\\
 &\stackrel{(a)}{=}2p_k\cdot\mathbb{P}({\pit \in \esb}^{c})+ p_k\cdot\mathbb{P}({\pit \in \esb}) \times\nonumber \\
 & \quad \mathbb{E}\left[\Theta(\frac{1-\Attn^{(t)}_{k}}{K})\cdot \left( \Theta(\frac{1-\Attn^{(t)}_{k}}{K})+
\Attn^{(t)}_{k}(1-\Attn^{(t)}_k)\right)\bigg|\{\xq=v_k\} \cap \esb\right]\nonumber\\
 &\stackrel{(b)}{\leq}p_k\cdot\mathbb{P}({\pit \in \esb})\mathbb{E}\left[O(\frac{\Attn^{(t)}_{k}(1-\Attn^{(t)}_{k})^2}{K})\bigg|\{\xq=v_k\} \cap  \esb\right] +6p_k\exp \left(-\frac{\cb^2 N}{25 K^2}\right)\nonumber\\
 &\overset{(c)}{\leq} O\left(\frac{\alpha_{k}^{(t)}}{K}+\frac{1}{K}\exp \left(-\frac{\cb^2 N}{25 K^2}\right)\right) \label{app:eq:bal-beta-2}
    \end{align}
    where $(a)$ follows from  \Cref{lem1t2} and $(b)$ follows from \Cref{lem1t1} and \Cref{app:lem:prob-b}, and $(c)$ follows from \Cref{app: eq: bal-alpha}.
    
    From \Cref{p1a} and the choice of $N \gg K^3$, we have 
    \begin{align}
       % \alpha_{k}^{(t)}&\geq p_k\cdot\mathbb{P}(P\in\cE^*)\mathbb{E}\left[\Attn^{(t)}_{k }\cdot (1-\Attn^{(t)}_k)^2\mid\{\xq=v_k\} \cap \cE^*\right]\\
       \alpha_{k}^{(t)}&\geq \Omega\left(\frac{1}{K^2}\right)\gg 6\exp \left(-\frac{\cb^2 N}{25 K^2}\right). \label{app:eq:bal-beta-3}
    \end{align}
    Thus, combining \cref{app:eq:bal-beta-1,app:eq:bal-beta-2,app:eq:bal-beta-3}, we have
   % Combining with the previous lemma that $\alpha_{k}^{(t)}\geq \Omega\left(\frac{1}{K^2}\right)$, then 
    $$
|\beta_{n,k}^{(t)}|\leq \max\left\{O(\frac{\alpha_{k}^{(t)}}{K}),O\left(\frac{1}{K^3}\right)\right\}=O\left(\frac{\alpha_{k}^{(t)}}{K}\right).
    $$
\end{proof}
\subsubsection{End of Phase I}\label{app:b:p1}
\begin{lemma}
    Given any fixed $k\in[K]$, \Cref{hp1} holds for all iterations  $0 \leq t\leq T_{1,k}$, where $T_{1,k}$ is at most $O(\frac{\log(K)K^2}{\eta})$, and at iteration $t=T_{1,k}+1$, we have 
    \begin{enumerate}[label={\alph*}.]
    \item $A_{k}^{(T_{1,k}+1)}\geq \log(K)$;
    \item $\Attn_{k}^{(T_{1,k}+1)}=\Omega(1)$ if $\xq=v_k$ and $ \pit \in \esb$.
    % \item $|B_{k,n}^{(T_{1,k})}|=O(\frac{\log(K)}{K})$;
    % \item $\beta_{k,n}^{(T_{1,k})}<0$.
    \end{enumerate}
\end{lemma}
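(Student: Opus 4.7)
The plan is to prove the statement by induction on $t$, showing that Hypothesis~\ref{hp1} propagates through Phase I and then reading off the endpoint behavior. The base case $t=0$ is trivial from the initialization $A_k^{(0)}=B_{k,n}^{(0)}=0$. For the inductive step, I would fix $t\le T_{1,k}$, assume the hypothesis up through iteration $t$, and then use Lemmas~\ref{p1a} and \ref{p1b} — which are already proved conditionally on Hypothesis~\ref{hp1} — to obtain $\alpha_k^{(t)}\ge \Omega(1/K^2)$ and $|\beta_{k,n}^{(t)}|\le C\,\alpha_k^{(t)}/K$ for some universal constant $C$. From the GD update $A_k^{(t+1)}=A_k^{(t)}+\eta\alpha_k^{(t)}$, the sequence $A_k^{(t)}$ is monotonically increasing, and while $t\le T_{1,k}$ we have $A_k^{(t)}\le \log(K)$ by definition of $T_{1,k}$, giving part (a) of the hypothesis at $t+1$ (provided $t+1\le T_{1,k}$).

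For part (b), I would telescope the $B_{k,n}$ updates: starting from $B_{k,n}^{(0)}=0$,
\begin{equation*}
|B_{k,n}^{(t+1)}|\le \sum_{s=0}^{t}\eta |\beta_{k,n}^{(s)}|\le \frac{C}{K}\sum_{s=0}^{t}\eta \alpha_k^{(s)}=\frac{C}{K}\bigl(A_k^{(t+1)}-A_k^{(0)}\bigr)=O\Bigl(\frac{A_k^{(t+1)}}{K}\Bigr),
\end{equation*}
which closes the induction. Combining the lower bound $\alpha_k^{(t)}\ge \Omega(1/K^2)$ with the monotone growth of $A_k^{(t)}$, the threshold $A_k^{(t)}=\log(K)$ is reached in at most $T_{1,k}=O(\log(K)K^2/\eta)$ steps, which gives the stated bound on the length of Phase I.

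For the endpoint claims at $t=T_{1,k}+1$: part (a) of the lemma follows immediately because $A_k^{(T_{1,k})}\le \log(K)$ by definition of $T_{1,k}$, and one more GD step pushes $A_k$ past $\log(K)$, so $A_k^{(T_{1,k}+1)}\ge \log(K)$. For the attention concentration claim, at iteration $T_{1,k}+1$ we still have $|B_{k,n}^{(T_{1,k}+1)}|\le O(\log(K)/K)$ (using one extra GD step, with the trivial bound $|\beta_{k,n}^{(t)}|\le O(1)$ to absorb the final increment into the same order for small learning rate). On $\{\xq=v_k\}\cap\{\pit\in\esb\}$, write
\begin{equation*}
\Attn_k^{(T_{1,k}+1)}=\frac{1}{1+\sum_{m\neq k}\frac{|\cV_m|}{|\cV_k|}\exp\bigl(B_{k,m}^{(T_{1,k}+1)}-A_k^{(T_{1,k}+1)}\bigr)}.
\end{equation*}
Since $|\cV_m|/|\cV_k|=\Theta(1)$ under $\esb$ and the exponent is at most $-\log(K)+O(\log(K)/K)$, each summand is $O(1/K)$, so the whole sum over $m\neq k$ is $O(1)$, yielding $\Attn_k^{(T_{1,k}+1)}=\Omega(1)$.

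The main obstacle is really just a bookkeeping one: ensuring that the ``one extra iteration'' past $T_{1,k}$ does not break the bound $|B_{k,n}|=O(\log(K)/K)$ needed to derive $\Omega(1)$ attention, and ensuring that the constants in Lemmas~\ref{p1a}--\ref{p1b} (proved under Hypothesis~\ref{hp1}) are genuinely independent of $t$ so the induction closes with the same constant $C$ on every step. Both are straightforward provided $\eta$ is chosen small enough relative to the universal constants, which is implicit in the convergence statement of Theorem~\ref{thm: bal K}.
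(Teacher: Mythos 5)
Your proof is correct and follows essentially the same inductive strategy as the paper: propagate Hypothesis~\ref{hp1} step by step via Lemmas~\ref{p1a} and \ref{p1b}, extract the $O(\log(K)K^2/\eta)$ duration from the $\Omega(1/K^2)$ lower bound on $\alpha_k^{(t)}$, and read the endpoint behavior off the definition of $T_{1,k}$. The only cosmetic difference is that you verify $\Attn_k^{(T_{1,k}+1)}=\Omega(1)$ by a direct softmax computation from $A_k^{(T_{1,k}+1)}\ge\log K$ and $|B_{k,m}^{(T_{1,k}+1)}|=O(\log K/K)$, whereas the paper defers this check to the Phase-II Lemma~\ref{lem2t1}; both rest on exactly the same bounds at the boundary iteration.
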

\begin{proof}
If \Cref{hp1} holds, the  existence of $T_{1,k}=O(\frac{\log(K)K^2}{\eta})$ directly follows from  \Cref{p1a}.

We next prove \Cref{hp1}. It is easy to verify \Cref{hp1} holds at $t=0$.  Now we suppose \Cref{hp1} holds for all iterations $\leq t-1$, and prove it holds at $t$. 

By \Cref{p1a}, we have $\alpha_{k}^{(t-1)}\geq 0$. Thus $A_{k}^{(t)}= A_{k}^{(t-1)}+\eta\alpha_{k}^{(t-1)}\geq 0$. Moreover, by the definition of $T_{1,k}$, we immediately obtain $A_{k}^{(t)}\leq \log(K)$.

By \Cref{p1b}, we have $|\beta_{k,n}^{(t-1)}|\leq O\left(\frac{\alpha_{k}^{(t-1)}}{K}\right)$. Thus,
\begin{align*}
   |B_{k,n}^{(t)}|&\leq|B_{k,n}^{(t-1)}|+\eta O\left(\frac{\alpha_{k}^{(t-1)}}{K}\right)\\
   &\leq O\left(\frac{A_{k}^{(t-1)}}{K}\right)+\eta O\left(\frac{\alpha_{k}^{(t-1)}}{K}\right)\\
   &\leq O\left(\frac{A_{k}^{(t)}}{K}\right).
\end{align*}

The first statement follows the definition of $T_{1,k}$. 
Moreover, $\Attn_{k}^{(T_{1,k}+1)}=\Omega(1)$ can be derived from \Cref{lem2t1} in the subsequent section.

%         \begin{align*}
%       &\beta_{k,n}^{(T_{1,k})}\leq \mathbb{P}({\cE^*}^{c})+\mathbb{P}(\{\xq=v_k\} \cap \cE^*)\cdot\\&\mathbb{E}\left[\Attn_{n}\cdot \left(
%  \sum_{m\not= k}\Attn^2_{m}-\Attn_{n} -\Attn_{k}(1-\Attn_k)\right)\mid\{\xq=v_k\} \cap \cE^*\right]
%     \end{align*}
%     Note that at iteration $T_{1,k}$, $A_{k}^{(T_{1,k})}-B^{(T_{1,k})}_{n,k}\geq (1-O\left(\frac{1}{K}\right))\log(K)$, therefore, conditioning on the event $\{\xq=v_k\} \cap \cE^*$,$\Attn_k^{(T_{1,k})}=\Omega(1)$, thus 
%     % \begin{align*}
%     %     \frac{\Attn_k}{\Attn_n}=\exp(A_{k}^{(T_{1,k})}-B^{(T_{1,k})}_{n,k})\frac{|\cV_k|}{|\cV_n|}\geq \exp(A_{k}^{(T_{1,k})}-B^{(T_{1,k})}_{n,k}) \cdot\frac{L_k}{U_n}\geq 1
%     % \end{align*}
% \begin{align*} \sum_{m\not= k}\Attn^2_{m}-\Attn_{n} -\Attn_{k}(1-\Attn_k)&\leq  \max_{m\not=k}\Attn_{m}\sum_{m\not= k}\Attn_{m}-\Attn_{k}(1-\Attn_k)\\
% &=-(1-\Attn_k)(\Attn_{k}-\max_{m\not=k}\Attn_{m})\\
% &\leq -\Omega(1)
% \end{align*}
% Therefore,
% \begin{align*}
% \beta_{k,n}^{(T_{1,k})}\leq 3 \exp \left(-\frac{c_0^2 N}{25 K^2}\right)-\Omega\left(\frac{1}{K^2}\right)<0.
% \end{align*}
\end{proof} 

\subsection{Phase II: Convergence: Stage I}\label{app:bal:p2-s1}
After rapid growth of self-attention module parameters in phase I, the query token featuring $v_k$ is aligned with these input tokens also featuring $v_k$ effectively and disregards other features. Then the process proceeds to the convergence phase, where $A_k^{(t)}$ monotonically increases and $B_{k,n}^{(t)}$ monotonically decreases, which finally contributes to the convergence of the loss. Based on the variation rates of $A_k^{(t)}$ and $B_{k,n}^{(t)}$, the convergence phase further has two sub-stages as follows. 

Given any  $0<\epsilon <1$, 
    for $k\in[K]$, 
    define 
  \begin{align*}
    \tilde{T}^{\epsilon}_{2,k}:= \max\left\{t>T_{1,k}: A_{k}^{(t)}-\max_{m\not=k} B_{k,m}^{(t)}\leq \log\left(\left(\frac{K}{\Lba_k}-1\right)\left(\left(\frac{3}{\epsilon}\right)^{\frac{1}{2}}-1\right)\right) \right\}.
  \end{align*}
\begin{hypothesis}\label{hp2}
    For $T_{1,k}<t\leq \tilde{T}^{\epsilon}_{2,k}$, suppose $\operatorname{polylog}(K)\gg \log(\frac{1}{\epsilon})$, and the following holds
\begin{enumerate}[label={\alph*}.]
    \item $A_{k}^{(t)}$ is monotonically increasing and $A_{k}^{(t)}\in [\log(K), O(\log(K/\epsilon))]$;
    \item $B_{k,n}^{(t)}$ is monotonically decreasing and $|B_{k,n}^{(t)}|=O(\frac{A_{k}^{(t)}}{K})$ for any $n\not=k$.
    % \item At time $T_1$, 
    % $A_k^{(T_1)}-\max _{n \neq k} B_{k,n}^{(T_1)} \geqslant \Lambda_k$ for any $k\in[K]$.
\end{enumerate}
\end{hypothesis}
\subsubsection{Technical Lemmas}
We first introduce several useful technical lemmas.
\begin{lemma}\label{lem2t1}
    Suppose \Cref{hp2} holds at iteration $T_{1,k}<t\leq \tilde{T}^{\epsilon}_{2,k}$. %then for $k\in[K]$, 
    If $\xq=v_k$ and $\pit \in \esb$, the following holds
    \begin{enumerate}
    \item  $\Attn^{(t)}_k=\Omega(1)$;
        \item $(1-\Attn^{(t)}_{k})^2\geq \Omega(\epsilon)=\Omega(
            \exp\left(-\operatorname{polylog}(K)\right))$.
    \end{enumerate}
\end{lemma}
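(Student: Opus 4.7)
The plan is to expand the softmax form of $\Attn^{(t)}_k$ on the conditioning event $\{\xq=v_k\}\cap\{\pit\in\esb\}$ and translate both claims into two-sided control of a single scalar. Concretely, writing
\[
\Attn^{(t)}_{k}=\frac{|\cV_k|\exp(A^{(t)}_k)}{|\cV_k|\exp(A^{(t)}_k)+\sum_{m\neq k}|\cV_m|\exp(B^{(t)}_{k,m})}=\frac{1}{1+S^{(t)}_k},
\]
where $S^{(t)}_k:=\sum_{m\neq k}\tfrac{|\cV_m|}{|\cV_k|}\exp(B^{(t)}_{k,m}-A^{(t)}_k)$, part 1 reduces to $S^{(t)}_k=O(1)$ and part 2 reduces to $S^{(t)}_k\geq \Omega(\epsilon^{1/2})$ (since $1-\Attn^{(t)}_k=S^{(t)}_k/(1+S^{(t)}_k)$).

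For the upper bound on $S^{(t)}_k$ (hence $\Attn^{(t)}_k=\Omega(1)$), I plug in the two parts of Hypothesis~\ref{hp2}: $A^{(t)}_k\geq\log(K)$ and $|B^{(t)}_{k,m}|=O(A^{(t)}_k/K)=O(\log(K/\epsilon)/K)$. Because $\polylog(K)\gg\log(1/\epsilon)$, the exponent $B^{(t)}_{k,m}-A^{(t)}_k\leq -\log(K)+o(1)$, so each term satisfies $\exp(B^{(t)}_{k,m}-A^{(t)}_k)\leq (1+o(1))/K$. On $\esb$, the ratios $|\cV_m|/|\cV_k|$ are $\Theta(1)$ (uniformly bounded by $\Uba_m/\Lba_k$), and summing $K-1$ such terms yields $S^{(t)}_k\leq O(1)$, giving $\Attn^{(t)}_k\geq 1/(1+O(1))=\Omega(1)$.

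For the lower bound on $S^{(t)}_k$, I exploit the defining condition of $\tilde T^{\epsilon}_{2,k}$, namely $A^{(t)}_k-\max_{m\neq k}B^{(t)}_{k,m}\leq \log((K/\Lba_k-1)((3/\epsilon)^{1/2}-1))$. I bound $S^{(t)}_k$ by dropping all but the maximizing term: $S^{(t)}_k\geq (|\cV_{m^*}|/|\cV_k|)\exp(B^{(t)}_{k,m^*}-A^{(t)}_k)$. Using $|\cV_k|\leq \Uba_k N/K$ and $|\cV_{m^*}|\geq \Lba_{m^*}N/K$ on $\esb$ makes the prefactor a positive constant of the same order as $(K/\Uba_k-1)/(K/\Lba_k-1)=\Theta(1)$, and the exponential piece is $\geq 1/((K/\Lba_k-1)((3/\epsilon)^{1/2}-1))$ by the definition of $\tilde T^{\epsilon}_{2,k}$. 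Combining, $S^{(t)}_k\geq \Omega(\epsilon^{1/2})$. Since $S^{(t)}_k=O(1)$ from the previous step, $1-\Attn^{(t)}_k=S^{(t)}_k/(1+S^{(t)}_k)\geq \Omega(\epsilon^{1/2})$, and squaring gives $(1-\Attn^{(t)}_k)^2\geq \Omega(\epsilon)$.

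The routine part is the algebraic manipulation of the softmax; the only delicate step is checking that the $o(1)$ slack arising from $|B^{(t)}_{k,m}|=O(\log(K/\epsilon)/K)$ does not destroy the constants hidden in the $\Omega(\cdot)$ and $O(\cdot)$ bounds. This is where the condition $\polylog(K)\gg\log(1/\epsilon)$ does its work: it guarantees $\log(K/\epsilon)/K\to 0$, so all multiplicative factors of the form $e^{O(\log(K/\epsilon)/K)}$ can be absorbed into $1\pm o(1)$, leaving the $\Theta(1)$ constants $\Lba_k,\Uba_k$ from $\esb$ as the only contributors to the hidden constants. No deeper tool is needed beyond Hypothesis~\ref{hp2} and the definition of $\tilde T^{\epsilon}_{2,k}$.
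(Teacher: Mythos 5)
Your setup—writing $\Attn^{(t)}_k = 1/(1+S^{(t)}_k)$ with $S^{(t)}_k = \sum_{m\neq k}\frac{|\cV_m|}{|\cV_k|}\exp(B^{(t)}_{k,m}-A^{(t)}_k)$—is exactly the paper's, and your argument for part 1 ($S^{(t)}_k = O(1)$) is correct and essentially the same as the paper's.

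The derivation of part 2 has a genuine gap: you bound $S^{(t)}_k$ from below by keeping a \emph{single} term of the sum. On $\esb$ the prefactor $|\cV_{m^*}|/|\cV_k|$ is a constant of order $\Lba_{m^*}/\Uba_k = \Theta(1)$ (your expression $(K/\Uba_k-1)/(K/\Lba_k-1)$ does not arise from keeping only one term, but in any case is also $\Theta(1)$), and the exponential piece is $\geq \left((K/\Lba_k-1)((3/\epsilon)^{1/2}-1)\right)^{-1} = \Omega(\epsilon^{1/2}/K)$. Multiplying gives $S^{(t)}_k \geq \Omega(\epsilon^{1/2}/K)$, not $\Omega(\epsilon^{1/2})$ as you assert; this only yields $(1-\Attn^{(t)}_k)^2 \geq \Omega(\epsilon/K^2)$, which is off by a factor of $K^2$. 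The defining condition of $\tilde T^{\epsilon}_{2,k}$ controls $A^{(t)}_k - \max_m B^{(t)}_{k,m}$ at scale $\log(K/\epsilon^{1/2})$, i.e.\ the single-term exponential is as small as $\Theta(\epsilon^{1/2}/K)$, so you cannot afford to discard the other $K-2$ terms. The paper keeps the full sum: it replaces each exponent with $\exp(\min_m B - A)$ and uses $\sum_{m\neq k}|\cV_m|/|\cV_k| = N/|\cV_k|-1 \geq K/\Uba_k-1 = \Theta(K)$ to restore the missing factor of $K$, then relates $\min_m B$ to $\max_m B$ via $\Delta B^{(t)}_k = O(A^{(t)}_k/K) = O(\polylog(K)/K) = o(1)$, which is where Hypothesis~\ref{hp2}.b (the uniform bound $|B^{(t)}_{k,n}| = O(A^{(t)}_k/K)$) is actually used. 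To repair your proof you need to keep the whole sum rather than one term; once you do, your observation that the $e^{O(\log(K/\epsilon)/K)}$ slack is $1\pm o(1)$ is exactly the right ingredient to close the argument.
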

\begin{proof}
    Since $\xq=v_k$, we have
    \begin{align*}
\Attn^{(t)}_{k}&=\frac{|\cV_k|\exp(A^{(t)}_k)}{\sum_{m\not=k}|\cV_m|\exp(B^{(t)}_{k,m})+|\cV_k|\exp(A^{(t)}_k)}\\
        &=\frac{1}{\sum_{m\not=k}\frac{|\cV_m|}{|\cV_k|}\exp(B^{(t)}_{k,m}-A^{(t)}_k)+1}.
    \end{align*}
    By \Cref{hp2}, we obtain $$\exp(B^{(t)}_{k,m}-A^{(t)}_k)\leq e^{O(\frac{\log(K/\epsilon)}{K})-\log(K)}\leq e^{O(\frac{\log(K)+\operatorname{polylog}(K)}{K})-\log(K)}\leq O\left(\frac{1}{K}\right).$$ 
%     thus
%      \begin{align*}
% \Attn^{(t)}_{k} &\geq\frac{1}{e^{O(\frac{\log(K)}{K})}(\frac{N}{|\cV_k|}-1)+1}\geq \frac{1}{e^{O(\frac{\log(K)}{K})}(K/L_k-1)+1}=\Omega\left(\frac{1}{K}\right)
%     \end{align*}
%     On the other hand, 
Therefore,
    \begin{align*}
\Attn^{(t)}_{k} &\geq\frac{1}{O\left(\frac{1}{K}\right)(\frac{N}{|\cV_k|}-1)+1}\geq \frac{1}{O(\frac{1}{\Lba_k}-\frac{1}{K})+1}\geq \Omega(1).
    \end{align*}
   % where $0<C_k<1$ is some constant. Thus $1-\Attn_k=\Omega(1)$.
   On the other hand, by the definition of $\tilde{T}^{\epsilon}_{2,k}$, we have
   \begin{align*}
   1-\Attn^{(t)}_{k}
            &=\frac{\sum_{m\not=k}\frac{|\cV_m|}{|\cV_k|}\exp(B^{(t)}_{k,m}-A^{(t)}_k)}{\sum_{m\not=k}\frac{|\cV_m|}{|\cV_k|}\exp(B^{(t)}_{k,m}-A^{(t)}_k)+1}\\
            &{\geq} \frac{ \exp( \min_{m\not= k} B^{(t)}_{k,m}-A^{(t)}_k) (\frac{N}{|\cV_k|}-1)}{\exp( \min_{m\not= k} B^{(t)}_{k,m}-A^{(t)}_k) (\frac{N}{|\cV_k|}-1)+1}\\
            &\geq \frac{ \exp( \min_{m\not= k} B^{(t)}_{k,m}-A^{(t)}_k) (\frac{K}{\Uba_k}-1)}{\exp( \min_{m\not= k} B^{(t)}_{k,m}-A^{(t)}_k) (\frac{K}{\Uba_k}-1)+1}\\
            &= \frac{ \exp( \max_{m\not= k} B^{(t)}_{k,m}-A^{(t)}_k-\Delta B_{k}^{(t)}) (\frac{K}{\Uba_{k}}-1)}{\exp( \max_{m\not= k} B^{(t)}_{k,m}-A^{(t)}_k-\Delta B_{k}^{(t)}) (\frac{K}{\Uba_{k}}-1)+1}\\
            &\geq \frac{(\frac{K}{\Lba_k}-1)^{-1}(\epsilon^{-\frac{1}{2}}-1)^{-1}\cdot e^{-O(\frac{\operatorname{polylog}(K)}{K})}(\frac{K}{\Uba_{k}}-1)}{(\frac{K}{\Lba_k}-1)^{-1}(\epsilon^{-\frac{1}{2}}-1)^{-1}e^{-O(\frac{\operatorname{polylog}(K)}{K})}(\frac{K}{\Uba_{k}}-1)+1}\\
            &\geq \Omega(\epsilon^{\frac{1}{2}}),
        \end{align*}
        where $\Delta B_{k}^{(t)}=\max_{m\not= k}B^{(t)}_{k,m}-\min_{m\not= k}B^{(t)}_{k,m}=O(\frac{A_{k}^{(t)}}{K})$, and the first and second inequalities follow from the fact that $\frac{x}{1+x}$ monotonically increases w.r.t. $x \geq 0$, and the third inequality follows from the definition of $\tilde{T}^{\epsilon}_{2,k}$ and \Cref{hp2}. 
\end{proof}
\begin{lemma}\label{lem2t2}
     Suppose \Cref{hp2} holds at iteration $ T_{1,k}< t\leq \tilde{T}^{\epsilon}_{2,k}$. If $\xq=v_k$ and $\pit \in\esb$,  for $n\not=k$, then the following holds
     $$\Attn_n^{(t)}=\Theta\left(\frac{1-\Attn_{k}^{(t)}}{K}\right).$$
\end{lemma}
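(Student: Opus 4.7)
The plan is to mirror almost verbatim the argument used for \Cref{lem1t2} in Phase I, since the proof only relies on a bounded-ratio bound between $\exp(B_{k,m}^{(t)})$ terms and on the balanced feature counts under $\esb$, both of which are still available under \Cref{hp2}. First I would expand the attention scores from their definitions as
\[
\Attn_n^{(t)} = \frac{|\cV_n|\exp(B_{k,n}^{(t)})}{\sum_{m\neq k}|\cV_m|\exp(B_{k,m}^{(t)}) + |\cV_k|\exp(A_k^{(t)})},
\]
and similarly for $1-\Attn_k^{(t)}$, so that the common denominator cancels when forming the ratio $\Attn_n^{(t)}/(1-\Attn_k^{(t)})$, leaving
\[
\frac{\Attn_n^{(t)}}{1-\Attn_k^{(t)}} = \frac{1}{\sum_{m\neq k}\tfrac{|\cV_m|}{|\cV_n|}\exp\!\bigl(B_{k,m}^{(t)}-B_{k,n}^{(t)}\bigr)}.
\]

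Next I would control the two factors inside the denominator sum. For the count ratios, under $\pit\in\esb$ the bounds $|\cV_m|\in[\Lba_m N/K,\Uba_m N/K]$ from \Cref{app:lem:prob-b} immediately yield $|\cV_m|/|\cV_n|=\Theta(1)$ uniformly in $m,n\neq k$. For the exponential factor, \Cref{hp2}(b) gives $|B_{k,m}^{(t)}|=O(A_k^{(t)}/K)$ with $A_k^{(t)}=O(\log(K/\epsilon))$, so by the triangle inequality
\[
|B_{k,m}^{(t)}-B_{k,n}^{(t)}|\leq O\!\left(\frac{\log(K/\epsilon)}{K}\right)\leq O\!\left(\frac{\polylog(K)}{K}\right)=o(1),
\]
using the assumption $\polylog(K)\gg\log(1/\epsilon)$. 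Hence $\exp(B_{k,m}^{(t)}-B_{k,n}^{(t)})=\Theta(1)$ for every $m\neq k$.

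Combining these two observations, each of the $K-1$ summands is $\Theta(1)$, so the whole denominator is $\Theta(K)$, giving $\Attn_n^{(t)}/(1-\Attn_k^{(t)}) = \Theta(1/K)$, which is precisely the claim. No obstacle here is genuinely hard: the content is essentially a verification that the Phase~I proof transfers, and the only nontrivial ingredient is the quantitative use of $A_k^{(t)}=O(\log(K/\epsilon))$ together with $\polylog(K)\gg\log(1/\epsilon)$ to keep $|B_{k,m}^{(t)}-B_{k,n}^{(t)}|$ in the $o(1)$ regime where the exponentials remain order one. This is the step I would highlight carefully, since if $\log(1/\epsilon)$ were allowed to be comparable to $\polylog(K)$, the exponential factors could blow up and the $\Theta(1/K)$ scaling would fail; the stated assumption is exactly what rules this out.
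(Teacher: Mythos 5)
Your proof is correct and follows essentially the same route as the paper: expand the ratio $\Attn_n^{(t)}/(1-\Attn_k^{(t)})$ to cancel the denominator, use $\esb$ to get $|\cV_m|/|\cV_n|=\Theta(1)$, and use \Cref{hp2} together with $\polylog(K)\gg\log(1/\epsilon)$ to conclude $\exp(B^{(t)}_{k,m}-B^{(t)}_{k,n})=\Theta(1)$, hence the sum is $\Theta(K)$. The paper's proof is terser but uses exactly these ingredients, including the bound $|B^{(t)}_{k,m}-B^{(t)}_{k,n}|\leq O((\log K - \log\epsilon)/K)$, which is your $O(\log(K/\epsilon)/K)$.
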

\begin{proof}
By definition,
    \begin{align*}
\Attn^{(t)}_{n}&=\frac{|\cV_n|\exp(B^{(t)}_{k,n})}{\sum_{m\not=k}|\cV_m|\exp(B^{(t)}_{k,m})+|\cV_k|\exp(A^{(t)}_k)}.
 %       &=\frac{1}{\sum_{m\not=k}\frac{|\cV_m|}{|\cV_n|}\exp(B^{(t)}_{k,m}-B^{(t)}_{k,n})+\frac{|\cV_k|}{|\cV_n|}\exp(A^{(t)}_k-B^{(t)}_{k,n})}
    \end{align*}
   By \Cref{hp2}, we have $$e^{-O(\frac{\log(K)-\log(\epsilon)}{K})}\leq \exp(B^{(t)}_{k,m}-B^{(t)}_{k,n})\leq e^{O(\frac{\log(K)-\log(\epsilon)}{K})}.$$ 
   Further combining with the fact that $-\log(\epsilon)\ll \operatorname{polylog}(K)$, %and  $e^{-O(\frac{\log(K)}{K})}\leq \exp(A^{(t)}_k-B^{(t)}_{k,n})\leq e^{\left(O(\log(K))+O(\frac{\log(K)}{K})\right)}$ 
    we have
%     \begin{align*}
% \Attn^{(t)}_{k} &\leq\frac{1}{e^{-O(\frac{\log(K)}{K})}(\frac{N}{|\cV_n|}-\frac{|\cV_k|}{|\cV_n|})+e^{-O(\frac{\log(K)}{K})}\frac{|\cV_k|}{|\cV_n|}}\leq \frac{U_ne^{O(\frac{\log(K)}{K})}}{ K}=O\left(\frac{1}{K}\right).
%     \end{align*}
    %For the second claim, 
    \begin{align*}
       \frac{\Attn_n^{(t)}}{1-\Attn^{(t)}_{k}} =\frac{|\cV_n|\exp(B^{(t)}_{k,n})}{\sum_{m\not=k}|\cV_m|\exp(B^{(t)}_{k,m})}= \frac{1}{\sum_{m\not=k}\frac{|\cV_m|}{|\cV_n|}\exp(B^{(t)}_{k,m}-B^{(t)}_{k,n})}=\Theta\left(\frac{1}{K}\right).
    \end{align*}
\end{proof}
\subsubsection{Controlling Gradient Updates in Stage I of Phase II}
\begin{lemma}\label{p2a}
    At each iteration $T_{1,k}<t \leq \tilde{T}^{\epsilon}_{2,k}$, if \Cref{hp2} holds, then $\alpha_k^{(t)}\geq0$ and satisfies 
    \begin{align*}
        \alpha_{k}^{(t)}\geq \Omega\left(\frac{\epsilon}{K}\right).
    \end{align*}
\end{lemma}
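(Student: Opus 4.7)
The plan is to mirror the proof of \Cref{p1a} almost verbatim, but replace the constant-order bound on $(1-\Attn_k^{(t)})^2$ used there with the sharper $\epsilon$-dependent bound that now follows from the definition of $\tilde{T}_{2,k}^{\epsilon}$ and \Cref{lem2t1}. The non-negativity $\alpha_k^{(t)}\geq 0$ is immediate from \Cref{app:lem:gd}, since $\alpha_k^{(t)}$ is the expectation of a product of non-negative quantities.

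First, I would expand $\alpha_k^{(t)}$ via \Cref{app:lem:gd} and split the expectation according to whether $\pit\in\esb$ or $\pit\in\esb^{c}$. The latter contribution is non-negative and can simply be discarded. Within the surviving term, I would use independence of $\xq$ and $\pit$ to factor out $p_k=\Theta(1/K)$ and $\mathbb{P}(\pit\in\esb)\geq 1-3\exp(-\Omega(N/K^{2}))=\Theta(1)$ from \Cref{app:lem:prob-b}. This gives
\begin{align*}
\alpha_k^{(t)} \;\geq\; p_k\,\mathbb{P}(\pit\in\esb)\cdot \mathbb{E}\!\left[\Attn_k^{(t)}(1-\Attn_k^{(t)})^{2}\,\big|\,\xq=v_k,\ \pit\in\esb\right],
\end{align*}
where I have dropped the additional non-negative $\sum_{m\neq k}(\Attn_m^{(t)})^{2}$ term inside the parenthesis.

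Next, I would invoke \Cref{lem2t1}: under \Cref{hp2} together with the conditioning $\xq=v_k$, $\pit\in\esb$, we have both $\Attn_k^{(t)}=\Omega(1)$ and $(1-\Attn_k^{(t)})^{2}\geq \Omega(\epsilon)$. Substituting these bounds yields $\mathbb{E}[\Attn_k^{(t)}(1-\Attn_k^{(t)})^{2}\mid\cdots]\geq\Omega(\epsilon)$, and combining with the $\Theta(1/K)$ prefactor gives the claimed $\alpha_k^{(t)}\geq\Omega(\epsilon/K)$.

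There is no real obstacle here; the lemma is essentially a bookkeeping consequence of \Cref{lem2t1} and the gradient formula in \Cref{app:lem:gd}. The only point worth double-checking is that the $\epsilon$-dependence is correctly tracked: the definition of $\tilde{T}_{2,k}^{\epsilon}$ is calibrated so that $\max_{m\neq k}B_{k,m}^{(t)}-A_k^{(t)}$ is at most $\log((K/\Lba_k-1)((3/\epsilon)^{1/2}-1))$ up to an $O(\operatorname{polylog}(K)/K)$ slack from \Cref{hp2}, which is exactly what \Cref{lem2t1} needs to produce the $\Omega(\epsilon^{1/2})$ lower bound on $1-\Attn_k^{(t)}$. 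Squaring this bound inside the expectation produces the $\epsilon$ factor, and no extra loss of order $K$ appears because $\Attn_k^{(t)}$ itself is $\Omega(1)$ rather than $\Omega(1/K)$ as in Phase I, which is precisely the reason the lower bound improves from $\Omega(1/K^{2})$ to $\Omega(\epsilon/K)$.
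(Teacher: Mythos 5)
Your proposal is correct and follows essentially the same route as the paper's proof: expand $\alpha_k^{(t)}$ via \Cref{app:lem:gd}, restrict to the event $\{\xq=v_k\}\cap\{\pit\in\esb\}$, drop the non-negative $\sum_{m\neq k}(\Attn_m^{(t)})^2$ term, and then invoke \Cref{lem2t1} for the bounds $\Attn_k^{(t)}=\Omega(1)$ and $(1-\Attn_k^{(t)})^2\geq\Omega(\epsilon)$, together with $p_k=\Theta(1/K)$ and $\mathbb{P}(\pit\in\esb)=\Theta(1)$. Your closing observation about why the bound improves from $\Omega(1/K^2)$ in Phase I to $\Omega(\epsilon/K)$ here is accurate and reflects the paper's reasoning.
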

\begin{proof}
    The analysis is similar to that for  \Cref{p1a}, but we need to be more careful about the lower bound of $1-\Attn^{(t)}_{k}$.
    By gradient expression in \Cref{app:lem:gd}, we obtain
    \begin{align*}
         \alpha_{k}^{(t)}& =
        \mathbb{E}\left[\mathbf{1}\{\xq=v_k\}\Attn^{(t)}_{k }\cdot \left(
         \sum_{m\not= k}{\Attn_{m}^{(t)}}^2+(1-\Attn^{(t)}_k)^2\right)\right]\\
         &\geq p_k\cdot\mathbb{P}(\pit\in\esb)\mathbb{E}\left[\Attn^{(t)}_{k }\cdot \left(
         \sum_{m\not= k}{\Attn_{m}^{(t)}}^2+(1-\Attn^{(t)}_k)^2\right)\mid\{\xq=v_k\} \cap \esb\right]\\
        &\geq p_k\cdot\mathbb{P}(\pit\in\esb)\mathbb{E}\left[\Attn^{(t)}_{k }\cdot (1-\Attn^{(t)}_k)^2\mid\{\xq=v_k\} \cap \esb\right]\\
        &\geq \Omega(\frac{\epsilon}{K}),
        \end{align*}
%     \begin{align*}
% &  \alpha_{k}^{(t)}=
% \mathbb{E}\left[\mathbf{1}\{\xq=v_k\}\Attn^{(t)}_{k }\cdot \left(
%  \sum_{m\not= k}{\Attn^{(t)}_{m}}^2+(1-\Attn^{(t)}_k)^2\right)\right]\\
%  &\geq \Omega(p_k)\mathbb{E}\left[\Attn^{(t)}_{k }\cdot \left(
%  \sum_{m\not= k}{\Attn^{(t)}_{m}}^2+(1-\Attn^{(t)}_k)^2\right)\mid\{\xq=v_k\} \cap \cE^*\right]\\
% &\geq  \Omega(p_k)\cdot \Omega(\epsilon)\cdot\mathbb{P}\left[\{\xq=v_k\} \cap \cE^*\right]\\
% &\geq \Omega(\frac{\epsilon}{K})
% \end{align*}
where the last inequality follows from \Cref{lem2t1,app:lem:prob-b} and the fact that $p_k = \Theta\left(\frac{1}{K}\right)$ in the balanced case.
\end{proof}
\begin{lemma}\label{p2b}
    At each iteration $ T_{1,k}<t\leq \tilde{T}^{\epsilon}_{2,k}$, if \Cref{hp2} holds,  then given $k\in[K]$, for any $n\not=k$, $\beta_{k,n}^{(t)}$ satisfies 
    \begin{align*}
       - O\left(\frac{\alpha^{(t)}_k}{K}\right)\leq  \beta_{k,n}^{(t)}\leq 0.
    \end{align*}
\end{lemma}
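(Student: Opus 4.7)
The plan is to adapt the template of Lemma \ref{p1b}, now exploiting the sharper control on attention scores available in Stage I of Phase II, namely $\Attn_{k}^{(t)}=\Omega(1)$ (from Lemma \ref{lem2t1}) and $\Attn_{n}^{(t)}=\Theta((1-\Attn_{k}^{(t)})/K)$ (from Lemma \ref{lem2t2}), valid on the dominant event $\{\xq=v_k\}\cap\{\pit\in\esb\}$. I would start from the gradient formula in Lemma \ref{app:lem:gd} and split the expectation into the main event $\{\xq=v_k,\pit\in\esb\}$ and its complement.

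On the main event, combining the two technical lemmas gives
$$\sum_{m\neq k}{\Attn_{m}^{(t)}}^{2}+\Attn_{n}^{(t)}\leq O\!\left(\frac{1-\Attn_{k}^{(t)}}{K}\right),$$
whereas $\Attn_{k}^{(t)}(1-\Attn_{k}^{(t)})\geq \Omega(1-\Attn_{k}^{(t)})$. Thus the bracketed factor in the gradient formula is strictly negative and of order $-\Theta(\Attn_{k}^{(t)}(1-\Attn_{k}^{(t)}))$; multiplying by $\Attn_{n}^{(t)}=\Theta((1-\Attn_{k}^{(t)})/K)$, the integrand on the main event is of order $-\Theta(\Attn_{k}^{(t)}(1-\Attn_{k}^{(t)})^{2}/K)$. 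Taking expectation, the main-event contribution is of order $-\Theta(\alpha_{k}^{(t)}/K)$, since the derivation of Lemma \ref{p2a} shows $\alpha_{k}^{(t)}$ is of the same order as $p_{k}\,\mathbb{P}(\pit\in\esb)\,\mathbb{E}[\Attn_{k}^{(t)}(1-\Attn_{k}^{(t)})^{2}\mid \xq=v_{k},\pit\in\esb]$.

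Next I would bound the complement contribution. The bracket and $\Attn_{n}^{(t)}$ are each bounded by constants, so the $\{\pit\notin\esb\}$ term is at most $O(p_{k}e^{-\cb^{2}N/(25K^{2})})$ by Lemma \ref{app:lem:prob-b}. Since $\alpha_{k}^{(t)}/K\geq \Omega(\epsilon/K^{2})$ by Lemma \ref{p2a} and $-\log\epsilon\ll \polylog(K)\ll N/K^{2}$ by hypothesis, this term is exponentially smaller than the negative main-event contribution. Combining the two yields both conclusions at once: the dominant negative main-event contribution forces $\beta_{k,n}^{(t)}\leq 0$, while the same estimate on its magnitude, together with the negligible complement, gives $|\beta_{k,n}^{(t)}|\leq O(\alpha_{k}^{(t)}/K)$.

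The main subtlety is the sign: one must verify that on the main event the positive piece $\sum_{m\neq k}{\Attn_{m}^{(t)}}^{2}+\Attn_{n}^{(t)}$ is beaten by $\Attn_{k}^{(t)}(1-\Attn_{k}^{(t)})$ by a factor of order $K$. This gap was \emph{not} available in Phase I (where $\Attn_{k}^{(t)}$ could be as small as $\Theta(1/K)$), but it now follows cleanly from the $\Omega(1)$ lower bound on $\Attn_{k}^{(t)}$ secured in Lemma \ref{lem2t1}; once this factor-$K$ gap is in place, the remainder is routine bookkeeping that parallels the proof of Lemma \ref{p1b}.
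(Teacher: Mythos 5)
Your proposal is correct and follows essentially the same route as the paper: decompose the expectation by $\{\pit\in\esb\}$ versus its complement, use the $\Omega(1)$ lower bound on $\Attn_k^{(t)}$ from Lemma~\ref{lem2t1} and $\Attn_n^{(t)}=\Theta((1-\Attn_k^{(t)})/K)$ from Lemma~\ref{lem2t2} to show the bracketed factor is $-\Theta(\Attn_k^{(t)}(1-\Attn_k^{(t)}))$, and compare the resulting main-event term to the matching lower bound for $\alpha_k^{(t)}$ from Lemma~\ref{p2a} while absorbing the exponentially small complement via $\epsilon\gg e^{-\polylog(K)}\gg\exp(-\cb^2N/(25K^2))$. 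Your presentation unifies the sign and magnitude bounds through one two-sided $\Theta$ estimate on the main-event integrand, whereas the paper proves $\beta_{k,n}^{(t)}\leq 0$ and $-\beta_{k,n}^{(t)}\leq O(\alpha_k^{(t)}/K)$ by two separate one-sided estimates, but the underlying ingredients and bookkeeping are the same.
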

\begin{proof}
        Note that  conditioned on the event $\{\xq=v_k\} \cap \{\pit \in \esb\}$, by \Cref{lem2t1,lem2t2}, we have $\Attn_k^{(t)}=\Omega(1)$, $\max_{m\not=k}\Attn_{m}=O\left(\frac{1}{K}\right)$, and thus 
    % \begin{align*}
    %     \frac{\Attn_k}{\Attn_n}=\exp(A_{k}^{(T_{1,k})}-B^{(T_{1,k})}_{n,k})\frac{|\cV_k|}{|\cV_n|}\geq \exp(A_{k}^{(T_{1,k})}-B^{(T_{1,k})}_{n,k}) \cdot\frac{L_k}{U_n}\geq 1
    % \end{align*}
\begin{align} \sum_{m\not= k}{\Attn^{(t)}}^2_{m}-\Attn^{(t)}_{n} -\Attn^{(t)}_{k}(1-\Attn^{(t)}_k)&\leq  \max_{m\not=k}\Attn^{(t)}_{m}\sum_{m\not= k}\Attn^{(t)}_{m}-\Attn^{(t)}_{k}(1-\Attn^{(t)}_k)\nonumber\\
&=-(1-\Attn^{(t)}_k)(\Attn^{(t)}_{k}-\max_{m\not=k}\Attn^{(t)}_{m})\nonumber\\
&\leq -\Omega (1-\Attn^{(t)}_k). \label{eq3}
\end{align}
Therefore, by combining with \Cref{app:lem:gd}, we obtain
\begin{align*}
    \beta_{k,n}^{(t)}&\leq \mathbb{E}\left[\mathbf{1}\{\xq=v_k\cap \esb\}\Attn^{(t)}_{n}\cdot \left(
      \sum_{m\not= k}{\Attn^{(t)}_{m}}^2-\Attn^{(t)}_{n} -\Attn^{(t)}_{k}(1-\Attn^{(t)}_k)\right)\right]\\
      &\quad +\mathbb{E}\left[\mathbf{1}\{\xq=v_k\cap {\esb}^{c}\}\Attn^{(t)}_{n}\cdot \left(
      \sum_{m\not= k}{\Attn^{(t)}_{m}}^2\right)\right]\\
      &\stackrel{(a)}{\leq}p_k\cdot\mathbb{P}(\pit\in\esb)\cdot\mathbb{E}\left[-\Omega(\frac{(1-\Attn^{(t)}_k)^2}{K})\mid\{\xq=v_k\} \cap \esb\right]+p_k\cdot\mathbb{P}({\esb}^{c})\\
&\stackrel{(b)}{\leq} p_k\cdot \left(-\Omega(\frac{\epsilon}{K})\right)+3 p_k\exp \left(-\frac{\cb^2 N}{25 K^2}\right)\\
&{\leq} 0,
  \end{align*}
  where $(a)$ follows from \cref{eq3} and \Cref{lem2t2}, $(b)$ follows from \Cref{lem2t1,app:lem:prob-b}, and the last inequality holds since%and our choice of $p_k$.
\begin{align*}
\frac{\epsilon}{K}\gg \frac{\exp(-\operatorname{polylog}(K))}{K}\gg \exp \left(-\frac{\cb^2 N}{25 K^2}\right).
\end{align*}
Moreover, following the analysis similar to that for \Cref{p1b}, we have 
    \begin{align*}
     - \beta_{k,n}^{(t)}&\leq p_k\mathbb{E}\left[\Attn^{(t)}_{n}\cdot \left(
 \Attn^{(t)}_{n} +\Attn^{(t)}_{k}(1-\Attn^{(t)}_k)\right)\bigg|\{\xq=v_k\} \cap \esb\right]+p_k\mathbb{P}({\esb}^{c})\\
 &\leq p_k\mathbb{E}\left[\Theta(\frac{1-\Attn^{(t)}_{k}}{K})\cdot O\left(
\Attn^{(t)}_{k}(1-\Attn^{(t)}_k)\right)\bigg|\{\xq=v_k\} \cap \esb\right]\\
 & \qquad \qquad +6 p_k\exp \left(-\frac{\cb^2 N}{25 K^2}\right)\\
 &=p_k\mathbb{E}\left[O(\frac{\Attn^{(t)}_{k}(1-\Attn^{(t)}_{k})^2}{K})\bigg|\{\xq=v_k\} \cap \esb\right]+6 p_k\exp \left(-\frac{\cb^2 N}{25 K^2}\right)\\
 &\leq O(\frac{\alpha_{k}^{(t)}}{K}).
    \end{align*}
%     Combining with the previous lemma that $\alpha_{k}^{(t)}\geq \Omega\left(\frac{1}{K^2}\right)$, then 
%     $$
% |\beta_{n,k}^{(t)}|\leq \max\{O(\frac{\alpha_{k}^{(t)}}{K}),O\left(\frac{1}{K^3}\right)\}=O(\frac{\alpha_{k}^{(t)}}{K}).
%     $$
\end{proof}
\subsubsection{End of Stage I of Phase II}
\begin{lemma}\label{end2}
    Given $k\in[K]$, and $0<\epsilon<1$, suppose $\operatorname{polylog}(K)\gg \log(\frac{1}{\epsilon})$. Then \Cref{hp2} holds for at least all  $T_{1,k}<t\leq \tilde{T}^{\epsilon}_{2,k}=T_{1,k}+O\left(\frac{K\log(K\epsilon^{-\frac{1}{2}})}{\eta\epsilon}\right)$, and at iteration $t=\tilde{T}^{\epsilon}_{2,k}+1$, we have  $A_{k}^{(\tilde{T}^{\epsilon}_{2,k}+1)}\geq \Omega\left(\log(\frac{K}{\epsilon})\right)$.
    % $\tilde{L}_{k}^{\tilde{T}^{\epsilon}_{2,k}+1}<\frac{p_k\epsilon}{2}$ and 
\end{lemma}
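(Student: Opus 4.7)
The plan is to establish Induction Hypothesis~\ref{hp2} by forward induction on $t$ over the interval $T_{1,k} < t \leq \tilde{T}^{\epsilon}_{2,k}$, and then read off both the time bound and the terminal size of $A_k$ from the defining inequality of $\tilde{T}^{\epsilon}_{2,k}$. The base case at $t = T_{1,k}+1$ is immediate from the end of Phase~I: we already have $A_k^{(T_{1,k}+1)} \geq \log K$, and applying one GD step to $|B_{k,n}^{(T_{1,k})}| = O(\log K / K)$ using $|\beta_{k,n}^{(T_{1,k})}| = O(\alpha_k^{(T_{1,k})}/K)$ (\Cref{p1b}) gives $|B_{k,n}^{(T_{1,k}+1)}| = O(\log K / K) = O(A_k^{(T_{1,k}+1)}/K)$, while monotonicity is vacuous at a single point.

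For the inductive step, assume the hypothesis holds up to some $t \leq \tilde{T}^{\epsilon}_{2,k}$. \Cref{p2a} then yields $\alpha_k^{(t)} \geq \Omega(\epsilon/K) \geq 0$, preserving monotonic growth of $A_k$, while \Cref{p2b} yields $-O(\alpha_k^{(t)}/K) \leq \beta_{k,n}^{(t)} \leq 0$, preserving monotonic decrease of $B_{k,n}$ and allowing the telescoping estimate
\[
|B_{k,n}^{(t+1)}| \leq |B_{k,n}^{(T_{1,k}+1)}| + \eta \sum_{s=T_{1,k}+1}^{t} |\beta_{k,n}^{(s)}| \leq O\!\left(\frac{\log K}{K}\right) + \frac{C}{K}\bigl(A_k^{(t+1)}-A_k^{(T_{1,k}+1)}\bigr) = O\!\left(\frac{A_k^{(t+1)}}{K}\right),
\]
where the key identity $\eta\sum_s \alpha_k^{(s)} = A_k^{(t+1)} - A_k^{(T_{1,k}+1)}$ converts the $|\beta|$-budget into an $A_k$-budget. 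To propagate the cap $A_k^{(t+1)} \leq O(\log(K/\epsilon))$, I rewrite the defining inequality of $\tilde{T}^{\epsilon}_{2,k}$ as $A_k^{(t+1)} \leq \max_{m\neq k} B_{k,m}^{(t+1)} + \log\!\left((K/\Lba_k-1)((3/\epsilon)^{1/2}-1)\right)$ and substitute the $|B|$-bound to obtain $A_k^{(t+1)} \leq O(A_k^{(t+1)}/K) + O(\log(K/\epsilon))$, which on rearrangement (valid for large $K$) gives $A_k^{(t+1)} = O(\log(K/\epsilon))$. This closes the induction.

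The time bound and terminal value then follow quickly. Since $\eta\alpha_k^{(s)} \geq \Omega(\eta\epsilon/K)$ throughout the phase and $A_k$ is capped at $O(\log(K/\epsilon))$ while starting at $\log K$, the phase can last at most $O(K\log(K\epsilon^{-1/2})/(\eta\epsilon))$ iterations. At $t = \tilde{T}^{\epsilon}_{2,k}+1$ the defining inequality is violated, so
\[
A_k^{(\tilde{T}^{\epsilon}_{2,k}+1)} > \max_m B_{k,m}^{(\tilde{T}^{\epsilon}_{2,k}+1)} + \log\!\left((K/\Lba_k-1)(\epsilon^{-1/2}-1)\right) \geq -O\!\left(A_k^{(\tilde{T}^{\epsilon}_{2,k}+1)}/K\right) + \Omega(\log(K/\epsilon)),
\]
and after absorbing the small $O(\cdot/K)$ term this yields $A_k^{(\tilde{T}^{\epsilon}_{2,k}+1)} \geq \Omega(\log(K/\epsilon))$. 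The main obstacle will be the self-referential step in which the cap on $A_k^{(t+1)}$ depends on a bound on $B_{k,n}^{(t+1)}$ that is itself stated in terms of $A_k^{(t+1)}$; it closes only because the coefficient on the $A_k$ term produced by the $B$-bound is $O(1/K) \ll 1$. This is exactly where the condition $\polylog(K) \gg \log(1/\epsilon)$ enters: it guarantees both that $\log(K/\epsilon)/K$ remains $o(1)$ and that the base-case slack $O(\log K/K)$ is dwarfed by $A_k^{(t+1)}/K$, so the absorption step is legitimate throughout the phase.
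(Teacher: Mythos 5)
Your proposal is correct and follows essentially the same approach as the paper's proof: establish Induction Hypothesis~\ref{hp2} by forward induction using \Cref{p2a} and \Cref{p2b}, derive the cap $A_k^{(t)} \leq O(\log(K/\epsilon))$ by combining the defining inequality of $\tilde{T}^{\epsilon}_{2,k}$ with the bound $|B_{k,n}^{(t)}| \leq O(A_k^{(t)}/K)$, and read off both the time bound and the terminal size of $A_k$ from the per-step lower bound $\eta\alpha_k^{(t)} \geq \Omega(\eta\epsilon/K)$. Your explicit telescoping for the $B$-bound and the discussion of the self-referential absorption are clean expository refinements of what the paper does in a single-step form.
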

\begin{proof}
    We first prove the existence of $\tilde{T}_{2,k}^{\epsilon}$. 
    Recall that
    \begin{align*}
        \tilde{T}^{\epsilon}_{2,k}:= \max\left\{t>T_{1,k}: A_{k}^{(t)}-\max_{m\not=k} B_{k,m}^{(t)}\leq \log\left(\left(\frac{K}{\Lba_k}-1\right)\left((\frac{3}{\epsilon})^{\frac{1}{2}}-1\right)\right) \right\}.
      \end{align*}
    
      When $t\in(T_{1,k},\tilde{T}^{\epsilon}_{2,k}]$, consider 
      % we can lower bound the update of $A_{k}^{(t)}-\max_{m\not=k} B_{k,m}^{(t)}$ as
      % \begin{align*}
      %   A_{k}^{(t+1)}-\max_{m\not=k} B_{k,m}^{(t+1)}
      %   \geq (1-O\left(\frac{1}{K}\right))A_{k}^{(t+1)}\geq (1-O\left(\frac{1}{K}\right))(A_{k}^{(t)}+\Omega(\frac{\eta\epsilon}{K}))
      %   %\geq  A_{k}^{(t)}-\max_{m\not=k} B_{k,m}^{(t)}+\Omega(\frac{\eta\epsilon}{K})
      % \end{align*}
      % where  the second inequality follows from \Cref{p2a}. Therefore,  at most $\tilde{T}^{\epsilon}_{2,k}-T_{1,k}=O(\frac{K\log\left((\frac{K}{\Lba_k}-1)((\frac{3}{\epsilon})^{\frac{1}{2}}-1)\right)}{\eta\epsilon})=O(\frac{K\log(K\epsilon^{-\frac{1}{2}})}{\eta\epsilon})$ iterations are needed before $A_{k}^{(t)}-\max_{m\not=k} B_{k,m}^{(t)}$ exceeds $\log\left((\frac{K}{\Lba_k}-1)((\frac{3}{\epsilon})^{\frac{1}{2}}-1)\right)$.

      \begin{align*}
          &\left(A_{k}^{(t+1)}-\max_{m\not=k} B_{k,m}^{(t+1)}\right)-\left(A_{k}^{(t)}-\max_{m\not=k} B_{k,m}^{(t)}\right)\\
         % & \qquad=\eta\alpha_k^{(t)}-\left(\max_{m\not=k} B_{k,m}^{(t+1)}-\max_{m\not=k} B_{k,m}^{(t)}\right)\\
       %   & \qquad \geq \eta\alpha_k^{(t)}-\max_{m, n\not=k} \left(B_{k,m}^{(t+1)}-B_{k,n}^{(t)}\right)\\
          & \qquad \geq  %\eta\alpha_k^{(t)}-\max_{m\not=k} \left(\eta\beta_{k,m}^{(t)}\right) \geq 
\eta(1-O\left(\frac{1}{K}\right))\alpha_k^{(t)}=\Omega\left(\frac{\eta\epsilon}{K}\right),
      \end{align*}
    where the inequality follows from \Cref{p2b} and the last equation follows from \Cref{p2a}. Therefore,  at most $$\tilde{T}^{\epsilon}_{2,k}-T_{1,k}=O(\frac{K\log\left((\frac{K}{\Lba_k}-1)((\frac{3}{\epsilon})^{\frac{1}{2}}-1)\right)}{\eta\epsilon})=O(\frac{K\log(K\epsilon^{-\frac{1}{2}})}{\eta\epsilon})$$ iterations are needed before $A_{k}^{(t)}-\max_{m\not=k} B_{k,m}^{(t)}$ exceeds $\log\left(\left(\frac{K}{\Lba_k}-1\right)\left((\frac{3}{\epsilon})^{\frac{1}{2}}-1\right)\right)$.
    
      It is easy to verify \Cref{hp2} holds at $t=T_{1,k}+1$.  Now we suppose \Cref{hp2} holds for all iterations in $[T_{1,k}+1, t-1]$, and prove it holds at $t$.
      
      By \Cref{p2a}, we have $\alpha_{k}^{(t-1)}\geq 0$. Thus $A_{k}^{(t)}\geq A_{k}^{(t-1)}\geq \log(K)$. By \Cref{p2b}, we have $-O\left(\frac{\alpha_{k}^{(t-1)}}{K}\right) \leq \beta_{k,n}^{(t-1)}\leq 0$. Thus,
      \begin{align*}
         |B_{k,n}^{(t)}|&\leq|B_{k,n}^{(t-1)}|+\eta O\left(\frac{\alpha_{k}^{(t-1)}}{K}\right)\\
         &\leq O\left(\frac{A_{k}^{(t-1)}}{K}\right)+\eta O\left(\frac{\alpha_{k}^{(t-1)}}{K}\right)\\
         &\leq O\left(\frac{A_{k}^{(t)}}{K}\right).
      \end{align*}

 Moreover, by the definition of $\tilde{T}^{\epsilon}_{2,k}$, for any $T_{1,k} < t\leq \tilde{T}^{\epsilon}_{2,k}$ we immediately have %using the second claim in \Cref{hp2} at time $(t-1)$ we can obtain

\begin{align*}
   \left(1-O\left(\frac{1}{K}\right)\right)A_{k}^{(t)}\leq A_{k}^{(t)}-\max_{m\not=k} B_{k,m}^{(t)}\leq \log\left(\left(\frac{K}{\Lba_k}-1\right)\left(\left(\frac{3}{\epsilon}\right)^{\frac{1}{2}}-1\right)\right).
\end{align*}

Therefore, $A_{k}^{(t)}\leq O(\log(\frac{K}{\epsilon}))$ for any $T_{1,k} < t\leq \tilde{T}^{\epsilon}_{2,k}$.

At iteration $t=\tilde{T}^{\epsilon}_{2,k}+1$, we have $A_{k}^{(\tilde{T}^{\epsilon}_{2,k}+1)}-\max_{m\not=k} B_{k,m}^{(\tilde{T}^{\epsilon}_{2,k}+1)}> \log\left((\frac{K}{\Lba_k}-1)(\left(\frac{3}{\epsilon}\right)^{\frac{1}{2}}-1)\right)$. Thus $A_{k}^{(\tilde{T}^{\epsilon}_{2,k}+1)} \geq \Omega(\log(\frac{K}{\epsilon}))$.

When $\{\xq=v_{k}\}\cap \{\pit \in \esb\}$, we obtain
\begin{align*}
    1-\Attn^{(\tilde{T}^{\epsilon}_{2,k}+1)}_{k}
             &=\frac{\sum_{m\not=k}\frac{|\cV_m|}{|\cV_k|}\exp(B^{(t)}_{k,m}-A^{(t)}_k)}{\sum_{m\not=k}\frac{|\cV_m|}{|\cV_k|}\exp(B^{(t)}_{k,m}-A^{(t)}_k)+1}\\
             &\leq \frac{ \exp( \max_{m\not= k} B^{(t)}_{k,m}-A^{(t)}_k) (\frac{N}{|\cV_k|}-1)}{\exp( \max_{m\not= k} B^{(t)}_{k,m}-A^{(t)}_k) (\frac{N}{|\cV_k|}-1)+1}\\
             &\leq \frac{ \exp( \max_{m\not= k} B^{(t)}_{k,m}-A^{(t)}_k) (\frac{K}{\Lba_{k}}-1)}{\exp( \max_{m\not= k} B^{(t)}_{k,m}-A^{(t)}_k) (\frac{K}{\Lba_{k}}-1)+1}\\
             &\leq  \frac{ \left((\frac{K}{\Lba_k}-1)((\frac{3}{\epsilon})^{\frac{1}{2}}-1)\right)^{-1}(\frac{K}{\Lba_{k}}-1)}{\left((\frac{K}{\Lba_k}-1)((\frac{3}{\epsilon})^{\frac{1}{2}}-1)\right)^{-1}(\frac{K}{\Lba_{k}}-1)+1}\\
             &=(\epsilon/3)^{\frac{1}{2}},
         \end{align*}
where the first inequality follows from the fact that $\frac{x}{1+x}$ monotonically increases w.r.t. $x \geq 0$.
\end{proof} 

\subsection{Phase II: Convergence: Stage II}\label{app:bal:p2-s2}
Given $k\in[K]$, define 
$$T^{\epsilon}_{2,k}:= \tilde{T}^{\epsilon}_{2,k}+O\left(\frac{K \log\left(K\epsilon^{-\frac{1}{2}}\right)}{\epsilon\eta}\right).$$
\begin{hypothesis}\label{hp3}
    Suppose $\operatorname{polylog}(K)\gg \log(\frac{1}{\epsilon})$ for $t\in (\tilde{T}^{\epsilon}_{2,k}, T^{\epsilon}_{2,k}]$. The following holds:
\begin{enumerate}[label={\alph*}.]
    \item $A_{k}^{(t)}$ is monotonically increasing but cannot exceed $O(\log(K/\epsilon))$;
    \item $B_{k,m}^{(t)}$ is monotonically decreasing and  $|B_{k,m}^{(t)}|=O(\frac{A^{(t)}_{k}}{K})$ for any $m\not=k$.
    % \item $\tilde{L}_{k}({\theta^{(t)}})$ is non-increasing.
\end{enumerate}
\end{hypothesis}
\subsubsection{Technical Lemmas}
We first introduce several useful technical lemmas. %that will be used for the proof of \Cref{hp3}.
\begin{lemma}\label{lem3t1}
    Suppose \Cref{hp3} holds at iteration $t\in (\tilde{T}^{\epsilon}_{2,k}, T^{\epsilon}_{2,k}]$. %then for $k\in[K]$, 
    If $\xq=v_k$ and $\pit\in\esb$, the following holds
    \begin{enumerate}
    \item  $\Attn^{(t)}_k=\Omega(1)$;
        \item $(1-\Attn^{(t)}_{k})^2\in[\Omega(\exp(-\operatorname{polylog}(K))),\epsilon]$. %, where $c>0$ is some constant.
    \end{enumerate}
\end{lemma}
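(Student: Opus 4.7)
The plan is to prove both claims by analysing the closed-form expression
\[
\Attn_{k}^{(t)} \;=\; \frac{1}{\sum_{m\ne k}\frac{|\cV_{m}|}{|\cV_{k}|}\exp\bigl(B_{k,m}^{(t)}-A_{k}^{(t)}\bigr)+1},
\]
which is the same starting point used in \Cref{lem1t1} and \Cref{lem2t1}, but with the tighter bounds on the bilinear weights supplied by \Cref{hp3}. Throughout, I will condition on $\{\xq=v_k\}\cap\{\pit\in\esb\}$, so that $|\cV_{k}|,|\cV_{m}|=\Theta(N/K)$ for every $m\in[K]$ via \Cref{app:lem:prob-b}.

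First I would establish $\Attn_{k}^{(t)}=\Omega(1)$. From \Cref{hp3}(b), $|B_{k,m}^{(t)}|=O(A_{k}^{(t)}/K)$, so $B_{k,m}^{(t)}-A_{k}^{(t)}\le -\bigl(1-O(1/K)\bigr)A_{k}^{(t)}$. Because $A_{k}^{(t)}\ge A_{k}^{(\tilde T_{2,k}^{\epsilon}+1)}=\Omega(\log(K/\epsilon))$ from \Cref{end2} and $A_{k}^{(t)}$ is non-decreasing under \Cref{hp3}(a), $\exp(B_{k,m}^{(t)}-A_{k}^{(t)})\le O(\epsilon/K)$. Substituting into the denominator gives $\sum_{m\ne k}\frac{|\cV_{m}|}{|\cV_{k}|}\exp(B_{k,m}^{(t)}-A_{k}^{(t)})\le O(\epsilon)=o(1)$, so $\Attn_{k}^{(t)}\ge \Omega(1)$.

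Next I would handle the upper bound $(1-\Attn_{k}^{(t)})^{2}\le\epsilon$. The key observation is that under \Cref{hp3}, $A_{k}^{(t)}$ is monotonically increasing and $\max_{m\ne k}B_{k,m}^{(t)}$ is monotonically decreasing, so the gap $A_{k}^{(t)}-\max_{m\ne k}B_{k,m}^{(t)}$ only grows relative to its value at $t=\tilde T_{2,k}^{\epsilon}+1$, which by definition of $\tilde T_{2,k}^{\epsilon}$ already exceeds $\log\bigl((K/\Lba_{k}-1)((3/\epsilon)^{1/2}-1)\bigr)$. Then
\[
1-\Attn_{k}^{(t)}\le \frac{\exp(\max_{m\ne k}B_{k,m}^{(t)}-A_{k}^{(t)})\bigl(\tfrac{N}{|\cV_{k}|}-1\bigr)}{\exp(\max_{m\ne k}B_{k,m}^{(t)}-A_{k}^{(t)})\bigl(\tfrac{N}{|\cV_{k}|}-1\bigr)+1}\le (\epsilon/3)^{1/2},
\]
using $\frac{N}{|\cV_{k}|}-1\le K/\Lba_{k}-1$ and the monotonicity of $x\mapsto x/(1+x)$. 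Squaring yields $(1-\Attn_{k}^{(t)})^{2}\le\epsilon/3\le\epsilon$.

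For the lower bound $(1-\Attn_{k}^{(t)})^{2}\ge \Omega(\exp(-\polylog(K)))$, I would use \Cref{hp3}(a) to bound $A_{k}^{(t)}\le O(\log(K/\epsilon))=O(\polylog(K))$ (since $\polylog(K)\gg\log(1/\epsilon)$) and \Cref{hp3}(b) to obtain $\min_{m\ne k}B_{k,m}^{(t)}\ge -O(A_{k}^{(t)}/K)$. Hence $\min_{m\ne k}(B_{k,m}^{(t)}-A_{k}^{(t)})\ge -O(\polylog(K))$, so $\exp(\min_{m\ne k}(B_{k,m}^{(t)}-A_{k}^{(t)}))\ge \Omega(\exp(-\polylog(K)))$. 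Combining with $\frac{|\cV_{m}|}{|\cV_{k}|}=\Theta(1)$ and $K-1=\Omega(K)$, the denominator of $1-\Attn_{k}^{(t)}$ is $O(1)$ while the numerator is $\Omega(K\exp(-\polylog(K)))=\Omega(\exp(-\polylog(K)))$, giving the claimed lower bound after squaring.

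The only mildly delicate point is ensuring that the gap $A_{k}^{(t)}-\max_{m\ne k}B_{k,m}^{(t)}$ established at the end of Stage I genuinely persists throughout Stage II; this is what forces us to cite \Cref{hp3} (monotonicity of $A_{k}^{(t)}$ and $B_{k,m}^{(t)}$) rather than having to redo a dynamical argument here. Everything else reduces to mechanically substituting the hypothesis bounds into the softmax expression.
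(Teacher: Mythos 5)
Your proposal follows essentially the same route as the paper: you plug the Hypothesis~\ref{hp3} bounds on $A_k^{(t)},\,B_{k,m}^{(t)}$ into the closed-form softmax expression, use monotonicity of $A_k^{(t)}-\max_{m\ne k}B_{k,m}^{(t)}$ together with the definition of $\tilde T_{2,k}^{\epsilon}$ for the upper bound, and use $A_k^{(t)}\le O(\log(K/\epsilon))=O(\polylog(K))$ for the lower bound. One minor quantitative overclaim: from $A_k^{(t)}\ge\Omega(\log(K/\epsilon))$ you deduce $\exp(B_{k,m}^{(t)}-A_k^{(t)})\le O(\epsilon/K)$, but since the implicit constant in that $\Omega(\cdot)$ is roughly $\tfrac{1}{2}$ when $\log(1/\epsilon)\gg\log K$ (the stage-I exit condition gives $A_k\approx\log K+\tfrac12\log(1/\epsilon)$), the honest bound is closer to $O(\sqrt\epsilon/K)$; still, summing over $m$ gives $O(\sqrt\epsilon)=O(1)$, which is all that is needed for $\Attn_k^{(t)}=\Omega(1)$, so the conclusion is unaffected. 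The paper avoids this by using the weaker but sufficient fact $A_k^{(t)}\ge\log K$ directly.
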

\begin{proof}
    Since $\xq=v_k$,  we have
    \begin{align*}
\Attn^{(t)}_{k}&=\frac{|\cV_k|\exp(A^{(t)}_k)}{\sum_{m\not=k}|\cV_m|\exp(B^{(t)}_{k,m})+|\cV_k|\exp(A^{(t)}_k)}\\
        &=\frac{1}{\sum_{m\not=k}\frac{|\cV_m|}{|\cV_k|}\exp(B^{(t)}_{k,m}-A^{(t)}_k)+1}.
    \end{align*}
    By \Cref{hp3}, $$\exp(B^{(t)}_{k,m}-A^{(t)}_k)\leq e^{O(\frac{\log(K/\epsilon)}{K})-\log(K)}\leq e^{O(\frac{\log(K)+\operatorname{polylog}(K)}{K})-\log(K)}\leq O\left(\frac{1}{K}\right).$$ 
%     thus
%      \begin{align*}
% \Attn^{(t)}_{k} &\geq\frac{1}{e^{O(\frac{\log(K)}{K})}(\frac{N}{|\cV_k|}-1)+1}\geq \frac{1}{e^{O(\frac{\log(K)}{K})}(K/L_k-1)+1}=\Omega\left(\frac{1}{K}\right)
%     \end{align*}
%     On the other hand, 
Therefore,
    \begin{align*}
\Attn^{(t)}_{k} &\geq\frac{1}{O\left(\frac{1}{K}\right)(\frac{N}{|\cV_k|}-1)+1}\geq \frac{1}{O(\frac{1}{\Lba_k}-\frac{1}{K})+1}\geq \Omega(1).
    \end{align*}
   % where $0<C_k<1$ is some constant. Thus $1-\Attn_k=\Omega(1)$.
We first upper-bound $1-\Attn^{(t)}_{k}$ as
\begin{align*}
    1-\Attn^{(t)}_{k}
             &=\frac{\sum_{m\not=k}\frac{|\cV_m|}{|\cV_k|}\exp(B^{(t)}_{k,m}-A^{(t)}_k)}{\sum_{m\not=k}\frac{|\cV_m|}{|\cV_k|}\exp(B^{(t)}_{k,m}-A^{(t)}_k)+1}\\
             &{\leq }\frac{ \exp( \max_{m\not= k} B^{(t)}_{k,m}-A^{(t)}_k) (\frac{N}{|\cV_k|}-1)}{\exp( \max_{m\not= k} B^{(t)}_{k,m}-A^{(t)}_k) (\frac{N}{|\cV_k|}-1)+1}\\
             &\stackrel{(a)}{\leq }\frac{ \exp( \max_{m\not= k} B^{(\tilde{T}_{2,k}^{\epsilon}+1)}_{k,m}-A^{(\tilde{T}_{2,k}^{\epsilon}+1)}_k) (\frac{N}{|\cV_k|}-1)}{\exp( \max_{m\not= k} B^{(\tilde{T}_{2,k}^{\epsilon}+1)}_{k,m}-A^{(\tilde{T}_{2,k}^{\epsilon}+1)}_k) (\frac{N}{|\cV_k|}-1)+1}\\
             &\overset{(b)}{<} \left(\frac{\epsilon}{3}\right)^{\frac{1}{2}},
         \end{align*}
         where $(a)$ holds since $\max_{m\not= k} B^{(t)}_{k,m}-A^{(t)}_k$ is non-increasing by \Cref{hp3}, and $(b)$ follows from the definition of $\tilde{T}_{2,k}^{\epsilon}$.

         Then we lower-bound $1-\Attn^{(t)}_{k}$ following the analysis  similar to that for  \Cref{lem2t1}:
   \begin{align*}
   1-\Attn^{(t)}_{k}
            &=\frac{\sum_{m\not=k}\frac{|\cV_m|}{|\cV_k|}\exp(B^{(t)}_{k,m}-A^{(t)}_k)}{\sum_{m\not=k}\frac{|\cV_m|}{|\cV_k|}\exp(B^{(t)}_{k,m}-A^{(t)}_k)+1}\\
            &\geq \frac{ \exp( \min_{m\not= k} B^{(t)}_{k,m}-A^{(t)}_k) (\frac{N}{|\cV_k|}-1)}{\exp( \min_{m\not= k} B^{(t)}_{k,m}-A^{(t)}_k) (\frac{N}{|\cV_k|}-1)+1}\\
            &\geq \frac{ \exp( \min_{m\not= k} B^{(t)}_{k,m}-A^{(t)}_k) (\frac{K}{\Uba_k}-1)}{\exp( \min_{m\not= k} B^{(t)}_{k,m}-A^{(t)}_k) (\frac{K}{\Uba_k}-1)+1}\\
            &\geq \frac{\frac{1}{e^{O(\log(K/\epsilon))}}(\frac{K}{\Uba_k}-1)}{\frac{1}{e^{O(\log(K/\epsilon))}}(\frac{K}{\Uba_k}-1)+1}\\
            &\geq \frac{\frac{1}{e^{O(\operatorname{polylog}(K))}}(\frac{K}{\Uba_k}-1)}{\frac{1}{e^{O(\operatorname{polylog}(K))}}(\frac{K}{\Uba_k}-1)+1}\\
            &\geq \Omega(\exp(-\operatorname{polylog}(K))),
        \end{align*}
        where the first three inequalities follow from the fact that $\frac{x}{1+x}$ monotonically increases w.r.t. $x \geq 0$ and $A_k^{(t)} \leq O(\log(K/\epsilon))$.
 \end{proof}
\begin{lemma}\label{lem3t2}
     Suppose \Cref{hp3} holds at iteration $t\in (\tilde{T}^{\epsilon}_{2,k}, T^{\epsilon}_{2,k}]$.  If $\xq=v_k$ and $\pit\in\esb$,  for $n\not=k$, the following holds
 $$\Attn^{(t)}_n=\Theta\left(\frac{1-\Attn^{(t)}_{k}}{K}\right).$$
        % \item Combining with the \Cref{p1a}, we immediately have $\Attn_n=\Theta\left(\frac{1}{K}\right) $;
    
\end{lemma}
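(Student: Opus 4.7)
The plan is to mirror the argument already used for the analogous statement in Stage I (\Cref{lem2t2}), since the only structural change between stages is the refined bound on $A_k^{(t)}$ in \Cref{hp3}, which still lies in the regime $[\log(K), O(\log(K/\epsilon))]$.

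First, I will write out the attention score explicitly using the definition and the assumption $\xq=v_k$, obtaining
\[
\Attn^{(t)}_{n}=\frac{|\cV_n|\exp(B^{(t)}_{k,n})}{\sum_{m\not=k}|\cV_m|\exp(B^{(t)}_{k,m})+|\cV_k|\exp(A^{(t)}_k)}.
\]
Dividing by $1-\Attn^{(t)}_k$ to cancel the term $|\cV_k|\exp(A^{(t)}_k)$ from the denominator yields
\[
\frac{\Attn^{(t)}_n}{1-\Attn^{(t)}_{k}}=\frac{1}{\sum_{m\not=k}\frac{|\cV_m|}{|\cV_n|}\exp(B^{(t)}_{k,m}-B^{(t)}_{k,n})}.
\]

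Next, I will control the two ingredients in this denominator. The condition $\pit\in\esb$ combined with \Cref{app:lem:prob-b} gives $|\cV_m|,|\cV_n|=\Theta(N/K)$, so $|\cV_m|/|\cV_n|=\Theta(1)$ uniformly in $m,n\neq k$. For the exponential factor, \Cref{hp3} provides $|B^{(t)}_{k,m}|=O(A^{(t)}_k/K)$ and $A^{(t)}_k\leq O(\log(K/\epsilon))$, so $|B^{(t)}_{k,m}-B^{(t)}_{k,n}|\leq O(\log(K/\epsilon)/K)$. Using the standing assumption $\polylog(K)\gg\log(1/\epsilon)$, this exponent is $o(1)$, and hence $\exp(B^{(t)}_{k,m}-B^{(t)}_{k,n})=\Theta(1)$.

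Summing $\Theta(1)$ over the $K-1$ indices $m\neq k$ makes the denominator $\Theta(K)$, whence $\Attn^{(t)}_n/(1-\Attn^{(t)}_k)=\Theta(1/K)$, which is exactly the desired conclusion. There is no genuine obstacle here; the lemma is a direct parallel of \Cref{lem2t2} and the only point worth being careful about is that the weaker upper bound $A_k^{(t)}\le O(\log(K/\epsilon))$ from \Cref{hp3} (rather than the Stage~I bound) still keeps $O(\log(K/\epsilon)/K)=o(1)$, which is guaranteed by the polylogarithmic comparison $\polylog(K)\gg \log(1/\epsilon)$.
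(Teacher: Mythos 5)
Your proof matches the paper's argument essentially step for step: write $\Attn^{(t)}_n$ explicitly, form the ratio $\Attn^{(t)}_n/(1-\Attn^{(t)}_k)$ to cancel the $|\cV_k|e^{A^{(t)}_k}$ term, use $\pit\in\esb$ to get $|\cV_m|/|\cV_n|=\Theta(1)$, and invoke \Cref{hp3} together with $\polylog(K)\gg\log(1/\epsilon)$ to show $\exp(B^{(t)}_{k,m}-B^{(t)}_{k,n})=\Theta(1)$, so the denominator is $\Theta(K)$. This is exactly the paper's proof, and your observation that only the weaker upper bound on $A_k^{(t)}$ changes relative to Stage I (while still keeping the exponent $o(1)$) is the right point to flag.
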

\begin{proof}
By definition,
    \begin{align*}
\Attn^{(t)}_{n}&=\frac{|\cV_n|\exp(B^{(t)}_{k,n})}{\sum_{m\not=k}|\cV_m|\exp(B^{(t)}_{k,m})+|\cV_k|\exp(A^{(t)}_k)}.
 %       &=\frac{1}{\sum_{m\not=k}\frac{|\cV_m|}{|\cV_n|}\exp(B^{(t)}_{k,m}-B^{(t)}_{k,n})+\frac{|\cV_k|}{|\cV_n|}\exp(A^{(t)}_k-B^{(t)}_{k,n})}
    \end{align*}
    By \Cref{hp3}, $$e^{-O(\frac{\log(K)-\log(\epsilon)}{K})}\leq \exp(B^{(t)}_{k,m}-B^{(t)}_{k,n})\leq e^{O(\frac{\log(K)-\log(\epsilon)}{K})}.$$ 
    
    Combining with the fact that $-\log(\epsilon)\ll \operatorname{polylog}(K)$, we obtain %and  $e^{-O(\frac{\log(K)}{K})}\leq \exp(A^{(t)}_k-B^{(t)}_{k,n})\leq e^{\left(O(\log(K))+O(\frac{\log(K)}{K})\right)}$ 
   
%     \begin{align*}
% \Attn^{(t)}_{k} &\leq\frac{1}{e^{-O(\frac{\log(K)}{K})}(\frac{N}{|\cV_n|}-\frac{|\cV_k|}{|\cV_n|})+e^{-O(\frac{\log(K)}{K})}\frac{|\cV_k|}{|\cV_n|}}\leq \frac{U_ne^{O(\frac{\log(K)}{K})}}{ K}=O\left(\frac{1}{K}\right).
%     \end{align*}
    %For the second claim, 
    \begin{align*}
       \frac{\Attn^{(t)}_n}{1-\Attn^{(t)}_{k}} =\frac{|\cV_n|\exp(B^{(t)}_{k,n})}{\sum_{m\not=k}|\cV_m|\exp(B^{(t)}_{k,m})}= \frac{1}{\sum_{m\not=k}\frac{|\cV_m|}{|\cV_n|}\exp(B^{(t)}_{k,m}-B^{(t)}_{k,n})}=\Theta\left(\frac{1}{K}\right).
    \end{align*}
\end{proof}
\subsubsection{Controlling Gradient Updates in Stage II of Phase II}
\begin{lemma}\label{p3a}
    At each iteration $t\in (\tilde{T}^{\epsilon}_{2,k}, T^{\epsilon}_{2,k}]$. If \Cref{hp3} holds for $t$, then $\alpha_k^{(t)}\geq0$ and satisfies 
    \begin{align*}
        \alpha_{k}^{(t)}\leq O\left(\frac{\epsilon}{K}\right).
    \end{align*}
\end{lemma}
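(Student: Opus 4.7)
The plan is to upper-bound $\alpha_{k}^{(t)}$ by essentially reading off the factor $(1-\Attn^{(t)}_k)^2 \leq O(\epsilon)$ that \Cref{lem3t1} delivers on the event $\esb$, and to absorb the complementary event using the tail bound in \Cref{app:lem:prob-b}. Non-negativity of $\alpha_{k}^{(t)}$ is immediate from the gradient formula in \Cref{app:lem:gd}, since every factor inside the expectation is non-negative.

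Starting from \Cref{app:lem:gd}, I would first apply the elementary inequality $\sum_{m\neq k}(\Attn^{(t)}_m)^2 \leq (\sum_{m\neq k}\Attn^{(t)}_m)^2 = (1-\Attn^{(t)}_k)^2$ (valid since the attention scores are non-negative and sum to at most $1$), which gives
\[
\alpha_{k}^{(t)} \;\leq\; 2\,\mathbb{E}\!\left[\mathbf{1}\{\xq=v_k\}\,\Attn^{(t)}_k\,(1-\Attn^{(t)}_k)^2\right].
\]
I would then split this expectation according to whether $\pit\in\esb$ or not, obtaining a contribution at most
\[
2p_k\,\mathbb{P}(\pit\in\esb)\cdot\mathbb{E}\!\left[\Attn^{(t)}_k(1-\Attn^{(t)}_k)^2\,\Big|\,\xq=v_k,\,\pit\in\esb\right] \;+\; 2p_k\,\mathbb{P}(\pit\notin\esb),
\]
using independence of $\xq$ and $\pit$ and the trivial bound $\Attn^{(t)}_k(1-\Attn^{(t)}_k)^2\leq 1$ on the bad event.

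On the good event $\esb$, \Cref{lem3t1} yields $(1-\Attn^{(t)}_k)^2 \leq \epsilon/3$ together with $\Attn^{(t)}_k\leq 1$, so the conditional expectation is $O(\epsilon)$; combined with $p_k=\Theta(1/K)$ this contributes $O(\epsilon/K)$. For the complementary event, \Cref{app:lem:prob-b} gives $\mathbb{P}(\pit\notin\esb)\leq 3\exp(-\cb^2 N/(25K^2))$, and since the hypothesis $\polylog(K)\gg\log(1/\epsilon)$ together with $N\geq\poly(K)$ guarantees $\exp(-\cb^2 N/(25K^2))\ll \epsilon$, this term is absorbed into $O(\epsilon/K)$.

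I do not expect any real obstacle here: the lemma is the "easy direction" mirroring \Cref{p2a}, and the only delicate point is checking that the tail contribution from $\esb^c$ is genuinely dominated by $\epsilon/K$, which follows from the standing assumptions on $N$, $K$, and $\epsilon$ exactly as in the analogous step in the proof of \Cref{p2b}.
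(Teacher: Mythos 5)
Your proposal is correct and follows essentially the same route as the paper: split on $\esb$ versus $\esb^c$, use \Cref{lem3t1} to get $(1-\Attn^{(t)}_k)^2\leq \epsilon/3$ on the good event, and absorb the tail via \Cref{app:lem:prob-b} together with $\epsilon \gg \exp(-\cb^2 N/(25K^2))$. The one small difference is that you bypass \Cref{lem3t2} by instead using the elementary bound $\sum_{m\neq k}{\Attn^{(t)}_m}^2 \leq \bigl(\sum_{m\neq k}\Attn^{(t)}_m\bigr)^2 = (1-\Attn^{(t)}_k)^2$, which costs only a harmless factor of $2$ and is a legitimate simplification.
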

\begin{proof}
    By the gradient expression in  \Cref{app:lem:gd},
    \begin{align*}
  \alpha_{k}^{(t)}&=
\mathbb{E}\left[\mathbf{1}\{\xq=v_k\}\Attn^{(t)}_{k }\cdot \left(
 \sum_{m\not= k}{\Attn^{(t)}_{m}}^2+(1-\Attn^{(t)}_k)^2\right)\right]\\
 &\leq p_k\mathbb{E}\left[\Attn^{(t)}_{k }\cdot \left(
 \sum_{m\not= k}{\Attn^{(t)}_{m}}^2+(1-\Attn^{(t)}_k)^2\right)\mid\{\xq=v_k\} \cap \esb\right]\\
 & \quad  +6p_k\exp \left(-\frac{\cb^2 N}{25 K^2}\right)\\
&\leq  p_k\cdot O(\epsilon)+6p_k\exp \left(-\frac{\cb^2 N}{25 K^2}\right)\\
&\leq O\left(\frac{\epsilon}{K}\right),
\end{align*}
where the second inequality follows from \Cref{lem3t1,lem3t2}, and the last inequality follows from the fact that $p_k=\Theta\left(\frac{1}{K}\right)$ and $\epsilon=\Omega(\exp(-\operatorname{polylog}(K)))\gg 6\exp \left(-\frac{\cb^2 N}{25 K^2}\right)$.
\end{proof}
\begin{lemma}\label{p3b}
    At each iteration $t\in (\tilde{T}^{\epsilon}_{2,k}, T^{\epsilon}_{2,k}]$, if \Cref{hp3} holds for $t$, for any $n\not=k$, $\beta_{k,n}^{(t)}$ satisfies 
    \begin{align*}
       - O\left(\frac{\alpha^{(t)}_k}{K}\right) \leq \beta_{k,n}^{(t)}\leq 0.
    \end{align*}
\end{lemma}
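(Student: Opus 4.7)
The proof will closely mirror that of Lemma~\ref{p2b}, since Hypothesis~\ref{hp3} preserves the key structural features: $\Attn_k^{(t)}=\Omega(1)$, and $\Attn_n^{(t)}=\Theta((1-\Attn_k^{(t)})/K)$ for $n\neq k$ (via Lemmas~\ref{lem3t1} and~\ref{lem3t2}). Starting from the expression for $\beta_{k,n}^{(t)}$ in Lemma~\ref{app:lem:gd}, the plan is to split the expectation according to whether $\pit\in\esb$ or $\pit\in\esb^c$, bound each branch separately, and absorb the tail contribution from $\esb^c$ into the leading term using the multinomial tail estimate of Lemma~\ref{app:lem:prob-b}.

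For the upper bound $\beta_{k,n}^{(t)}\le 0$, the key observation is that on the event $\{\xq=v_k\}\cap\esb$, one has $\max_{m\neq k}\Attn_m^{(t)}=O(1/K)$ while $\Attn_k^{(t)}=\Omega(1)$, so
\[
\sum_{m\neq k}{\Attn_m^{(t)}}^2-\Attn_n^{(t)}-\Attn_k^{(t)}(1-\Attn_k^{(t)})\le \max_{m\neq k}\Attn_m^{(t)}\sum_{m\neq k}\Attn_m^{(t)}-\Attn_k^{(t)}(1-\Attn_k^{(t)})\le -\Omega(1-\Attn_k^{(t)}).
\]
Multiplying by $\Attn_n^{(t)}=\Theta((1-\Attn_k^{(t)})/K)$ and taking expectation yields a contribution of order $-\Omega(\mathbb{E}[(1-\Attn_k^{(t)})^2]/K)$. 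On $\esb^c$ the summand is at most $p_k\mathbb{P}(\esb^c)\le 3p_k\exp(-c_B^2N/(25K^2))$, which by the choice $N\gg K^3$ and $\operatorname{polylog}(K)\gg\log(1/\epsilon)$ is dominated by the negative bulk term (since $(1-\Attn_k^{(t)})^2\ge \Omega(\exp(-\operatorname{polylog}(K)))$ from Lemma~\ref{lem3t1}), giving $\beta_{k,n}^{(t)}\le 0$.

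For the lower bound, use the second inequality from Lemma~\ref{app:lem:gd},
\[
-\beta_{k,n}^{(t)}\le \mathbb{E}\left[\mathbf{1}\{\xq=v_k\}\Attn_n^{(t)}\bigl(\Attn_n^{(t)}+\Attn_k^{(t)}(1-\Attn_k^{(t)})\bigr)\right].
\]
Restricting to $\esb$ and using $\Attn_n^{(t)}=\Theta((1-\Attn_k^{(t)})/K)$, the integrand is bounded by $O(\Attn_k^{(t)}(1-\Attn_k^{(t)})^2/K)$ (the $\Attn_k^{(t)}(1-\Attn_k^{(t)})$ term dominates $\Attn_n^{(t)}$ since $\Attn_k^{(t)}=\Omega(1)$), while the $\esb^c$ tail contributes only $6p_k\exp(-c_B^2 N/(25K^2))$. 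Comparing with the integral expression for $\alpha_k^{(t)}$ restricted to $\esb$ (as derived in Lemma~\ref{p3a}), we conclude $-\beta_{k,n}^{(t)}\le O(\alpha_k^{(t)}/K)$.

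The main obstacle is ensuring that the exponentially small residual from $\esb^c$ is absorbed: unlike stage~I of Phase~II, here $\alpha_k^{(t)}$ can be as small as $O(\epsilon/K)$, so the tail bound would have to be reconciled with a potentially small main term. This is resolved by the lower bound $(1-\Attn_k^{(t)})^2\ge \Omega(\exp(-\operatorname{polylog}(K)))$ afforded by Lemma~\ref{lem3t1}, which combined with the standing assumption $\operatorname{polylog}(K)\gg\log(1/\epsilon)$ and $N\gg K^3$ guarantees that $\alpha_k^{(t)}\ge \Omega(\exp(-\operatorname{polylog}(K))/K)$ still dominates $\exp(-c_B^2N/(25K^2))$. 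Once this comparison is in place, the argument concludes exactly as in Lemma~\ref{p2b}.
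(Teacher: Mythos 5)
Your proposal is correct and follows essentially the same line of argument as the paper: the same split over $\esb$ and $\esb^c$, the same key inequality $\sum_{m\not=k}{\Attn_m^{(t)}}^2-\Attn_n^{(t)}-\Attn_k^{(t)}(1-\Attn_k^{(t)})\le -\Omega(1-\Attn_k^{(t)})$ for the sign of $\beta_{k,n}^{(t)}$, and a comparison of the $\Attn_n^{(t)}(\Attn_n^{(t)}+\Attn_k^{(t)}(1-\Attn_k^{(t)}))$ term against $\alpha_k^{(t)}$ for the lower bound. You also correctly identify and resolve the one delicate point -- that the multinomial-tail residual must be absorbed even though $\alpha_k^{(t)}$ can now be as small as $O(\epsilon/K)$ -- exactly as the paper does, via the lower bound $(1-\Attn_k^{(t)})^2\ge\Omega(\exp(-\operatorname{polylog}(K)))$ from Lemma~\ref{lem3t1} together with $N\gg K^3$.
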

\begin{proof}
        Note that  conditioned on the event $\{\xq=v_k\} \cap \{\pit \in \esb\}$, $\Attn_k^{(t)}=\Omega(1)$, and $\max_{m\not=k}\Attn^{(t)}_{m}=O(\frac{\epsilon^{\frac{1}{2}}}{K})$. Thus, 
    % \begin{align*}
    %     \frac{\Attn_k}{\Attn_n}=\exp(A_{k}^{(T_{1,k})}-B^{(T_{1,k})}_{n,k})\frac{|\cV_k|}{|\cV_n|}\geq \exp(A_{k}^{(T_{1,k})}-B^{(T_{1,k})}_{n,k}) \cdot\frac{L_k}{U_n}\geq 1
    % \end{align*}
\begin{align*} \sum_{m\not= k}{\Attn^{(t)}_{m}}^2-\Attn^{(t)}_{n} -\Attn^{(t)}_{k}(1-\Attn^{(t)}_k)&\leq  \max_{m\not=k}\Attn^{(t)}_{m}\sum_{m\not= k}\Attn^{(t)}_{m}-\Attn^{(t)}_{k}(1-\Attn^{(t)}_k)\\
&=-(1-\Attn^{(t)}_k)(\Attn^{(t)}_{k}-\max_{m\not=k}\Attn^{(t)}_{m})\\
&\leq -\Omega(1-\Attn^{(t)}_k) \leq -\Omega\left(\exp(-\operatorname{polylog}(K))\right).
\end{align*}
Therefore, by the gradient expression in \Cref{app:lem:gd} and the fact that $N \gg K^3$,
\begin{align*}
\beta_{k,n}^{(t)}\leq 6 \exp \left(-\frac{\cb^2 N}{25 K^2}\right)-\Omega(\exp(-\operatorname{polylog}(K)))<0.
\end{align*}
Moreover, following the analysis similar to that for  \Cref{p2b}, we have 
    \begin{align*}
     - \beta_{k,n}^{(t)}&\leq p_k\mathbb{E}\left[\Attn^{(t)}_{n}\cdot \left(
 \Attn^{(t)}_{n} +\Attn^{(t)}_{k}(1-\Attn^{(t)}_k)\right)\mid\{\xq=v_k\} \cap \esb\right]+p_k\mathbb{P}({\esb}^{c})\\
 &\leq p_k\mathbb{E}\left[\Theta(\frac{1-\Attn^{(t)}_{k}}{K})\cdot O\left(
\Attn^{(t)}_{k}(1-\Attn^{(t)}_k)\right)\mid\{\xq=v_k\} \cap \cE^*\right]\\
 & \qquad \qquad +6 p_k\exp \left(-\frac{\cb^2 N}{25 K^2}\right)\\
 &=p_k\mathbb{E}\left[O(\frac{\Attn^{(t)}_{k}(1-\Attn^{(t)}_{k})^2}{K})\mid\{\xq=v_k\} \cap \cE^*\right]+6 p_k\exp \left(-\frac{\cb^2 N}{25 K^2}\right)\\
 &\leq O\left(\frac{\alpha_{k}^{(t)}}{K}\right),
    \end{align*}
    where the last inequality follows from the gradient expression of $\alpha_k^{(t)}$ in \Cref{app:lem:gd} and because $\alpha_{k}^{(t)}\gg 6 \exp \left(-\frac{\cb^2 N}{25 K^2}\right)$.
%Moreover, we have 
%     \begin{align}
%  -\beta_{k,n}^{(t)}&\leq \mathbb{E}\left[\mathbf{1}\{\xq=v_k\}\Attn_{n}\cdot \left(
%  \Attn_{n} +\Attn_{k}(1-\Attn_k)\right)\right]
%     \end{align}
%     For \cref{I1},  
%     \begin{align*}
%       -\beta_{k,n}^{(t)}&\leq p_k\mathbb{E}\left[\Attn_{n}\cdot \left(
%  \sum_{m\not= k}\Attn^2_{m}\right)\mid\{\xq=v_k\} \cap \cE^*\right]+\mathbb{P}({\cE^*}^{c})\\
%  &\leq p_k\mathbb{E}\left[\Attn_{n}\cdot \left(
%  \max_{m\not= k}\Attn_{m}\right)\mid\{\xq=v_k\} \cap \cE^*\right]+3 \exp \left(-\frac{c_0^2 N}{25 K^2}\right)\\
%  &\leq O\left(\frac{1}{K^3}\right)
%     \end{align*}
%     where the last inequality follows from \Cref{lem1t2}.
%     For \cref{I2},
%     \begin{align*}
%      - \beta_{k,n}^{(t)}&\leq p_k\mathbb{E}\left[\Attn_{n}\cdot \left(
%  \Attn_{n} +\Attn_{k}(1-\Attn_k)\right)\mid\{\xq=v_k\} \cap \cE^*\right]+\mathbb{P}({\cE^*}^{c})\\
%  &\leq p_k\mathbb{E}\left[\Theta(\frac{1-\Attn_{k}}{K})\cdot O\left(
% \Attn_{k}(1-\Attn_k)\right)\mid\{\xq=v_k\} \cap \cE^*\right]+3 \exp \left(-\frac{c_0^2 N}{25 K^2}\right)\\
%  &=p_k\mathbb{E}\left[O(\frac{\Attn_{k}(1-\Attn_{k})^2}{K})\mid\{\xq=v_k\} \cap \cE^*\right]+3 \exp \left(-\frac{c_0^2 N}{25 K^2}\right)\\
%  &\leq O(\frac{\alpha_{k}^{(t)}}{K}).
%     \end{align*}
%     Combining with the previous lemma that $\alpha_{k}^{(t)}\geq \Omega\left(\frac{1}{K^2}\right)$, then 
%     $$
% |\beta_{n,k}^{(t)}|\leq \max\{O(\frac{\alpha_{k}^{(t)}}{K}),O\left(\frac{1}{K^3}\right)\}=O(\frac{\alpha_{k}^{(t)}}{K}).
%     $$
\end{proof}
\subsubsection{Controlling Loss in Stage II of Phase II}
\begin{lemma}\label{p3c}
  Given $k\in[K]$, and $0<\epsilon<1$, suppose $\operatorname{polylog}(K)\gg \log(\frac{1}{\epsilon})$. At each iteration $t\in (\tilde{T}^{\epsilon}_{2,k}, T^{\epsilon}_{2,k}]$, if \Cref{hp3} holds for $t$, then we have  $\tilde{L}_{k}(\theta^{(t)})<\frac{p_k\epsilon}{2}$.  
\end{lemma}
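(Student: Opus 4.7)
The plan is to obtain the bound by directly rewriting $\tilde{L}_k(\theta^{(t)})$ using the loss decomposition from \Cref{app:lem:loss1}, and then plugging in the attention-score bounds established in \Cref{lem3t1,lem3t2}. Since both technical lemmas have already been proved under \Cref{hp3}, this step is essentially a clean assembly rather than a new dynamics argument.

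First, following the derivation of \Cref{app:lem:loss1}, restricting to the event $\{\xq = v_k \cap \pit \in \esb\}$ yields
\begin{align*}
\tilde{L}_{k}(\theta^{(t)}) = \frac{1}{2}\mathbb{E}\!\left[\mathbf{1}\{\xq = v_k \cap \pit\in\esb\}\!\left((1-\Attn_k^{(t)})^2 + \sum_{m\neq k}(\Attn_m^{(t)})^2\right)\right].
\end{align*}
On the event $\{\xq=v_k\}\cap\{\pit\in\esb\}$, \Cref{lem3t1} gives $(1-\Attn_k^{(t)})^2 \leq \epsilon/3$, and \Cref{lem3t2} gives $\Attn_m^{(t)} = \Theta((1-\Attn_k^{(t)})/K)$ for each $m\neq k$. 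Combining these, I would bound the off-target mass as
\begin{align*}
\sum_{m\neq k}(\Attn_m^{(t)})^2 \leq \Big(\max_{m\neq k}\Attn_m^{(t)}\Big)\sum_{m\neq k}\Attn_m^{(t)} \leq O\!\left(\frac{1-\Attn_k^{(t)}}{K}\right)(1-\Attn_k^{(t)}) = O\!\left(\frac{(1-\Attn_k^{(t)})^2}{K}\right),
\end{align*}
so that the integrand is at most $\bigl(1+O(1/K)\bigr)(1-\Attn_k^{(t)})^2 \leq \bigl(1+O(1/K)\bigr)\epsilon/3$.

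Plugging back in and pulling out the uniform bound,
\begin{align*}
\tilde{L}_k(\theta^{(t)}) \leq \frac{1}{2}\bigl(1+O(1/K)\bigr)\frac{\epsilon}{3}\cdot \mathbb{P}(\xq = v_k\cap \pit\in\esb) \leq \frac{p_k\epsilon}{6}\bigl(1+O(1/K)\bigr) < \frac{p_k\epsilon}{2},
\end{align*}
where the last inequality uses that $K$ is sufficiently large, as assumed throughout. This gives the claim.

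There is no genuine obstacle here: all the hard work has been done in \Cref{lem3t1,lem3t2}, and the only thing to be careful about is that both attention bounds apply only on $\{\xq=v_k\}\cap\{\pit\in\esb\}$, which is exactly the event over which $\tilde{L}_k$ is defined, so no auxiliary concentration (e.g.\ the probability of $\esb^c$) needs to be controlled in this step.
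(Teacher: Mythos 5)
Your proof is correct and follows essentially the same route as the paper: rewrite $\tilde{L}_k$ via the loss decomposition of \Cref{app:lem:loss1}, bound the off-target mass $\sum_{m\neq k}(\Attn^{(t)}_m)^2$ using \Cref{lem3t2}, and control $(1-\Attn^{(t)}_k)^2$ via \Cref{lem3t1}. One small caveat on citation: the statement of \Cref{lem3t1} only asserts $(1-\Attn^{(t)}_k)^2 \leq \epsilon$, not $\leq \epsilon/3$; the tighter $\epsilon/3$ bound you use is what is actually derived inside the proof of \Cref{lem3t1} (from the definition of $\tilde{T}^\epsilon_{2,k}$), and it is in fact needed here, since using the looser $\epsilon$ would yield $\tfrac{1}{2}p_k(1+O(1/K))\epsilon$, which is not strictly below $\tfrac{p_k\epsilon}{2}$ for large $K$ — the paper's own proof of this lemma, as written, stops at $\leq \tfrac{2p_k\epsilon}{3}$, which is what gets used downstream. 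So your version is arguably the more careful one; just attribute the $\epsilon/3$ constant to the bound established in the proof of \Cref{lem3t1} rather than to its statement.
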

\begin{proof}
By the gradient expression in \Cref{app:lem:gd}, we have
    \begin{align*}
  \tilde{L}_{k}(\theta^{(t)})
  &=\frac{1}{2}\mathbb{E}\left[\mathbf{1}\{\xq=v_{k}\cap \pit\in\esb\}\left(\yq-\left\langle w, \xq\right\rangle\right)^2\right]\\
    &=\frac{1}{2}\mathbb{E}\left[\mathbf{1}\{\xq=v_{k}\cap \pit\in\esb\}\left(
        \sum_{m\not= k}{\Attn^{(t)}_{m}}^2+(1-\Attn^{(t)}_k)^2\right)\right]\\
        &\leq \frac{1}{2}p_k\mathbb{P}\left(\pit\in\esb\right)\cdot \mathbb{E}\left[\left(O\left(\frac{1}{K}\right)+1\right)\left(1-\Attn^{(t)}_k\right)^2\bigg| \xq=v_{k}\cap \pit\in\esb \right]\\
        &\leq \frac{1}{2}p_k\cdot \left(1+O\left(\frac{1}{K}\right)\right)\cdot {\epsilon} \\
        &\leq \frac{2p_k\epsilon}{3},
\end{align*} 
where the first inequality follows from \Cref{lem3t2}, and the second inequality follows from \Cref{lem3t1}.
\end{proof}
\subsubsection{End of Stage II of Phase II}
\begin{lemma}\label{end3}
    Given $k\in[K]$, and $0<\epsilon<1$, suppose $\operatorname{polylog}(K)\gg \log(\frac{1}{\epsilon})$. Then \Cref{hp3} holds for all  $\tilde{T}^{\epsilon}_{2,k}<t\leq T_{2,k}^\epsilon=\tilde{T}^{\epsilon}_{2,k}+O\left(\frac{K \log\left(K\epsilon^{-\frac{1}{2}}\right)}{\epsilon\eta}\right)$.
    % $\tilde{L}_{k}^{\tilde{T}^{\epsilon}_{2,k}+1}<\frac{p_k\epsilon}{2}$ and 
\end{lemma}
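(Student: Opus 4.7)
The plan is to prove \Cref{hp3} by induction on $t$ over the interval $(\tilde{T}^{\epsilon}_{2,k}, T_{2,k}^{\epsilon}]$, leveraging \Cref{p3a} and \Cref{p3b} as the per-iteration control on the gradients. The base case at $t = \tilde{T}^{\epsilon}_{2,k}+1$ follows immediately from \Cref{end2}: part (a) holds since $A_k^{(\tilde{T}^{\epsilon}_{2,k}+1)} \geq \Omega(\log(K/\epsilon))$ and by \Cref{hp2} it also satisfies $A_k^{(\tilde{T}^{\epsilon}_{2,k}+1)} \leq O(\log(K/\epsilon))$, and part (b) is inherited directly from \Cref{hp2}.

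For the inductive step, assume \Cref{hp3} holds for all $\tilde{T}^{\epsilon}_{2,k} < s \leq t-1$. Monotonicity of $A_k^{(t)}$ follows because \Cref{p3a} yields $\alpha_k^{(t-1)} \geq 0$, so $A_k^{(t)} = A_k^{(t-1)} + \eta \alpha_k^{(t-1)} \geq A_k^{(t-1)}$. Monotonicity of $B_{k,n}^{(t)}$ follows from \Cref{p3b}, which gives $\beta_{k,n}^{(t-1)} \leq 0$. For the bound $|B_{k,n}^{(t)}| = O(A_k^{(t)}/K)$, I would telescope
\begin{align*}
|B_{k,n}^{(t)}| \leq |B_{k,n}^{(\tilde{T}^{\epsilon}_{2,k}+1)}| + \eta \sum_{s=\tilde{T}^{\epsilon}_{2,k}+1}^{t-1} |\beta_{k,n}^{(s)}| \leq O\!\left(\frac{A_k^{(\tilde{T}^{\epsilon}_{2,k}+1)}}{K}\right) + O\!\left(\frac{A_k^{(t)} - A_k^{(\tilde{T}^{\epsilon}_{2,k}+1)}}{K}\right) = O\!\left(\frac{A_k^{(t)}}{K}\right),
\end{align*}
using \Cref{p3b} and the recursion $A_k^{(s+1)} - A_k^{(s)} = \eta \alpha_k^{(s)}$.

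The main obstacle, and the only substantive point, is controlling the upper bound $A_k^{(t)} \leq O(\log(K/\epsilon))$ throughout the window. Here I would use \Cref{p3a} to bound $\alpha_k^{(s)} \leq O(\epsilon/K)$ for every $s$ on which \Cref{hp3} holds, and then bound the total accumulated growth:
\begin{align*}
A_k^{(t)} - A_k^{(\tilde{T}^{\epsilon}_{2,k}+1)} = \eta \sum_{s=\tilde{T}^{\epsilon}_{2,k}+1}^{t-1} \alpha_k^{(s)} \leq \eta \cdot O\!\left(\frac{\epsilon}{K}\right) \cdot (t - \tilde{T}^{\epsilon}_{2,k}) \leq O\!\left(\log(K\epsilon^{-1/2})\right),
\end{align*}
where the last step uses $t - \tilde{T}^{\epsilon}_{2,k} \leq T_{2,k}^{\epsilon} - \tilde{T}^{\epsilon}_{2,k} = O(K \log(K\epsilon^{-1/2})/(\epsilon\eta))$. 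Combined with $A_k^{(\tilde{T}^{\epsilon}_{2,k}+1)} \leq O(\log(K/\epsilon))$ from \Cref{hp2}, this gives $A_k^{(t)} \leq O(\log(K/\epsilon))$, completing part (a). This closes the induction and concludes the proof.
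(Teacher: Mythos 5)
Your proof is correct and follows essentially the same route as the paper: an induction over the window $( \tilde{T}^{\epsilon}_{2,k}, T^{\epsilon}_{2,k} ]$, using Lemma~\ref{p3a} to cap $\alpha_k^{(t)} \leq O(\epsilon/K)$ and summing over $O(K\log(K\epsilon^{-1/2})/(\epsilon\eta))$ iterations to keep $A_k^{(t)} \leq O(\log(K/\epsilon))$, with Lemma~\ref{p3b} handling monotonicity and the $|B_{k,m}^{(t)}| = O(A_k^{(t)}/K)$ bound. You simply spell out the telescoping for $B_{k,m}^{(t)}$ explicitly, which the paper delegates to ``analysis similar to that for Lemma~\ref{end2}.''
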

\begin{proof}
It is easy to verify \Cref{hp3} holds at $t=\tilde{T}^{\epsilon}_{2,k}+1$.  Now we suppose \Cref{hp3} holds for all iterations $\tilde{T}^{\epsilon}_{2,k}\leq t-1$, and prove it holds at $t$.

For the first claim, we can upper-bound the update of $A_{k}^{(t)}$ by \Cref{p3a} as follows:
\begin{align*}
    A_{k}^{(t)}&\leq A_{k}^{(t-1)}+\eta\cdot O(\frac{\epsilon}{K})\\
    &\leq A_{k}^{(\tilde{T}^{\epsilon}_{2,k}+1)}+\eta(t-\tilde{T}^{\epsilon}_{2,k}-1)\cdot O(\frac{\epsilon}{K})\\
    &\leq O(\log(K/\epsilon))+\eta O(\frac{K \log\left(K\epsilon^{-\frac{1}{2}}\right)}{\epsilon\eta})\cdot O(\frac{\epsilon}{K})\\
    &=O(\log(K/\epsilon)).
\end{align*}
%and the definition of $T^{\epsilon}_{2,k}$; 
The second claim follows from \Cref{p3b} and the analysis similar to that for \Cref{end2}. %The last claim can be obtained 

% Notice that by combining \Cref{p3a} and \Cref{p3b}, we have $A_{k}^{(t)}-B_{k,m}^{(t)}$ is non-decreasing  for any $m\not=k$. Thus, if $\xq=v_k$ and $\pit \in\esb$, $1-\Attn_{k}^{(t)}$ and $\Attn^{(t)}_{m}$ for $m\not=k$ all decrease as $t$ grows. 
% By noticing that
% \begin{align*}
%      \tilde{L}_{k}(\theta^{(t)})
%   &=\frac{1}{2}\mathbb{E}\left[\mathbf{1}\{\xq=v_{k}\cap \pit\in\esb\}\left(\yq-\left\langle w, \xq\right\rangle\right)^2\right]\\
%     &=\frac{1}{2}\mathbb{E}\left[\mathbf{1}\{\xq=v_{k}\cap \pit\in\esb\}\left(
%         \sum_{m\not= k}{\Attn^{(t)}_{m}}^2+(1-\Attn^{(t)}_k)^2\right)\right],
% \end{align*}
% we conclude that Therefore, $\tilde{L}_{k}(\theta^{(t)})$ is non-increasing.   
\end{proof}

\subsection{Proof of Theorem~\ref{thm: bal K} for Balanced Case}\label{app:bal:proof}
\begin{theorem}[Restatement of \Cref{thm: bal K} for balanced features]
Suppose $p_k=\Theta\left(\frac{1}{K}\right)$ for each $k \in [K]$.
For any $0<\epsilon<1$, suppose $N\geq \poly(K)$ and $\polylog(K)\gg \log(\frac{1}{\epsilon}) $. We apply GD to train the loss function given in \cref{eq:obj}.
%a one-layer transformer with softmax attention. 
Then with at most $T^*=O(\frac{\log(K)K^2}{\eta}+\frac{K \log\left(K\epsilon^{-\frac{1}{2}}\right)}{\epsilon\eta})$ iterations, we have
\begin{enumerate}[label={\arabic*.}]
%\begin{enumerate}[topsep=0pt, left=0pt] 
\item The loss converges: $L(\theta^{(T^*)})-L^* \leq \epsilon$, where $L^{*}=\Theta(e^{-\poly(K)})$ is the global minimum of the population loss in \cref{eq:obj}. 
\item Attention score concentrates: if $\xq=v_k$, with probability at least  $1-e^{-\Omega(\poly(K))}$\footnote{The randomness originates from the first $N$ input tokens in the test prompt.},  
 the one-layer transformer nearly ``pays all attention" to input tokens featuring $v_k$, i.e., $(1-\Attn_k^{(T^*)})^2 \leq O(\epsilon)$. 
\end{enumerate}
\end{theorem}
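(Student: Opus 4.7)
The plan is to assemble the theorem directly from the per-feature analyses already carried out in Sections B.2–B.4 (the lemmas terminating each phase), together with the loss-decomposition/probabilistic lemmas in the Preliminaries. Concretely, I would fix any $k\in[K]$ and track the dynamics of $(A_k^{(t)}, \{B_{k,n}^{(t)}\}_{n\neq k})$ across the three time windows: Phase~I ($0\le t\le T_{1,k}$), Stage~I of Phase~II ($T_{1,k}<t\le \tilde{T}_{2,k}^\epsilon$), and Stage~II of Phase~II ($\tilde{T}_{2,k}^\epsilon<t\le T_{2,k}^\epsilon$). The End-of-Phase lemmas give $T_{1,k}=O(\log(K)K^2/\eta)$, $\tilde T_{2,k}^\epsilon-T_{1,k}=O(K\log(K\epsilon^{-1/2})/(\eta\epsilon))$, and $T_{2,k}^\epsilon-\tilde T_{2,k}^\epsilon=O(K\log(K\epsilon^{-1/2})/(\eta\epsilon))$. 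Because the bounds for each $k$ depend only on $K$ and $\epsilon$ (never on $k$), one can take a single uniform time $T^*=\max_k T_{2,k}^\epsilon=O(\log(K)K^2/\eta+K\log(K\epsilon^{-1/2})/(\eta\epsilon))$, and the induction hypotheses of Stage~II of Phase~II remain valid for every $k$ at $t=T^*$.

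For the loss claim, I would decompose $L(\theta^{(T^*)})-L^*$ using $L(\theta)=\sum_k L_k(\theta)$ together with $L^*\ge L^{\operatorname{lower}}=\sum_k L^{\operatorname{lower}}_k$ from Lemma~A.8. This yields
\begin{align*}
L(\theta^{(T^*)})-L^*\;\le\;\sum_{k\in[K]}\bigl(L_k(\theta^{(T^*)})-L^{\operatorname{lower}}_k\bigr)\;\le\;\sum_{k\in[K]}\tilde L_k(\theta^{(T^*)})+3\exp\!\left(-\tfrac{c_b^2 N}{25K^2}\right),
\end{align*}
by Lemma~A.9 and $\sum_k p_k=1$. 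Lemma~B.15 (the ``Controlling Loss'' lemma in Stage~II of Phase~II) gives $\tilde L_k(\theta^{(T^*)})\le \tfrac{2 p_k\epsilon}{3}$, which after summing over $k$ yields $\sum_k \tilde L_k(\theta^{(T^*)})\le \tfrac{2\epsilon}{3}$. Since $N\ge\poly(K)$ forces $3\exp(-c_b^2 N/(25K^2))\ll \epsilon$, combining gives $L(\theta^{(T^*)})-L^*\le\epsilon$, and $L^*=\Theta(e^{-\poly(K)})$ follows directly from Lemma~A.8.

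For the attention-concentration claim, I would condition on the high-probability event $\pit\in\esb$, which by Lemma~A.6 occurs with probability at least $1-3\exp(-c_b^2 N/(25K^2))=1-e^{-\Omega(\poly(K))}$. Under this event together with $\xq=v_k$ and the validity of Induction Hypothesis~B.3 at $t=T^*$, the upper bound on $1-\Attn_k^{(T^*)}$ derived in Lemma~B.13 gives $(1-\Attn_k^{(T^*)})^2\le\epsilon/3\le O(\epsilon)$, as required.

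The main obstacle to this route is not the final assembly but the groundwork in Phase~II: the Stage~I lemmas must control the ratio $A_k^{(t)}-\max_{n\neq k}B_{k,n}^{(t)}$ and show that the parallel growth of $\alpha_k^{(t)}$ and the slow decay of $\beta_{k,n}^{(t)}$ keep the induction hypothesis $|B_{k,n}^{(t)}|=O(A_k^{(t)}/K)$ intact throughout, so that the technical lemmas bounding $\Attn_k^{(t)}$ and $\Attn_n^{(t)}$ in terms of feature counts continue to apply. Once those invariants are in place, summing the per-feature Stage~II loss bound $\tilde L_k\le \tfrac{2 p_k\epsilon}{3}$ against $\sum_k p_k=1$ is what makes the uniform loss bound possible with the polynomial-in-$K$, logarithmic-in-$\epsilon^{-1}$ iteration count stated in the theorem.
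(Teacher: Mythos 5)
Your plan matches the paper's proof almost exactly — same decomposition of $L-\Lol$ into per-feature pieces via Lemma~\ref{app:lem:optb2}, same per-feature Stage~II loss bound $\tilde L_k\le \tfrac{2p_k\epsilon}{3}$ from Lemma~\ref{p3c}, same summation using $\sum_k p_k=1$, same appeal to Lemma~\ref{app:lem:optb1} to pass from $\Lol$ to $L^*$, and the same reading of Lemma~\ref{lem3t1} for the attention-concentration claim conditional on $\pit\in\esb$. The one place where you deviate, and it is a real gap, is your choice $T^*=\max_k T_{2,k}^\epsilon$. For this choice, whenever the feature $k_0$ realizing the smallest $T_{2,k_0}^\epsilon$ is strictly smaller than the maximum (which generically happens because the $\tilde T_{2,k}^\epsilon$ differ), you have $T^*>T_{2,k_0}^\epsilon$, and Lemma~\ref{end3} only guarantees the Stage~II induction hypothesis \Cref{hp3} on the interval $(\tilde T_{2,k_0}^\epsilon,T_{2,k_0}^\epsilon]$. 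Beyond that right endpoint, the bound $A_{k_0}^{(t)}\le O(\log(K/\epsilon))$ is not established, so Lemmas~\ref{lem3t1}, \ref{lem3t2}, \ref{p3c} are not available at $t=T^*$ for that $k_0$, and your summation over $k$ collapses. The paper avoids this by setting $T^*=\max_{k\in[K]}\tilde T_{2,k}^\epsilon+1$: this sits just past the latest \emph{entry} into Stage~II, so for every $k$ it lies in the open-left interval $(\tilde T_{2,k}^\epsilon,T_{2,k}^\epsilon]$ (given that the Stage~II windows are long enough to overlap), and Lemma~\ref{p3c} then applies uniformly. To salvage your route you would need to argue that $T_{2,k}^\epsilon$ can be extended so that $\max_{k'}T_{2,k'}^\epsilon\le T_{2,k}^\epsilon$ for all $k$ — but since the spread $\max_k\tilde T_{2,k}^\epsilon-\min_k\tilde T_{2,k}^\epsilon$ can be as large as the Phase~I duration $O(\log(K)K^2/\eta)$, and the length of the Stage~II window is derived from a budget of $O(K\log(K\epsilon^{-1/2})/(\eta\epsilon))$ growth in $A_k^{(t)}$, that extension is not automatic; you would need to re-prove that \Cref{hp3} survives the longer window (in particular, that $A_k^{(t)}$ stays $O(\log(K/\epsilon))$). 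Switching to the paper's choice of $T^*$ is the cleaner repair: it directly places every $k$ inside its Stage~II window without enlarging that window.
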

\begin{proof}

Denote $T^{*}=\max_{k\in[K]}\tilde{T}^{\epsilon}_{2,k}+1=O(\frac{\log(K)K^2}{\eta}+\frac{K \log\left(K\epsilon^{-\frac{1}{2}}\right)}{\epsilon\eta})$. Thus for any $k$, at iteration $T^{*}$, it is in stage II of the convergence phase, i.e.,  $T^{*}\in(\tilde{T}^{\epsilon}_{2,k}, T^{\epsilon}_{2,k}]$. Then by \Cref{p3c,end3}, for any $k\in[K]$, we obtain:
\begin{align*}
    \tilde{L}_{k}(\theta^{(T^{*})})\leq \frac{2p_k\epsilon}{3}.
\end{align*} 
Therefore
\begin{align*}
    L(\theta^{(T^{*})})- \Lol&= \sum_{k=1}^K (L_k(\theta^{(T^{*})})- \Lol_k)\\
    &\leq \sum_{k=1}^{K} \left(\tilde{L}_{k}(\theta^{(T^{*})})+3 p_k\exp \left(-\frac{\cb^2 N}{25 K^2}\right)\right)\\
    &\leq  \sum_{k=1}^{K} \frac{2p_k\epsilon}{3}+ 3\exp \left(-\frac{\cb^2 N}{25 K^2}\right)\\
    &\leq \frac{2\epsilon}{3}+ 3\exp \left(-\frac{\cb^2 N}{25 K^2}\right)\\& \leq \epsilon,
\end{align*} 
where the first inequality follows from \Cref{app:lem:optb2}.

Finally, by \Cref{app:lem:optb1},
\begin{align*}
     L(\theta^{(T^{*})})- L^* \leq L(\theta^{(T^{*})})-\Lol \leq \epsilon.
\end{align*}
\end{proof}

\newpage
\section{Analysis for the Imbalanced Case: Under-represented Features}\label{app:im:vk}
In this section, we present the analysis of the prediction error when the query token features an under-represented feature $v_k$ with $k>1$ in the imbalanced case. We first discuss the outline of our proof.
\subsection{Roadmap of the Proof}

We will analyze the convergence of the training process via four phases of dynamics.  At the beginning of each phase, we will establish an induction hypothesis, which we expect to remain valid throughout that phase. Subsequently, we will analyze the dynamics under such a hypothesis within the phase, aiming to provide proof of the hypothesis by the end of the phase.

The main idea of the proof lies in analyzing the GD dynamics of $A_{k}^{(t)}$ and $B_{k,n}^{(t)}$. From \Cref{app:def:dynamics} and \Cref{app:lem:gd}, we have
\begin{align*}
   A_{k}^{(t+1)}&= A_{k}^{(t)}+\eta \alpha_{k}^{(t)},\\
   B_{k,n}^{(t+1)}&= B_{k,n}^{(t)}+\eta \beta_{k,n}^{(t)},
\end{align*}
where 
    \begin{align*}
&\alpha_{k}^{(t)}=\mathbb{E}\left[\mathbf{1}\{\xq=v_k\}\Attn^{(t)}_{k }\cdot \left(
 \sum_{m\not= k}{\Attn^{(t)}_{m}}^2+(1-\Attn^{(t)}_k)^2\right)\right],\\
 &\beta_{k,n}^{(t)}=\mathbb{E}\left[\mathbf{1}\{\xq=v_k\}\Attn^{(t)}_{n}\cdot \left(
 \sum_{m\not= k}{\Attn^{(t)}_{m}}^2-\Attn^{(t)}_{n} -\Attn^{(t)}_{k}(1-\Attn^{(t)}_k)\right)\right].
    \end{align*}
We divide the learning process of the under-represented feature $v_k$ with $k>1$ into the following four phases. %learning process as follows.
\begin{itemize}
    \item \textbf{Phase I} ($t\in[0,T_{1,k}]$, \Cref{app:im:p1}): At initialization, $B_{k,1}^{(t)}$ enjoys a much larger reduction rate, i.e., $\beta_{k,1}<0$ and $|\beta_{k,1}|$ is large. Therefore, the decrease of $B_{k,1}^{(t)}$ will dominate the dynamics during phase I.
    \item \textbf{Phase II} ($t\in(T_{1,k},T_{2,k}]$, \Cref{app:im:p2}):  At time $T_{2,k}+1$, the decrease  of $B_{k,1}^{(t)}$ becomes slower, and the same happens to $|\beta^{(t)}_{k,1}|$. Their decreasing rate drops to be closer to the increasing rate of $\alpha_{k}^{(t)}$.      
This marks the beginning of phase II. Shortly after entering this phase, the previous dominance of reduction of $B_{k,1}^{(t)}$  diminishes, as $|\beta^{(t)}_{k,1}|$ approaches a comparable order of the magnitude to $\alpha_{k}^{(t)}$. At this point, there is a shift in the leading influence, with the growth of $A_{k}^{(t)}$ taking over. 
    \item \textbf{Phase III} ($t\in(T_{2,k},T_{3,k}]$, \Cref{app:im:p3}): Following the transitional phase, $\alpha^{(t)}_{k}$ grows from the value of $\Theta(\frac{1}{K^{1.5}})$, whereas $|\beta^{(t)}_{k,1}|$ and $|\beta^{(t)}_{k,n}|$ for $n\neq k,1$ stay at much lower values ($\leq O(\frac{1}{K^{1.98}})$ and $\leq O\left(\frac{1}{K^3}\right)$ respectively). This consistent gap in magnitude between $\alpha_{k}^{(t)}$ and $\beta_{k,n}^{(t)}$ leads to the continuously rapid growth of $A_{k}^{(t)}$, while $B_{k,n}^{(t)}$ remains relatively unchanged.
                    \item \textbf{Phase IV} ($t\in(T_{3,k},T^{\epsilon}_{4,k}]$, \Cref{app:im:p4}): At $t=T_{3,k}+1$, we achieve the desired attention structures for query tokens featuring the under-represented feature $v_k$. Then we establish a connection between $\alpha^{(t)}_k$ and the prediction error via analyzing the change of $1-\Attn^{(t)}_{k}$ that diminishes, leading to the subsequent proof of convergence.
\end{itemize}
We finally combine all results in the above four phases to prove the main \Cref{thm: unblc} for underrepresented features (\Cref{app:sec:un}).
\subsection{Phase I: Decrease of Dominant Feature}\label{app:im:p1}
In this section, we will delve into the initial phase of learning dynamics, aiming at mitigating the high occurrence bias of the dominant feature $v_1$. Specifically, for $k>1$, $B_{k,1}$ will undergo significant decrease during this phase. Let us begin by defining phase I.

% For $k$-th feature $v_k$ with $k>1$, we consider a simplified initialization at the begining of Phase III, where $A_{k}^{(T_{2,k}+1)}=0$ and $B_{k,m}^{(T_{2,k}+1)}=0$ for any $m\not=1,k$, moreover, we assume  $B_{k,1}^{(T_{2,k}+1)}=-\log(K)$.
% In this section, we shall discuss the initial phase of learning the correlation between the query tokens and its corresponding feature. Firstly, we present the induction hypothesis in this phase.  

For the $k$-th feature $v_k$ with $k>1$, we define 
 phase I as all iterations $t\leq T_{1,k}$, where
$$
T_{1,k} \triangleq \max \left\{t: B_{k,1}^{(t)} \geq -0.49\log(K)\right\}.
$$
We state the following induction hypothesis, which will hold throughout phase I:
\begin{hypothesis}\label{hpk1}
    Given $k>1$, for each $0 \leq t \leq T_{1,k}$, the following holds:
    \begin{enumerate}[label={\alph*}.]
        \item $A_{k}^{(t)}$ is monotonically increasing and $A_{k}^{(t)}\in [0,O(\frac{\log(K)}{K^{0.02}})]$;
        \item $B_{k,1}^{(t)}$ is monotonically decreasing and $B_{k,1}^{(t)}\in [-0.49\log(K),0]$;
        \item $|B_{k,n}^{(t)}|=O(\frac{A_{k}^{(t)}-B_{k,1}^{(t)}}{K})$ and $B_{k,n}^{(t)}>B_{k,1}^{(t)}$ for any $n\not=k,1$.
        % \item At time $T_1$, 
        % A_k^{(T_1)}-\max _{n \neq k} B_{k,n}^{(T_1)} \geqslant \Lambda_k$ for any $k\in[K]$.
    \end{enumerate}
    \end{hypothesis} 
 
\subsubsection{Technical Lemmas}
We first introduce several technical lemmas that will be used for the proof of \Cref{hpk1}.
\begin{lemma}\label{lemk1t1}
    If \Cref{hpk1} holds at iteration $0 \leq t\leq T_{1,k}$, %then for $k\in[K]$, 
    for the prompt satisfying $\xq=v_k$ and $\pit\in\esi$, the following holds
    \begin{enumerate}
    \item  $\Attn^{(t)}_k=\Theta\left(\frac{1}{K}\right)$;
    \item  $\Attn^{(t)}_1=\Omega\left(\frac{1}{K^{0.49}}\right)$;
        \item $1-\Attn^{(t)}_{1}-\Attn^{(t)}_{k}\geq \Omega(1)$.
%  \item $\Attn^{(t)}_1+\Attn^{(t)}_k=\Omega(1)$.
    \end{enumerate}
\end{lemma}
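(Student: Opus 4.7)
The plan is to compute each attention score directly from the softmax formula and plug in the uniform bounds on $A_k^{(t)}$, $B_{k,1}^{(t)}$, and $B_{k,n}^{(t)}$ (for $n\neq k,1$) guaranteed by \Cref{hpk1}, together with the token-count bounds from $\pit\in\esi$. Specifically, writing
\[
\Attn^{(t)}_k \;=\; \frac{|\cV_k|\exp(A_k^{(t)})}{|\cV_1|\exp(B_{k,1}^{(t)}) + \sum_{n\neq k,1}|\cV_n|\exp(B_{k,n}^{(t)}) + |\cV_k|\exp(A_k^{(t)})},
\]
and analogous expressions for $\Attn^{(t)}_1$ and $\sum_{n\neq k,1}\Attn^{(t)}_n$, everything reduces to an order-of-magnitude computation once we control each exponential and each cardinality.

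The key inputs I would extract first: from \Cref{hpk1}(a), $A_k^{(t)}\le O(\log(K)/K^{0.02})=o(1)$, so $\exp(A_k^{(t)})=\Theta(1)$. From \Cref{hpk1}(b), $\exp(B_{k,1}^{(t)})\in[K^{-0.49},1]$. From \Cref{hpk1}(c), $|B_{k,n}^{(t)}|=O((A_k^{(t)}-B_{k,1}^{(t)})/K)=O(\log(K)/K)=o(1)$ for $n\neq k,1$, so $\exp(B_{k,n}^{(t)})=\Theta(1)$. From $\pit\in\esi$, $|\cV_1|=\Theta(N)$ and $|\cV_n|=\Theta(N/K)$ for each $n>1$ (in particular for $n=k$).

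The main step is then to show the softmax denominator has size $\Theta(N)$. The contribution from $n\neq k,1$ is $\Theta(K)\cdot\Theta(N/K)\cdot\Theta(1)=\Theta(N)$; the $|\cV_1|\exp(B_{k,1}^{(t)})$ term is at most $O(N)$ and hence cannot change the order; and the $|\cV_k|\exp(A_k^{(t)})=\Theta(N/K)$ term is strictly lower order. With this $\Theta(N)$ denominator in hand, the three claims fall out: $\Attn^{(t)}_k=\Theta(N/K)/\Theta(N)=\Theta(1/K)$; $\Attn^{(t)}_1\ge \Omega(N\cdot K^{-0.49})/\Theta(N)=\Omega(K^{-0.49})$; and $1-\Attn^{(t)}_1-\Attn^{(t)}_k=\sum_{n\neq k,1}\Attn^{(t)}_n=\Theta(N)/\Theta(N)=\Theta(1)\ge\Omega(1)$.

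The only mildly subtle point I anticipate is keeping the constants track-able: the bound on $|B_{k,n}^{(t)}|$ is only $o(1)$ because $A_k^{(t)}-B_{k,1}^{(t)}=O(\log K)$ is divided by $K$, so one must verify that the induction hypothesis (a)--(c) indeed gives $A_k^{(t)}-B_{k,1}^{(t)}=O(\log K)$ throughout Phase I (it does, since $A_k^{(t)}\le O(\log(K)/K^{0.02})$ and $B_{k,1}^{(t)}\ge -0.49\log K$). Everything else is a bookkeeping exercise on the softmax denominator; no delicate cancellation or concentration argument is required beyond what is already packaged into the event $\esi$.
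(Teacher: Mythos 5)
Your proposal is correct and follows essentially the same route as the paper: write each attention score via the softmax ratio, plug in the bounds $\exp(A_k^{(t)})=\Theta(1)$, $\exp(B_{k,1}^{(t)})\in[K^{-0.49},1]$, $\exp(B_{k,n}^{(t)})=\Theta(1)$ for $n\ne k,1$ from \Cref{hpk1}, and the cardinality bounds from $\esi$, then compare numerator and denominator. The only cosmetic difference is in the third claim, where you bound $\sum_{n\ne k,1}\Attn^{(t)}_n$ directly as $\Theta(N)/\Theta(N)$, whereas the paper first shows $1-\Attn^{(t)}_1\ge\Omega(1)$ and then subtracts $\Attn^{(t)}_k=\Theta(1/K)$; these are equivalent.
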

\begin{proof}
    Since $\xq=v_k$, and $|\cV_k|>0$ for $\pit\in\esi$, we have
    \begin{align*}
\Attn^{(t)}_{k}&=\frac{|\cV_k|e^{{v_k}^{\top}Q^{(t)}v_{k}}}{\sum_{j\in [N]}e^{{E^{x}_{j}}^{\top}Q^{(t)}v_{k}}}\\
&=\frac{|\cV_k|\exp(A^{(t)}_k)}{\sum_{m\not=k}|\cV_m|\exp(B^{(t)}_{k,m})+|\cV_k|\exp(A^{(t)}_k)}\\
        &=\frac{1}{\sum_{m\not=k}\frac{|\cV_m|}{|\cV_k|}\exp(B^{(t)}_{k,m}-A^{(t)}_k)+1}.
    \end{align*}

    By \Cref{hpk1},  we have 
    \begin{itemize}
        \item  for $m\not=1,k$, $e^{-O(\frac{\log(K)}{K^{0.02}})}\leq \exp(B^{(t)}_{k,m}-A^{(t)}_k)\leq e^{O(\frac{\log(K)}{K})}$;
        \item for $m=1$,   $e^{\left(-0.49\log(K)-O(\frac{\log(K)}{K^{0.02}})\right)}\leq \exp(B^{(t)}_{k,1}-A^{(t)}_k)\leq e^{0}$.
    \end{itemize}
   % $e^{-O(\frac{\log(K)}{K^{0.02}})}\leq \exp(B^{(t)}_{k,m}-A^{(t)}_k)\leq e^{O(\frac{\log(K)}{K})}$; for $m=1$,
Combining with the fact that $\sum_{m\not=k}\frac{|\cV_m|}{|\cV_k|}=\Theta(K)$ for $\pit\in\esi$, we have
     \begin{align*}
\Attn^{(t)}_{k} %&\geq\frac{1}{e^{O(\frac{\log(K)}{K})}(\frac{N-|\cV_1|}{|\cV_k|}-1)+e^{0}\cdot \frac{|\cV_1|}{|\cV_k|}+1}\geq %\frac{1}{e^{O(\frac{\log(K)}{K})}(K/L_k-1)+1}=
\geq \Omega\left(\frac{1}{K}\right).
    \end{align*}
  %  where the second inequality follows since the fact that 
On the other hand, since $\frac{N-|\cV_1|}{|\cV_k|}$ is still $\Theta(K)$, we have
\begin{align*}
    &\Attn^{(t)}_{k} \leq\frac{1}{e^{-O(\frac{\log(K)}{K^{0.02}})}(\frac{N-|\cV_1|}{|\cV_k|}-1)+e^{\left(-0.49\log(K)-O(\frac{\log(K)}{K^{0.02}})\right)}\frac{|\cV_1|}{|\cV_k|}+1}\leq O\left(\frac{1}{K}\right).%\leq \frac{1}{\Omega(K^{\frac{1}{2}})+1}.\\
        \end{align*}
By similar analysis, we have %Then we turn to $\Attn_1^{(t)}$. Similarly, 
            \begin{align*}
                \Attn^{(t)}_{1}
                &=\frac{|\cV_1|\exp(B^{(t)}_{k,1})}{\sum_{m\not=k}|\cV_m|\exp(B^{(t)}_{k,m})+|\cV_k|\exp(A^{(t)}_k)}\\
                        &=\frac{1}{\sum_{m\not=1,k}\frac{|\cV_m|}{|\cV_k|}\exp(B^{(t)}_{k,m}-B^{(t)}_{k,1})+\frac{|\cV_k|}{|\cV_1|}\exp(A^{(t)}_{k}-B^{(t)}_{k,1})+1}.
                    \end{align*}
                    By \Cref{hpk1}, 
                    \begin{itemize}
                        \item   for $m\not=1,k$, we have $e^{0}\leq \exp(B^{(t)}_{k,m}-B^{(t)}_{k,1})\leq e^{0.49\log(K)+O(\frac{\log(K)}{K})}$;
                        \item 
                     $e^{0}\leq \exp(A^{(t)}_k- B^{(t)}_{k,1})\leq e^{0.49\log(K)+O(\frac{\log(K)}{K})}$.
                    \end{itemize}
                Hence,
                    % for $m\not=1,k$, we have $e^{0}\leq \exp(B^{(t)}_{k,m}-B^{(t)}_{k,1})\leq e^{0.49\log(K)+O(\frac{\log(K)}{K})}$; %  thus
                     \begin{align*}
                \Attn^{(t)}_{1} &\geq\frac{1}{e^{0.49\log(K)+O(\frac{\log(K)}{K})}(\frac{N}{|\cV_1|}-1)+1}\geq %\frac{1}{e^{O(\frac{\log(K)}{K})}(K/L_k-1)+1}=
                \Omega\left(\frac{1}{K^{0.49}}\right),
                    \end{align*}
                    where the last inequality holds since $\frac{N}{|\cV_1|}=\Theta(1)$ for $\pit\in\esi$.
                   % where the second inequality follows since the fact that $P\in\cE^*$.

                    % On the other hand, 
                    % \begin{align*}
                    %   1-  \Attn^{(t)}_{1} &\geq1-\frac{1}{e^{\log(K)-O(1)}(\frac{N}{|\cV_1|}-1)+1}\geq 
                    %     \Omega(1)
                    % \end{align*}
        
                    % Thus,
                    % \begin{align*}
                    %    1- \Attn^{(t)}_{1} \geq \Omega(1).
                    %         \end{align*}
                            For the last statement, 
                            \begin{align*}
        & 1-\Attn^{(t)}_{1}\geq    \frac{e^{0}(\frac{N}{|\cV_1|}-1)}{e^{0}(\frac{N}{|\cV_1|}-1)+1}\geq \Omega(1).
                            \end{align*}
                          
                            Combining with the fact that $\Attn_{k}^{(t)}=\Theta\left(\frac{1}{K}\right)$, we have 
                            \begin{align*}
                               1- \Attn^{(t)}_{k} -\Attn^{(t)}_{1} \geq \Omega(1).
                                    \end{align*}
   % where $0<C_k<1$ is some constant. Thus $1-\Attn_k=\Omega(1)$.
\end{proof}
\begin{lemma}\label{lemk1t2}
    If \Cref{hpk1} holds at iteration $0\leq t\leq T_{1,k}$,  for the prompt satisfying $\xq=v_k$ and $\pit\in\esi$, the following holds 
 $$\Attn^{(t)}_n=O\left(\frac{1-\Attn^{(t)}_{k}-\Attn^{(t)}_{1}}{K}\right).$$
\end{lemma}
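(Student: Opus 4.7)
The plan is to mirror the structure of the proof of \Cref{lemk1t1}: express both $\Attn^{(t)}_n$ and the residual mass $1-\Attn^{(t)}_k-\Attn^{(t)}_1$ as explicit softmax ratios, then divide them so that the common partition-function denominator cancels. Concretely, since $\sum_{m\neq k,1}\Attn^{(t)}_m = 1-\Attn^{(t)}_k-\Attn^{(t)}_1$, I would first write
$$\frac{\Attn^{(t)}_n}{1-\Attn^{(t)}_k-\Attn^{(t)}_1} \;=\; \frac{|\cV_n|\exp(B^{(t)}_{k,n})}{\sum_{m\neq k,1} |\cV_m|\exp(B^{(t)}_{k,m})} \;=\; \frac{1}{\sum_{m\neq k,1} \frac{|\cV_m|}{|\cV_n|}\exp\!\bigl(B^{(t)}_{k,m}-B^{(t)}_{k,n}\bigr)}.$$
It then suffices to show the denominator is $\Omega(K)$.

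Next I would control the exponentials via \Cref{hpk1}. Parts (a) and (b) give $A_k^{(t)}-B_{k,1}^{(t)}\leq 0.49\log(K)+O(\log(K)/K^{0.02}) = O(\log(K))$, and then part (c) yields $|B_{k,m}^{(t)}| = O((A_k^{(t)}-B_{k,1}^{(t)})/K) = O(\log(K)/K)$ for every $m\neq k,1$. Consequently $|B_{k,m}^{(t)}-B_{k,n}^{(t)}| = O(\log(K)/K)$, so $\exp(B_{k,m}^{(t)}-B_{k,n}^{(t)}) = \Theta(1)$ uniformly in $m,n \neq k,1$.

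Finally I would invoke $\pit\in\esi$, which from \Cref{app:lem:prob-im} gives $|\cV_m|\in[\Li_m N/K,\Ui_m N/K] = \Theta(N/K)$ for all $m\geq 2$ with constants bounded away from zero and infinity, so $|\cV_m|/|\cV_n|=\Theta(1)$. The denominator is therefore a sum of $K-2$ terms each of order $\Theta(1)$, hence $\Theta(K)$, and we conclude $\Attn^{(t)}_n/(1-\Attn^{(t)}_k-\Attn^{(t)}_1) = \Theta(1/K) = O(1/K)$, which is the claim.

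I do not anticipate a genuine obstacle: the proof is essentially algebraic, and the only subtlety is to check that the induction hypothesis is sharp enough. This is the role of the $1/K$ factor in \Cref{hpk1}(c), which ensures $\exp(B_{k,m}^{(t)}-B_{k,n}^{(t)})$ stays at constant order rather than drifting as $A_k^{(t)}$ or $B_{k,1}^{(t)}$ grow in magnitude with $\log(K)$. With that in hand, both numerator and denominator are tracked at constant order (per slot), and the $K-2$ count of slots produces the claimed $1/K$ bound without any additional probabilistic argument beyond membership in $\esi$.
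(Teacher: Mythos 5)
Your proof is correct and follows essentially the same route as the paper's: write $\Attn^{(t)}_n/(1-\Attn^{(t)}_k-\Attn^{(t)}_1)$ as $1/\sum_{m\neq k,1}\frac{|\cV_m|}{|\cV_n|}\exp(B^{(t)}_{k,m}-B^{(t)}_{k,n})$, then use \Cref{hpk1}(c) to bound $|B^{(t)}_{k,m}-B^{(t)}_{k,n}|=O(\log(K)/K)$ (hence each exponential is $\Theta(1)$) and $\pit\in\esi$ to bound $|\cV_m|/|\cV_n|=\Theta(1)$ for $m,n\geq 2$, so the sum over the $K-2$ indices is $\Theta(K)$. Your closing remark correctly identifies why the $1/K$ factor in \Cref{hpk1}(c) matters, and your $\Theta(1/K)$ conclusion is in fact the two-sided strengthening of the stated $O(1/K)$ bound, which is consistent with what the paper's proof actually establishes.
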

\begin{proof}
     Since $\xq=v_k$, %and notice that $|\cV_k|>0$ for $\pit\in\esi$, 
     we have
    \begin{align*}
\Attn^{(t)}_{n}&=\frac{|\cV_n|e^{{v_n}^{\top}Q^{(t)}v_{k}}}{\sum_{j\in [N]}e^{{E^{x}_{j}}^{\top}Q^{(t)}v_{k}}}\\&=\frac{|\cV_n|\exp(B^{(t)}_{k,n})}{\sum_{m\not=k}|\cV_m|\exp(B^{(t)}_{k,m})+|\cV_k|\exp(A^{(t)}_k)}.
 %       &=\frac{1}{\sum_{m\not=k}\frac{|\cV_m|}{|\cV_n|}\exp(B^{(t)}_{k,m}-B^{(t)}_{k,n})+\frac{|\cV_k|}{|\cV_n|}\exp(A^{(t)}_k-B^{(t)}_{k,n})}
    \end{align*}
    By \Cref{hpk1}, 
    for $m,n\not=1$, $$e^{-O(\frac{\log(K)}{K})}\leq \exp(B^{(t)}_{k,m}-B^{(t)}_{k,n})\leq e^{O(\frac{\log(K)}{K})}.$$ 
    Combining with the fact that $\frac{|\cV_m|}{|\cV_n|}=\Theta(1)$ for $\pit\in\esi$, we have
%     \begin{align*}
% \Attn^{(t)}_{k} &\leq\frac{1}{e^{-O(\frac{\log(K)}{K})}(\frac{N}{|\cV_n|}-\frac{|\cV_k|}{|\cV_n|})+e^{-O(\frac{\log(K)}{K})}\frac{|\cV_k|}{|\cV_n|}}\leq \frac{U_ne^{O(\frac{\log(K)}{K})}}{ K}=O\left(\frac{1}{K}\right).
%     \end{align*}
    %For the second claim, 
    \begin{align*}
       \frac{\Attn^{(t)}_n}{1-\Attn^{(t)}_{k}-\Attn^{(t)}_{1}} &=\frac{|\cV_n|\exp(B^{(t)}_{k,n})}{\sum_{m\not=1,k}|\cV_m|\exp(B^{(t)}_{k,m})}\\&= \frac{1}{\sum_{m\not=k,1}\frac{|\cV_m|}{|\cV_n|}\exp(B^{(t)}_{k,m}-B^{(t)}_{k,n})%+\frac{|\cV_1|}{|\cV_n|}\exp(B^{(t)}_{k,1}-B^{(t)}_{k,n})
       }%=\Theta\left(\frac{1}{K}\right).\\
       \\
       &\leq O\left(\frac{1}{K}\right).
    \end{align*}
\end{proof}

\subsubsection{Controlling  Gradient Updates in Phase I}
\begin{lemma}\label{pk1a}
    Given $k>1$,  if \Cref{hpk1} holds at iteration $0\leq t\leq T_{1,k}$, then $\alpha_k^{(t)}\geq0$ and satisfies 
    \begin{align*}
        \alpha_{k}^{(t)}= \Theta\left(\frac{1}{K^2}\right).
    \end{align*}
\end{lemma}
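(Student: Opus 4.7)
The non-negativity $\alpha_k^{(t)}\geq 0$ is immediate from the expression in \Cref{app:lem:gd}, since every factor in the expectation — the indicator, $\Attn_k^{(t)}$, and the quadratic term $\sum_{m\not=k}(\Attn_m^{(t)})^2 + (1-\Attn_k^{(t)})^2$ — is non-negative. The substantive work is to pin down the order of magnitude.

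The plan is to split the expectation according to whether $\pit\in\esi$ or not, and show that the ``good event'' contributes $\Theta(1/K^2)$ while the ``bad event'' contributes a negligible $\ll 1/K^2$ term. Writing
\begin{align*}
\alpha_k^{(t)} = \underbrace{\mathbb{E}\!\left[\mathbf{1}\{\xq=v_k,\pit\in\esi\}\,\Attn_k^{(t)}\!\left(\sum_{m\not=k}(\Attn_m^{(t)})^2+(1-\Attn_k^{(t)})^2\right)\right]}_{=:I_1} + \underbrace{(\cdots)}_{=:I_2},
\end{align*}
independence of $\xq$ and $\pit$ gives $I_2 \leq p_k\cdot 2\cdot\mathbb{P}(\pit\not\in\esi)\leq 6p_k\exp(-\ci^2 N/(25K^2))$ by \Cref{app:lem:prob-im}, which is super-polynomially smaller than $1/K^2$ since $N\gg K^3$. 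So both bounds will be determined by $I_1$.

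For the lower bound on $I_1$, I would throw away the $\sum_{m\not=k}(\Attn_m^{(t)})^2$ summand and keep only $(1-\Attn_k^{(t)})^2$. Conditionally on $\{\xq=v_k\}\cap\{\pit\in\esi\}$, \Cref{lemk1t1} gives $\Attn_k^{(t)}=\Theta(1/K)$ and $1-\Attn_k^{(t)}\geq 1-\Attn_k^{(t)}-\Attn_1^{(t)}\geq \Omega(1)$, so the integrand is at least $\Omega(1/K)$. Combining with $p_k=\Theta(1/K)$ and $\mathbb{P}(\pit\in\esi)\geq 1-o(1)$ yields $I_1\geq \Omega(1/K^2)$.

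For the upper bound on $I_1$, I would use $\sum_{m\not=k}(\Attn_m^{(t)})^2\leq \sum_{m\not=k}\Attn_m^{(t)}\leq 1$ and $(1-\Attn_k^{(t)})^2\leq 1$ to bound the parenthesized factor by $2$, and then apply the upper bound $\Attn_k^{(t)}\leq O(1/K)$ from \Cref{lemk1t1} together with $p_k=\Theta(1/K)$ to obtain $I_1\leq O(1/K^2)$. Summing the two contributions gives $\alpha_k^{(t)}=\Theta(1/K^2)$.

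I do not expect a serious obstacle here: all the hard probabilistic and analytic work is already encapsulated in \Cref{lemk1t1} (which pins down $\Attn_k^{(t)}$ to be $\Theta(1/K)$ on the good event) and in \Cref{app:lem:prob-im} (which discards the bad event). The only point that requires a moment of care is to make sure the constant lower bound $1-\Attn_k^{(t)}\geq \Omega(1)$ is available even though $\Attn_1^{(t)}$ may be as large as $\Omega(1)$; but the conclusion $1-\Attn_1^{(t)}-\Attn_k^{(t)}\geq \Omega(1)$ in \Cref{lemk1t1} already covers this.
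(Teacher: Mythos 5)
Your proposal is correct and follows essentially the same route as the paper: split on $\{\pit\in\esi\}$ versus its complement, discard the complement via \Cref{app:lem:prob-im}, and on the good event read off $\Attn_k^{(t)}=\Theta\bigl(\frac1K\bigr)$ and $1-\Attn_k^{(t)}\geq\Omega(1)$ from \Cref{lemk1t1}. The only cosmetic difference is that for the upper bound the paper first refines the bracket via \Cref{lemk1t2} whereas you simply use the crude bound $\sum_{m\not=k}(\Attn_m^{(t)})^2+(1-\Attn_k^{(t)})^2\leq 2$, which is adequate here since $\Attn_k^{(t)}\leq O\bigl(\frac1K\bigr)$ and $p_k=\Theta\bigl(\frac1K\bigr)$ already produce the $O\bigl(\frac1{K^2}\bigr)$ rate.
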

\begin{proof}
    By the gradient expression in \Cref{app:lem:gd}, we have 
    \begin{align*}
 \alpha_{k}^{(t)}&=
\mathbb{E}\left[\mathbf{1}\{\xq=v_k\}\Attn^{(t)}_{k }\cdot \left(
 \sum_{m\not= k}{\Attn_{m}^{(t)}}^2+(1-\Attn^{(t)}_k)^2\right)\right]\\
&=\mathbb{E}\left[\mathbf{1}\{\xq=v_k\cap\esi\}\Attn^{(t)}_{k }\cdot \left(
 \sum_{m\not= k}{\Attn_{m}^{(t)}}^2+(1-\Attn^{(t)}_k)^2\right)\right]\\
 &\quad+\mathbb{E}\left[\mathbf{1}\{\xq=v_k\cap{\esi}^c\}\Attn^{(t)}_{k }\cdot \left(
 \sum_{m\not= k}{\Attn_{m}^{(t)}}^2+(1-\Attn^{(t)}_k)^2\right)\right]\\
 &\stackrel{(a)}{\leq} p_k\cdot\mathbb{P}(\pit\in\esi)\mathbb{E}\left[\Attn^{(t)}_{k }\cdot \left(
 \sum_{m\not= k}{\Attn_{m}^{(t)}}^2+(1-\Attn^{(t)}_k)^2\right)\bigg|\{\xq=v_k\} \cap \esi\right]\\
 &\quad + 2 p_k\cdot \mathbb{P}(\pit\in{\esi}^c)\\
&\stackrel{(b)}{\leq} p_k\cdot\mathbb{E}\left[\Attn^{(t)}_{k }\cdot\left({O\left(\frac{1}{K}\right)+ \Attn_{1}^{(t)}}^2+ (1-\Attn^{(t)}_k)^2\right)\bigg|\{\xq=v_k\} \cap \esi\right]\\
&\quad + 2 p_k\cdot \mathbb{P}(\pit\in{\esi}^c)\\
&\stackrel{(c)}{\leq} O\left(\frac{1}{K^2}\right),
\end{align*}
where $(a)$ follows from the fact that $\xq$ and $\pit$ are independently sampled, and $\Attn^{(t)}_{k}\cdot( \sum_{m\not= k}{\Attn_{m}^{(t)}}^2+(1-\Attn^{(t)}_k)^2)$ is upper-bounded by $2$ on the event $\{\pit\in{\esi}^{c}\}$, $(b)$ follows by applying \Cref{lemk1t2} to $\Attn^{(t)}_{m}$ for $m\not=1,k$, and $(c)$ follows from \Cref{lemk1t1}, our choice of  $p_k$, \Cref{app:lem:prob-im}, and the evident bound:
% where $(a)$ follows since $\xq$ and $\pit$ are independently sampled, and we simply bound $\Attn^{(t)}_{k }\cdot(
%  \sum_{m\not= k}{\Attn_{m}^{(t)}}^2+(1-\Attn^{(t)}_k)^2)$ by $2$; $(b)$ holds by applying \Cref{lemk1t2} to $\Attn^{(t)}_{m}$ for $m\not=1,k$; $(c)$ follows \Cref{lemk1t1}, our choice of $p_k$, \Cref{app:lem:prob-im}, and the fact that 
$$
3 \exp \left(-\frac{\ci^2 N}{25 K^2}\right)\ll O\left(\frac{1}{K}\right).
$$
Similarly, we can show that $\alpha_{k}^{(t)}\geq  \Omega\left(\frac{1}{K^2}\right)$.
%where the last inequality follows from \Cref{prob},  \Cref{lemt1} and our choice of $p_k$.
\end{proof}
\begin{lemma}\label{pk1b}
    Given $k>1$,  if \Cref{hpk1} holds at iteration $0\leq t\leq T_{k,1}$, then $\beta_{k,1}^{(t)}<0$ satisfies 
    \begin{align*}
        |\beta_{k,1}^{(t)}|\geq \Omega\left(\frac{1}{K^{1.98}}\right).
    \end{align*}
\end{lemma}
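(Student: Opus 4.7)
\textbf{Proof plan for Lemma~\ref{pk1b}.} The plan is to start from the gradient expression in \Cref{app:lem:gd} and isolate the contribution on the high-probability event $\{\xq = v_k\} \cap \{\pit \in \esi\}$, showing that the bracketed factor $\sum_{m\neq k}{\Attn^{(t)}_m}^2 - \Attn^{(t)}_1 - \Attn^{(t)}_k(1-\Attn^{(t)}_k)$ is sufficiently negative, and that the residual contribution from the complement ${\esi}^c$ is exponentially small and hence negligible. Under \Cref{hpk1}, \Cref{lemk1t1} and \Cref{lemk1t2} give us the three key quantitative facts I need on this event: $\Attn^{(t)}_k = \Theta(1/K)$, $\Attn^{(t)}_1 = \Omega(1/K^{0.49})$ together with $1 - \Attn^{(t)}_1 - \Attn^{(t)}_k \geq \Omega(1)$, and $\Attn^{(t)}_n = O(1/K)$ for $n \neq 1,k$.

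The first main step is to bound the bracketed factor from above. I would split $\sum_{m\neq k}{\Attn^{(t)}_m}^2 = (\Attn^{(t)}_1)^2 + \sum_{m\neq 1,k}{\Attn^{(t)}_m}^2$, use $\sum_{m\neq 1,k}{\Attn^{(t)}_m}^2 \leq \max_{m\neq 1,k}\Attn^{(t)}_m \cdot \sum_{m\neq 1,k}\Attn^{(t)}_m \leq O(1/K)$, and combine to obtain
$$\sum_{m\neq k}{\Attn^{(t)}_m}^2 - \Attn^{(t)}_1 - \Attn^{(t)}_k(1-\Attn^{(t)}_k) \leq \Attn^{(t)}_1\bigl(\Attn^{(t)}_1 - 1\bigr) + O\!\left(\tfrac{1}{K}\right).$$
Since $1 - \Attn^{(t)}_1 \geq \Omega(1)$ and $\Attn^{(t)}_1 \geq \Omega(1/K^{0.49})$, the first term is at most $-\Omega(1/K^{0.49})$, which dominates the $O(1/K)$ error. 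Thus the bracket is $\leq -\Omega(1/K^{0.49})$ on the good event.

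The second main step is to plug this into \Cref{app:lem:gd} and multiply by the prefactor $\Attn^{(t)}_1 \geq \Omega(1/K^{0.49})$ inside the expectation, giving a conditional integrand of order $-\Omega(1/K^{0.98})$. Multiplying by $p_k = \Theta(1/K)$ and $\mathbb{P}(\pit\in\esi) \geq 1 - 3\exp(-\Omega(N/K^2))$ yields a contribution of $-\Omega(1/K^{1.98})$. The contribution from $\{\pit\in{\esi}^c\}$ is bounded in absolute value by $2 p_k\, \mathbb{P}(\pit\in{\esi}^c) \leq O\bigl(\frac{1}{K}\bigr)\exp(-\Omega(N/K^2))$ (the bracket has magnitude at most $2$), which is much smaller than $1/K^{1.98}$ under the assumption $N \geq \poly(K)$ with $K^3 \ll N$. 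Combining the two contributions yields $\beta_{k,1}^{(t)} \leq -\Omega(1/K^{1.98})$, as claimed.

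The main delicacy, and the part I expect to require the most care, is justifying that the positive piece $(\Attn^{(t)}_1)^2$ does not cancel the negative piece $-\Attn^{(t)}_1$; this is exactly where the bound $\Attn^{(t)}_1 \leq 1 - \Omega(1)$ from \Cref{lemk1t1} is essential, and where the $0.49$ exponent in the definition of $T_{1,k}$ is chosen so that $\Attn^{(t)}_1$ stays lower-bounded by $\Omega(1/K^{0.49})$ throughout the whole phase. Once the bracket is shown to be of order $-\Omega(1/K^{0.49})$, the remaining steps are routine arithmetic combining the three polynomially small factors $p_k$, $\Attn^{(t)}_1$, and the bracket.
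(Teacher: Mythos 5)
Your proof is correct and follows essentially the same route as the paper's: restrict to the high-probability event $\{\xq=v_k\}\cap\{\pit\in\esi\}$, show the bracketed factor is at most $-\Omega(1/K^{0.49})$ driven by the $-\Attn^{(t)}_1(1-\Attn^{(t)}_1)$ piece, multiply by the prefactor $\Attn^{(t)}_1\geq\Omega(1/K^{0.49})$ and by $p_k=\Theta(1/K)$, and absorb the exponentially small contribution from $\esi^c$. The only cosmetic difference is that you drop $-\Attn^{(t)}_k(1-\Attn^{(t)}_k)\leq 0$ and bound $\sum_{m\neq 1,k}(\Attn^{(t)}_m)^2$ additively by $O(1/K)$, whereas the paper folds everything into the product form $-(1-\Attn^{(t)}_1-\Attn^{(t)}_k)(\Attn^{(t)}_1+\Attn^{(t)}_k-\max_{m\neq 1,k}\Attn^{(t)}_m)$; both give the same order.
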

\begin{proof}
    We first derive % conditioning on the event $\{\xq=v_k\} \cap \{\pit\in\esi\}$,%by \Cref{lem2t1,lem2t2}, we have $\Attn_k^{(t)}=\Omega(1)$, $\max_{m\not=k}\Attn_{m}=O\left(\frac{1}{K}\right)$, thus 
% \begin{align*}
%     \frac{\Attn_k}{\Attn_n}=\exp(A_{k}^{(T_{1,k})}-B^{(T_{1,k})}_{n,k})\frac{|\cV_k|}{|\cV_n|}\geq \exp(A_{k}^{(T_{1,k})}-B^{(T_{1,k})}_{n,k}) \cdot\frac{L_k}{U_n}\geq 1
% \end{align*}
\begin{align} \sum_{m\not= k}&{\Attn^{(t)}_{m}}^2-\Attn^{(t)}_{1} -\Attn^{(t)}_{k}(1-\Attn^{(t)}_k)\nonumber\\
&= \sum_{m\not= 1,k}{\Attn^{(t)}_{m}}^2-\Attn^{(t)}_{1}(1-\Attn^{(t)}_{1}) -\Attn^{(t)}_{k}(1-\Attn^{(t)}_k)\nonumber\\&\leq  \max_{m\not=1,k}\Attn^{(t)}_{m}(1-\Attn^{(t)}_{1}-\Attn^{(t)}_{k})-\Attn^{(t)}_{1}(1-\Attn^{(t)}_{1})-\Attn^{(t)}_{k}(1-\Attn^{(t)}_k)\nonumber\\
&\leq -(1-\Attn^{(t)}_k-\Attn^{(t)}_1)(\Attn^{(t)}_{1}+\Attn^{(t)}_{k}-\max_{m\not=1,k}\Attn^{(t)}_{m}).\label{app:eq:p1:beta}%\\
%&\leq -\Omega (\frac{1-\Attn_k-\Attn_1}{K}). 
\end{align}
Therefore,  by the gradient expression in \Cref{app:lem:gd}, we have 
\begin{align*}
\beta_{k,1}^{(t)}\leq& \mathbb{E}\left[\mathbf{1}\{\xq=v_k\cap \pit\in\esi\}\Attn^{(t)}_{1}\cdot \left(
  \sum_{m\not= k}{\Attn^{(t)}_{m}}^2-\Attn^{(t)}_{1} -\Attn^{(t)}_{k}(1-\Attn^{(t)}_k)\right)\right]\\&+\mathbb{E}\left[\mathbf{1}\{\xq=v_k\cap \pit\in {\esi}^{c}\}\Attn^{(t)}_{1}\cdot \left(
  \sum_{m\not= k}{\Attn^{(t)}_{m}}^2\right)\right]\\
  \stackrel{(a)}{\leq}&p_k\cdot\mathbb{P}(\pit\in{\esi}^{c}) + p_k\cdot\mathbb{P}(\pit\in\esi) \\
  & \qquad \cdot \mathbb{E}\left[-\Omega(\frac{(\Attn^{(t)}_{1}+\Attn^{(t)}_{k}-\max_{m\not=1,k}\Attn^{(t)}_{m})}{K^{0.49}})\bigg|\{\xq=v_k\} \cap \esi\right]\\
{\leq}& p_k\cdot \left(-\Omega\left(\frac{1}{K^{0.98}}\right)\right)+3 p_k\exp \left(-\frac{\ci^2 N}{25 K^2}\right)\\
=&-\Omega\left(\frac{1}{K^{1.98}}\right),
\end{align*}
where $(a)$ follows from \cref{app:eq:p1:beta} and \Cref{lemk1t1},
 and the last equality holds since%and our choice of $p_k$.
\begin{align*}
\frac{1}{K^{0.98}}\gg  \exp \left(-\frac{\ci^2 N}{25 K^2}\right).
\end{align*}
% Moreover, following the similar analysis from \Cref{p1b}, we have 
% \begin{align*}
%  - \beta_{k,n}^{(t)}&\leq p_k\mathbb{E}\left[\Attn^{(t)}_{n}\cdot \left(
% \Attn^{(t)}_{n} +\Attn^{(t)}_{k}(1-\Attn^{(t)}_k)\right)\mid\{\xq=v_k\} \cap \cE^*\right]+p_k\mathbb{P}({\cE^*}^{c})\\
% &\leq p_k\mathbb{E}\left[\Theta(\frac{1-\Attn^{(t)}_{k}}{K})\cdot O\left(
% \Attn^{(t)}_{k}(1-\Attn^{(t)}_k)\right)\mid\{\xq=v_k\} \cap \cE^*\right]+6 p_k\exp \left(-\frac{c_0^2 N}{25 K^2}\right)\\
% &=p_k\mathbb{E}\left[O(\frac{\Attn^{(t)}_{k}(1-\Attn^{(t)}_{k})^2}{K})\mid\{\xq=v_k\} \cap \cE^*\right]+6 p_k\exp \left(-\frac{c_0^2 N}{25 K^2}\right)\\
% &\leq O(\frac{\alpha_{k}^{(t)}}{K}).
% \end{align*}
%     Combining with the previous lemma that $\alpha_{k}^{(t)}\geq \Omega\left(\frac{1}{K^2}\right)$, then 
%     $$
% |\beta_{n,k}^{(t)}|\leq \max\{O(\frac{\alpha_{k}^{(t)}}{K}),O\left(\frac{1}{K^3}\right)\}=O(\frac{\alpha_{k}^{(t)}}{K}).
%     $$
\end{proof}
\begin{lemma}\label{pk1c}
    If \Cref{hpk1} holds at iteration $0\leq t\leq T_{k,1}$,  for any $n\not=1,k$, $\beta_{k,n}^{(t)}$ satisfies 
    \begin{align*}
        |\beta_{k,n}^{(t)}|\leq O\left(\frac{\alpha_{k}^{(t)}-\beta_{k,1}^{(t)}}{K}\right).
    \end{align*}
\end{lemma}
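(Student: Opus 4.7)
The plan is to bound $\beta_{k,n}^{(t)}$ from above and below by splitting the bracket in the gradient expression of \Cref{app:lem:gd} into an $(\Attn_1^{(t)})^{2}$ piece (which ties back to $|\beta_{k,1}^{(t)}|$) and smaller pieces of order at most $1/K^{3}$, all controlled by the identity $\Attn_n^{(t)}=\Theta(1/K)$ on $\{\xq=v_k\}\cap\esi$.

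First I decompose $\sum_{m\neq k}(\Attn_m^{(t)})^{2}=(\Attn_1^{(t)})^{2}+\sum_{m\neq 1,k}(\Attn_m^{(t)})^{2}$ inside the formula for $\beta_{k,n}^{(t)}$. For the upper bound, drop the non-positive terms $-\Attn_n^{(t)}-\Attn_k^{(t)}(1-\Attn_k^{(t)})$ and split the expectation over $\esi$ and $\esi^{c}$; the $\esi^{c}$ tail contributes at most $2p_{k}\,\mathbb{P}(\esi^{c})=\exp(-\Omega(\operatorname{poly}(K)))$, which is negligible compared to the target $\Omega(1/K^{2.98})$. On $\{\xq=v_k\}\cap\esi$, \Cref{lemk1t2} combined with $1-\Attn_1^{(t)}-\Attn_k^{(t)}=\Omega(1)$ from \Cref{lemk1t1} yields $\Attn_n^{(t)}=\Theta(1/K)$. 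The piece involving $\sum_{m\neq 1,k}(\Attn_m^{(t)})^{2}\leq \max_{m\neq 1,k}\Attn_m^{(t)}=O(1/K)$ then contributes $O(p_{k}/K^{2})=O(1/K^{3})=O(\alpha_k^{(t)}/K)$ by \Cref{pk1a}, while the $(\Attn_1^{(t)})^{2}$ piece evaluates to $\Theta\bigl(p_{k}\,\mathbb{E}[(\Attn_1^{(t)})^{2}\mid\xq=v_k,\esi]/K\bigr)$.

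To absorb the $(\Attn_1^{(t)})^{2}$ piece into $|\beta_{k,1}^{(t)}|/K$, I sharpen the lower bound implicit in the proof of \Cref{pk1b}: rewriting the $\beta_{k,1}^{(t)}$-bracket on $\{\xq=v_k\}\cap\esi$ as $-\Attn_1^{(t)}(1-\Attn_1^{(t)})-\Attn_k^{(t)}(1-\Attn_k^{(t)})+\sum_{m\neq 1,k}(\Attn_m^{(t)})^{2}$ and using $1-\Attn_1^{(t)}\geq\Omega(1)$, $\Attn_1^{(t)}\geq\Omega(1/K^{0.49})$, and $\sum_{m\neq 1,k}(\Attn_m^{(t)})^{2}=O(1/K)\ll\Attn_1^{(t)}$ give $-\beta_{k,1}^{(t)}\geq\Omega\bigl(p_{k}\,\mathbb{E}[(\Attn_1^{(t)})^{2}\mid\xq=v_k,\esi]\bigr)$. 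Dividing by $K$ yields $\beta_{k,n}^{(t)}\leq O\bigl((\alpha_k^{(t)}+|\beta_{k,1}^{(t)}|)/K\bigr)$. For the lower bound on $\beta_{k,n}^{(t)}$, drop instead the non-negative $\sum(\Attn_m^{(t)})^{2}$ term to get $-\beta_{k,n}^{(t)}\leq p_{k}\,\mathbb{E}[\Attn_n^{(t)}(\Attn_n^{(t)}+\Attn_k^{(t)}(1-\Attn_k^{(t)}))\mid\xq=v_k,\esi]+\text{tail}$; using $\Attn_n^{(t)},\Attn_k^{(t)}=\Theta(1/K)$ bounds this by $O(p_{k}/K^{2})=O(\alpha_k^{(t)}/K)$. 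Combining the two directions and using $-\beta_{k,1}^{(t)}=|\beta_{k,1}^{(t)}|$ finishes the claim.

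The main obstacle is the reverse-direction estimate $|\beta_{k,1}^{(t)}|\geq \Omega(p_{k}\,\mathbb{E}[(\Attn_1^{(t)})^{2}])$: one must cleanly separate the dominant $-\Attn_1^{(t)}(1-\Attn_1^{(t)})$ term from the $O(1/K)$ corrections uniformly over phase I, which crucially relies on the non-saturation bound $1-\Attn_1^{(t)}-\Attn_k^{(t)}\geq\Omega(1)$ from \Cref{lemk1t1} so that $\Attn_1^{(t)}(1-\Attn_1^{(t)})\geq\Omega(\Attn_1^{(t)})\gg 1/K$ throughout the phase.
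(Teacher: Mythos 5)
Your proof is correct and follows essentially the same route as the paper's: you split the bracket of the gradient expression into the $\bigl(\Attn_1^{(t)}\bigr)^2$ contribution (which you relate back to $|\beta_{k,1}^{(t)}|$) and an $O(1/K)$ remainder, you isolate the high-probability event $\esi$, you use $\Attn_n^{(t)}=O(1/K)$ to obtain the factor $1/K$, and you close the lower-bound direction with $\alpha_k^{(t)}=\Omega(1/K^2)$. The one place where you add value is that you actually derive the key inequality $|\beta_{k,1}^{(t)}|\geq\Omega\bigl(p_k\,\mathbb{E}[(\Attn_1^{(t)})^2\mid\{\xq=v_k\}\cap\esi]\bigr)$ from the bracket decomposition $-\Attn_1^{(t)}(1-\Attn_1^{(t)})+\sum_{m\neq1,k}(\Attn_m^{(t)})^2-\Attn_k^{(t)}(1-\Attn_k^{(t)})\leq-\Omega(\Attn_1^{(t)})$, whereas the paper simply asserts it "follows from Lemma~\ref{lemk1t2}," a citation that (on its face) only controls $\Attn_n^{(t)}$ and does not by itself justify the $\beta_{k,1}^{(t)}$ lower bound; your explicit use of $1-\Attn_1^{(t)}\geq\Omega(1)$ and $\Attn_1^{(t)}\geq\Omega(K^{-0.49})\gg K^{-1}$ from Lemma~\ref{lemk1t1} is exactly what is needed to make that step rigorous. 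One minor inaccuracy: Lemma~\ref{lemk1t2} gives only an upper bound $\Attn_n^{(t)}=O(1/K)$, not $\Theta(1/K)$ as you write in passing; this does not affect your argument since you only use the upper bound.
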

\begin{proof}
        By the gradient expression in  \Cref{app:lem:gd}, we have 
    \begin{align}
        \beta_{k,n}^{(t)}&\leq \mathbb{E}\left[\mathbf{1}\{\xq=v_k\}\Attn^{(t)}_{n}\cdot \left(
 \sum_{m\not= k}{\Attn^{(t)}_{m}}^2\right)\right]\label{k1I1}\\
 -\beta_{k,n}^{(t)}&\leq \mathbb{E}\left[\mathbf{1}\{\xq=v_k\}\Attn^{(t)}_{n}\cdot \left(
 \Attn^{(t)}_{n} +\Attn^{(t)}_{k}(1-\Attn^{(t)}_k)\right)\right]\label{k1I2}
    \end{align}
    We further upper-bound \cref{k1I1} as,  
    \begin{align*}
      \beta_{k,n}^{(t)}&\leq \mathbb{E}\left[\mathbf{1}\{\xq=v_k\cap \pit\in\esi\}\Attn^{(t)}_{n}\cdot \left(
        \sum_{m\not= k}{\Attn^{(t)}_{m}}^2\right)\right]\\
        &\quad +\mathbb{E}\left[\mathbf{1}\{\xq=v_k\cap \pit\in{\esi}^{c}\}\Attn^{(t)}_{n}\cdot \left(
        \sum_{m\not= k}{\Attn^{(t)}_{m}}^2\right)\right]\\
        &%\stackrel{(a)}
        {\leq}p_k\cdot\mathbb{P}(\pit\in\esi)\cdot\mathbb{E}\left[\Attn^{(t)}_{n}\cdot \left(
{\Attn^{(t)}_{1}}^2+O\left(\frac{1}{K}\right)\right)\mid\{\xq=v_k\} \cap \esi\right]\\
& \quad +p_k\cdot\mathbb{P}(\pit\in{\esi}^{c})\\
 &\stackrel{(a)}
 {\leq} %p_k\mathbb{E}\left[\Attn^{(t)}_{n}\cdot \left(
 %\max_{m\not= k}\Attn^{(t)}_{m}\right)\mid\{\xq=v_k\} \cap \cE^*\right]
 O\left(\frac{1}{K^3}\right)+O\left(\frac{|\beta_{k,1}^{(t)}|}{K}\right)+3 p_k\exp \left(-\frac{\ci^2 N}{25 K^2}\right)\\
 &{\leq}O\left(\frac{1}{K^3}\right)+O\left(\frac{|\beta_{k,1}^{(t)}|}{K}\right).
    \end{align*}
where $(a)$ follows from the following two observations from \Cref{lemk1t2}:
    \begin{align*}
|\beta_{k,1}^{(t)}|\geq p_k\cdot\mathbb{P}(\pit\in\esi)\cdot\mathbb{E}\left[\Omega \left(
{\Attn^{(t)}_{1}}^2\right)\bigg|\{\xq=v_k\} \cap \esi\right],
    \end{align*}
   % Consequently, $(a)$ is established based on this observation, and it is further noted from 
   and $\Attn_{n}^{(t)}\leq O\left(\frac{1}{K}\right)$.
    %the dominant term in $\beta_{k,1}^{(t)}$ is 
    % where $(a)$ follows the fact that $\Attn^{(t)}_{n}\leq 1$ and $ \sum_{m\not= k}{\Attn^{(t)}_{m}}^2\leq \max_{m\not= k}\Attn^{(t)}_{m}\cdot \sum_{m\not= k}{\Attn^{(t)}_{m}}\leq \max_{m\not= k}\Attn^{(t)}_{m}$; $(b)$ holds by \Cref{prob}; the last inequality follows from \Cref{lemk1t2} and our choice of $p_k$ and $N$.

    To further upper-bound \cref{k1I2}, we have 
    \begin{align*}
      -&\beta_{k,n}^{(t)}\\&\leq p_k\mathbb{E}\left[\Attn^{(t)}_{n}\cdot \left(
 \Attn^{(t)}_{n} +\Attn^{(t)}_{k}(1-\Attn^{(t)}_k)\right)\bigg|\{\xq=v_k\} \cap \esi\right]+p_k\cdot\mathbb{P}({\esi}^{c})\\
 &\stackrel{(a)}
 {\leq}p_k\cdot\mathbb{P}(\pit\in{\esi}^{c})+ p_k\cdot\mathbb{P}({\esi})\mathbb{E}\left[O\left(\frac{1-\Attn^{(t)}_{k}}{K}\right)\right.\\
 &\qquad\qquad\qquad\left.\cdot \left( O\left(\frac{1-\Attn^{(t)}_{k}}{K}\right)+
\Attn^{(t)}_{k}(1-\Attn^{(t)}_k)\right)\bigg|\{\xq=v_k\} \cap \esi\right]\\
%&\quad +p_k\cdot\mathbb{P}(\pit\in{\esi}^{c})\\
 &%\stackrel{(b)}
 {\leq}p_k\cdot\mathbb{P}(\esi)\mathbb{E}\left[O(\frac{\Attn^{(t)}_{k}(1-\Attn^{(t)}_{k})^2}{K})\bigg|\{\xq=v_k\} \cap \esi\right]+3p_k\exp \left(-\frac{\ci^2 N}{25 K^2}\right)\\
 &\leq O\left(\frac{\alpha_{k}^{(t)}}{K}\right),
    \end{align*}
    where $(a)$ follows from  \Cref{lemk1t2}, % and $(b)$ follows the fact that $\Attn^{(t)}_{k}\geq \Omega\left(\frac{1}{K}\right)$ by \Cref{lem1t1}.  
  and the last inequality follows from the analysis in the proof of \Cref{pk1a}, and from the fact that 
    \begin{align*}
     %  \alpha_{k}^{(t)}&\geq p_k\cdot\mathbb{P}(P\in\cE^*)\mathbb{E}\left[\Attn^{(t)}_{k }\cdot (1-\Attn^{(t)}_k)^2\mid\{\xq=v_k\} \cap \cE^*\right]\\
       \alpha_{k}^{(t)}&\geq \Omega\left(\frac{1}{K^2}\right)\gg 3\exp \left(-\frac{\ci^2 N}{25 K^2}\right).
    \end{align*}
    Thus, we obtain %the last inequality holds. Moreover, since $\alpha_{k}^{(t)}\geq \Omega\left(\frac{1}{K^2}\right)$, then
   % Combining with the previous lemma that $\alpha_{k}^{(t)}\geq \Omega\left(\frac{1}{K^2}\right)$, then 
    $$
|\beta_{k,n}^{(t)}|\leq O\left(\frac{\alpha_{k}^{(t)}-\beta_{k,1}^{(t)}}{K}\right).
    $$
\end{proof}

\subsubsection{End of Phase I}
\begin{lemma}
    Given $k\geq 2$, \Cref{hpk1} holds for all  $t\leq T_{1,k}=O(\frac{\log(K)K^{1.98}}{\eta})$, and at iteration $t=T_{1,k}+1$, we have 
    \begin{enumerate}[label={\alph*}.]
    \item $B_{k,1}^{(T_{1,k}+1)}\leq -0.49\log(K)$;
    \item $\Attn_{1}=O\left(\frac{1}{K^{0.49}}\right)$ if $\xq=v_k$ and $\pit\in\esi$.
    % \item $|B_{k,n}^{(T_{1,k})}|=O(\frac{\log(K)}{K})$;
    % \item $\beta_{k,n}^{(T_{1,k})}<0$.
    \end{enumerate}
\end{lemma}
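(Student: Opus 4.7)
The plan is to prove the lemma by induction on $t$, using the gradient bounds (Lemmas \ref{pk1a}, \ref{pk1b}, \ref{pk1c}) that have already been established under Hypothesis~\ref{hpk1}. The base case $t=0$ is immediate from the zero initialization $A_k^{(0)} = B_{k,n}^{(0)} = 0$. For the inductive step, I would assume Hypothesis~\ref{hpk1} holds up to iteration $t-1 < T_{1,k}$ and show it propagates to $t$ by examining each of the three clauses in turn.

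For clause (a), monotonicity of $A_k^{(t)}$ follows directly from $\alpha_k^{(t-1)} \geq 0$. For the upper bound on $A_k^{(t)}$, I would accumulate the updates: since $\alpha_k^{(s)} = \Theta(1/K^2)$ at every step $s \leq t-1$ by Lemma~\ref{pk1a}, we get $A_k^{(t)} \leq \eta \cdot t \cdot O(1/K^2)$, and plugging in $t \leq T_{1,k} = O(\log(K)K^{1.98}/\eta)$ yields $A_k^{(t)} \leq O(\log(K)/K^{0.02})$ as required. For clause (b), monotonicity of $B_{k,1}^{(t)}$ follows from $\beta_{k,1}^{(t-1)} < 0$ (Lemma~\ref{pk1b}), and the lower bound $B_{k,1}^{(t)} \geq -0.49\log(K)$ is automatic from the definition of $T_{1,k}$ for $t \leq T_{1,k}$. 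For clause (c), I would telescope the bound $|\beta_{k,n}^{(s)}| \leq O((\alpha_k^{(s)} - \beta_{k,1}^{(s)})/K)$ from Lemma~\ref{pk1c} to obtain
\begin{align*}
|B_{k,n}^{(t)}| \;\leq\; \sum_{s=0}^{t-1} \eta |\beta_{k,n}^{(s)}| \;\leq\; O\!\left(\frac{1}{K}\right)\sum_{s=0}^{t-1} \eta(\alpha_k^{(s)} - \beta_{k,1}^{(s)}) \;=\; O\!\left(\frac{A_k^{(t)} - B_{k,1}^{(t)}}{K}\right),
\end{align*}
and the relation $B_{k,n}^{(t)} > B_{k,1}^{(t)}$ follows because the accumulated magnitude of $B_{k,n}^{(t)}$ is $O((A_k^{(t)} - B_{k,1}^{(t)})/K)$, which is much smaller than $|B_{k,1}^{(t)}|$ for $K$ large.

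To establish the concrete value $T_{1,k} = O(\log(K)K^{1.98}/\eta)$, I would use the per-step decrease rate $|\beta_{k,1}^{(s)}| \geq \Omega(1/K^{1.98})$ from Lemma~\ref{pk1b}: starting from $B_{k,1}^{(0)} = 0$, the threshold $-0.49\log(K)$ is crossed after at most $O(\log(K)K^{1.98}/\eta)$ iterations. At $t = T_{1,k}+1$, clause (a) gives $A_k^{(T_{1,k}+1)} \leq O(\log(K)/K^{0.02})$ and clause (c) gives $B_{k,n}^{(T_{1,k}+1)} \leq O(\log(K)/K^{1.02})$ for $n \neq 1,k$. For the attention bound, I would repeat the computation in Lemma~\ref{lemk1t1}: with $\xq=v_k$, $\pit \in \esi$, and $B_{k,1}^{(T_{1,k}+1)} \leq -0.49\log(K)$,
\begin{align*}
\Attn_1^{(T_{1,k}+1)} \;\leq\; \frac{(|\cV_1|/|\cV_k|)\exp(B_{k,1}^{(T_{1,k}+1)} - A_k^{(T_{1,k}+1)})}{1} \;\leq\; O(K) \cdot K^{-0.49} \cdot e^{-O(\log(K)/K^{0.02})} \;=\; O\!\left(K^{0.51}\right)\cdot K^{-1},
\end{align*}
which after accounting for the normalizer gives the claimed $O(1/K^{0.49})$.

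The main obstacle is clause (c), where the bound on $|B_{k,n}^{(t)}|$ must be expressed in terms of the current values of $A_k^{(t)}$ and $B_{k,1}^{(t)}$ (rather than merely in terms of iteration count). This requires verifying that the upper bound supplied by Lemma~\ref{pk1c} telescopes cleanly into $O((A_k^{(t)} - B_{k,1}^{(t)})/K)$, and that this aggregated bound remains strictly smaller in magnitude than $|B_{k,1}^{(t)}|$ so the strict inequality $B_{k,n}^{(t)} > B_{k,1}^{(t)}$ survives; the $1/K$ factor in Lemma~\ref{pk1c} is precisely what makes this go through for sufficiently large $K$.
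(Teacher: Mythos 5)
Your induction argument for clauses (a)--(c) of Hypothesis~\ref{hpk1} and the identification of $T_{1,k}=O(\log(K)K^{1.98}/\eta)$ match the paper's reasoning in substance: the paper bounds $A_k^{(t)}$ by comparing per-step rates, $A_{k}^{(t)}-A_{k}^{(0)}\leq O\bigl(|B_{k,1}^{(t)}-B_{k,1}^{(0)}|/K^{0.02}\bigr)$, whereas you first extract the time bound from $|\beta_{k,1}^{(s)}|\geq\Omega(1/K^{1.98})$ and then aggregate $\alpha_k^{(s)}=\Theta(1/K^2)$ over $t\leq T_{1,k}$ steps; both routes close the loop correctly. Your telescoping of Lemma~\ref{pk1c} for $|B_{k,n}^{(t)}|$ is also the paper's argument.

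The gap is in your proof of the second statement, the bound $\Attn_1^{(T_{1,k}+1)}=O(1/K^{0.49})$. You upper-bound $\Attn_1$ by keeping only the term $|\cV_k|\exp(A_k)$ in the softmax denominator, yielding
\begin{equation*}
\Attn_1 \;\leq\; \frac{|\cV_1|}{|\cV_k|}\exp\bigl(B_{k,1}-A_k\bigr)\;=\;\Theta(K)\cdot K^{-0.49}\cdot e^{-O(\log K/K^{0.02})}\;=\;\Theta\bigl(K^{0.51}\bigr),
\end{equation*}
which is vacuous (greater than $1$), and the extra factor $K^{-1}$ you then append ``after accounting for the normalizer'' is not justified by the computation you wrote: the normalizer is precisely the denominator you already dropped down to a single term. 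The correct term to retain is the sum $\sum_{m\neq 1,k}|\cV_m|\exp(B_{k,m})$, which for $\pit\in\esi$ is $\Theta(N)$ (since $\sum_{m\neq 1,k}|\cV_m|=N-|\cV_1|-|\cV_k|=\Theta(N)$ and the $B_{k,m}$ for $m\neq 1,k$ stay within $O(\log K/K)$ of zero), whereas $|\cV_k|\exp(A_k)=\Theta(N/K)\cdot e^{O(\log K/K^{0.02})}=\Theta(N/K^{1-o(1)})$ is a factor of roughly $K$ smaller. With the correct dominant term in the denominator one gets $\Attn_1\leq \frac{U_1 N\cdot K^{-0.49}}{\Theta(N)}=O(K^{-0.49})$. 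The paper avoids this direct computation entirely by deferring the claim to Lemma~\ref{lemk2t1}, which establishes $\Attn_1^{(t)}\in[\Omega(1/K^{0.51}),O(1/K^{0.49})]$ under the Phase~II hypothesis that holds at $t=T_{1,k}+1$; your attempt to prove it in place would also work once the right denominator term is kept, but as written it does not.
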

\begin{proof}
The  existence of $T_{1,k}=O(\frac{\log(K)K^{1.98}}{\eta})$ directly follows from \Cref{pk1a}.

It is easy to verify that \Cref{hpk1} holds at $t=0$.  Now we suppose \Cref{hpk1} holds for all iterations $\leq t-1$, and prove it holds at $t$. 

By \Cref{pk1a}, we have $\alpha_{k}^{(t-1)}\geq 0$. Thus $A_{k}^{(t)}= A_{k}^{(t-1)}+\eta\alpha_{k}^{(t-1)}\geq 0$. Moreover, %by the definition of $T_{1,k}$, 
combining \Cref{pk1a,pk1b},  we obtain $A_{k}^{(t)}- A_{k}^{(0)}\leq O(\frac{|B_{k,1}^{(t)}- B_{k,1}^{(0)}|}{K^{0.02}})$
which further implies $A_{k}^{(t)}\leq O(\log(K)/K^{0.02})$. 

For $m\not=1,k$, by \Cref{pk1c}, we have $$|B_{k,m}^{(t)}|\leq O(\frac{A_{k}^{(t)}- A_{k}^{(0)}+|B_{k,1}^{(t)}- B_{k,1}^{(0)}|}{K})\leq O(\log(K)/K).$$

The proof for the second statement is deferred to the next phase (\Cref{lemk2t1}).
\end{proof}

\subsection{Phase II: Switching of Leading Influence}\label{app:im:p2}
During phase I, $B_{k,1}^{(t)}$ significantly decreases, resulting in a decrease in $\Attn_{1}^{(t)}$, while other $\Attn_{n}^{(t)}$ with $n>1$ remain approximately at the order of $\Theta\left(\frac{1}{K}\right)$. By the end of phase I, $(\Attn_{1}^{(t)})^2$ decreases to $O(\frac{1}{K^{0.98}})$, leading to a decrease in $|\beta^{(t)}_{k,1}|$ as it approaches towards $\alpha_{k}^{(t)}$. At this point, phase II begins. Shortly after entering this phase, the prior dominant role of the decrease of $B_{k,1}^{(t)}$ in learning dynamics diminishes as $|\beta^{(t)}_{k,1}|$ reaches the same order of magnitude as $\alpha_{k}^{(t)}$.

For $k>1$, 
    define 
  \begin{align*}
    T_{2,k}\triangleq \max\{t>T_{1,k}: A_{k}^{(t)}- B_{k,1}^{(t)}\leq 1.01\log(K) \}.
  \end{align*}
We next state the following induction hypothesis which holds during phase II.
  \begin{hypothesis}\label{hpk2}
    For $T_{1,k}<t\leq T_{2,k}$, %suppose $\operatorname{polylog}(K)\gg \log(\frac{1}{\epsilon})$, 
    the following holds
\begin{enumerate}[label={\alph*}.]
    \item $A_{k}^{(t)}$ is monotonically increasing and $A_{k}^{(t)}\in [0, 0.52\log(K)]$;
    \item $B_{k,1}^{(t)}$ is monotonically decreasing and $B_{k,1}^{(t)}\in [-0.51\log(K),-0.49\log(K)]$;
    \item $|B_{k,n}^{(t)}|=O(\frac{A_{k}^{(t)}+|B_{k,1}^{(t)}|}{K})$ for any $n\not=1,k$.
    % \item At time $T_1$, 
    % $A_k^{(T_1)}-\max _{n \neq k} B_{k,n}^{(T_1)} \geqslant \Lambda_k$ for any $k\in[K]$.
\end{enumerate}
\end{hypothesis}
\subsubsection{Technical Lemmas}
We first introduce several technical lemmas that will be used for the proof of \Cref{hpk2}.
\begin{lemma}\label{lemk2t1}
    Suppose \Cref{hpk2} holds at iteration $T_{1,k}<t\leq T_{2,k}$. %then for $k\in[K]$, 
    If $\xq=v_k$ and $\pit\in\esi$, the following holds
    \begin{enumerate}
    \item  $\Attn^{(t)}_k\in [\Omega\left(\frac{1}{K}\right), O(\frac{1}{K^{0.48}})]$;
    \item  $\Attn^{(t)}_1\in [\Omega\left(\frac{1}{K^{0.51}}\right), O\left(\frac{1}{K^{0.49}}\right)]$;
        \item $1-\Attn^{(t)}_{1}-\Attn^{(t)}_{k}\geq \Omega(1)$.
%  \item $\Attn^{(t)}_1+\Attn^{(t)}_k=\Omega(1)$.
    \end{enumerate}
\end{lemma}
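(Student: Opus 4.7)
The plan is to mimic the three-step computation carried out for \Cref{lemk1t1} in Phase I, now using the refined ranges dictated by \Cref{hpk2}. The key object is the softmax denominator
\[
S_k^{(t)} := \sum_{m\neq k}\tfrac{|\cV_m|}{|\cV_k|}\exp\!\bigl(B_{k,m}^{(t)}-A_k^{(t)}\bigr) + 1,
\]
for $\Attn_k^{(t)}=1/S_k^{(t)}$, and its analogue $S_1^{(t)}$ for $\Attn_1^{(t)}$. The strategy is to bound each of the three contributions in $S_k^{(t)}$ (from $m=1$, from $m\neq 1,k$, and the $+1$) separately, using \Cref{hpk2}(a)--(c) to control the exponents and $\pit\in\esi$ to control the count ratios $|\cV_m|/|\cV_k|$, and then combine. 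Everything will follow from elementary $\Theta$-arithmetic once the bounds are lined up.

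First, I would establish the driving exponent ranges. From \Cref{hpk2}(a)--(b), $A_k^{(t)}-B_{k,1}^{(t)}\in[0.49\log K,\,1.01\log K]$ (the upper end using the definition of $T_{2,k}$), so $\exp(B_{k,1}^{(t)}-A_k^{(t)})\in[\Theta(K^{-1.01}),\Theta(K^{-0.49})]$; and from \Cref{hpk2}(c), for $m\neq 1,k$ one has $B_{k,m}^{(t)}-A_k^{(t)}\in[-0.52\log K-O(\log K/K),\,O(\log K/K)]$, hence $\exp(B_{k,m}^{(t)}-A_k^{(t)})\in[\Theta(K^{-0.52}),\Theta(1)]$. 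Together with $|\cV_1|/|\cV_k|=\Theta(K)$ and $|\cV_m|/|\cV_k|=\Theta(1)$ (from $\pit\in\esi$), the $m=1$ term contributes $[\Theta(K^{-0.01}),\Theta(K^{0.51})]$, the off-target sum contributes $[\Theta(K^{0.48}),\Theta(K)]$, and the $+1$ is negligible. So $S_k^{(t)}\in[\Theta(K^{0.48}),\Theta(K)]$, giving claim 1.

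Next, for claim 2, I would apply the same bookkeeping to
\[
S_1^{(t)} = \sum_{m\neq 1,k}\tfrac{|\cV_m|}{|\cV_1|}\exp\!\bigl(B_{k,m}^{(t)}-B_{k,1}^{(t)}\bigr)
+ \tfrac{|\cV_k|}{|\cV_1|}\exp\!\bigl(A_k^{(t)}-B_{k,1}^{(t)}\bigr) + 1.
\]
Here $B_{k,m}^{(t)}-B_{k,1}^{(t)}\in[0.49\log K-O(\log K/K),\,0.51\log K+O(\log K/K)]$ by (b)--(c), and with $|\cV_m|/|\cV_1|=\Theta(1/K)$ the first sum is $\Theta(K^{0.49})$ to $\Theta(K^{0.51})$. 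The $A_k^{(t)}-B_{k,1}^{(t)}$ term contributes $[\Theta(K^{-0.51}),\Theta(K^{0.01})]$ after multiplying by $|\cV_k|/|\cV_1|=\Theta(1/K)$, which is dominated by the first sum. Hence $S_1^{(t)}\in[\Theta(K^{0.49}),\Theta(K^{0.51})]$, yielding claim 2.

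Claim 3 is then immediate: from claims 1 and 2, $\Attn_k^{(t)}+\Attn_1^{(t)}\leq O(K^{-0.48})+O(K^{-0.49})=o(1)$, so $1-\Attn_1^{(t)}-\Attn_k^{(t)}\geq \Omega(1)$. The main bookkeeping obstacle is making sure the exponent ranges in (a)--(c) of \Cref{hpk2}, particularly the asymmetric $0.49/0.51/0.52$ constants, propagate cleanly through the $\exp$ and the $\Theta(K)$, $\Theta(1/K)$ count ratios so that the $m=1$ contribution to $S_k^{(t)}$ really is of smaller or comparable order to the off-target sum; once the arithmetic is tabulated, nothing else is delicate.
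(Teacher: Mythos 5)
Your proposal is correct and follows essentially the same route as the paper: expand $\Attn_k^{(t)}$ and $\Attn_1^{(t)}$ as reciprocals of softmax denominators, separate the $m=1$ contribution (count ratio $\Theta(K)$ for $S_k$, exponent governed by $B_{k,1}^{(t)}$) from the $m\neq 1,k$ contribution (count ratio $\Theta(1)$ for $S_k$, resp. $\Theta(1/K)$ for $S_1$), bound each exponent via \Cref{hpk2} together with the definition of $T_{2,k}$ (to get $A_k^{(t)}-B_{k,1}^{(t)}\leq 1.01\log K$), and let the $\esi$ event control the count ratios. The only stylistic difference is that you derive claim~3 directly from claims~1 and~2 via $\Attn_k^{(t)}+\Attn_1^{(t)}=o(1)$, which is cleaner than the paper's separate lower bound on $1-\Attn_1^{(t)}$; both work.
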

\begin{proof}
Since $\xq=v_k$, and  $|\cV_k|>0$ for $\pit\in\esi$, we have
    \begin{align*}
\Attn^{(t)}_{k}&=\frac{|\cV_k|e^{{v_k}^{\top}Q^{(t)}v_{k}}}{\sum_{j\in [N]}e^{{E^{x}_{j}}^{\top}Q^{(t)}v_{k}}}\\
&=\frac{|\cV_k|\exp(A^{(t)}_k)}{\sum_{m\not=k}|\cV_m|\exp(B^{(t)}_{k,m})+|\cV_k|\exp(A^{(t)}_k)}\\
        &=\frac{1}{\sum_{m\not=k}\frac{|\cV_m|}{|\cV_k|}\exp(B^{(t)}_{k,m}-A^{(t)}_k)+1}.
    \end{align*}
    By \Cref{hpk2},
    \begin{itemize}
        \item  for $m\not=1,k$, we have $e^{-O(\frac{\log(K)}{K})-0.52\log(K)}\leq \exp(B^{(t)}_{k,m}-A^{(t)}_k)\leq e^{O(\frac{\log(K)}{K})}$; 
        \item  for $m=1$, $e^{-1.01\log(K)}\leq \exp(B^{(t)}_{k,1}-A^{(t)}_k)\leq e^{0}$.
    \end{itemize}
Combining with the fact that $\sum_{m\not=k}\frac{|\cV_m|}{|\cV_k|}=\Theta(K)$ for $\pit\in\esi$, we obtain
     \begin{align*}
\Attn^{(t)}_{k} %&\geq\frac{1}{e^{O(\frac{\log(K)}{K})}(\frac{N-|\cV_1|}{|\cV_k|}-1)+e^{0}\cdot \frac{|\cV_1|}{|\cV_k|}+1}\geq %\frac{1}{e^{O(\frac{\log(K)}{K})}(K/L_k-1)+1}=
\geq \Omega\left(\frac{1}{K}\right).
    \end{align*}
  %  where the second inequality follows since the fact that 

Moreover, since $\frac{N-|\cV_1|}{|\cV_k|}$ is still at the order of $\Theta(K)$, we have
\begin{align*}
    &\Attn^{(t)}_{k} \leq\frac{1}{e^{-O(\frac{\log(K)}{K})-0.52\log(K)}(\frac{N-|\cV_1|}{|\cV_k|}-1)+e^{-1.01\log(K)}\frac{|\cV_1|}{|\cV_k|}+1}\leq O\left(\frac{1}{K^{0.48}}\right).%\leq \frac{1}{\Omega(K^{\frac{1}{2}})+1}.\\
        \end{align*}
We next analyze $\Attn_1^{(t)}$ as
            \begin{align*}
                \Attn^{(t)}_{1}
                &=\frac{|\cV_1|\exp(B^{(t)}_{k,1})}{\sum_{m\not=k}|\cV_m|\exp(B^{(t)}_{k,m})+|\cV_k|\exp(A^{(t)}_k)}\\
                        &=\frac{1}{\sum_{m\not=1,k}\frac{|\cV_m|}{|\cV_k|}\exp(B^{(t)}_{k,m}-B^{(t)}_{k,1})+\frac{|\cV_k|}{|\cV_1|}\exp(A^{(t)}_{k}-B^{(t)}_{k,1})+1}.
                    \end{align*}
                    By \Cref{hpk2},
                    \begin{itemize}
                        \item for $m\not=1,k$, we have $$e^{0.49\log(K)-O(\frac{\log(K)}{K})}\leq \exp(B^{(t)}_{k,m}-B^{(t)}_{k,1})\leq e^{0.51\log(K)+O(\frac{\log(K)}{K})};$$
                        \item for $m=1$,  
                $e^{0.49\log(K)}\leq \exp(A^{(t)}_k- B^{(t)}_{k,1})\leq e^{1.01\log(K)}$.
                    \end{itemize}  Thus, we obtain
                     \begin{align*}
                \Attn^{(t)}_{1} &\geq\frac{1}{e^{0.51\log(K)+O(\frac{\log(K)}{K})}(\frac{N-|\cV_k|}{|\cV_1|}-1)+e^{1.01\log(K)+O(\frac{\log(K)}{K})}\frac{|\cV_k|}{|\cV_1|}+1}\geq %\frac{1}{e^{O(\frac{\log(K)}{K})}(K/L_k-1)+1}=
                \Omega\left(\frac{1}{K^{0.51}}\right).
                    \end{align*}
                  %  which implies
                   % where the second inequality follows since the fact that $P\in\cE^*$.

                    % On the other hand, 
                    % \begin{align*}
                    %   1-  \Attn^{(t)}_{1} &\geq1-\frac{1}{e^{\log(K)-O(1)}(\frac{N}{|\cV_1|}-1)+1}\geq 
                    %     \Omega(1)
                    % \end{align*}
        
                    % Thus,
                    % \begin{align*}
                    %    1- \Attn^{(t)}_{1} \geq \Omega(1).
                    %         \end{align*}
                            On the other hand, 
                            \begin{align*}
        & 1-\Attn^{(t)}_{1}\geq    \frac{e^{0.49\log(K)-O(\frac{\log(K)}{K})}(\frac{N}{|\cV_1|}-1)}{e^{0.49\log(K)-O(\frac{\log(K)}{K})}(\frac{N}{|\cV_1|}-1)+1}\geq \Omega(1).
                            \end{align*}
                          
                            Thus, we obtain
                            \begin{align*}
                               1- \Attn^{(t)}_{k} -\Attn^{(t)}_{1} \geq \Omega(1).
                                    \end{align*}
   % where $0<C_k<1$ is some constant. Thus $1-\Attn_k=\Omega(1)$.
\end{proof}
\begin{lemma}\label{lemk2t2}
    Suppose \Cref{hpk2} holds at iteration $T_{1,k}<t\leq T_{2,k}$.  If $\xq=v_k$ and $\pit\in\esi$,  for $n\not=1,k$, the following holds
  $$\Attn^{(t)}_n=O\left(\frac{1-\Attn^{(t)}_{k}-\Attn^{(t)}_{1}}{K}\right).$$
         %\item Combining with the \Cref{lem1t1}, we immediately have $\Attn^{(t)}_n=\Theta\left(\frac{1}{K}\right) $;
\end{lemma}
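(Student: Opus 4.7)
The plan is to mirror the analysis used for \Cref{lemk1t2} in Phase I, with the only substantive change being that the bounds on $B_{k,m}^{(t)}$ now come from \Cref{hpk2} rather than \Cref{hpk1}. Concretely, I would start from the softmax definition
\begin{align*}
\Attn^{(t)}_{n}=\frac{|\cV_n|\exp(B^{(t)}_{k,n})}{\sum_{m\neq k}|\cV_m|\exp(B^{(t)}_{k,m})+|\cV_k|\exp(A^{(t)}_k)},
\end{align*}
and then isolate the normalized ratio by observing that $1-\Attn_k^{(t)}-\Attn_1^{(t)}$ equals the partial sum of the softmax over all indices $m\neq 1,k$. This yields the identity
\begin{align*}
\frac{\Attn^{(t)}_n}{1-\Attn^{(t)}_{k}-\Attn^{(t)}_{1}}
=\frac{|\cV_n|\exp(B^{(t)}_{k,n})}{\sum_{m\neq 1,k}|\cV_m|\exp(B^{(t)}_{k,m})}
=\frac{1}{\sum_{m\neq 1,k}\frac{|\cV_m|}{|\cV_n|}\exp\bigl(B^{(t)}_{k,m}-B^{(t)}_{k,n}\bigr)}.
\end{align*}

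Next, I would bound the denominator from below by $\Omega(K)$. For any $m,n\notin\{1,k\}$, \Cref{hpk2}(c) gives $|B_{k,m}^{(t)}|, |B_{k,n}^{(t)}|= O\!\bigl(\tfrac{A_k^{(t)}+|B_{k,1}^{(t)}|}{K}\bigr)$. Since \Cref{hpk2}(a,b) confines $A_k^{(t)}\le 0.52\log K$ and $|B_{k,1}^{(t)}|\le 0.51\log K$, we obtain $\bigl|B_{k,m}^{(t)}-B_{k,n}^{(t)}\bigr|\le O\!\bigl(\tfrac{\log K}{K}\bigr)$, so $\exp(B^{(t)}_{k,m}-B^{(t)}_{k,n})=\Theta(1)$. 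On the event $\pit\in\esi$, $|\cV_m|/|\cV_n|=\Theta(1)$ for $m,n\neq 1$ (both lie in $[\Li_m N/K,\Ui_m N/K]$), and there are $K-2=\Theta(K)$ summands. Combining these gives $\sum_{m\neq 1,k}\tfrac{|\cV_m|}{|\cV_n|}\exp(B^{(t)}_{k,m}-B^{(t)}_{k,n})=\Theta(K)$, hence the ratio above is $O(1/K)$, which is the claim.

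There is no real obstacle here — the statement is a pointwise structural lemma that follows from plugging the Phase II bounds into the same softmax manipulation used in \Cref{lemk1t2}. The only minor care point is to verify that the exponent difference $|B^{(t)}_{k,m}-B^{(t)}_{k,n}|$ remains $O(\log K/K)$ under the weaker hypothesis bound $O\!\bigl(\tfrac{A_k^{(t)}+|B_{k,1}^{(t)}|}{K}\bigr)$; this is immediate because both $A_k^{(t)}$ and $|B_{k,1}^{(t)}|$ are $O(\log K)$ throughout Phase II, so the exponential factors stay bounded away from $0$ and $\infty$ and can be absorbed into the $\Theta(\cdot)$ constants.
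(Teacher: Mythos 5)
Your proposal is correct and follows essentially the same route as the paper's own proof: isolate the ratio $\Attn_n^{(t)}/(1-\Attn_k^{(t)}-\Attn_1^{(t)})$, use \Cref{hpk2}(a)--(c) to conclude $|B_{k,m}^{(t)}-B_{k,n}^{(t)}|=O(\log K/K)$ for $m,n\notin\{1,k\}$, and couple this with $|\cV_m|/|\cV_n|=\Theta(1)$ on $\esi$ to bound the $\Theta(K)$-term denominator from below by $\Omega(K)$. You in fact establish the slightly sharper $\Theta(1/K)$ bound, which subsumes the claimed $O(1/K)$.
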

\begin{proof}
    Since $\xq=v_k$, we have
    \begin{align*}
\Attn^{(t)}_{n}&=\frac{|\cV_n|e^{{v_n}^{\top}Q^{(t)}v_{k}}}{\sum_{j\in [N]}e^{{E^{x}_{j}}^{\top}Q^{(t)}v_{k}}}\\&=\frac{|\cV_n|\exp(B^{(t)}_{k,n})}{\sum_{m\not=k}|\cV_m|\exp(B^{(t)}_{k,m})+|\cV_k|\exp(A^{(t)}_k)}.
 %       &=\frac{1}{\sum_{m\not=k}\frac{|\cV_m|}{|\cV_n|}\exp(B^{(t)}_{k,m}-B^{(t)}_{k,n})+\frac{|\cV_k|}{|\cV_n|}\exp(A^{(t)}_k-B^{(t)}_{k,n})}
    \end{align*}
    By \Cref{hpk2}, for $m,n\not=1$, $$e^{-O(\frac{\log(K)}{K})}\leq \exp(B^{(t)}_{k,m}-B^{(t)}_{k,n})\leq e^{O(\frac{\log(K)}{K})}.$$
Combining with the fact that $\frac{|\cV_m|}{|\cV_n|}=\Theta(1)$ for $\pit\in\esi$, %and  $e^{-O(\frac{\log(K)}{K})}\leq \exp(A^{(t)}_k-B^{(t)}_{k,n})\leq e^{\left(O(\log(K))+O(\frac{\log(K)}{K})\right)}$ 
    we obtain
%     \begin{align*}
% \Attn^{(t)}_{k} &\leq\frac{1}{e^{-O(\frac{\log(K)}{K})}(\frac{N}{|\cV_n|}-\frac{|\cV_k|}{|\cV_n|})+e^{-O(\frac{\log(K)}{K})}\frac{|\cV_k|}{|\cV_n|}}\leq \frac{U_ne^{O(\frac{\log(K)}{K})}}{ K}=O\left(\frac{1}{K}\right).
%     \end{align*}
    %For the second claim, 
    \begin{align*}
       \frac{\Attn^{(t)}_n}{1-\Attn^{(t)}_{k}-\Attn^{(t)}_{1}} &=\frac{|\cV_n|\exp(B^{(t)}_{k,n})}{\sum_{m\not=1,k}|\cV_m|\exp(B^{(t)}_{k,m})}\\&= \frac{1}{\sum_{m\not=k,1}\frac{|\cV_m|}{|\cV_n|}\exp(B^{(t)}_{k,m}-B^{(t)}_{k,n})%+\frac{|\cV_1|}{|\cV_n|}\exp(B^{(t)}_{k,1}-B^{(t)}_{k,n})
       }%=\Theta\left(\frac{1}{K}\right).\\
       \\
       &\leq O\left(\frac{1}{K}\right).
    \end{align*}
\end{proof}

\subsubsection{Controlling Gradient Updates in Phase II}
\begin{lemma}\label{pk2a}
    Given $k>1$,  if \Cref{hpk2} holds at iteration $T_{1,k}<t \leq T_{2,k}$, then $\alpha_k^{(t)}\geq0$ and satisfies 
    \begin{align*}
        \alpha_{k}^{(t)}\geq \Omega\left(\frac{1}{K^2}\right).
    \end{align*}
\end{lemma}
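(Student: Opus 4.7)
The plan is to prove this lemma by essentially the same strategy used in Lemma~\ref{pk1a}, namely restricting the expectation defining $\alpha_k^{(t)}$ to the high-probability event $\{\xq = v_k\} \cap \{\pit \in \esi\}$ and bounding from below using the attention score estimates from Lemma~\ref{lemk2t1}.

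First, starting from the gradient formula in Lemma~\ref{app:lem:gd}, since every term in the sum inside the expectation is non-negative, $\alpha_k^{(t)} \geq 0$ is immediate. For the lower bound, I would drop all terms except $(1-\Attn_k^{(t)})^2$ and restrict to the event $\{\xq = v_k\} \cap \{\pit \in \esi\}$, giving
\[
\alpha_k^{(t)} \;\geq\; p_k \cdot \mathbb{P}(\pit \in \esi) \cdot \mathbb{E}\!\left[\Attn_k^{(t)}(1-\Attn_k^{(t)})^2 \;\Big|\; \xq=v_k,\ \pit\in\esi\right].
\]

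Next I would plug in the bounds from Lemma~\ref{lemk2t1}: on this conditioning event $\Attn_k^{(t)} \geq \Omega(1/K)$ and $\Attn_k^{(t)} \leq O(1/K^{0.48})$, so $1-\Attn_k^{(t)} \geq \Omega(1)$. Hence $\Attn_k^{(t)}(1-\Attn_k^{(t)})^2 \geq \Omega(1/K)$. Combined with $p_k = \Theta(1/K)$ for $k>1$ and $\mathbb{P}(\pit \in \esi) \geq 1 - 3\exp(-\ci^2 N/(25K^2)) = \Omega(1)$ from Lemma~\ref{app:lem:prob-im}, this gives $\alpha_k^{(t)} \geq \Omega(1/K^2)$ as required.

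There is no real obstacle here since the tools are already in place; the only thing to be careful about is that the attention bounds in Lemma~\ref{lemk2t1} do hold under Induction Hypothesis~\ref{hpk2}, which is what is being assumed. The argument is essentially a direct repetition of the Phase I bound (Lemma~\ref{pk1a}), with the improved attention estimates $\Attn_k^{(t)} \leq O(1/K^{0.48})$ in Phase II still being compatible with the lower bound $\Omega(1)$ on $1-\Attn_k^{(t)}$ needed to close the argument.
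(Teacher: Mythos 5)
Your proof is correct and follows essentially the same route as the paper: restrict to the high-probability event $\{\xq = v_k\} \cap \{\pit \in \esi\}$, keep (a lower bound on) the factor $\Attn_k^{(t)}(1-\Attn_k^{(t)})^2$, and apply Lemma~\ref{lemk2t1} together with $p_k = \Theta(1/K)$ and $\mathbb{P}(\pit\in\esi)=\Omega(1)$ from Lemma~\ref{app:lem:prob-im}. The only cosmetic difference is that the paper retains the nonnegative term $\sum_{m\neq k}(\Attn_m^{(t)})^2$ in the conditional expectation whereas you drop it explicitly, but this has no effect on the resulting $\Omega(1/K^2)$ lower bound.
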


\begin{proof}
    By the gradient expression in \Cref{app:lem:gd}, we have
    \begin{align*}
  \alpha_{k}^{(t)}&=
\mathbb{E}\left[\mathbf{1}\{\xq=v_k\}\Attn^{(t)}_{k }\cdot \left(
 \sum_{m\not= k}{\Attn_{m}^{(t)}}^2+(1-\Attn^{(t)}_k)^2\right)\right]\\
&=\mathbb{E}\left[\mathbf{1}\{\xq=v_k\cap\esi\}\Attn^{(t)}_{k }\cdot \left(
 \sum_{m\not= k}{\Attn_{m}^{(t)}}^2+(1-\Attn^{(t)}_k)^2\right)\right]\\
 &\quad +\mathbb{E}\left[\mathbf{1}\{\xq=v_k\cap{\esi}^c\}\Attn^{(t)}_{k }\cdot \left(
 \sum_{m\not= k}{\Attn_{m}^{(t)}}^2+(1-\Attn^{(t)}_k)^2\right)\right]\\
 &\geq p_k\cdot\mathbb{P}(P\in\esi)\mathbb{E}\left[\Attn^{(t)}_{k }\cdot \left(
 \sum_{m\not= k}{\Attn_{m}^{(t)}}^2+(1-\Attn^{(t)}_k)^2\right)\bigg|\{\xq=v_k\} \cap \esi\right]\\
 %&+ 2 p_k\cdot O\left(\frac{1}{K}\right)\cdot \mathbb{P}({\cE^*}^c)\\
% &\geq p_k\cdot\mathbb{E}\left[\Attn^{(t)}_{k }\cdot\left({O\left(\frac{1}{K}\right)+ \Attn_{1}^{(t)}}^2+ (1-\Attn^{(t)}_k)^2\right)\mid\{\xq=v_k\} \cap \cE^*\right]\\
% &+ 2 p_k\cdot O\left(\frac{1}{K}\right)\cdot \mathbb{P}({\cE^*}^c)\\
&\geq \Omega\left(\frac{1}{K^2}\right),
\end{align*}
%Similarly, we can show that $\alpha_{k}^{(t)}\geq  \Omega\left(\frac{1}{K^2}\right)$.
where the last inequality follows from \Cref{app:lem:prob-im},  \Cref{lemk2t1} and our choice of $p_k$.
\end{proof}
\begin{lemma}\label{pk2b}
    Given $k>1$,  if \Cref{hpk2} holds at iteration $T_{k,1} \leq t\leq T_{k,2}$, then $\beta_{k,1}^{(t)}<0$ and  satisfies 
    \begin{align*}
        |\beta_{k,1}^{(t)}|\in \left[\Omega\left(\frac{1}{K^{2.02}}\right), O\left(\frac{1}{K^{1.97}}\right)\right].
    \end{align*}
\end{lemma}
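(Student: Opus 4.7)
\textbf{Proof Proposal for Lemma \ref{pk2b}.} The plan is to bound $\beta_{k,1}^{(t)}$ from both sides by directly plugging the attention-score bounds from \Cref{lemk2t1,lemk2t2} into the gradient expression in \Cref{app:lem:gd}, mimicking the two-sided strategy already used for \Cref{pk1b}. First, I would condition on the high-probability event $\{\pit \in \esi\}$ via \Cref{app:lem:prob-im} and argue that the contribution from $\{\pit \in \esi^c\}$ is upper bounded by $p_k \cdot 3\exp(-\ci^2 N/(25K^2))$, which is exponentially small compared to $1/K^{O(1)}$ and hence absorbed into both target bounds.

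For the \emph{upper bound} $|\beta_{k,1}^{(t)}| \leq O(K^{-1.97})$, I would use the one-sided estimate
\begin{align*}
-\beta_{k,1}^{(t)} \leq \mathbb{E}\left[\mathbf{1}\{\xq=v_k\}\Attn_{1}^{(t)}\bigl(\Attn_{1}^{(t)} + \Attn_{k}^{(t)}(1-\Attn_{k}^{(t)})\bigr)\right],
\end{align*}
exactly as in \cref{k1I2}. Then on $\esi$, \Cref{lemk2t1} gives $\Attn_1^{(t)} \leq O(K^{-0.49})$ and $\Attn_k^{(t)} \leq O(K^{-0.48})$, so the bracket is $O(K^{-0.48})$ and the integrand is $O(K^{-0.97})$; multiplying by $p_k = \Theta(1/K)$ yields $|\beta_{k,1}^{(t)}| \leq O(K^{-1.97})$ after absorbing the negligible $\esi^c$ term.

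For the \emph{lower bound} $|\beta_{k,1}^{(t)}| \geq \Omega(K^{-2.02})$, I would reuse the algebraic manipulation from \cref{app:eq:p1:beta}:
\begin{align*}
\sum_{m\neq k}(\Attn_m^{(t)})^2 - \Attn_1^{(t)} - \Attn_k^{(t)}(1-\Attn_k^{(t)}) \leq -\bigl(1-\Attn_1^{(t)}-\Attn_k^{(t)}\bigr)\bigl(\Attn_1^{(t)}+\Attn_k^{(t)} - \max_{m\neq 1,k}\Attn_m^{(t)}\bigr).
\end{align*}
On $\esi$, \Cref{lemk2t1} gives $1-\Attn_1^{(t)}-\Attn_k^{(t)} \geq \Omega(1)$ and $\Attn_1^{(t)} \geq \Omega(K^{-0.51})$, while \Cref{lemk2t2} gives $\max_{m\neq 1,k}\Attn_m^{(t)} \leq O(1/K) \ll \Attn_1^{(t)}$, so the bracket on the right-hand side is at most $-\Omega(K^{-0.51})$. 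Combined with $\Attn_1^{(t)} \geq \Omega(K^{-0.51})$ and $p_k = \Theta(1/K)$, this gives $\beta_{k,1}^{(t)} \leq -\Omega(K^{-2.02})$; the sign is preserved because the $\esi^c$ contribution is only $O(p_k e^{-\Omega(N/K^2)})$, which is dominated by $\Omega(K^{-2.02})$ under the assumption $N \geq \poly(K)$.

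The main obstacle is bookkeeping of the exponents: the final constants $1.97$ and $2.02$ arise only after combining the matched pairs $\Attn_1 \in [\Omega(K^{-0.51}), O(K^{-0.49})]$ with $p_k = \Theta(1/K)$ and observing that $\Attn_k$'s contribution $O(K^{-0.48})$ beats $\Attn_1$ in the positive term but is itself beaten by $\Attn_1$ (via $\Attn_1 + \Attn_k$) in the negative term. One has to verify that the asymmetry $0.48$ versus $0.51$ in \Cref{lemk2t1} produces no mismatch, and that the dominance $\Attn_1 + \Attn_k \gg \max_{m\neq 1,k} \Attn_m$ used for the lower bound holds uniformly throughout phase II. Given \Cref{lemk2t1,lemk2t2}, no new estimate is required; the rest is direct substitution.
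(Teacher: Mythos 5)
Your proposal is correct and follows essentially the same two-sided strategy as the paper's own proof: for the upper bound on $|\beta_{k,1}^{(t)}|$ you use the one-sided estimate from \cref{k1I2} with $\Attn_1^{(t)}\le O(K^{-0.49})$ and $\Attn_1^{(t)}+\Attn_k^{(t)}\le O(K^{-0.48})$, and for the lower bound you reuse the factorization from \cref{app:eq:p1:beta} with $1-\Attn_1^{(t)}-\Attn_k^{(t)}\ge\Omega(1)$ and $\Attn_1^{(t)}\ge\Omega(K^{-0.51})$, exactly as the paper does. The bookkeeping of exponents ($0.49+0.48+1=1.97$ and $0.51+0.51+1=2.02$) and the absorption of the $\esi^c$ contribution also match the paper's argument step for step.
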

\begin{proof}
  %by \Cref{lem2t1,lem2t2}, we have $\Attn_k^{(t)}=\Omega(1)$, $\max_{m\not=k}\Attn_{m}=O\left(\frac{1}{K}\right)$, thus 
% \begin{align*}
%     \frac{\Attn_k}{\Attn_n}=\exp(A_{k}^{(T_{1,k})}-B^{(T_{1,k})}_{n,k})\frac{|\cV_k|}{|\cV_n|}\geq \exp(A_{k}^{(T_{1,k})}-B^{(T_{1,k})}_{n,k}) \cdot\frac{L_k}{U_n}\geq 1
% \end{align*}
Following the computations similar to those in \Cref{pk1b}, we have
\begin{align*} &\sum_{m\not= k}{\Attn^{(t)}_{m}}^2-\Attn^{(t)}_{1} -\Attn^{(t)}_{k}(1-\Attn^{(t)}_k)\\
%&= \sum_{m\not= 1,k}{\Attn^{(t)}_{m}}^2-\Attn^{(t)}_{1}(1-\Attn^{(t)}_{1}) -\Attn^{(t)}_{k}(1-\Attn^{(t)}_k)\\&\leq  \max_{m\not=1,k}\Attn^{(t)}_{m}(1-\Attn^{(t)}_{1}-\Attn^{(t)}_{k})-\Attn^{(t)}_{1}(1-\Attn^{(t)}_{1})-\Attn^{(t)}_{k}(1-\Attn^{(t)}_k)\\
&\quad\leq -(1-\Attn^{(t)}_k-\Attn^{(t)}_1)(\Attn^{(t)}_{1}+\Attn^{(t)}_{k}-\max_{m\not=1,k}\Attn^{(t)}_{m}).
%&\leq -\Omega (\frac{1-\Attn_k-\Attn_1}{K}). 
\end{align*}
Therefore,
\begin{align*}
&\beta_{k,1}^{(t)}\\&\leq \mathbb{E}\left[\mathbf{1}\{\xq=v_k\cap \esi\}\Attn^{(t)}_{1}\cdot \left(
  \sum_{m\not= k}{\Attn^{(t)}_{m}}^2-\Attn^{(t)}_{1} -\Attn^{(t)}_{k}(1-\Attn^{(t)}_k)\right)\right]\\
  &\quad +\mathbb{E}\left[\mathbf{1}\{\xq=v_k\cap {\esi}^{c}\}\Attn^{(t)}_{1}\cdot \left(
  \sum_{m\not= k}{\Attn^{(t)}_{m}}^2\right)\right]\\
  &\stackrel{(a)}{\leq}p_k\cdot\mathbb{P}(\pit\in\esi)\cdot\mathbb{E}\left[-\Omega(\frac{(\Attn^{(t)}_{1}+\Attn^{(t)}_{k}-\max_{m\not=1,k}\Attn^{(t)}_{m})}{K^{0.51}})\mid\{\xq=v_k\} \cap \esi\right]\\
  &\quad+p_k\cdot\mathbb{P}(\pit\in {\esi}^{c})\\
&\stackrel{(b)}{\leq} p_k\cdot \left(-\Omega\left(\frac{1}{K^{1.02}}\right)\right)+3 p_k\exp \left(-\frac{\ci^2 N}{25 K^2}\right)\\
&=-\Omega\left(\frac{1}{K^{2.02}}\right),
\end{align*}
where $(a)$ follows from \Cref{lemk2t1}, $(b)$ follows from  \Cref{lemk2t1} and \Cref{app:lem:prob-im}, 
% where $(a)$ follows from \Cref{eq3} and \Cref{lem2t2}, $(b)$ follows from \Cref{lem2t1}. 
and the last inequality holds since%and our choice of $p_k$.
\begin{align*}
\frac{1}{K^{1.02}}\gg  \exp \left(-\frac{\ci^2 N}{25 K^2}\right).
\end{align*}

Moreover,

\begin{align*}
    -\beta_{k,1}^{(t)}\leq& \mathbb{E}\left[\mathbf{1}\{\xq=v_k\cap \esi\}\Attn^{(t)}_{1}\cdot \left(
     \Attn^{(t)}_{1} +\Attn^{(t)}_{k}(1-\Attn^{(t)}_k)\right)\right]\\
      &+\mathbb{E}\left[\mathbf{1}\{\xq=v_k\cap {\esi}^{c}\}\Attn^{(t)}_{1}\cdot \left(
        \Attn^{(t)}_{1} +\Attn^{(t)}_{k}(1-\Attn^{(t)}_k)\right)\right]\\
      \stackrel{(a)}{\leq}&p_k\cdot\mathbb{P}(\pit\in\esi)\cdot\mathbb{E}\left[\Attn^{(t)}_{1}\cdot O(\Attn^{(t)}_{1}+\Attn^{(t)}_{k})\mid\{\xq=v_k\} \cap \esi\right]\\
      &+2p_k\cdot\mathbb{P}(\pit\in{\esi}^{c})\\
    \stackrel{(b)}{\leq} &p_k\cdot \left(O\left(\frac{1}{K^{0.97}}\right)\right)+6 p_k\exp \left(-\frac{\ci^2 N}{25 K^2}\right)\\
    =&O\left(\frac{1}{K^{1.97}}\right),
    \end{align*}
  where $(a)$ follows because $\Attn^{(t)}_{1} +\Attn^{(t)}_{k}(1-\Attn^{(t)}_k)$ is upper-bounded by $2$ on the event $\{\pit\in{\esi}^c\}$, and $(b)$ follows from  \Cref{lemk2t1}.
% Moreover, following the similar analysis from \Cref{p1b}, we have 
% \begin{align*}
%  - \beta_{k,n}^{(t)}&\leq p_k\mathbb{E}\left[\Attn^{(t)}_{n}\cdot \left(
% \Attn^{(t)}_{n} +\Attn^{(t)}_{k}(1-\Attn^{(t)}_k)\right)\mid\{\xq=v_k\} \cap \cE^*\right]+p_k\mathbb{P}({\cE^*}^{c})\\
% &\leq p_k\mathbb{E}\left[\Theta(\frac{1-\Attn^{(t)}_{k}}{K})\cdot O\left(
% \Attn^{(t)}_{k}(1-\Attn^{(t)}_k)\right)\mid\{\xq=v_k\} \cap \cE^*\right]+6 p_k\exp \left(-\frac{c_0^2 N}{25 K^2}\right)\\
% &=p_k\mathbb{E}\left[O(\frac{\Attn^{(t)}_{k}(1-\Attn^{(t)}_{k})^2}{K})\mid\{\xq=v_k\} \cap \cE^*\right]+6 p_k\exp \left(-\frac{c_0^2 N}{25 K^2}\right)\\
% &\leq O(\frac{\alpha_{k}^{(t)}}{K}).
% \end{align*}
%     Combining with the previous lemma that $\alpha_{k}^{(t)}\geq \Omega\left(\frac{1}{K^2}\right)$, then 
%     $$
% |\beta_{n,k}^{(t)}|\leq \max\{O(\frac{\alpha_{k}^{(t)}}{K}),O\left(\frac{1}{K^3}\right)\}=O(\frac{\alpha_{k}^{(t)}}{K}).
%     $$
\end{proof}
\begin{lemma}\label{pk2c}
    If \Cref{hpk2} holds at iteration $T_{1,k}<t \leq T_{2,k}$,  for any $n\not=1,k$, $\beta_{k,n}^{(t)}$ satisfies 
    \begin{align*}
        |\beta_{k,n}^{(t)}|\leq O\left(\frac{\alpha_{k}^{(t)}-\beta_{k,1}^{(t)}}{K}\right).
    \end{align*}
\end{lemma}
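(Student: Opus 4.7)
The plan is to mirror the structure of \Cref{pk1c}, splitting the gradient expression from \Cref{app:lem:gd} into its one-sided upper and lower bounds and treating each by decomposing on the event $\{\pit \in \esi\}$ versus its complement. Concretely, from \Cref{app:lem:gd} we start with
\begin{align*}
\beta_{k,n}^{(t)} &\leq \mathbb{E}\left[\mathbf{1}\{\xq=v_k\}\Attn_n^{(t)} \sum_{m\neq k}(\Attn_m^{(t)})^2\right], \\
-\beta_{k,n}^{(t)} &\leq \mathbb{E}\left[\mathbf{1}\{\xq=v_k\}\Attn_n^{(t)}\left(\Attn_n^{(t)} + \Attn_k^{(t)}(1-\Attn_k^{(t)})\right)\right].
\end{align*}
On $\esi$ I would invoke \Cref{lemk2t2} to bound $\Attn_n^{(t)} = O(1/K)$, and split $\sum_{m\neq k}(\Attn_m^{(t)})^2 = (\Attn_1^{(t)})^2 + \sum_{m\neq 1,k}(\Attn_m^{(t)})^2$, where the second sum is $O(1/K)$ again by \Cref{lemk2t2}.

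For the upper direction, the key observation is that $(\Attn_1^{(t)})^2 = O(1/K^{0.98})$ from \Cref{lemk2t1} and, more importantly, the quantity $p_k \cdot \mathbb{E}[\Attn_1^{(t)} \cdot \Omega(\Attn_1^{(t)} + \Attn_k^{(t)}) \mid \xq=v_k, \esi]$ matches (up to constants) the bound $|\beta_{k,1}^{(t)}| \geq \Omega(1/K^{2.02})$ established in \Cref{pk2b}; this lets me absorb the $(\Attn_1^{(t)})^2$ contribution into $|\beta_{k,1}^{(t)}|/K$. For the lower direction, I bound $\Attn_n^{(t)}(\Attn_n^{(t)} + \Attn_k^{(t)}(1-\Attn_k^{(t)}))$ by $O((1-\Attn_k^{(t)})/K)\cdot O(\Attn_k^{(t)}(1-\Attn_k^{(t)}))$ (plus a lower-order self-square term), which matches the integrand defining $\alpha_k^{(t)}$ divided by $K$, so it absorbs into $\alpha_k^{(t)}/K$ following exactly the computation at the end of \Cref{pk1c}.

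The contribution from the complement $\esi^c$ is handled by the bound $\mathbb{P}(\pit \in \esi^c) \leq 3\exp(-\ci^2 N/(25K^2))$ from \Cref{app:lem:prob-im}. Since $N \gg K^3$, this term is much smaller than both $\alpha_k^{(t)} \geq \Omega(1/K^2)$ (\Cref{pk2a}) and $|\beta_{k,1}^{(t)}| \geq \Omega(1/K^{2.02})$ (\Cref{pk2b}), so it is absorbed without loss. Putting the two directions together yields $|\beta_{k,n}^{(t)}| \leq O((\alpha_k^{(t)} - \beta_{k,1}^{(t)})/K)$ as claimed.

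The main obstacle, and the only place this proof departs from the Phase I analogue \Cref{pk1c}, is ensuring that the estimates from \Cref{lemk2t1} remain sharp enough: here $\Attn_1^{(t)}$ is noticeably smaller than in Phase I (now $\Theta(1/K^{0.49}$-$1/K^{0.51})$), so I must verify that the factor $(\Attn_1^{(t)})^2$ appearing in the first term still gets correctly absorbed into $|\beta_{k,1}^{(t)}|/K$ rather than producing an extraneous $1/K^3$ term that would be too large relative to $\alpha_k^{(t)}/K = \Theta(1/K^3)$. The careful bookkeeping is essentially that $|\beta_{k,1}^{(t)}| \geq \Omega(1/K^{2.02})$ dominates this contribution divided by $K$, which is why the statement is phrased in terms of $\alpha_k^{(t)} - \beta_{k,1}^{(t)}$ rather than $\alpha_k^{(t)}$ alone.
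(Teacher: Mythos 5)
Your decomposition (one-sided bounds from \Cref{app:lem:gd}, split on $\{\pit\in\esi\}$, invoke \Cref{lemk2t1,lemk2t2}) is the right structure and matches the paper. But the final step of the argument — absorbing the $p_k\,\mathbb{P}(\esi)\,\mathbb{E}\big[\Attn_n^{(t)}(\Attn_1^{(t)})^2\,\big|\,\cdot\big]$ term into $O(|\beta_{k,1}^{(t)}|/K)$ — has a genuine gap. You justify this absorption by appealing to the \emph{scalar} bound $|\beta_{k,1}^{(t)}|\geq\Omega(1/K^{2.02})$ from \Cref{pk2b}, and your concluding sentence states explicitly that this scalar bound ``dominates this contribution divided by $K$.'' It does not. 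In Phase II, \Cref{lemk2t1} allows $\Attn_1^{(t)}$ to be as large as $\Theta(1/K^{0.49})$, so the dangerous term
\begin{align*}
p_k\,\mathbb{P}(\esi)\,\mathbb{E}\big[\Attn_n^{(t)}(\Attn_1^{(t)})^2\mid\cdot\big]
\end{align*}
can be as large as $\Theta\!\left(\tfrac{1}{K}\cdot\tfrac{1}{K}\cdot\tfrac{1}{K^{0.98}}\right)=\Theta(1/K^{2.98})$, while the scalar lower bound only guarantees $|\beta_{k,1}^{(t)}|/K\geq\Omega(1/K^{3.02})$. Since $1/K^{2.98}\gg 1/K^{3.02}$, the scalar bounds alone cannot close the estimate. (Neither can $\alpha_k^{(t)}/K\geq\Omega(1/K^3)$, which is also too small.)

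What actually does the work — and what the paper uses, inheriting the observation stated explicitly in the proof of \Cref{pk1c} — is the \emph{pointwise} lower bound
\begin{align*}
|\beta_{k,1}^{(t)}|\geq p_k\cdot\mathbb{P}(\pit\in\esi)\cdot\mathbb{E}\big[\Omega\big((\Attn_1^{(t)})^2\big)\mid\{\xq=v_k\}\cap\esi\big],
\end{align*}
together with $\Attn_n^{(t)}\leq O(1/K)$ pointwise. Because the dangerous term and $|\beta_{k,1}^{(t)}|$ share the common factor $(\Attn_1^{(t)})^2$ inside the same conditional expectation, their ratio is $O(\Attn_n^{(t)})=O(1/K)$ uniformly over the admissible range of $\Attn_1^{(t)}$ — this is a correlation argument, not one of scalar magnitude. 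To make your proof complete you would need to re-derive this pointwise inequality in the Phase II regime (it follows from \cref{app:eq:p1:beta} together with $\Attn_1^{(t)}+\Attn_k^{(t)}-\max_{m\neq 1,k}\Attn_m^{(t)}\geq\Omega(\Attn_1^{(t)})$ and $1-\Attn_1^{(t)}-\Attn_k^{(t)}\geq\Omega(1)$ from \Cref{lemk2t1,lemk2t2}), rather than cite the $\Omega(1/K^{2.02})$ scalar estimate. As written, your proof does not establish the stated bound.
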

\begin{proof}
    By the gradient expression in  \Cref{app:lem:gd}, we have 
    \begin{align}
        \beta_{k,n}^{(t)}&\leq \mathbb{E}\left[\mathbf{1}\{\xq=v_k\}\Attn^{(t)}_{n}\cdot \left(
 \sum_{m\not= k}{\Attn^{(t)}_{m}}^2\right)\right],\label{k2I1}\\
 -\beta_{k,n}^{(t)}&\leq \mathbb{E}\left[\mathbf{1}\{\xq=v_k\}\Attn^{(t)}_{n}\cdot \left(
 \Attn^{(t)}_{n} +\Attn^{(t)}_{k}(1-\Attn^{(t)}_k)\right)\right].\label{k2I2}
    \end{align}
   To further bound \cref{k2I1},  we have
    \begin{align*}
      \beta_{k,n}^{(t)}&\leq \mathbb{E}\left[\mathbf{1}\{\xq=v_k\cap \pit\in\esi\}\Attn^{(t)}_{n}\cdot \left(
        \sum_{m\not= k}{\Attn^{(t)}_{m}}^2\right)\right]\\
        &\quad +\mathbb{E}\left[\mathbf{1}\{\xq=v_k\cap \pit\in{\esi}^{c}\}\Attn^{(t)}_{n}\cdot \left(
        \sum_{m\not= k}{\Attn^{(t)}_{m}}^2\right)\right]\\
        &%\stackrel{(a)}
        {\leq}p_k\cdot\mathbb{P}(\pit\in\esi)\cdot\mathbb{E}\left[\Attn^{(t)}_{n}\cdot \left(
{\Attn^{(t)}_{1}}^2+O\left(\frac{1}{K}\right)\right)\bigg|\{\xq=v_k\} \cap \esi\right]\\
&\quad+p_k\cdot\mathbb{P}(\pit\in{\esi}^{c})\\
 &%\stackrel{(b)}
 {\leq} %p_k\mathbb{E}\left[\Attn^{(t)}_{n}\cdot \left(
 %\max_{m\not= k}\Attn^{(t)}_{m}\right)\mid\{\xq=v_k\} \cap \cE^*\right]
 O\left(\frac{1}{K^3}\right)+O\left(\frac{|\beta_{k,1}^{(t)}|}{K}\right)+3 p_k\exp \left(-\frac{\ci^2 N}{25 K^2}\right)\\
 &{\leq}O\left(\frac{1}{K^3}\right)+O\left(\frac{|\beta_{k,1}^{(t)}|}{K}\right).
    \end{align*}
    % where $(a)$ follows the fact that $\Attn^{(t)}_{n}\leq 1$ and $ \sum_{m\not= k}{\Attn^{(t)}_{m}}^2\leq \max_{m\not= k}\Attn^{(t)}_{m}\cdot \sum_{m\not= k}{\Attn^{(t)}_{m}}\leq \max_{m\not= k}\Attn^{(t)}_{m}$; $(b)$ holds by \Cref{prob}; the last inequality follows from \Cref{lemk1t2} and our choice of $p_k$ and $N$.

    To further bound \cref{k2I2}, we have 
    \begin{align*}
      -&\beta_{k,n}^{(t)}\\
      &\leq p_k\mathbb{E}\left[\Attn^{(t)}_{n}\cdot \left(
 \Attn^{(t)}_{n} +\Attn^{(t)}_{k}(1-\Attn^{(t)}_k)\right)\mid\{\xq=v_k\} \cap \esi\right]+2p_k\cdot\mathbb{P}({\esi}^{c})\\
 &{=} 2p_k\cdot\mathbb{P}({\esi}^{c})+p_k\cdot\mathbb{P}({\esi})\mathbb{E}\left[O\left(\frac{1-\Attn^{(t)}_{k}}{K}\right)\right.\\
&\qquad \qquad \qquad\qquad\left.\cdot \left( O\left(\frac{1-\Attn^{(t)}_{k}}{K}\right)+
\Attn^{(t)}_{k}(1-\Attn^{(t)}_k)\right)\bigg|\{\xq=v_k\} \cap \esi\right]\\
%&\quad +2p_k\cdot\mathbb{P}({\esi}^{c})\\
 &%\stackrel{(b)}
 {\leq}p_k\cdot\mathbb{P}({\esi})\mathbb{E}\left[O\left(\frac{\Attn^{(t)}_{k}(1-\Attn^{(t)}_{k})^2}{K}\right)\bigg|\{\xq=v_k\} \cap \esi\right]+6p_k\exp \left(-\frac{\ci^2 N}{25 K^2}\right)\\
 &\leq O(\frac{\alpha_{k}^{(t)}}{K}).
    \end{align*}
  %  where $(a)$ holds by  \Cref{lem1t2} and $(b)$ follows the fact that $\Attn^{(t)}_{k}\geq \Omega\left(\frac{1}{K}\right)$ by \Cref{lem1t1}. 
  Following from the analysis in \Cref{pk2a}, we have 
    \begin{align*}
     %  \alpha_{k}^{(t)}&\geq p_k\cdot\mathbb{P}(P\in\cE^*)\mathbb{E}\left[\Attn^{(t)}_{k }\cdot (1-\Attn^{(t)}_k)^2\mid\{\xq=v_k\} \cap \cE^*\right]\\
       \alpha_{k}^{(t)}&\geq \Omega\left(\frac{1}{K^2}\right)\gg 6\exp \left(-\frac{\ci^2 N}{25 K^2}\right).
    \end{align*}
    Thus, we obtain%the last inequality holds. Moreover, since $\alpha_{k}^{(t)}\geq \Omega\left(\frac{1}{K^2}\right)$, then
   % Combining with the previous lemma that $\alpha_{k}^{(t)}\geq \Omega\left(\frac{1}{K^2}\right)$, then 
    $$
|\beta_{k,n}^{(t)}|\leq O\left(\frac{\alpha_{k}^{(t)}-\beta_{k,1}^{(t)}}{K}\right).
    $$
\end{proof}

\subsubsection{End of Phase II}
\begin{lemma}
    Given $k\geq 2$, \Cref{hpk2} holds for all  $T_{1,k}<t\leq T_{2,k}= T_{1,k}+O(\frac{\log(K)K^{2}}{\eta})$, and at iteration $t=T_{2,k}+1$, we have 
    \begin{enumerate}[label={\alph*}.]
    \item $A_{k}^{(T_{2,k}+1)}\geq 0.5\log(K)$;
    \item $B_{k}^{(T_{2,k}+1)}\geq -0.51\log(K)$.
   % \item $\Attn_{k}=\Omega\left(\frac{1}{K^{0.49}}\right)$ if $\xq=v_k$ and $P\in\cE^*$.
    % \item $|B_{k,n}^{(T_{1,k})}|=O(\frac{\log(K)}{K})$;
    % \item $\beta_{k,n}^{(T_{1,k})}<0$.
    \end{enumerate}
\end{lemma}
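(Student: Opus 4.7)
The plan is to proceed by induction on $t$ throughout $T_{1,k}<t\leq T_{2,k}$, mirroring the Phase~I terminal lemma, and then read the end-of-phase bounds off of the definition of $T_{2,k}$. For the base case at $t=T_{1,k}+1$, the Phase~I lemma gives $A_k^{(T_{1,k}+1)}=O(\log(K)/K^{0.02})\leq 0.52\log(K)$, and since only a single gradient step past the Phase~I threshold has been taken with $|\beta_{k,1}^{(t)}|\leq O(1/K^{1.97})$ (via the analogue of \Cref{pk1b} and \Cref{pk2b}), one has $B_{k,1}^{(T_{1,k}+1)}\geq -0.49\log(K)-\eta\cdot O(1/K^{1.97})\geq -0.51\log(K)$ for sufficiently small $\eta$, and the off-target clause is inherited from \Cref{hpk1}.

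For the inductive step, monotonicity of $A_k^{(t)}$ is immediate from $\alpha_k^{(t-1)}\geq 0$ (\Cref{pk2a}) and monotonicity of $B_{k,1}^{(t)}$ from $\beta_{k,1}^{(t-1)}<0$ (\Cref{pk2b}). The off-target clause $|B_{k,n}^{(t)}|\leq O((A_k^{(t)}+|B_{k,1}^{(t)}|)/K)$ follows by telescoping the per-step control $|\beta_{k,n}^{(t-1)}|\leq O((\alpha_k^{(t-1)}+|\beta_{k,1}^{(t-1)}|)/K)$ of \Cref{pk2c}. The upper bound $A_k^{(t)}\leq 0.52\log(K)$ is then a direct algebraic consequence of the definition of $T_{2,k}$ together with the assumed lower bound on $B_{k,1}^{(t)}$: $A_k^{(t)}\leq 1.01\log(K)+B_{k,1}^{(t)}\leq 0.52\log(K)$.

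The main obstacle is maintaining $B_{k,1}^{(t)}\geq -0.51\log(K)$ throughout Phase~II. The naive cumulative estimate $\eta(T_{2,k}-T_{1,k})\cdot\max_t|\beta_{k,1}^{(t)}|=O(\log(K)\cdot K^{0.03})$ is too loose to conclude. The refined argument is to track not the individual magnitudes but the ratio $|\beta_{k,1}^{(t)}|/(\alpha_k^{(t)}+|\beta_{k,1}^{(t)}|)$, which is the fraction of each increment of the gap $G^{(t)}=A_k^{(t)}-B_{k,1}^{(t)}$ that is absorbed by $B_{k,1}^{(t)}$ rather than by $A_k^{(t)}$. This ratio is close to $1$ only at the very start of Phase~II (where $|\beta_{k,1}|\sim K^{-1.98}$ still dominates $\alpha_k\sim K^{-2}$), but \Cref{lemk2t1} shows that $\Attn_1^{(t)}$ shrinks roughly exponentially in the gap, so $|\beta_{k,1}|/\alpha_k$ crosses unity once $G^{(t)}$ grows by an $O(1)$ amount past $\log(K)$ and then decays geometrically. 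Summing the resulting fractional drops across the discrete update sequence then yields a total drop in $B_{k,1}^{(t)}$ of at most $0.02\log(K)$, consistent with the induction hypothesis.

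Finally, the length bound $T_{2,k}-T_{1,k}=O(\log(K)\cdot K^2/\eta)$ follows from the lower bound $\alpha_k^{(t)}+|\beta_{k,1}^{(t)}|\geq \Omega(1/K^2)$ (via \Cref{pk2a} and \Cref{pk2b}) applied to the required gap increment of $0.52\log(K)$. The two end-of-phase claims then follow immediately: $B_{k,1}^{(T_{2,k}+1)}\geq -0.51\log(K)$ by continuation of the inductive hypothesis together with a one-step overshoot of order $\eta/K^{1.97}$ absorbed into a small constant, and $A_k^{(T_{2,k}+1)}=G^{(T_{2,k}+1)}+B_{k,1}^{(T_{2,k}+1)}\geq 1.01\log(K)-0.51\log(K)=0.5\log(K)$ by the definition of $T_{2,k}$.
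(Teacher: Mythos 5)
Your overall strategy---tracking the ratio $|\beta_{k,1}^{(t)}|/\alpha_k^{(t)}$ and arguing that it transitions from above $1$ to below $1$, so that $B_{k,1}^{(t)}$ absorbs only a small fraction of the total gap increase---is conceptually the same idea the paper exploits, where the auxiliary time $\tilde T$ (the first step at which $B_{k,1}^{(t)}$ reaches $-0.501\log K$) marks that transition and the rest is handled by contradiction. Your presentation, however, contains a concrete error and leaves the decisive quantitative step unproved.

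First, the crossing point is wrong. The ratio is governed by $\Attn_1^2/\Attn_k + \Attn_1$; with $\Attn_1 \approx e^{B_{k,1}^{(t)}}$ and $\Attn_k \approx e^{A_k^{(t)}}/K$ (up to $\Theta(1)$ factors under $\pit\in\esi$) this is $\approx K e^{2B_{k,1}^{(t)} - A_k^{(t)}}$, which crosses $1$ when $A_k^{(t)} - 2B_{k,1}^{(t)} \approx \log K$. At the start of Phase II, $A_k^{(t)} \approx 0$ and $B_{k,1}^{(t)} \approx -0.49\log K$, so $A_k^{(t)} - 2B_{k,1}^{(t)} \approx 0.98\log K$, and the crossing happens after an increase of only $0.02\log K$ in this quantity, i.e.\ around $B_{k,1}^{(t)} \approx -0.5\log K$ and $G^{(t)} \approx 0.5\log K$ --- not ``once $G^{(t)}$ grows by an $O(1)$ amount past $\log K$''. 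If the crossing really occurred at $G^{(t)} \approx \log K$, then during the run-up the ratio would exceed $1$ as $G^{(t)}$ increases by roughly $0.5\log K$, nearly all of it absorbed by $B_{k,1}^{(t)}$, pushing $B_{k,1}^{(t)}$ down to roughly $-\log K$; that directly violates the very hypothesis $B_{k,1}^{(t)} \geq -0.51\log K$ you are trying to propagate.

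Second, the claim that ``summing the resulting fractional drops\ldots yields a total drop in $B_{k,1}^{(t)}$ of at most $0.02\log K$'' is the entire content of this phase and is asserted, not derived. The paper closes this gap by contradiction: assume $B_{k,1}^{(T_{2,k}+1)} < -0.51\log K$, locate the first time $\tilde T$ at which $B_{k,1}^{(t)}$ hits $-0.501\log K$, and show that for $t \geq \tilde T$ the attention bounds $\Attn_1^{(t)} \leq O(K^{-0.501})$ and $\Attn_k^{(t)} \leq O(K^{-0.5})$ give $|\beta_{k,1}^{(t)}| \leq O(\alpha_k^{(t)}/K^{0.01})$; a further $\Omega(\log K)$ drop in $B_{k,1}^{(t)}$ past $\tilde T$ would then force $A_k^{(t)}$ to increase by $\Omega(K^{0.01}\log K)$, contradicting $A_k^{(T_{2,k}+1)} < 0.5\log K$. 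Your constructive route could be made rigorous, but only with the correct crossing threshold ($B_{k,1} \approx -0.5\log K$, not $G \approx \log K$) and an explicit integration estimate showing that $|\beta_{k,1}^{(t)}|/\alpha_k^{(t)}$ decays exponentially in $A_k^{(t)} - 2B_{k,1}^{(t)}$ once past the crossing; as written that calculation is missing.
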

\begin{proof}
The  existence of $T_{2,k}=T_{1,k}+O(\frac{\log(K)K^{2}}{\eta})$ directly follows from \Cref{pk2a,pk2b}.

It is easy to verify that \Cref{hpk2} holds at $T_{1,k}+1$.  Now we suppose \Cref{hpk2} holds for all iterations $\leq t-1$, and prove that it holds at $t$. 

% By \Cref{pk2a}, we have $\alpha_{k}^{(t-1)}\geq 0$. Thus $A_{k}^{(t)}= A_{k}^{(t-1)}+\eta\alpha_{k}^{(t-1)}\geq 0$. Morover, %by the definition of $T_{1,k}$, 
% combining \Cref{pk1a,pk1c}, we have $A_{k}^{(t)}- A_{k}^{(0)}\leq O(\frac{|B_{k,1}^{(t)}- B_{k,1}^{(0)}|}{K^{0.02}})$
% we immediately obtain $A_{k}^{(t)}\leq O(\log(K)/K^{0.02})$. 

For $m\not=1,k$, by \Cref{pk2c}, we have $$|B_{k,m}^{(t)}|\leq |B_{k,m}^{(T_{1,k}+1)}|+ O(\frac{A_{k}^{(T_{2,k})}- A_{k}^{(T_{1,k}+1)}+|B_{k,1}^{(T_{2,k})}- B_{k,1}^{(T_{1,k}+1)}|}{K})\leq O(\log(K)/K).$$

Now suppose $A_{k}^{(T_{2,k}+1)}<0.5\log(K)$, then $B_{k,1}^{(T_{2,k}+1)}<-0.51\log(K)$. Denote the first time that $B_{k,1}^{(t)}$ reaches $-0.501\log(K)$ as $\tilde{T}$. Note that $\tilde{T}<T_{2,k}^{(t)}$ since $\beta_{k,1}^{(t)}$, the change  of $B_{k,1}^{(t)}$, satisfies $|\beta_{k,1}^{(t)}|\ll \log(K)$. Then for $t\geq \tilde{T}$, if $\xq=v_k$ and $\pit\in\esi$, the following holds:
    \begin{enumerate}
    \item  $\Attn^{(t)}_k\in [\Omega\left(\frac{1}{K}\right), O(\frac{1}{K^{0.5}})]$;
    \item  $\Attn^{(t)}_1\leq O(\frac{1}{K^{0.501}})$.
%  \item $\Attn^{(t)}_1+\Attn^{(t)}_k=\Omega(1)$.
    \end{enumerate}
Therefore, following the  analysis similar to those for  \Cref{pk2b}, we have
\begin{align*}
    |\beta_{k,1}^{(t)}|&\leq \mathbb{E}\left[\mathbf{1}\{\xq=v_k\cap \esi\}\Attn^{(t)}_{1}\cdot \left(
     \Attn^{(t)}_{1} +\Attn^{(t)}_{k}(1-\Attn^{(t)}_k)\right)\right]\\
      &\quad +\mathbb{E}\left[\mathbf{1}\{\xq=v_k\cap {\esi}^{c}\}\Attn^{(t)}_{1}\cdot \left(
        \Attn^{(t)}_{1} +\Attn^{(t)}_{k}(1-\Attn^{(t)}_k)\right)\right]\\
      &{\leq}p_k\cdot\mathbb{P}(P\in\esi)\cdot\mathbb{E}\left[\Attn^{(t)}_{1}\cdot O(\Attn^{(t)}_{1}+\Attn^{(t)}_{k})\mid\{\xq=v_k\} \cap \esi\right]\\
      &\quad +2p_k\cdot\mathbb{P}({\esi}^{c})\\
    &{\leq} p_k\cdot \left(O\left(\frac{1}{K^{1.02}}\right)\right)+O\left(\frac{\alpha_{k}^{(t)}}{K^{0.501}}\right)+6 p_k\exp \left(-\frac{\ci^2 N}{25 K^2}\right)\\
    %&=O(\frac{1}{K^{2.01}})\\
    &\leq O\left(\frac{\alpha_{k}^{(t)}}{K^{0.01}}\right),
    \end{align*}
where the last inequality follows from \Cref{pk2a}.

Since $|B_{k,1}^{(T_{2,k}+1)}-B_{k,1}^{(\tilde{T})}|\geq \Omega(\log(K))$, we have  $$A_{k}^{(T_{2,k}+1)}\geq  |B_{k,1}^{(T_{2,k}+1)}-B_{k,1}^{(\tilde{T})}|\cdot \Omega(K^{0.01})+A_{k}^{(\tilde{T})}\gg \Omega(K^{0.01}\log(K)),$$
which contradicts the assumption that $A_{k}^{(T_{2,k}+1)}<0.5\log(K)$. Therefore, $A_{k}^{(T_{2,k}+1)}\geq 0.5\log(K)$. Noting that once $B_{k,1}^{(t)}$ drops below  $-0.501\log(K)$, it will change much smaller compared to the increase of $A_{k}^{(t)}$. Thus, $B_{k,1}^{(T_{2,k}+1)}\geq -0.51\log(K)$.

\end{proof}

\subsection{Phase III: Growth of Target Feature}\label{app:im:p3}
After the transition phase, $A_{k}^{(t)}$ will experience a larger gradient, with the growth of $A_{k}^{(t)}$ becoming the dominant effect in this phase. For the $k$-th feature $v_k$, we define phase III as all iterations $T_{2,k}<t \leq T_{3,k}$, where
$$
T_{3,k} \triangleq \max \left\{t>T_{2,k}: A_{k}^{(t)} \leq \log(K)\right\}.
$$
We state the following induction hypothesis, which will hold throughout phase III.
\begin{hypothesis}\label{hpk3}
For each $T_{2,k}<t \leq T_{3,k}$, the following holds:
\begin{enumerate}[label={\alph*}.]
    \item $A_{k}^{(t)}$ is monotonically increasing and $A_{k}^{(t)}\in [0.5\log(K),\log(K)]$;
    \item $B_{k,1}^{(t)}$ is monotonically decreasing and  $B_{k,1}^{(t)}\in [-0.51\log(K)-O(\frac{\log(K)}{K^{0.48}}),-0.49\log(K) ]$;
    \item $|B_{k,n}^{(t)}|=O(\frac{A_{k}^{(t)}+|B_{k,1}^{(t)}|}{K})$ for any $n\not=1,k$.  %$B_{k,n}^{(t)}$ is monotonically decreasing.
    % \item At time $T_1$, 
    % A_k^{(T_1)}-\max _{n \neq k} B_{k,n}^{(T_1)} \geqslant \Lambda_k$ for any $k\in[K]$.
\end{enumerate}
\end{hypothesis} 
 
\subsubsection{Technical Lemmas}
We first introduce several useful technical lemmas.
\begin{lemma}\label{lemk3t1}
    Suppose \Cref{hpk3} holds at iteration $T_{k,2} < t\leq T_{k,3}$. %then for $k\in[K]$, 
    If $\xq=v_k$ and $\pit\in\esi$, then the following holds
    \begin{enumerate}
    \item  $\Attn^{(t)}_k=\Omega\left(\frac{1}{K^{0.5}}\right)$;
    \item $\Attn^{(t)}_1 \in [\Omega\left(\frac{1}{K^{0.51}}\right), O\left(\frac{1}{K^{0.49}}\right)]$;
        \item $1-\Attn^{(t)}_{k}\geq \Omega(1)$.
    \end{enumerate}
\end{lemma}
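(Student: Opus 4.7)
The plan is to follow the same template used in the analogous technical lemmas \Cref{lemk1t1} and \Cref{lemk2t1} for the earlier phases. For each of the three claims, I would rewrite the attention score as an explicit ratio involving $\exp(A_k^{(t)})$, $\exp(B_{k,1}^{(t)})$, and $\exp(B_{k,n}^{(t)})$ for $n\neq 1,k$, then plug in the Phase III range bounds supplied by \Cref{hpk3} together with the token-count bounds $|\cV_1|=\Theta(N)$, $|\cV_k|=\Theta(N/K)$, $|\cV_m|=\Theta(N/K)$ (for $m\neq 1$) that are guaranteed on the event $\pit\in\esi$.

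For the first claim, I would write
\[
\Attn_{k}^{(t)} \;=\; \Bigl(\tfrac{|\cV_1|}{|\cV_k|}\exp(B_{k,1}^{(t)}-A_k^{(t)}) \;+\; \sum_{m\neq 1,k}\tfrac{|\cV_m|}{|\cV_k|}\exp(B_{k,m}^{(t)}-A_k^{(t)}) \;+\; 1\Bigr)^{-1}.
\]
Using $A_k^{(t)}\geq 0.5\log K$, $B_{k,1}^{(t)}\leq -0.49\log K$ (so $B_{k,1}^{(t)}-A_k^{(t)}\leq -0.99\log K$), and $|B_{k,m}^{(t)}|\leq O(\log(K)/K)$ for $m\neq 1,k$, the dominant-feature term is at most $\Theta(K)\cdot K^{-0.99}=\Theta(K^{0.01})$, and the remaining sum is at most $\Theta(K)\cdot K^{-0.5}\cdot e^{O(\log K/K)}=\Theta(K^{0.5})$. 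Hence the denominator is $O(K^{0.5})$ and $\Attn_k^{(t)}\geq\Omega(K^{-0.5})$.

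For the second claim, I would similarly expand
\[
\Attn_{1}^{(t)} \;=\; \Bigl(\tfrac{|\cV_k|}{|\cV_1|}\exp(A_k^{(t)}-B_{k,1}^{(t)}) \;+\; \sum_{m\neq 1,k}\tfrac{|\cV_m|}{|\cV_1|}\exp(B_{k,m}^{(t)}-B_{k,1}^{(t)}) \;+\; 1\Bigr)^{-1}.
\]
The lower bound $\Attn_1^{(t)}\geq\Omega(K^{-0.51})$ follows from upper-bounding the exponent differences by $A_k^{(t)}-B_{k,1}^{(t)}\leq 1.51\log K+O(\log(K)/K^{0.48})$ and $B_{k,m}^{(t)}-B_{k,1}^{(t)}\leq 0.51\log K+O(\log(K)/K^{0.48})$, combined with $|\cV_k|/|\cV_1|,|\cV_m|/|\cV_1|=\Theta(1/K)$; this gives a denominator $O(K^{0.51})$. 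The upper bound $\Attn_1^{(t)}\leq O(K^{-0.49})$ comes from the reverse direction, using $A_k^{(t)}-B_{k,1}^{(t)}\geq 0.99\log K$ and $B_{k,m}^{(t)}-B_{k,1}^{(t)}\geq 0.49\log K-O(\log(K)/K)$, which forces the denominator to be at least $\Theta(K^{0.49})$.

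For the third claim, I would write $1-\Attn_k^{(t)}$ as the ratio whose numerator is $\sum_{m\neq k}|\cV_m|\exp(B_{k,m}^{(t)})$. The $m\neq 1,k$ contributions alone sum to $\Theta(N/K)\cdot K\cdot e^{O(\log K/K)}=\Theta(N)$, giving a numerator of $\Omega(N)$. The denominator is bounded above by $\Theta(N) + \Theta(N)\cdot K^{-0.49} + \Theta(N/K)\cdot K = O(N)$, so the ratio is $\Omega(1)$. I do not anticipate a real obstacle here: the proof is essentially a careful bookkeeping exercise, and the only point that requires any attention is to make sure the narrower Phase III intervals on $A_k^{(t)}$ and $B_{k,1}^{(t)}$ are exploited tightly enough to produce the sharper constants $0.5$ and $0.51$ (rather than the wider $0.49$--$0.52$ constants used in Phase II).
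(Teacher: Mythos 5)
Your proposal is correct and follows essentially the same route as the paper: rewrite each attention score as the reciprocal of a sum of exponentials in the bilinear weights, plug in the Phase III intervals from \Cref{hpk3}, and use the token-count ratios on $\esi$ to evaluate the orders.

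The only substantive difference is a minor one of tightness. For bounding the term involving $\exp(B_{k,1}^{(t)}-A_k^{(t)})$, you read off $B_{k,1}^{(t)}-A_k^{(t)}\leq -0.99\log K$ directly from the interval endpoints in \Cref{hpk3} (i.e.\ $A_k^{(t)}\geq 0.5\log K$ and $B_{k,1}^{(t)}\leq -0.49\log K$), whereas the paper observes that the sharper $A_k^{(t)}-B_{k,1}^{(t)}> 1.01\log K$ is preserved throughout Phase III — this follows from the defining property at the exit of Phase II ($T_{2,k}=\max\{t>T_{1,k}:A_k^{(t)}-B_{k,1}^{(t)}\leq 1.01\log K\}$) together with the monotonicity claims in \Cref{hpk3} (since $A_k^{(t)}$ increases and $B_{k,1}^{(t)}$ decreases in Phase III, the gap only grows). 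Your looser bound produces $\Theta(K^{0.01})$ where the paper gets $\Theta(K^{-0.01})$ for this term. In every place it appears, however, this term is strictly subordinate to the dominant $\Theta(K^{0.5})$ or $\Theta(K^{0.49})$ contributions, so all three conclusions go through with the same constants, and no gap is created.
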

\begin{proof}
Since $\xq=v_k$, and  $|\cV_k|>0$ for $\pit\in\esi$, we have
    \begin{align*}
\Attn^{(t)}_{k}&=\frac{|\cV_k|e^{{v_k}^{\top}Q^{(t)}v_{k}}}{\sum_{j\in [N]}e^{{E^{x}_{j}}^{\top}Q^{(t)}v_{k}}}\\
&=\frac{|\cV_k|\exp(A^{(t)}_k)}{\sum_{m\not=k}|\cV_m|\exp(B^{(t)}_{k,m})+|\cV_k|\exp(A^{(t)}_k)}\\
        &=\frac{1}{\sum_{m\not=k}\frac{|\cV_m|}{|\cV_k|}\exp(B^{(t)}_{k,m}-A^{(t)}_k)+1}.
    \end{align*}
    By \Cref{hpk3}, we have
    \begin{itemize}
        \item   for $m\not = 1$, $e^{-\left(\log(K)+O(\frac{\log(K)}{K})\right)}\leq \exp(B^{(t)}_{k,m}-A^{(t)}_k)\leq e^{O(\frac{\log(K)}{K})-0.5\log(K)}$;
        \item  $e^{-\left(1.51\log(K)+O(\frac{\log(K)}{K})\right)}\leq \exp(B^{(t)}_{k,1}-A^{(t)}_k)\leq e^{-1.01\log(K)}$.
    \end{itemize}
Thus,
     \begin{align*}
\Attn^{(t)}_{k} &\geq\frac{1}{e^{O(\frac{\log(K)}{K})-0.5\log(K)}(\frac{N-|\cV_1|}{|\cV_k|}-1)+ e^{-1.01\log(K)}\frac{|\cV_1|}{|\cV_k|}+ 1}\geq \Omega\left(\frac{1}{K^{0.5}}\right),
    \end{align*}
    where the second inequality follows from the fact that $\pit\in\esi$.

    On the other hand, 
    \begin{align*}
\Attn^{(t)}_{k} &\leq\frac{1}{e^{-\left(\log(K)+O(\frac{\log(K)}{K})\right)}(\frac{N-|\cV_1|}{|\cV_k|}-1)+1}\leq \frac{1}{e^{-1}(\frac{1}{\Ui_k}-\frac{1}{K})+1}.
    \end{align*}
    Thus,
    \begin{align*}
       1- \Attn^{(t)}_{k} \geq \frac{e^{-\left(\log(K)+O(\frac{\log(K)}{K})\right)}(\frac{N-|\cV_1|}{|\cV_k|}-1)+e^{-1.01\log(K)}\frac{|\cV_1|}{|\cV_k|}}{e^{-\left(\log(K)+O(\frac{\log(K)}{K})\right)}(\frac{N-|\cV_1|}{|\cV_k|}-1)+e^{-1.01\log(K)}\frac{|\cV_1|}{|\cV_k|}+1}\geq \Omega(1).
            \end{align*}

            We next analyze $\Attn_1^{(t)}$ as follows.
            \begin{align*}
                \Attn^{(t)}_{1}
                &=\frac{|\cV_1|\exp(B^{(t)}_{k,1})}{\sum_{m\not=k}|\cV_m|\exp(B^{(t)}_{k,m})+|\cV_k|\exp(A^{(t)}_k)}\\
                        &=\frac{1}{\sum_{m\not=1,k}\frac{|\cV_m|}{|\cV_k|}\exp(B^{(t)}_{k,m}-B^{(t)}_{k,1})+\frac{|\cV_k|}{|\cV_1|}\exp(A^{(t)}_{k}-B^{(t)}_{k,1})+1}.
                    \end{align*}
                    By \Cref{hpk3}, 
                    \begin{itemize}
                        \item for $m\not=1,k$, we have $$e^{0.49\log(K)-O(\frac{\log(K)}{K})}\leq \exp(B^{(t)}_{k,m}-B^{(t)}_{k,1})\leq e^{0.51\log(K)+O(\frac{\log(K)}{K})};$$
                        \item for $m=1$, $e^{1.01\log(K)}\leq \exp(A^{(t)}_k- B^{(t)}_{k,1})\leq e^{1.51\log(K)+O(\frac{\log(K)}{K})}$.
                    \end{itemize} 
                      Thus,
                     \begin{align*}
                \Attn^{(t)}_{1} &\leq\frac{1}{e^{0.49\log(K)-O(\frac{\log(K)}{K})}(\frac{N-|\cV_k|}{|\cV_1|}-1)+e^{1.01\log(K)}\frac{|\cV_k|}{|\cV_1|}+1}\leq %\frac{1}{e^{O(\frac{\log(K)}{K})}(K/L_k-1)+1}=
                O\left(\frac{1}{K^{0.49}}\right).\\
                \Attn^{(t)}_{1} &\geq\frac{1}{e^{0}(\frac{N-|\cV_k|}{|\cV_1|}-1)+e^{1.51\log(K)+O(\frac{\log(K)}{K})}\frac{|\cV_k|}{|\cV_1|}+1}\geq %\frac{1}{e^{O(\frac{\log(K)}{K})}(K/L_k-1)+1}=
                \Omega\left(\frac{1}{K^{0.51}}\right).
                    \end{align*}
   % where $0<C_k<1$ is some constant. Thus $1-\Attn_k=\Omega(1)$.
\end{proof}
\begin{lemma}\label{lemk3t2}
     Suppose \Cref{hpk3} holds at iteration $T_{2,k}<t\leq T_{3,k}$.  If $\xq=v_k$ and $\pit\in\esi$,  for $n\not=1,k$, then the following holds
$$\Attn^{(t)}_n=O\left(\frac{1-\Attn^{(t)}_{k}-\Attn^{(t)}_{1}}{K}\right).$$
\end{lemma}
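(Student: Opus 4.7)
The plan is to mirror the proofs of the analogous statements \Cref{lemk1t2} and \Cref{lemk2t2} from the earlier phases, since the argument depends only on the relative sizes of the off-diagonal bilinear weights $B^{(t)}_{k,m}$ for $m\neq 1,k$, not on the particular phase we are in. First I would write out the attention weight explicitly using $\xq=v_k$:
\begin{align*}
\Attn^{(t)}_n=\frac{|\cV_n|\exp(B^{(t)}_{k,n})}{\sum_{m\neq k}|\cV_m|\exp(B^{(t)}_{k,m})+|\cV_k|\exp(A^{(t)}_k)},
\end{align*}
and similarly expand $1-\Attn^{(t)}_k-\Attn^{(t)}_1$ so that the numerator of the ratio isolates the sum over indices $m\neq 1,k$. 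Taking the ratio cleanly yields
\begin{align*}
\frac{\Attn^{(t)}_n}{1-\Attn^{(t)}_k-\Attn^{(t)}_1}=\frac{1}{\sum_{m\neq 1,k}\tfrac{|\cV_m|}{|\cV_n|}\exp(B^{(t)}_{k,m}-B^{(t)}_{k,n})}.
\end{align*}

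Next I would use \Cref{hpk3}(c), which gives $|B^{(t)}_{k,m}|=O((A^{(t)}_k+|B^{(t)}_{k,1}|)/K)=O(\log(K)/K)$ for all $m\neq 1,k$, to bound the exponents $B^{(t)}_{k,m}-B^{(t)}_{k,n}$ by $O(\log(K)/K)$ in absolute value, so that $\exp(B^{(t)}_{k,m}-B^{(t)}_{k,n})=\Theta(1)$. Combined with the event $\pit\in\esi$, which guarantees $|\cV_m|/|\cV_n|=\Theta(1)$ for $m,n\neq 1$ (since both are $\Theta(N/K)$), the denominator is a sum of $K-2=\Theta(K)$ terms each of order $\Theta(1)$, hence $\Theta(K)$. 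Taking reciprocals yields the desired $O(1/K)$ bound.

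The main obstacle is essentially bookkeeping rather than any new idea: one has to confirm that \Cref{hpk3}(c) is strong enough to give only $O(\log K/K)$ variation in $B^{(t)}_{k,m}$ for $m\neq 1,k$ (which it is, since $A^{(t)}_k\leq\log K$ and $|B^{(t)}_{k,1}|\leq O(\log K)$ throughout phase III), and that the concentration event $\esi$ from \Cref{app:lem:prob-im} is strong enough to make the ratios $|\cV_m|/|\cV_n|$ constant order uniformly over all $m,n\neq 1$. Both are already in place, so the proof reduces to the identical template used in \Cref{lemk1t2,lemk2t2}.
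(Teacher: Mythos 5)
Your proof is correct and follows essentially the same approach as the paper: cancel the common denominator in the ratio $\Attn^{(t)}_n/(1-\Attn^{(t)}_k-\Attn^{(t)}_1)$ to reduce it to $1/\sum_{m\neq 1,k}\frac{|\cV_m|}{|\cV_n|}\exp(B^{(t)}_{k,m}-B^{(t)}_{k,n})$, then use \Cref{hpk3}(c) to show the exponents are $\Theta(1)$ and $\pit\in\esi$ to show the ratios $|\cV_m|/|\cV_n|$ are $\Theta(1)$, giving a $\Theta(K)$ denominator. This is identical to the paper's argument, merely written with a bit more expository padding.
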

\begin{proof}
%     Since $\xq=v_k$, then we have
%     \begin{align*}
% \Attn^{(t)}_{n}&=\frac{|\cV_n|e^{{v_n}^{\top}Q^{(t)}v_{k}}}{\sum_{j\in [N]}e^{{E^{x}_{j}}^{\top}Q^{(t)}v_{k}}}\\&=\frac{|\cV_n|\exp(B^{(t)}_{k,n})}{\sum_{m\not=k}|\cV_m|\exp(B^{(t)}_{k,m})+|\cV_k|\exp(A^{(t)}_k)}\\
%  %       &=\frac{1}{\sum_{m\not=k}\frac{|\cV_m|}{|\cV_n|}\exp(B^{(t)}_{k,m}-B^{(t)}_{k,n})+\frac{|\cV_k|}{|\cV_n|}\exp(A^{(t)}_k-B^{(t)}_{k,n})}
%     \end{align*}
    By \Cref{hpk3}, we have $$e^{-O(\frac{\log(K)}{K})}\leq \exp(B^{(t)}_{k,m}-B^{(t)}_{k,n})\leq e^{O(\frac{\log(K)}{K})}.$$ 
    Combining with the fact that $\frac{|\cV_m|}{|\cV_n|}=\Theta(1)$ when $\pit\in\esi$, %and  $e^{-O(\frac{\log(K)}{K})}\leq \exp(A^{(t)}_k-B^{(t)}_{k,n})\leq e^{\left(O(\log(K))+O(\frac{\log(K)}{K})\right)}$ 
    we have
%     \begin{align*}
% \Attn^{(t)}_{k} &\leq\frac{1}{e^{-O(\frac{\log(K)}{K})}(\frac{N}{|\cV_n|}-\frac{|\cV_k|}{|\cV_n|})+e^{-O(\frac{\log(K)}{K})}\frac{|\cV_k|}{|\cV_n|}}\leq \frac{U_ne^{O(\frac{\log(K)}{K})}}{ K}=O\left(\frac{1}{K}\right).
%     \end{align*}
    %For the second claim, 
    \begin{align*}
       \frac{\Attn^{(t)}_n}{1-\Attn^{(t)}_{k}-\Attn^{(t)}_{1}} =\frac{|\cV_n|\exp(B^{(t)}_{k,n})}{\sum_{m\not=1,k}|\cV_m|\exp(B^{(t)}_{k,m})}= \frac{1}{\sum_{m\not=1,k}\frac{|\cV_m|}{|\cV_n|}\exp(B^{(t)}_{k,m}-B^{(t)}_{k,n})}\leq O\left(\frac{1}{K}\right).
    \end{align*}
\end{proof}
\subsubsection{Controlling  Gradient Updates in Phase III}
\begin{lemma}\label{pk3a}
    At each iteration $T_{2,k}<t\leq T_{3,k}$, if \Cref{hpk3} holds, then $\alpha_k^{(t)}\geq0$ and satisfies 
    \begin{align*}
        \alpha_{k}^{(t)}\geq \Omega\left(\frac{1}{K^{1.5}}\right).
    \end{align*}
\end{lemma}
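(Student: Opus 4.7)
The plan is to follow the same template used in the earlier ``controlling gradient'' lemmas (for instance Lemmas B.3, C.3, and C.8). Non-negativity of $\alpha_k^{(t)}$ is immediate from the formula in Lemma A.3, since the integrand
$$\mathbf{1}\{\xq = v_k\}\Attn_k^{(t)}\Bigl(\sum_{m\neq k}{\Attn_m^{(t)}}^2 + (1-\Attn_k^{(t)})^2\Bigr)$$
is manifestly non-negative.

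For the lower bound $\alpha_k^{(t)} \geq \Omega(1/K^{1.5})$, I would restrict the expectation to the favorable event $\{\xq = v_k\} \cap \{\pit \in \esi\}$, discard the non-negative $\sum_{m\neq k}{\Attn_m^{(t)}}^2$ contribution, and retain only $\Attn_k^{(t)}(1-\Attn_k^{(t)})^2$. Using independence of $\xq$ and $\pit$, this yields
$$\alpha_k^{(t)} \geq p_k \cdot \mathbb{P}(\pit \in \esi) \cdot \mathbb{E}\bigl[\Attn_k^{(t)}(1-\Attn_k^{(t)})^2 \,\big|\, \xq = v_k,\ \pit \in \esi\bigr].$$

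Now I would invoke Lemma C.12 (the Phase III technical lemma), which gives $\Attn_k^{(t)} \geq \Omega(1/K^{0.5})$ and $1 - \Attn_k^{(t)} \geq \Omega(1)$ on this event. Combined with $p_k = \Theta(1/K)$ for $k > 1$ in the imbalanced setting and $\mathbb{P}(\pit \in \esi) \geq 1 - 3\exp(-\ci^2 N/(25K^2)) = 1 - o(1)$ from Lemma A.6, this produces
$$\alpha_k^{(t)} \geq \Theta\!\left(\frac{1}{K}\right) \cdot (1-o(1)) \cdot \Omega\!\left(\frac{1}{K^{0.5}}\right) \cdot \Omega(1) = \Omega\!\left(\frac{1}{K^{1.5}}\right).$$

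There is no genuine obstacle in this argument: all the heavy lifting is already done in Lemma C.12, whose $\Omega(1/K^{0.5})$ lower bound on $\Attn_k^{(t)}$ is the sole reason the rate improves from the Phase I value $\Theta(1/K^2)$ to $\Omega(1/K^{1.5})$. That improvement reflects the outcome of Phase II, namely that $A_k^{(t)} \geq 0.5\log K$ has already been established while the competing exponents $B_{k,m}^{(t)}$ for $m \neq k,1$ remain at most $O(\log K / K^{0.48})$ in magnitude and $B_{k,1}^{(t)}$ has been pushed below $-0.49\log K$. The only thing to double-check is that the $\pit \notin \esi$ contribution, which we simply drop, is non-negative (it is, by the same non-negativity argument as above), so dropping it preserves the inequality.
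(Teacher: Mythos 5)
Your proposal is correct and mirrors the paper's argument exactly: both drop the $\sum_{m\neq k}{\Attn_m^{(t)}}^2$ contribution after restricting to $\{\xq = v_k\}\cap\{\pit\in\esi\}$, invoke the Phase III technical lemma (\Cref{lemk3t1}) for $\Attn_k^{(t)} \geq \Omega(K^{-0.5})$ and $1-\Attn_k^{(t)} \geq \Omega(1)$, and multiply by $p_k = \Theta(1/K)$ and $\mathbb{P}(\pit\in\esi) = 1-o(1)$ from \Cref{app:lem:prob-im}. Nothing further to add.
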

\begin{proof}
    By the gradient expression in \Cref{app:lem:gd}, we have
    \begin{align*}
  \alpha_{k}^{(t)}&=
\mathbb{E}\left[\mathbf{1}\{\xq=v_k\}\Attn^{(t)}_{k }\cdot \left(
 \sum_{m\not= k}{\Attn_{m}^{(t)}}^2+(1-\Attn^{(t)}_k)^2\right)\right]\\
&=\mathbb{E}\left[\mathbf{1}\{\xq=v_k\cap\esi\}\Attn^{(t)}_{k }\cdot \left(
 \sum_{m\not= k}{\Attn_{m}^{(t)}}^2+(1-\Attn^{(t)}_k)^2\right)\right]\\
 &\quad+\mathbb{E}\left[\mathbf{1}\{\xq=v_k\cap{\esi}^c\}\Attn^{(t)}_{k }\cdot \left(
 \sum_{m\not= k}{\Attn_{m}^{(t)}}^2+(1-\Attn^{(t)}_k)^2\right)\right]\\
 &\geq p_k\cdot\mathbb{P}(\pit\in\esi)\mathbb{E}\left[\Attn^{(t)}_{k }\cdot \left(
 \sum_{m\not= k}{\Attn_{m}^{(t)}}^2+(1-\Attn^{(t)}_k)^2\right)\bigg|\{\xq=v_k\} \cap \esi\right]\\
&\geq p_k\cdot\mathbb{P}(\pit\in\esi)\mathbb{E}\left[\Attn^{(t)}_{k }\cdot (1-\Attn^{(t)}_k)^2\mid\{\xq=v_k\} \cap \esi\right]\\
&\geq \Omega\left(\frac{1}{K^{1.5}}\right)
\end{align*}
where the last inequality follows from \Cref{app:lem:prob-im},  \Cref{lemk3t1} and our choice of $p_k$.
\end{proof}
\begin{lemma}\label{pk3b}
    Given $k>1$,  if \Cref{hpk3} holds at iteration $T_{k,2} \leq t\leq T_{k,3}$, then $\beta_{k,1}^{(t)}<0$  satisfies 
    \begin{align*}
        |\beta_{k,1}^{(t)}|\leq  \left[\Omega\left(\frac{1}{K^{2.01}}\right), O\left(\frac{\alpha^{(t)}_k}{K^{0.48}}\right)\right].
    \end{align*}
\end{lemma}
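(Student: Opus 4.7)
The plan is to proceed along the same lines as the proof of Lemma \ref{pk2b} in Phase II, but invoking the sharper attention estimates available in Phase III. Starting from Lemma \ref{app:lem:gd},
\[
\beta_{k,1}^{(t)}=\mathbb{E}\left[\mathbf{1}\{\xq=v_k\}\Attn_1^{(t)}\left(\sum_{m\neq k}(\Attn_m^{(t)})^2-\Attn_1^{(t)}-\Attn_k^{(t)}(1-\Attn_k^{(t)})\right)\right],
\]
I would first restrict the expectation to the event $\{\pit\in\esi\}$, since by Lemma \ref{app:lem:prob-im} the complement contributes at most $\exp(-\Omega(N/K^2))$, which is negligible at the polynomial scales targeted here. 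On $\esi$, Lemma \ref{lemk3t1} supplies $\Attn_k^{(t)}=\Omega(1/K^{0.5})$, $\Attn_1^{(t)}\in[\Omega(1/K^{0.51}),O(1/K^{0.49})]$, $1-\Attn_k^{(t)}=\Omega(1)$, and Lemma \ref{lemk3t2} bounds $\max_{m\neq 1,k}\Attn_m^{(t)}\leq O(1/K)$.

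For the sign and the lower bound, I would reuse the algebraic rearrangement already used in Phase I and Phase II,
\[
\sum_{m\neq k}(\Attn_m^{(t)})^2-\Attn_1^{(t)}-\Attn_k^{(t)}(1-\Attn_k^{(t)})\leq -(1-\Attn_k^{(t)}-\Attn_1^{(t)})\bigl(\Attn_1^{(t)}+\Attn_k^{(t)}-\max_{m\neq 1,k}\Attn_m^{(t)}\bigr).
\]
In Phase III the parenthetical factor is $\Omega(1/K^{0.5})$, because $\Attn_k^{(t)}$ alone already dominates $\max_{m\neq 1,k}\Attn_m^{(t)}\leq O(1/K)$. Combined with $\Attn_1^{(t)}\geq\Omega(1/K^{0.51})$, the constant factor $1-\Attn_k^{(t)}-\Attn_1^{(t)}=\Omega(1)$, and $p_k=\Theta(1/K)$, this gives $\beta_{k,1}^{(t)}\leq -\Omega(1/K^{2.01})$, which simultaneously certifies $\beta_{k,1}^{(t)}<0$ and the lower bound $|\beta_{k,1}^{(t)}|\geq\Omega(1/K^{2.01})$.

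For the upper bound, the negative half of the gradient yields $|\beta_{k,1}^{(t)}|\leq p_k\,\Attn_1^{(t)}\bigl(\Attn_1^{(t)}+\Attn_k^{(t)}(1-\Attn_k^{(t)})\bigr)$, while the same gradient computation used in Lemma \ref{pk3a} delivers the matching lower bound $\alpha_k^{(t)}\geq\Omega\bigl(p_k\,\Attn_k^{(t)}(1-\Attn_k^{(t)})^2\bigr)$. Forming the ratio $|\beta_{k,1}^{(t)}|/\alpha_k^{(t)}$, it splits into $(\Attn_1^{(t)})^2/[\Attn_k^{(t)}(1-\Attn_k^{(t)})^2]$ plus $\Attn_1^{(t)}/(1-\Attn_k^{(t)})$, and a direct substitution of the Phase III bounds shows each summand is at most $O(1/K^{0.48})$, giving $|\beta_{k,1}^{(t)}|\leq O(\alpha_k^{(t)}/K^{0.48})$. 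The main subtlety is that $\Attn_1^{(t)}$ and $\Attn_k^{(t)}$ are of nearly the same order ($K^{-0.49}$ versus $K^{-0.5}$), so neither summand in the ratio can be discarded; it is precisely the combination $\Attn_k^{(t)}=\Omega(1/K^{0.5})$ and $1-\Attn_k^{(t)}=\Omega(1)$ that supplies the needed $K^{-0.48}$ margin, and verifying this compatibility is the one computation I expect to take care with.
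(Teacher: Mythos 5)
Your proposal is correct and follows essentially the same route as the paper. For the sign and lower bound you use the identical algebraic rearrangement and Phase III attention estimates from Lemma \ref{lemk3t1} ($\Attn^{(t)}_1 \geq \Omega(K^{-0.51})$, $\Attn^{(t)}_k \geq \Omega(K^{-0.5})$, $\max_{m\neq 1,k}\Attn^{(t)}_m \leq O(K^{-1})$), giving the same $p_k\cdot K^{-0.51}\cdot K^{-0.5} = K^{-2.01}$ accounting as in the paper. For the upper bound, the paper splits $p_k\,\mathbb{E}[\Attn_1(\Attn_1+\Attn_k)]$ into $p_k\,\mathbb{E}[\Attn_1^2] = O(K^{-1.98})$ and $p_k\,\mathbb{E}[\Attn_1\Attn_k] = O(\alpha_k^{(t)}/K^{0.49})$ and then absorbs both into $O(\alpha_k^{(t)}/K^{0.48})$ via $\alpha_k^{(t)}\geq \Omega(K^{-1.5})$; your reorganization as a pointwise ratio $|\beta_{k,1}^{(t)}|/\alpha_k^{(t)} \leq (\Attn_1)^2/[\Attn_k(1-\Attn_k)^2] + \Attn_1/(1-\Attn_k)$ is the same computation repackaged, and substituting the Phase III bounds does indeed yield $O(K^{-0.48}) + O(K^{-0.49})$. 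One small point of hygiene: write the ratio as a pointwise inequality on the event $\esi$ before taking expectations, since $\alpha_k^{(t)}$ and $\beta_{k,1}^{(t)}$ are deterministic expectations and the quantities $\Attn^{(t)}_1,\Attn^{(t)}_k$ are random; your phrasing ``forming the ratio'' as if these were scalars is informal but the underlying step (pointwise domination on $\esi$, plus a negligible $\exp(-\Omega(N/K^2))$ contribution off $\esi$) is exactly what the paper does and is sound.
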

\begin{proof}
  Following the computations similar to those for  \Cref{pk1b}, we have%by \Cref{lem2t1,lem2t2}, we have $\Attn_k^{(t)}=\Omega(1)$, $\max_{m\not=k}\Attn_{m}=O\left(\frac{1}{K}\right)$, thus 
% \begin{align*}
%     \frac{\Attn_k}{\Attn_n}=\exp(A_{k}^{(T_{1,k})}-B^{(T_{1,k})}_{n,k})\frac{|\cV_k|}{|\cV_n|}\geq \exp(A_{k}^{(T_{1,k})}-B^{(T_{1,k})}_{n,k}) \cdot\frac{L_k}{U_n}\geq 1
% \end{align*}
\begin{align*} &\sum_{m\not= k}{\Attn^{(t)}_{m}}^2-\Attn^{(t)}_{1} -\Attn^{(t)}_{k}(1-\Attn^{(t)}_k)\\
%&= \sum_{m\not= 1,k}{\Attn^{(t)}_{m}}^2-\Attn^{(t)}_{1}(1-\Attn^{(t)}_{1}) -\Attn^{(t)}_{k}(1-\Attn^{(t)}_k)\\&\leq  \max_{m\not=1,k}\Attn^{(t)}_{m}(1-\Attn^{(t)}_{1}-\Attn^{(t)}_{k})-\Attn^{(t)}_{1}(1-\Attn^{(t)}_{1})-\Attn^{(t)}_{k}(1-\Attn^{(t)}_k)\\
&\quad \leq -(1-\Attn^{(t)}_k-\Attn^{(t)}_1)(\Attn^{(t)}_{1}+\Attn^{(t)}_{k}-\max_{m\not=1,k}\Attn^{(t)}_{m}).
%&\leq -\Omega (\frac{1-\Attn_k-\Attn_1}{K}). 
\end{align*}
Therefore,
\begin{align*}
\beta_{k,1}^{(t)}&\leq \mathbb{E}\left[\mathbf{1}\{\xq=v_k\cap \esi\}\Attn^{(t)}_{1}\cdot \left(
  \sum_{m\not= k}{\Attn^{(t)}_{m}}^2-\Attn^{(t)}_{1} -\Attn^{(t)}_{k}(1-\Attn^{(t)}_k)\right)\right]\\
  &\quad +\mathbb{E}\left[\mathbf{1}\{\xq=v_k\cap {\esi}^{c}\}\Attn^{(t)}_{1}\cdot \left(
  \sum_{m\not= k}{\Attn^{(t)}_{m}}^2\right)\right]\\
  &\stackrel{(a)}{\leq} p_k\cdot\mathbb{P}(\pit\in{\esi}^{c})+ 
  p_k\cdot\mathbb{P}(\pit\in\esi)\\
  &\qquad\qquad\qquad\cdot\mathbb{E}\left[-\Omega(\frac{(\Attn^{(t)}_{1}+\Attn^{(t)}_{k}-\max_{m\not=1,k}\Attn^{(t)}_{m})}{K^{0.51}})\bigg|\{\xq=v_k\} \cap \esi\right]\\
%  &\quad +p_k\cdot\mathbb{P}(\pit\in{\esi}^{c})\\
&\stackrel{(b)}{\leq} p_k\cdot \left(-\Omega\left(\frac{1}{K^{1.01}}\right)\right)+3 p_k\exp \left(-\frac{\ci^2 N}{25 K^2}\right)\\
&=-\Omega\left(\frac{1}{K^{2.01}}\right),
\end{align*}
where both $(a)$ and $(b)$ follow from \Cref{lemk3t1},
% where $(a)$ follows from \Cref{eq3} and \Cref{lem2t2}, $(b)$ follows from \Cref{lem2t1}. 
and the last inequality holds since %and our choice of $p_k$.
\begin{align*}
\frac{1}{K^{1.01}}\gg  \exp \left(-\frac{\ci^2 N}{25 K^2}\right).
\end{align*}

Moreover, we have 

\begin{align*}
    -\beta_{k,1}^{(t)}&\leq \mathbb{E}\left[\mathbf{1}\{\xq=v_1\cap \pit\in\esi\}\Attn^{(t)}_{1}\cdot \left(
     \Attn_{1} +\Attn_{k}(1-\Attn_k)\right)\right]\\
      &\quad +\mathbb{E}\left[\mathbf{1}\{\xq=v_k\cap \pit\in{\esi}^{c}\}\Attn^{(t)}_{1}\cdot \left(
        \Attn_{1} +\Attn_{k}(1-\Attn_k)\right)\right]\\
      &{\leq}p_k\cdot\mathbb{P}(\pit\in\esi)\cdot\mathbb{E}\left[\Attn_{1}\cdot O(\Attn_{1}+\Attn_{k})\mid\{\xq=v_k\} \cap \esi\right]\\
      &\quad +2p_k\cdot\mathbb{P}(\pit\in{\esi}^{c})\\
    &{\leq} p_k\cdot \left(O\left(\frac{1}{K^{0.98}}\right)\right)+ O\left(\frac{\alpha^{(t)}_k}{K^{0.49}}\right)+ 6 p_k\exp \left(-\frac{\ci^2 N}{25 K^2}\right)\\
    &\leq O\left(\frac{\alpha^{(t)}_k}{K^{0.48}}\right)
    \end{align*}
    where the last inequality follows from \Cref{pk3a}.
% Moreover, following the similar analysis from \Cref{p1b}, we have 
% \begin{align*}
%  - \beta_{k,n}^{(t)}&\leq p_k\mathbb{E}\left[\Attn^{(t)}_{n}\cdot \left(
% \Attn^{(t)}_{n} +\Attn^{(t)}_{k}(1-\Attn^{(t)}_k)\right)\mid\{\xq=v_k\} \cap \cE^*\right]+p_k\mathbb{P}({\cE^*}^{c})\\
% &\leq p_k\mathbb{E}\left[\Theta(\frac{1-\Attn^{(t)}_{k}}{K})\cdot O\left(
% \Attn^{(t)}_{k}(1-\Attn^{(t)}_k)\right)\mid\{\xq=v_k\} \cap \cE^*\right]+6 p_k\exp \left(-\frac{c_0^2 N}{25 K^2}\right)\\
% &=p_k\mathbb{E}\left[O(\frac{\Attn^{(t)}_{k}(1-\Attn^{(t)}_{k})^2}{K})\mid\{\xq=v_k\} \cap \cE^*\right]+6 p_k\exp \left(-\frac{c_0^2 N}{25 K^2}\right)\\
% &\leq O(\frac{\alpha_{k}^{(t)}}{K}).
% \end{align*}
%     Combining with the previous lemma that $\alpha_{k}^{(t)}\geq \Omega\left(\frac{1}{K^2}\right)$, then 
%     $$
% |\beta_{n,k}^{(t)}|\leq \max\{O(\frac{\alpha_{k}^{(t)}}{K}),O\left(\frac{1}{K^3}\right)\}=O(\frac{\alpha_{k}^{(t)}}{K}).
%     $$
\end{proof}
\begin{lemma}\label{pk3c}
    If \Cref{hpk3} holds at iteration $T_{2,k}<t \leq T_{3,k}$,  then for any $n\not=1,k$, $\beta_{k,n}^{(t)}$ satisfies 
    \begin{align*}
        |\beta_{k,n}^{(t)}|\leq O\left(\frac{\alpha_{k}^{(t)}-\beta_{k,1}^{(t)}}{K}\right).
    \end{align*}
\end{lemma}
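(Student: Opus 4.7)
The plan is to follow the same two-sided estimate template used in \Cref{pk1c} and \Cref{pk2c}, but with the attention-score bounds now taken from \Cref{lemk3t1,lemk3t2}, which are the appropriate technical lemmas for Phase III. From the gradient formula in \Cref{app:lem:gd}, I will split the bound for $\beta_{k,n}^{(t)}$ into
\begin{align*}
\beta_{k,n}^{(t)}&\leq \mathbb{E}\left[\mathbf{1}\{\xq=v_k\}\Attn^{(t)}_{n}\cdot \sum_{m\not= k}{\Attn^{(t)}_{m}}^2\right],\\
-\beta_{k,n}^{(t)}&\leq \mathbb{E}\left[\mathbf{1}\{\xq=v_k\}\Attn^{(t)}_{n}\cdot \left(\Attn^{(t)}_{n}+\Attn^{(t)}_{k}(1-\Attn^{(t)}_k)\right)\right],
\end{align*}
and then handle each one separately, conditioning on the high-probability event $\pit\in\esi$ and absorbing the complement into an $e^{-\poly(K)}$ correction via \Cref{app:lem:prob-im}.

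For the upper inequality, the key observation (as in the earlier phases) is that among the terms in $\sum_{m\neq k}(\Attn^{(t)}_m)^2$ only the $m=1$ summand is of non-negligible size, since \Cref{lemk3t2} gives $\Attn^{(t)}_m=O(1/K)$ for $m\neq 1,k$. The contribution from the $m=1$ term can be controlled by the identity
\begin{align*}
|\beta_{k,1}^{(t)}|\geq p_k\cdot \mathbb{P}(\pit\in\esi)\cdot \mathbb{E}\!\left[\Omega((\Attn^{(t)}_1)^2)\,\big|\,\{\xq=v_k\}\cap\esi\right],
\end{align*}
which was derived in \Cref{pk1c,pk2c} from the gradient expression of $\beta_{k,1}^{(t)}$ in conjunction with $\Attn^{(t)}_k+\Attn^{(t)}_1-\max_{m\neq 1,k}\Attn^{(t)}_m\geq \Omega(1)$ (the latter holds here by \Cref{lemk3t1}). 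Combined with $\Attn^{(t)}_n=O(1/K)$ from \Cref{lemk3t2} this yields a contribution of $O(|\beta_{k,1}^{(t)}|/K)$, while the $m\neq 1,k$ part contributes $O(1/K^3)$, so the overall bound is $O(|\beta_{k,1}^{(t)}|/K)+O(1/K^3)$.

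For the lower inequality, I will plug in $\Attn^{(t)}_n=O((1-\Attn^{(t)}_k-\Attn^{(t)}_1)/K)=O((1-\Attn^{(t)}_k)/K)$ from \Cref{lemk3t2} and expand
\begin{align*}
\Attn^{(t)}_n\left(\Attn^{(t)}_n+\Attn^{(t)}_k(1-\Attn^{(t)}_k)\right)\leq O\!\left(\frac{\Attn^{(t)}_k(1-\Attn^{(t)}_k)^2}{K}\right),
\end{align*}
which, via \Cref{lemk3t1} showing $1-\Attn^{(t)}_k\geq\Omega(1)$, is exactly $O(\alpha^{(t)}_k/K)$ after recognizing the conditional expectation of $\Attn^{(t)}_k(1-\Attn^{(t)}_k)^2$ as the driving term in the formula for $\alpha^{(t)}_k$ (cf.\ the final step of \Cref{pk1c}). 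Putting both sides together gives $|\beta_{k,n}^{(t)}|\leq O\!\left((\alpha_k^{(t)}-\beta_{k,1}^{(t)})/K\right)$, since the residual $e^{-\poly(K)}$ term from the event ${\esi}^c$ is dwarfed by $\alpha_k^{(t)}=\Omega(1/K^{1.5})$ (\Cref{pk3a}). The most delicate step, and the one requiring the most care, is relating the $m=1$ summand in the upper inequality back to $|\beta_{k,1}^{(t)}|$; this is purely an algebraic manipulation of the gradient expression, but must be performed using the Phase III attention bounds rather than the Phase I/II ones to yield the correct tightness.
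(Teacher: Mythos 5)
Your proposal is essentially the paper's proof: the same two-sided split from \Cref{app:lem:gd}, the same conditioning on $\pit\in\esi$ with the complement absorbed into an $e^{-\poly(K)}$ tail, the $m=1$ summand in the upper inequality controlled by relating $(\Attn^{(t)}_1)^2$ back to $|\beta_{k,1}^{(t)}|$ through $\beta_{k,1}^{(t)}$'s gradient expression, and the lower inequality handled by substituting $\Attn^{(t)}_n=O((1-\Attn^{(t)}_k)/K)$ and identifying the resulting $O(\Attn^{(t)}_k(1-\Attn^{(t)}_k)^2/K)$ bound with $O(\alpha^{(t)}_k/K)$.

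One factual slip is worth flagging. You justify the identity $|\beta^{(t)}_{k,1}|\geq p_k\,\mathbb{P}(\pit\in\esi)\,\mathbb{E}\bigl[\Omega\bigl((\Attn^{(t)}_1)^2\bigr)\mid \{\xq=v_k\}\cap\esi\bigr]$ by asserting that $\Attn^{(t)}_k+\Attn^{(t)}_1-\max_{m\neq 1,k}\Attn^{(t)}_m\geq\Omega(1)$ ``holds here by \Cref{lemk3t1}.'' That lemma gives only $\Attn^{(t)}_k=\Omega(K^{-0.5})$ and $\Attn^{(t)}_1=O(K^{-0.49})$, so near $t=T_{2,k}+1$ the sum can be as small as $\Theta(K^{-0.5})$; it is not $\Omega(1)$ uniformly over Phase III. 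What you actually need---and what is true---is the far weaker bound $\Attn^{(t)}_1+\Attn^{(t)}_k-\max_{m\neq 1,k}\Attn^{(t)}_m\geq\Omega(\Attn^{(t)}_1)$ (which holds because $\max_{m\neq 1,k}\Attn^{(t)}_m=O(1/K)\ll\Attn^{(t)}_1=\Omega(K^{-0.51})$), together with $1-\Attn^{(t)}_k-\Attn^{(t)}_1\geq\Omega(1)$. Feeding these into the gradient expression for $\beta^{(t)}_{k,1}$ gives precisely the quadratic $(\Attn^{(t)}_1)^2$ factor you invoke, and the tail $\exp(-\ci^2 N/(25K^2))$ is dominated since $(\Attn^{(t)}_1)^2=\Omega(K^{-1.02})$. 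With that correction the argument coincides with the paper's.
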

\begin{proof}
    By the gradient computation in \Cref{app:lem:gd}, we have 
    \begin{align}
        \beta_{k,n}^{(t)}&\leq \mathbb{E}\left[\mathbf{1}\{\xq=v_k\}\Attn^{(t)}_{n}\cdot \left(
 \sum_{m\not= k}{\Attn^{(t)}_{m}}^2\right)\right],\label{k3I1}\\
 -\beta_{k,n}^{(t)}&\leq \mathbb{E}\left[\mathbf{1}\{\xq=v_k\}\Attn^{(t)}_{n}\cdot \left(
 \Attn^{(t)}_{n} +\Attn^{(t)}_{k}(1-\Attn^{(t)}_k)\right)\right].\label{k3I2}
    \end{align}
    We further bound \cref{k3I1} as
    \begin{align*}
      \beta_{k,n}^{(t)}&\leq \mathbb{E}\left[\mathbf{1}\{\xq=v_k\cap\pit\in\esi\}\Attn^{(t)}_{n}\cdot \left(
        \sum_{m\not= k}{\Attn^{(t)}_{m}}^2\right)\right]\\
        &\quad+\mathbb{E}\left[\mathbf{1}\{\xq=v_k\cap \pit\in {\esi}^{c}\}\Attn^{(t)}_{n}\cdot \left(
        \sum_{m\not= k}{\Attn^{(t)}_{m}}^2\right)\right]\\
        &%\stackrel{(a)}
        {\leq}p_k\cdot\mathbb{P}(\pit\in\esi)\cdot\mathbb{E}\left[\Attn^{(t)}_{n}\cdot \left(
{\Attn^{(t)}_{1}}^2+O\left(\frac{1}{K}\right)\right)\bigg|\{\xq=v_k\} \cap \pit\in\esi\right]\\
&\quad+p_k\cdot\mathbb{P}(\pit\in {\esi}^{c})\\
 &%\stackrel{(b)}
 {\leq} %p_k\mathbb{E}\left[\Attn^{(t)}_{n}\cdot \left(
 %\max_{m\not= k}\Attn^{(t)}_{m}\right)\mid\{\xq=v_k\} \cap \cE^*\right]
 O\left(\frac{1}{K^3}\right)+O\left(\frac{|\beta_{k,1}^{(t)}|}{K}\right)+6 p_k\exp \left(-\frac{\ci^2 N}{25 K^2}\right)\\
 &{\leq}O\left(\frac{1}{K^3}\right)+O\left(\frac{|\beta_{k,1}^{(t)}|}{K}\right).
    \end{align*}
    % where $(a)$ follows the fact that $\Attn^{(t)}_{n}\leq 1$ and $ \sum_{m\not= k}{\Attn^{(t)}_{m}}^2\leq \max_{m\not= k}\Attn^{(t)}_{m}\cdot \sum_{m\not= k}{\Attn^{(t)}_{m}}\leq \max_{m\not= k}\Attn^{(t)}_{m}$; $(b)$ holds by \Cref{prob}; the last inequality follows from \Cref{lemk1t2} and our choice of $p_k$ and $N$.

We then further bound \cref{k3I2} as
    \begin{align*}
      -\beta_{k,n}^{(t)}&\leq p_k\mathbb{E}\left[\Attn^{(t)}_{n}\cdot \left(
 \Attn^{(t)}_{n} +\Attn^{(t)}_{k}(1-\Attn^{(t)}_k)\right)\mid\{\xq=v_k\} \cap \pit\in\esi\right]\\
 &\quad+p_k\cdot\mathbb{P}(\pit\in {\esi}^{c})\\
 &%\stackrel{(a)}
 {=} p_k\cdot\mathbb{P}(\pit\in{\esi}^{c}) +p_k\cdot\mathbb{P}(\pit\in\esi)\mathbb{E}\left[O\left(\frac{1-\Attn^{(t)}_{k}}{K}\right)\right.\\&\left.\qquad\qquad \cdot \left( O\left(\frac{1-\Attn^{(t)}_{k}}{K}\right)+
\Attn^{(t)}_{k}(1-\Attn^{(t)}_k)\right)\bigg|\{\xq=v_k\} \cap \esi\right]\\
%&\quad +p_k\cdot\mathbb{P}(\pit\in{\esi}^{c})\\
 &%\stackrel{(b)}
 {\leq}p_k\cdot\mathbb{P}({\esi})\mathbb{E}\left[O\left(\frac{\Attn^{(t)}_{k}(1-\Attn^{(t)}_{k})^2}{K}\right)\bigg|\{\xq=v_k\} \cap \esi\right]+6p_k\exp \left(-\frac{\ci^2 N}{25 K^2}\right)\\
 &\leq O\left(\frac{\alpha_{k}^{(t)}}{K}\right).
    \end{align*}
  %  where $(a)$ holds by  \Cref{lem1t2} and $(b)$ follows the fact that $\Attn^{(t)}_{k}\geq \Omega\left(\frac{1}{K}\right)$ by \Cref{lem1t1}. 
  Following from the analysis in \Cref{pk3a}, we have 
    \begin{align*}
       \alpha_{k}^{(t)}&\geq \Omega\left(\frac{1}{K^{1.5}}\right).%\gg 6\exp \left(-\frac{c_0^2 N}{25 K^2}\right)
    \end{align*}
    Thus, we obtain
    $$
|\beta_{k,n}^{(t)}|\leq O\left(\frac{\alpha_{k}^{(t)}-\beta_{k,1}^{(t)}}{K}\right).
    $$
\end{proof}
\subsubsection{End of Phase III}
\begin{lemma}
    Given $k>1$, \Cref{hpk3} holds for all  $T_{2,k}<t\leq T_{3,k}=T_{2,k}+O(\frac{\log(K)K^{1.5}}{\eta})$, and at iteration $t=T_{3,k}+1$, we have 
    \begin{enumerate}[label={\alph*}.]
    \item $A_{k}^{(T_{3,k}+1)}\geq \log(K)$;
    \item $\Attn_{k}=\Omega(1)$ if $\xq=v_k$ and $\pit\in\esi$.
    % \item $|B_{k,n}^{(T_{1,k})}|=O(\frac{\log(K)}{K})$;
    % \item $\beta_{k,n}^{(T_{1,k})}<0$.
    \end{enumerate}
\end{lemma}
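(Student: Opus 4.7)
The plan is to mirror the strategy used for Phases I and II: first establish existence and the time scale of $T_{3,k}$ from the gradient lower bound on $\alpha_k^{(t)}$, then verify Induction Hypothesis \ref{hpk3} by induction on $t$, and finally read off the two concluding statements at $t=T_{3,k}+1$.

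For the time scale, I will invoke Lemma \ref{pk3a}, which gives $\alpha_k^{(t)}\geq \Omega(1/K^{1.5})$ under \Cref{hpk3}. Since the starting value $A_k^{(T_{2,k}+1)}\geq 0.5\log(K)$ is guaranteed by the end of Phase II, and $A_k^{(t)}$ grows by at least $\eta\,\Omega(1/K^{1.5})$ per step, it takes at most $O(\log(K)K^{1.5}/\eta)$ iterations before $A_k^{(t)}$ first exceeds $\log(K)$. This gives $T_{3,k}-T_{2,k} = O(\log(K)K^{1.5}/\eta)$ as long as the hypothesis keeps holding.

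To verify \Cref{hpk3}, I would proceed by induction on $T_{2,k}<t\leq T_{3,k}$, assuming \Cref{hpk3} holds for all earlier iterations in this phase. The base case $t=T_{2,k}+1$ follows directly from the end-of-Phase-II lemma: $A_k^{(T_{2,k}+1)}\in [0.5\log(K),\log(K)]$ by definition of $T_{3,k}$, $B_{k,1}^{(T_{2,k}+1)}\leq -0.49\log(K)$ (monotonically decreasing through Phase II from $\leq -0.49\log(K)$ at the end of Phase I), and $|B_{k,n}^{(T_{2,k}+1)}| = O(\log(K)/K)$ by the Phase II analysis. For the inductive step, I would use: (i) $\alpha_k^{(t)}\geq 0$ from \Cref{pk3a} to get monotonicity of $A_k^{(t)}$ together with the definition of $T_{3,k}$ to get the upper bound $\log(K)$; (ii) $\beta_{k,1}^{(t)}\leq -\Omega(1/K^{2.01})<0$ from \Cref{pk3b} for monotonicity of $B_{k,1}^{(t)}$; (iii) the two-sided bound $|\beta_{k,1}^{(t)}|\leq O(\alpha_k^{(t)}/K^{0.48})$ from \Cref{pk3b} telescoped to give $|B_{k,1}^{(t)} - B_{k,1}^{(T_{2,k}+1)}|\leq O(\sum_s \eta\alpha_k^{(s)}/K^{0.48}) = O((A_k^{(t)}-A_k^{(T_{2,k}+1)})/K^{0.48})\leq O(\log(K)/K^{0.48})$, which combined with the Phase II terminal value yields $B_{k,1}^{(t)}\geq -0.51\log(K)-O(\log(K)/K^{0.48})$; and (iv) the analogous telescoping of $|\beta_{k,n}^{(t)}|\leq O((\alpha_k^{(t)}-\beta_{k,1}^{(t)})/K)$ from \Cref{pk3c} to maintain $|B_{k,n}^{(t)}|=O((A_k^{(t)}+|B_{k,1}^{(t)}|)/K)$ for $n\not= 1,k$.

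Given \Cref{hpk3} through time $T_{3,k}$, statement (a) is immediate: at $t=T_{3,k}+1$ one step past the definition, $A_k^{(T_{3,k}+1)}\geq \log(K)$. For statement (b), under $\{x_{\text{query}}=v_k\}\cap\{\pit\in\esi\}$ I will expand
\[
\Attn_k^{(T_{3,k}+1)} = \Big(\sum_{m\not= k}\tfrac{|\cV_m|}{|\cV_k|}\exp(B_{k,m}^{(T_{3,k}+1)}-A_k^{(T_{3,k}+1)})+1\Big)^{-1},
\]
and bound each exponential using $A_k^{(T_{3,k}+1)}\geq \log(K)$: for $m\not= 1,k$ the exponent is at most $O(\log(K)/K)-\log(K)$ so the $(m\not= 1,k)$ sum contributes $O(1)$, and for $m=1$ the exponent is at most $-\log(K)+B_{k,1}^{(T_{3,k}+1)}\leq -1.49\log(K)+O(\log(K)/K^{0.48})$, which combined with $|\cV_1|/|\cV_k|=\Theta(K)$ contributes $o(1)$. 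Thus the denominator is $\Theta(1)$ and $\Attn_k^{(T_{3,k}+1)}=\Omega(1)$.

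The main obstacle, as in Phase II, is the bookkeeping needed to show the narrow ``window'' on $B_{k,1}^{(t)}$: the lower bound $-0.51\log(K)-O(\log(K)/K^{0.48})$ must remain a self-consistent inductive bound, i.e.\ the same bound that is used to derive the gradient estimates in \Cref{pk3a,pk3b,pk3c} via \Cref{lemk3t1,lemk3t2} must be exactly recoverable after summing the updates. The $K^{-0.48}$ factor here is tight, as it reflects the asymmetry between the rapid growth of $\alpha_k^{(t)}\sim K^{-1.5}$ and the residual shrinkage of $|\beta_{k,1}^{(t)}|\sim K^{-2.01}$, and any looser telescoping would break the attention computation that gives $\Attn_k=\Omega(1)$ at the end.
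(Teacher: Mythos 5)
Your proposal is correct and follows essentially the same route as the paper: existence and the $O(\log(K)K^{1.5}/\eta)$ bound on $T_{3,k}-T_{2,k}$ come from $\alpha_k^{(t)}\geq\Omega(K^{-1.5})$ (Lemma~\ref{pk3a}), the induction on Hypothesis~\ref{hpk3} uses $\alpha_k^{(t)}\geq 0$ together with the definition of $T_{3,k}$ for the $A_k$ window, the two-sided bound $|\beta_{k,1}^{(t)}|\leq O(\alpha_k^{(t)}/K^{0.48})$ from Lemma~\ref{pk3b} telescoped for the $B_{k,1}$ window, and Lemma~\ref{pk3c} telescoped for $B_{k,n}$ with $n\not=1,k$. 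The only stylistic difference is in part (b): you write out the attention computation directly at $t=T_{3,k}+1$, whereas the paper defers it to Lemma~\ref{lemk4t1} (where the identical denominator bound is carried out under Hypothesis~\ref{hpk4}, whose base case is exactly the state at $T_{3,k}+1$). Both are fine; your inline version is self-contained while the paper reuses a lemma it must prove for Phase~IV anyway. One small slip: in your $m=1$ estimate the exponent should read $B_{k,1}^{(T_{3,k}+1)}-A_k^{(T_{3,k}+1)}\leq -0.49\log(K)-\log(K)=-1.49\log(K)$ with no additive $O(\log(K)/K^{0.48})$ slack --- that correction only enters the \emph{lower} bound on $B_{k,1}$; it does not weaken the upper bound and the conclusion $\Attn_k=\Omega(1)$ is unaffected.
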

\begin{proof}
The  existence of $T_{3,k}=T_{2,k}+O(\frac{\log(K)K^{1.5}}{\eta})$ directly follows from  \Cref{pk3a}.

It is easy to verify \Cref{hpk3} holds at $t=T_{2,k}+1$.  Now we suppose \Cref{hpk3} holds for all iterations $\leq t-1$, and prove it holds at $t$. 

By \Cref{pk3a}, we have $\alpha_{k}^{(t-1)}\geq 0$. Thus $A_{k}^{(t)}= A_{k}^{(t-1)}+\eta\alpha_{k}^{(t-1)}\geq 0.5\log(K)$. Morover, %by the definition of $T_{1,k}$, 
by \Cref{pk3b}, we have $|B_{k,1}^{(t)}- B_{k,1}^{(T_{2,k}+1)}|\leq O(\frac{A_{k}^{(t)}- A_{k}^{(T_{2,k}+1)}}{K^{0.48}})$
which immediately implies that $$B_{k,1}^{(t)}\geq -O(\log(K)/K^{0.48})-0.51\log(K).$$

For $m\not=1,k$, by \Cref{pk3c}, we have $$|B_{k,m}^{(t)}|\leq O(\frac{A_{k}^{(t)}- A_{k}^{(T_{2,k}+1)}+|B_{k,1}^{(t)}- B_{k,1}^{(T_{2,k}+1)}|}{K})\leq O(\log(K)/K).$$

%         \begin{align*}
%       &\beta_{k,n}^{(T_{1,k})}\leq \mathbb{P}({\cE^*}^{c})+\mathbb{P}(\{\xq=v_k\} \cap \cE^*)\cdot\\&\mathbb{E}\left[\Attn_{n}\cdot \left(
%  \sum_{m\not= k}\Attn^2_{m}-\Attn_{n} -\Attn_{k}(1-\Attn_k)\right)\mid\{\xq=v_k\} \cap \cE^*\right]
%     \end{align*}
%     Note that at iteration $T_{1,k}$, $A_{k}^{(T_{1,k})}-B^{(T_{1,k})}_{n,k}\geq (1-O\left(\frac{1}{K}\right))\log(K)$, therefore, conditioning on the event $\{\xq=v_k\} \cap \cE^*$,$\Attn_k^{(T_{1,k})}=\Omega(1)$, thus 
%     % \begin{align*}
%     %     \frac{\Attn_k}{\Attn_n}=\exp(A_{k}^{(T_{1,k})}-B^{(T_{1,k})}_{n,k})\frac{|\cV_k|}{|\cV_n|}\geq \exp(A_{k}^{(T_{1,k})}-B^{(T_{1,k})}_{n,k}) \cdot\frac{L_k}{U_n}\geq 1
%     % \end{align*}
% \begin{align*} \sum_{m\not= k}\Attn^2_{m}-\Attn_{n} -\Attn_{k}(1-\Attn_k)&\leq  \max_{m\not=k}\Attn_{m}\sum_{m\not= k}\Attn_{m}-\Attn_{k}(1-\Attn_k)\\
% &=-(1-\Attn_k)(\Attn_{k}-\max_{m\not=k}\Attn_{m})\\
% &\leq -\Omega(1)
% \end{align*}
% Therefore,
% \begin{align*}
% \beta_{k,n}^{(T_{1,k})}\leq 3 \exp \left(-\frac{c_0^2 N}{25 K^2}\right)-\Omega\left(\frac{1}{K^2}\right)<0.
% \end{align*}
The proof for the second statement is deferred to the next phase (\Cref{lemk4t1}).
\end{proof}

\subsection{Phase IV: Convergence}\label{app:im:p4}
At $t=T_{3,k}+1$, the desired attention structure for the query token associated with feature $v_k$ has already been achieved. In this final phase, we establish that these structures, including each under-represented feature, indeed represent the solutions toward which the algorithm converges. %This provides a more robust guarantee than merely stumbling upon these solutions at intermediate steps.

Given any  $0<\epsilon <1$, 
    for $k\geq 2$, 
    define 
  \begin{align*}
    T^{\epsilon}_{4,k}\triangleq \max\left\{t>T_{3,k}: A_{k}^{(t)}\leq \log\left(\left(\frac{e(1-\Li_1)K+\Ui_1K^{0.51}}{\Li_k}-1\right)\left(\left(\frac{3}{\epsilon}\right)^{\frac{1}{2}}-1\right)\right) \right\}.
  \end{align*}
\begin{hypothesis}\label{hpk4}
    For $T_{3,k}<t\leq T^{\epsilon}_{4,k}$, suppose $\operatorname{polylog}(K)\gg \log(\frac{1}{\epsilon})$. Then the following holds.
\begin{enumerate}[label={\alph*}.]
    \item $A_{k}^{(t)}$ is monotonically increasing and $A_{k}^{(t)}\in [\log(K), O(\log(K/\epsilon))]$;
   \item $B_{k,1}^{(t)}$ is monotonically decreasing and $$B_{k,1}^{(t)}\in \left[-0.51\log(K)-O\left(\frac{\log(K)}{K^{0.48}}\right),-0.49\log(K)\right]$$
    \item $B_{k,n}^{(t)}$ is monotonically decreasing and $|B_{k,n}^{(t)}|=O(\frac{\log(K/\epsilon)}{K})$ for any $n\not=1,k$.
    % \item At time $T_1$, 
    % $A_k^{(T_1)}-\max _{n \neq k} B_{k,n}^{(T_1)} \geqslant \Lambda_k$ for any $k\in[K]$.
\end{enumerate}
\end{hypothesis}
\subsubsection{Technical Lemmas}
We first introduce several useful technical lemmas.
\begin{lemma}\label{lemk4t1}
    Suppose \Cref{hpk4} holds at iteration $T_{3,k}<t\leq T_{4,k}^{\epsilon}$. %then for $k\in[K]$, 
    If $\xq=v_k$ and $\pit\in\esi$, then the following holds.
    \begin{enumerate}
    \item  $\Attn^{(t)}_k=\Omega(1)$;
        \item $(1-\Attn^{(t)}_{k})^2\geq \Omega(\epsilon)=\Omega(
            \exp\left(-\operatorname{polylog}(K)\right))$.
    \end{enumerate}
\end{lemma}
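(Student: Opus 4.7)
The plan is to adapt the strategy used earlier for Lemma~\ref{lem2t1} (balanced Phase II) and Lemma~\ref{lemk3t1} (Phase III for under-represented features), tailored to the three distinct regimes of bilinear weights at this stage: the target weight $A_k^{(t)}$ has grown large, the dominant-feature weight $B_{k,1}^{(t)}$ has been suppressed to roughly $-0.5\log K$, and the remaining off-target weights $B_{k,n}^{(t)}$ are controlled by $O(\log(K/\epsilon)/K)$. I would write the attention score as
\[
\Attn_k^{(t)} \;=\; \frac{1}{X^{(t)}+1}, \qquad X^{(t)} \;:=\; \sum_{m\neq k}\frac{|\cV_m|}{|\cV_k|}\exp\bigl(B_{k,m}^{(t)}-A_k^{(t)}\bigr),
\]
and then estimate the $m=1$ contribution and the $m\neq 1,k$ contribution separately, since they sit on very different scales.

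For the first claim $\Attn_k^{(t)}=\Omega(1)$, I would establish $X^{(t)}=O(1)$. Using \Cref{hpk4} together with $\pit\in\esi$: the dominant-feature term satisfies $\tfrac{|\cV_1|}{|\cV_k|}\le \tfrac{\Ui_1 K}{\Li_k}$ and $\exp(B_{k,1}^{(t)}-A_k^{(t)})\le \exp(-0.49\log K-\log K)=K^{-1.49}$, contributing $O(K^{-0.49})$; each of the $K-2$ remaining terms has $\tfrac{|\cV_m|}{|\cV_k|}=\Theta(1)$ and $\exp(B_{k,m}^{(t)}-A_k^{(t)})\le \exp(O(\log(K/\epsilon)/K)-\log K)=O(1/K)\cdot e^{o(1)}$ because $\log(K/\epsilon)/K=o(1)$ under the assumption $\polylog(K)\gg\log(1/\epsilon)$. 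Summing yields $O(1)$, giving the desired constant lower bound on $\Attn_k^{(t)}$.

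For the second claim $(1-\Attn_k^{(t)})^2\ge\Omega(\epsilon)$, I would lower bound $1-\Attn_k^{(t)}=X^{(t)}/(X^{(t)}+1)$ by exploiting the ceiling on $A_k^{(t)}$ coming from the definition of $T_{4,k}^{\epsilon}$. Writing $C:=\frac{e(1-\Li_1)K+\Ui_1 K^{0.51}}{\Li_k}-1=\Theta(K)$, the hypothesis implies $\exp(-A_k^{(t)})\ge \bigl(C((3/\epsilon)^{1/2}-1)\bigr)^{-1}$. Keeping only the sum over $m\neq 1,k$, and using $|\cV_m|/|\cV_k|\ge \Li_m/\Ui_k=\Theta(1)$ together with $B_{k,m}^{(t)}\ge -O(\log(K/\epsilon)/K)$, I obtain
\[
X^{(t)} \;\ge\; \sum_{m\neq 1,k}\Theta(1)\cdot \frac{e^{-O(\log(K/\epsilon)/K)}}{C\bigl((3/\epsilon)^{1/2}-1\bigr)} \;=\; \frac{\Theta(K)\cdot e^{-o(1)}}{\Theta(K)\cdot\Theta(\epsilon^{-1/2})} \;=\; \Omega(\epsilon^{1/2}).
\]
Since $X^{(t)}=O(1)$, the map $x\mapsto x/(x+1)$ gives $1-\Attn_k^{(t)}\ge \Omega(\epsilon^{1/2})$, and squaring yields the claimed bound; the equivalent form $\Omega(\exp(-\polylog(K)))$ follows from $\polylog(K)\gg\log(1/\epsilon)$.

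The main obstacle is bookkeeping the constants so that the threshold in the definition of $T_{4,k}^{\epsilon}$ \emph{exactly} matches what is needed to force $1-\Attn_k^{(t)}\ge(\epsilon/3)^{1/2}$: the constant $\tfrac{e(1-\Li_1)K+\Ui_1 K^{0.51}}{\Li_k}-1$ is engineered so that the $m=1$ term (bounded by $\Ui_1 K^{0.51}/\Li_k$ once $B_{k,1}^{(t)}\le -0.49\log K$) and the $m\neq 1,k$ terms (contributing roughly $e(1-\Li_1)K/\Li_k$ after accounting for the $e^{O(\log(K/\epsilon)/K)}$ slack) together sit just below $C$. Verifying that the slack factor $e^{O(\log(K/\epsilon)/K)}$ really is $1+o(1)$ under the assumed scaling, and that the $m=1$ contribution does not destroy the lower bound (it does not, since $K^{0.51}\ll K$), are the delicate steps; everything else is direct substitution of the Phase~IV induction hypothesis into the softmax formula.
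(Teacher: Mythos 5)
Your proposal is correct and follows essentially the same route as the paper's proof: you expand the softmax denominator $X^{(t)}=\sum_{m\neq k}\frac{|\cV_m|}{|\cV_k|}\exp(B_{k,m}^{(t)}-A_k^{(t)})$, split it into the $m=1$ contribution and the $m\neq 1,k$ contribution, upper bound $X^{(t)}=O(1)$ for the first claim, and lower bound $1-\Attn_k^{(t)}=X^{(t)}/(X^{(t)}+1)$ for the second claim using the ceiling on $A_k^{(t)}$ built into the definition of $T_{4,k}^\epsilon$. The only minor difference is that in the lower bound on $X^{(t)}$ you drop the $m=1$ term entirely (which is valid since it is non-negative), whereas the paper retains both terms before simplifying; this is a harmless streamlining and does not change the argument.
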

\begin{proof}
    Since $\xq=v_k$,  we have
    \begin{align*}
\Attn^{(t)}_{k}&=\frac{|\cV_k|\exp(A^{(t)}_k)}{\sum_{m\not=k}|\cV_m|\exp(B^{(t)}_{k,m})+|\cV_k|\exp(A^{(t)}_k)}\\
        &=\frac{1}{\sum_{m\not=k}\frac{|\cV_m|}{|\cV_k|}\exp(B^{(t)}_{k,m}-A^{(t)}_k)+1}.
    \end{align*}
    By \Cref{hpk4}, we have 
    \begin{itemize}
        \item   for $m\not=1,k$: $$\exp(B^{(t)}_{k,m}-A^{(t)}_k)\leq e^{O(\frac{\log(K/\epsilon)}{K})-\log(K)}\leq e^{O(\frac{\log(K)+\operatorname{polylog}(K)}{K})-\log(K)}\leq O\left(\frac{1}{K}\right).$$ 
        \item for $m=1$, $\exp(B^{(t)}_{k,1}-A^{(t)}_k)\leq  O(\frac{1}{K^{1.49}}).$
    \end{itemize}
  
 %Moreover, we have 
%     thus
%      \begin{align*}
% \Attn^{(t)}_{k} &\geq\frac{1}{e^{O(\frac{\log(K)}{K})}(\frac{N}{|\cV_k|}-1)+1}\geq \frac{1}{e^{O(\frac{\log(K)}{K})}(K/L_k-1)+1}=\Omega\left(\frac{1}{K}\right)
%     \end{align*}
%     On the other hand, 
Therefore,
    \begin{align*}
\Attn^{(t)}_{k} &\geq\frac{1}{O\left(\frac{1}{K}\right)(\frac{N-|\cV_1|}{|\cV_k|}-1)+O(\frac{1}{K^{1.49}})\frac{|\cV_1|}{|\cV_k|}+1}\geq \Omega(1).
    \end{align*}
   % where $0<C_k<1$ is some constant. Thus $1-\Attn_k=\Omega(1)$.
   On the other hand, %by the definition of $\tilde{T}^{\epsilon}_{2,k}$, 
   we have
   \begin{align*}
   1-\Attn^{(t)}_{k}
            &=\frac{\sum_{m\not=k}\frac{|\cV_m|}{|\cV_k|}\exp(B^{(t)}_{k,m}-A^{(t)}_k)}{\sum_{m\not=k}\frac{|\cV_m|}{|\cV_k|}\exp(B^{(t)}_{k,m}-A^{(t)}_k)+1}\\
            &\overset{(a)}{\geq} \frac{ \exp( \min_{m\not= 1,k} B^{(t)}_{k,m}-A^{(t)}_k) (\frac{N-|\cV_1|}{|\cV_k|}-1)+\exp(B^{(t)}_{k,1}-A^{(t)}_k)\frac{|\cV_1|}{|\cV_k|} }{\exp( \min_{m\not= 1, k} B^{(t)}_{k,m}-A^{(t)}_k) (\frac{N-|\cV_1|}{|\cV_k|}-1)+\exp(B^{(t)}_{k,1}-A^{(t)}_k)\frac{|\cV_1|}{|\cV_k|}+1}\\
            &\geq \frac{ \exp( \min_{m\not= k} B^{(t)}_{k,m}-A^{(t)}_k) (\frac{(1-\Ui_{1})K}{\Ui_{k}}-1)+ \exp(B^{(t)}_{k,1}-A^{(t)}_k)\cdot\frac{\Li_{1}}{\Ui_{k}} }{\exp( \min_{m\not= k} B^{(t)}_{k,m}-A^{(t)}_k) (\frac{(1-\Ui_{1})K}{\Ui_{k}}-1)+\exp(B^{(t)}_{k,1}-A^{(t)}_k)\cdot\frac{\Li_{1}}{\Ui_{k}}+1}\\
            &= \frac{(\frac{(1-\Ui_{1})K}{\Ui_{k}}-1+\frac{\Li_{1}K^{0.49}}{\Ui_{k}}) \exp(-A^{(t)}_k) }{(\frac{(1-\Ui_{1})K}{\Ui_{k}}-1+\frac{\Li_{1}K^{0.49}}{\Ui_{k}}) \exp(-A^{(t)}_k)+1}\\
            % &\geq \frac{(\frac{K}{L_k}-1)^{-1}(\epsilon^{-\frac{1}{2}}-1)^{-1}\cdot e^{-O(\frac{\operatorname{polylog}(K)}{K})}(\frac{K}{U_{k}}-1)}{(\frac{K}{L_k}-1)^{-1}(\epsilon^{-\frac{1}{2}}-1)^{-1}(\frac{K}{U_{k}}-1)e^{-O(\frac{\operatorname{polylog}(K)}{K})}+1}\\
            &\geq \Omega(\epsilon^{\frac{1}{2}}),
        \end{align*}
       % where $\Delta B_{k}^{(t)}=\max_{m\not= k}B^{(t)}_{k,m}-\min_{m\not= k}B^{(t)}_{k,m}=O(\frac{A_{k}^{(t)}}{K})$,
        where $(a)$ follows from the fact that $\frac{x}{1+x}$ increases w.r.t. $x>0$.
\end{proof}
\begin{lemma}\label{lemk4t2}
     Suppose \Cref{hpk4} holds at iteration $ T_{3,k}< t\leq T^{\epsilon}_{4,k}$.  If $\xq=v_k$ and $\pit\in\esi$, then the following holds.
     \begin{enumerate}
         \item  $\Attn^{(t)}_n=\Theta\left(\frac{1-\Attn^{(t)}_{k}}{K}\right)$ for $n\not=1,k$;
         \item $\Attn^{(t)}_1\in\left[\Omega(\frac{1-\Attn^{(t)}_{k}}{K^{0.51}}), O\left(\frac{1-\Attn^{(t)}_{k}}{K^{0.49}}\right)\right]$.
        % \item Combining with the \Cref{p1a}, we immediately have $\Attn_n=\Theta\left(\frac{1}{K}\right) $;
     \end{enumerate}
\end{lemma}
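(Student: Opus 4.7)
The plan is to mimic the structure of Lemma~\ref{lemk3t2} and of the analysis of $\Attn_1^{(t)}$ in Lemma~\ref{lemk3t1}, but now using the sharper bounds from Hypothesis~\ref{hpk4}, which forces $B_{k,1}^{(t)}$ to remain near $-\tfrac{1}{2}\log(K)$ while $B_{k,n}^{(t)}$ for $n\neq 1,k$ stays at size $O(\log(K/\epsilon)/K)=o(1)$, and noting that $\polylog(K)\gg\log(1/\epsilon)$ so all $O(\log(K/\epsilon)/K)$ terms contribute only multiplicative constants of the form $e^{\pm o(1)}=\Theta(1)$.

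For item 1, I start from the identity
\[
\frac{\Attn_n^{(t)}}{1-\Attn_k^{(t)}}
=\frac{|\cV_n|\exp(B_{k,n}^{(t)})}{\sum_{m\neq k}|\cV_m|\exp(B_{k,m}^{(t)})}
=\frac{1}{\sum_{m\neq 1,k}\frac{|\cV_m|}{|\cV_n|}\exp(B_{k,m}^{(t)}-B_{k,n}^{(t)})+\frac{|\cV_1|}{|\cV_n|}\exp(B_{k,1}^{(t)}-B_{k,n}^{(t)})}.
\]
On $\{\pit\in\esi\}$ we have $|\cV_m|/|\cV_n|=\Theta(1)$ for $m,n\neq 1$ and $|\cV_1|/|\cV_n|=\Theta(K)$. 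Hypothesis~\ref{hpk4}(c) gives $\exp(B_{k,m}^{(t)}-B_{k,n}^{(t)})=\Theta(1)$ for $m\neq 1,k$, so the first sum is $\Theta(K)$. Hypothesis~\ref{hpk4}(b) together with (c) gives $\exp(B_{k,1}^{(t)}-B_{k,n}^{(t)})\leq e^{-0.49\log K+o(1)}=O(K^{-0.49})$, so the second term is $O(K\cdot K^{-0.49})=O(K^{0.51})$, which is dominated by $\Theta(K)$. Hence the whole denominator is $\Theta(K)$ and $\Attn_n^{(t)}=\Theta((1-\Attn_k^{(t)})/K)$.

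For item 2, a parallel computation gives
\[
\frac{\Attn_1^{(t)}}{1-\Attn_k^{(t)}}
=\frac{\exp(B_{k,1}^{(t)})}{\sum_{m\neq 1,k}\frac{|\cV_m|}{|\cV_1|}\exp(B_{k,m}^{(t)})+\exp(B_{k,1}^{(t)})}.
\]
Because $|\cV_m|/|\cV_1|=\Theta(1/K)$ and $\exp(B_{k,m}^{(t)})=\Theta(1)$ for $m\neq 1,k$, the first sum in the denominator is $\Theta(1)$. Hypothesis~\ref{hpk4}(b) gives $\exp(B_{k,1}^{(t)})\in[\Omega(K^{-0.51}),O(K^{-0.49})]$, which is $o(1)$, so the denominator is $\Theta(1)$. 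The ratio therefore lies in $[\Omega(K^{-0.51}),O(K^{-0.49})]$, proving item 2.

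The routine part is the bookkeeping of the $e^{\pm O(\log(K/\epsilon)/K)}$ factors; the only thing requiring any care is checking that the $m=1$ contribution to the denominator in item 1 is dominated by the $m\neq 1,k$ contribution (so that the $\Theta(1/K)$ scaling is not perturbed), which follows from the comparison $K^{0.51}\ll K$. No new idea beyond the arguments already used in Phase~III is needed; the only substantive difference is that $A_k^{(t)}$ can now be as large as $O(\log(K/\epsilon))$, but this quantity does not enter the two ratios above, so the bounds are independent of $A_k^{(t)}$ and depend only on $B_{k,1}^{(t)}$ and the $B_{k,n}^{(t)}$ for $n\neq 1,k$.
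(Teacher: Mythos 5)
Your proof is correct and follows essentially the same route as the paper: rewrite $\Attn_n^{(t)}/(1-\Attn_k^{(t)})$ in terms of the bilinear weights, split off the $m=1$ contribution, and feed in the Hypothesis~\ref{hpk4} bounds on $B_{k,1}^{(t)}$ and $B_{k,n}^{(t)}$ together with $|\cV_m|/|\cV_n|=\Theta(1)$ and $|\cV_1|/|\cV_n|=\Theta(K)$ on $\esi$. The only difference from the paper's write-up is a minor algebraic reorganization (keeping $\exp(B_{k,1}^{(t)})$ in the numerator for item~2 and splitting the denominator additively), and your explicit observation that the $m=1$ contribution of order $K^{0.51}$ is dominated by the $\Theta(K)$ bulk is exactly the point the paper relies on implicitly.
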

\begin{proof}
We first have
    \begin{align*}
        \frac{\Attn^{(t)}_n}{1-\Attn^{(t)}_{k}} =\frac{|\cV_n|\exp(B^{(t)}_{k,n})}{\sum_{m\not=k}|\cV_m|\exp(B^{(t)}_{k,m})}.
    \end{align*}
    If $n\not=1$, by \Cref{hpk4}, we have 
    \begin{itemize}
        \item for $m\not=1,k$, $e^{-O(\frac{\log(K)-\log(\epsilon)}{K})}\leq \exp(B^{(t)}_{k,m}-B^{(t)}_{k,n})\leq e^{O(\frac{\log(K)-\log(\epsilon)}{K})}$,
        \item for $m=1$, $e^{-0.51\log(K)-O(\frac{\log(K/\epsilon)}{K})}\leq \exp(B^{(t)}_{k,1}-B^{(t)}_{k,n})\leq 0$.
    \end{itemize}
  %  $e^{-O(\frac{\log(K)-\log(\epsilon)}{K})}\leq \exp(B^{(t)}_{k,m}-B^{(t)}_{k,n})\leq e^{O(\frac{\log(K)-\log(\epsilon)}{K})}$ for $m\not=1$, and $e^{-0.51\log(K)-O(\frac{\log(K/\epsilon)}{K})}\leq \exp(B^{(t)}_{k,1}-B^{(t)}_{k,n})\leq 0$,
  Note that when $\pit\in\esi$, we have $\frac{|\cV_m|}{|\cV_n|}=\Theta(1)$, and $\frac{|\cV_1|}{|\cV_n|}=\Theta(K)$. Then combining with the fact that  $-\log(\epsilon)\ll \operatorname{polylog}(K)$,  %and  $e^{-O(\frac{\log(K)}{K})}\leq \exp(A^{(t)}_k-B^{(t)}_{k,n})\leq e^{\left(O(\log(K))+O(\frac{\log(K)}{K})\right)}$ 
    we obtain
%     \begin{align*}
% \Attn^{(t)}_{k} &\leq\frac{1}{e^{-O(\frac{\log(K)}{K})}(\frac{N}{|\cV_n|}-\frac{|\cV_k|}{|\cV_n|})+e^{-O(\frac{\log(K)}{K})}\frac{|\cV_k|}{|\cV_n|}}\leq \frac{U_ne^{O(\frac{\log(K)}{K})}}{ K}=O\left(\frac{1}{K}\right).
%     \end{align*}
    %For the second claim, 
    \begin{align*}
       \frac{\Attn^{(t)}_n}{1-\Attn^{(t)}_{k}} =\frac{|\cV_n|\exp(B^{(t)}_{k,n})}{\sum_{m\not=k}|\cV_m|\exp(B^{(t)}_{k,m})}= \frac{1}{\sum_{m\not=k}\frac{|\cV_m|}{|\cV_n|}\exp(B^{(t)}_{k,m}-B^{(t)}_{k,n})}=\Theta\left(\frac{1}{K}\right).
    \end{align*}
For $n=1$, by \Cref{hpk4}, we have $$e^{0.49\log(K)-O(\frac{\log(K/\epsilon)}{K})}\leq \exp(B^{(t)}_{k,m}-B^{(t)}_{k,1})\leq ^{0.51\log(K)+O(\frac{\log(K/\epsilon)}{K})},$$ for $m\not=1$. %and $0\leq \exp(B^{(t)}_{k,1}-B^{(t)}_{k,n})$, 
Combining with the fact that $\frac{|\cV_m|}{|\cV_1|}=\Theta\left(\frac{1}{K}\right)$ when $\pit\in\esi$, and $-\log(\epsilon)\ll \operatorname{polylog}(K)$, we have
\begin{align*}
    \frac{\Attn^{(t)}_1}{1-\Attn^{(t)}_{k}}= \frac{1}{\sum_{m\not=k}\frac{|\cV_m|}{|\cV_n|}\exp(B^{(t)}_{k,m}-B^{(t)}_{k,n})}\leq O(\frac{1}{K\cdot\frac{1}{K}\cdot e^{0.49\log(K)-O(\frac{\log(K/\epsilon)}{K})} +1}) =O\left(\frac{1}{K^{0.49}}\right);
 \end{align*}
 and 
 \begin{align*}
    \frac{\Attn^{(t)}_1}{1-\Attn^{(t)}_{k}}= \frac{1}{\sum_{m\not=k}\frac{|\cV_m|}{|\cV_n|}\exp(B^{(t)}_{k,m}-B^{(t)}_{k,n})}\geq O(\frac{1}{K\cdot\frac{1}{K}\cdot e^{0.51\log(K)+O(\frac{\log(K/\epsilon)}{K})} +1}) \geq \Omega\left(\frac{1}{K^{0.51}}\right).
 \end{align*}
\end{proof}
\subsubsection{Controlling Gradient Updates in Phase IV}
\begin{lemma}\label{pk4a}
    At each iteration $T_{3,k}<t \leq T_{4,k}^{\epsilon}$, if \Cref{hpk4} holds, then $\alpha_k^{(t)}\geq0$ and satisfies 
    \begin{align*}
        \alpha_{k}^{(t)}\geq \Omega\left(\frac{\epsilon}{K}\right).
    \end{align*}
\end{lemma}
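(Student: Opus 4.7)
The plan is to follow the same template used in the preceding gradient-lower-bound lemmas (\Cref{pk1a}, \Cref{pk2a}, \Cref{pk3a}, and especially \Cref{p2a} from the balanced analysis), since Phase IV is morally the analog of Stage I of Phase II in the balanced case: we already have $\Attn_k^{(t)}=\Omega(1)$ and a useful lower bound on $1-\Attn_k^{(t)}$, and we just need to convert these into a quantitative lower bound on $\alpha_k^{(t)}$.

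First, I would start from the gradient formula in \Cref{app:lem:gd}:
\begin{align*}
\alpha_k^{(t)}=\mathbb{E}\left[\mathbf{1}\{\xq=v_k\}\Attn_k^{(t)}\Big(\sum_{m\neq k}{\Attn_m^{(t)}}^2+(1-\Attn_k^{(t)})^2\Big)\right].
\end{align*}
The integrand is manifestly nonnegative, which immediately yields $\alpha_k^{(t)}\geq 0$. Next, I would restrict the expectation to the event $\{\xq=v_k\}\cap\{\pit\in\esi\}$ and drop the non-negative $\sum_{m\neq k}{\Attn_m^{(t)}}^2$ term to get
\begin{align*}
\alpha_k^{(t)}\geq p_k\cdot\mathbb{P}(\pit\in\esi)\cdot\mathbb{E}\!\left[\Attn_k^{(t)}(1-\Attn_k^{(t)})^2\,\Big|\,\xq=v_k,\ \pit\in\esi\right],
\end{align*}
using independence of $\xq$ from $\pit$.

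Then I would plug in the technical lemmas of Phase IV. By \Cref{lemk4t1}, conditional on $\{\xq=v_k\}\cap\{\pit\in\esi\}$ we have $\Attn_k^{(t)}=\Omega(1)$ and $(1-\Attn_k^{(t)})^2\geq\Omega(\epsilon)$. Combining these with $p_k=\Theta(1/K)$ and $\mathbb{P}(\pit\in\esi)\geq 1-3\exp(-\ci^2 N/(25K^2))=\Theta(1)$ from \Cref{app:lem:prob-im}, the product gives
\begin{align*}
\alpha_k^{(t)}\geq \Theta\!\left(\tfrac{1}{K}\right)\cdot\Theta(1)\cdot\Omega(1)\cdot\Omega(\epsilon)=\Omega\!\left(\tfrac{\epsilon}{K}\right),
\end{align*}
which is exactly the claim.

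There is essentially no substantive obstacle: the real work was already done in establishing \Cref{lemk4t1}, where the $(1-\Attn_k^{(t)})^2\geq\Omega(\epsilon)$ bound is obtained by using the definition of $T^\epsilon_{4,k}$ to control how large $A_k^{(t)}$ can be relative to the $B$'s. The only thing to watch out for in writing the proof is that we use the correct lower bound on $(1-\Attn_k^{(t)})^2$ — namely the $\Omega(\epsilon)$ version valid throughout Phase IV (as opposed to the sharper $\Omega(\exp(-\operatorname{polylog}(K)))$ bound, which would be needed later when arguing that certain $\beta_{k,n}^{(t)}$ stay strictly negative, as was done in \Cref{p3b}). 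No additional hypothesis beyond \Cref{hpk4} and the standing assumption $\operatorname{polylog}(K)\gg\log(1/\epsilon)$ is needed.
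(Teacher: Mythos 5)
Your proposal is correct and matches the paper's own proof essentially step for step: nonnegativity of the integrand gives $\alpha_k^{(t)}\geq 0$, restricting to $\{\xq=v_k\}\cap\{\pit\in\esi\}$ and dropping $\sum_{m\neq k}{\Attn_m^{(t)}}^2$ reduces the bound to $p_k\cdot\mathbb{P}(\pit\in\esi)\cdot\mathbb{E}\bigl[\Attn_k^{(t)}(1-\Attn_k^{(t)})^2\mid\cdot\bigr]$, and \Cref{lemk4t1} plus $p_k=\Theta(1/K)$ give $\Omega(\epsilon/K)$. (Minor aside: $\Omega(\epsilon)$ is the \emph{stronger} of the two lower bounds in \Cref{lemk4t1} under $\operatorname{polylog}(K)\gg\log(1/\epsilon)$, not the weaker one, but this does not affect your argument.)
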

\begin{proof}
    The analysis is similar to that for \Cref{pk3a}, but we need to be more careful about the lower bound of $1-\Attn^{(t)}_{k}$.
    By the gradient expression, we have
    \begin{align*}
          \alpha_{k}^{(t)}&=
        \mathbb{E}\left[\mathbf{1}\{\xq=v_k\}\Attn^{(t)}_{k }\cdot \left(
         \sum_{m\not= k}{\Attn_{m}^{(t)}}^2+(1-\Attn^{(t)}_k)^2\right)\right]\\
         &\geq p_k\cdot\mathbb{P}(P\in\cE^*)\mathbb{E}\left[\Attn^{(t)}_{k }\cdot \left(
         \sum_{m\not= k}{\Attn_{m}^{(t)}}^2+(1-\Attn^{(t)}_k)^2\right)\mid\{\xq=v_k\} \cap \cE^*\right]\\
        &\geq p_k\cdot\mathbb{P}(P\in\cE^*)\mathbb{E}\left[\Attn^{(t)}_{k }\cdot (1-\Attn^{(t)}_k)^2\mid\{\xq=v_k\} \cap \cE^*\right]\\
        &\geq \Omega\left(\frac{\epsilon}{K}\right)
        \end{align*}
%     \begin{align*}
% &  \alpha_{k}^{(t)}=
% \mathbb{E}\left[\mathbf{1}\{\xq=v_k\}\Attn^{(t)}_{k }\cdot \left(
%  \sum_{m\not= k}{\Attn^{(t)}_{m}}^2+(1-\Attn^{(t)}_k)^2\right)\right]\\
%  &\geq \Omega(p_k)\mathbb{E}\left[\Attn^{(t)}_{k }\cdot \left(
%  \sum_{m\not= k}{\Attn^{(t)}_{m}}^2+(1-\Attn^{(t)}_k)^2\right)\mid\{\xq=v_k\} \cap \cE^*\right]\\
% &\geq  \Omega(p_k)\cdot \Omega(\epsilon)\cdot\mathbb{P}\left[\{\xq=v_k\} \cap \cE^*\right]\\
% &\geq \Omega(\frac{\epsilon}{K})
% \end{align*}
where the last inequality follows from \Cref{lemk4t1} and our choice of $p_k$.
\end{proof}
\begin{lemma}\label{pk4b}
    At each iteration $ T_{3,k}<t\leq T_{4,k}^{\epsilon}$, if \Cref{hpk4} holds, then given $k\geq 2$, %for any $n\not=k$,
     $\beta_{k,1}^{(t)}$ satisfies 
    \begin{align*}
       - O\left(\frac{\alpha^{(t)}_k}{K^{0.49}}\right)\leq  \beta_{k,n}^{(t)}\leq 0.
    \end{align*}
\end{lemma}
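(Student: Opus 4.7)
The proof will follow the template of \Cref{pk3b} but with sharper control afforded by \Cref{hpk4}. The plan is to bound $\beta_{k,1}^{(t)}$ from above and below separately using the gradient expression
\[
\beta_{k,1}^{(t)}=\mathbb{E}\!\left[\mathbf{1}\{\xq=v_k\}\Attn^{(t)}_{1}\!\left(\sum_{m\not= k}{\Attn^{(t)}_{m}}^2-\Attn^{(t)}_{1}-\Attn^{(t)}_{k}(1-\Attn^{(t)}_k)\right)\right]
\]
from \Cref{app:lem:gd}, and then to split the expectation according to whether $\pit\in\esi$ or $\pit\in\esi^c$. On $\esi$, the attention estimates of \Cref{lemk4t1} and \Cref{lemk4t2} apply; on $\esi^c$, the integrand is bounded by a constant, and its contribution is absorbed via $\mathbb{P}(\pit\in\esi^c)\leq 3\exp(-\ci^2 N/(25K^2))$, which, under $N\geq\poly(K)$ and $\operatorname{polylog}(K)\gg\log(1/\epsilon)$, is of much smaller order than $\alpha_k^{(t)}/K^{0.49}=\Omega(\epsilon/K^{1.49})$ given by \Cref{pk4a}.

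For the upper bound $\beta_{k,1}^{(t)}\leq 0$, I would first apply the factorization already used in \Cref{pk1b,pk2b,pk3b}:
\[
\sum_{m\not=k}{\Attn_m^{(t)}}^2-\Attn_1^{(t)}-\Attn_k^{(t)}(1-\Attn_k^{(t)})\leq -(1-\Attn_k^{(t)}-\Attn_1^{(t)})\bigl(\Attn_1^{(t)}+\Attn_k^{(t)}-\max_{m\neq 1,k}\Attn_m^{(t)}\bigr).
\]
On $\esi$, \Cref{lemk4t1} gives $\Attn_k^{(t)}=\Omega(1)$ while \Cref{lemk4t2} gives $\max_{m\neq 1,k}\Attn_m^{(t)}=O((1-\Attn_k^{(t)})/K)\ll \Attn_k^{(t)}$, so the bracket above is bounded below by $\Omega(1)$ and the contribution from $\esi$ is at most $-\Omega(p_k\cdot\Attn_1^{(t)}\cdot(1-\Attn_k^{(t)}))\leq -\Omega(\epsilon^{1/2}/K^{1.49})$, which dominates the positive $\esi^c$-contribution.

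For the lower bound, I would use the elementary inequality
\[
-\beta_{k,1}^{(t)}\leq \mathbb{E}\!\left[\mathbf{1}\{\xq=v_k\}\Attn_1^{(t)}\bigl(\Attn_1^{(t)}+\Attn_k^{(t)}(1-\Attn_k^{(t)})\bigr)\right].
\]
On $\esi$, \Cref{lemk4t2} gives $\Attn_1^{(t)}=O((1-\Attn_k^{(t)})/K^{0.49})$, whence
\[
\Attn_1^{(t)}\bigl(\Attn_1^{(t)}+\Attn_k^{(t)}(1-\Attn_k^{(t)})\bigr)\leq O\!\left(\tfrac{(1-\Attn_k^{(t)})^2}{K^{0.98}}\right)+O\!\left(\tfrac{\Attn_k^{(t)}(1-\Attn_k^{(t)})^2}{K^{0.49}}\right)=O\!\left(\tfrac{\Attn_k^{(t)}(1-\Attn_k^{(t)})^2}{K^{0.49}}\right),
\]
where the last step uses $\Attn_k^{(t)}=\Omega(1)$ from \Cref{lemk4t1} to absorb the first summand. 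Comparing with the expression for $\alpha_k^{(t)}$ in \Cref{app:lem:gd} (specifically its $(1-\Attn_k^{(t)})^2$ piece) yields $-\beta_{k,1}^{(t)}\leq O(\alpha_k^{(t)}/K^{0.49})$ after checking that the $\esi^c$-remainder $\ll \alpha_k^{(t)}/K^{0.49}$ via \Cref{pk4a}.

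I expect no genuine obstacle: the main subtlety is ensuring the $K^{0.49}$ exponent is tight, which requires using the two-sided bound $\Attn_1^{(t)}\in[\Omega((1-\Attn_k^{(t)})/K^{0.51}),O((1-\Attn_k^{(t)})/K^{0.49})]$ from \Cref{lemk4t2} rather than merely $\Attn_1^{(t)}=O(1/K^{0.49})$, so that the ratio $-\beta_{k,1}^{(t)}/\alpha_k^{(t)}$ really picks up the $K^{-0.49}$ factor; the rest is routine bookkeeping analogous to \Cref{pk3b}.
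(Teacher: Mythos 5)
Your proposal is correct and follows essentially the same route as the paper's proof: the same factorization $\sum_{m\not=k}{\Attn_m^{(t)}}^2-\Attn_1^{(t)}-\Attn_k^{(t)}(1-\Attn_k^{(t)})\leq -(1-\Attn_k^{(t)}-\Attn_1^{(t)})(\Attn_1^{(t)}+\Attn_k^{(t)}-\max_{m\neq 1,k}\Attn_m^{(t)})$ for the sign, the same elementary bound $-\beta_{k,1}^{(t)}\leq \mathbb{E}[\mathbf{1}\{\xq=v_k\}\Attn_1^{(t)}(\Attn_1^{(t)}+\Attn_k^{(t)}(1-\Attn_k^{(t)}))]$ for the magnitude, the same $\esi/\esi^c$ split with the multinomial tail, and the same comparison to the $p_k\,\mathbb{E}[\Attn_k^{(t)}(1-\Attn_k^{(t)})^2\mid\cdot]$ piece of $\alpha_k^{(t)}$. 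Your remark that the exponent $K^{0.49}$ hinges on the refined bound $\Attn_1^{(t)}=O((1-\Attn_k^{(t)})/K^{0.49})$ rather than merely $\Attn_1^{(t)}=O(1/K^{0.49})$ is the right thing to flag, and matches what the paper does; the only blemish is the throwaway numerical estimate $-\Omega(\epsilon^{1/2}/K^{1.49})$ for the negative $\esi$-contribution, which should read $-\Omega(\epsilon/K^{1.51})$ (from $p_k\cdot\Omega((1-\Attn_k^{(t)})/K^{0.51})\cdot\Omega(1-\Attn_k^{(t)})\cdot\Omega(1)$ with $(1-\Attn_k^{(t)})^2\geq\Omega(\epsilon)$), but this does not affect the conclusion since either quantity dominates $p_k\exp(-\Omega(N/K^2))$.
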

\begin{proof}
 Following the computations similar to those for  \Cref{pk1b}, we have
 %\Cref{lem2t1,lem2t2}, we have $\Attn_k^{(t)}=\Omega(1)$, $\max_{m\not=k}\Attn_{m}=O\left(\frac{1}{K}\right)$, thus 
% \begin{align*}
%     \frac{\Attn_k}{\Attn_n}=\exp(A_{k}^{(T_{1,k})}-B^{(T_{1,k})}_{n,k})\frac{|\cV_k|}{|\cV_n|}\geq \exp(A_{k}^{(T_{1,k})}-B^{(T_{1,k})}_{n,k}) \cdot\frac{L_k}{U_n}\geq 1
% \end{align*}
\begin{align*} &\sum_{m\not= k}{\Attn^{(t)}_{m}}^2-\Attn^{(t)}_{1} -\Attn^{(t)}_{k}(1-\Attn^{(t)}_k)\\
%&= \sum_{m\not= 1,k}{\Attn^{(t)}_{m}}^2-\Attn^{(t)}_{1}(1-\Attn^{(t)}_{1}) -\Attn^{(t)}_{k}(1-\Attn^{(t)}_k)\\&\leq  \max_{m\not=1,k}\Attn^{(t)}_{m}(1-\Attn^{(t)}_{1}-\Attn^{(t)}_{k})-\Attn^{(t)}_{1}(1-\Attn^{(t)}_{1})-\Attn^{(t)}_{k}(1-\Attn^{(t)}_k)\\
&\quad \leq -(1-\Attn^{(t)}_k-\Attn^{(t)}_1)(\Attn^{(t)}_{1}+\Attn^{(t)}_{k}-\max_{m\not=1,k}\Attn^{(t)}_{m}).
%&\leq -\Omega (\frac{1-\Attn_k-\Attn_1}{K}). 
\end{align*}
Therefore,
\begin{align*}
\beta_{k,1}^{(t)}&\leq \mathbb{E}\left[\mathbf{1}\{\xq=v_k\cap \esi\}\Attn^{(t)}_{1}\cdot \left(
  \sum_{m\not= k}{\Attn^{(t)}_{m}}^2-\Attn^{(t)}_{1} -\Attn^{(t)}_{k}(1-\Attn^{(t)}_k)\right)\right]\\
  &\quad +\mathbb{E}\left[\mathbf{1}\{\xq=v_k\cap {\esi}^{c}\}\Attn^{(t)}_{1}\cdot \left(
  \sum_{m\not= k}{\Attn^{(t)}_{m}}^2\right)\right]\\
  &\stackrel{(a)}
  {\leq}p_k\cdot\mathbb{P}(\pit\in\esi)\cdot\mathbb{E}\left[-\Omega(\frac{(1-\Attn^{(t)}_{k})^2}{K^{0.51}})\bigg|\{\xq=v_k\} \cap \esi\right]\\
  &\quad +p_k\cdot\mathbb{P}({\esi}^{c})\\
&{\leq} p_k\cdot \left(-\Omega\left(\frac{\epsilon}{K^{0.51}}\right)\right)+3 p_k\exp \left(-\frac{\ci^2 N}{25 K^2}\right)\\
&<  0.
\end{align*}
% where $(a)$ follows from \Cref{eq3} and \Cref{lem2t2}, $(b)$ follows from \Cref{lem2t1}. 
where $(a)$ follows from \Cref{lemk4t2}, 
and the last inequality holds since%and our choice of $p_k$.
\begin{align*}
\frac{\epsilon}{K^{0.51}}\geq \frac{\exp(-\operatorname{polylog}(K))}{K^{0.51}} \gg  \exp \left(-\frac{\ci^2 N}{25 K^2}\right).
\end{align*}

Moreover,

\begin{align*}
    -\beta_{k,1}^{(t)}&\leq \mathbb{E}\left[\mathbf{1}\{\xq=v_k\cap \esi\}\Attn^{(t)}_{1}\cdot \left(
     \Attn^{(t)}_{1} +\Attn^{(t)}_{k}(1-\Attn^{(t)}_k)\right)\right]\\
      &\quad +\mathbb{E}\left[\mathbf{1}\{\xq=v_k\cap {\esi}^{c}\}\Attn^{(t)}_{1}\cdot \left(
        \Attn^{(t)}_{1} +\Attn^{(t)}_{k}(1-\Attn^{(t)}_k)\right)\right]\\
      &%\stackrel{(a)}
      {\leq}p_k\cdot\mathbb{P}(\pit\in\esi)\cdot\mathbb{E}\left[\cdot O(\Attn^{(t)}_{1}(1-\Attn^{(t)}_{k}))\mid\{\xq=v_k\} \cap \esi\right]\\
      &\quad +2p_k\cdot\mathbb{P}({\esi}^{c})\\
    &%\stackrel{(b)}
    {\leq} O\left(\frac{\alpha^{(t)}_k}{K^{0.49}}\right)+ 6 p_k\exp \left(-\frac{\ci^2 N}{25 K^2}\right)\\
    &=O\left(\frac{\alpha^{(t)}_k}{K^{0.49}}\right).
    \end{align*}
% Moreover, following the similar analysis from \Cref{p1b}, we have 
% \begin{align*}
%  - \beta_{k,n}^{(t)}&\leq p_k\mathbb{E}\left[\Attn^{(t)}_{n}\cdot \left(
% \Attn^{(t)}_{n} +\Attn^{(t)}_{k}(1-\Attn^{(t)}_k)\right)\mid\{\xq=v_k\} \cap \cE^*\right]+p_k\mathbb{P}({\cE^*}^{c})\\
% &\leq p_k\mathbb{E}\left[\Theta(\frac{1-\Attn^{(t)}_{k}}{K})\cdot O\left(
% \Attn^{(t)}_{k}(1-\Attn^{(t)}_k)\right)\mid\{\xq=v_k\} \cap \cE^*\right]+6 p_k\exp \left(-\frac{c_0^2 N}{25 K^2}\right)\\
% &=p_k\mathbb{E}\left[O(\frac{\Attn^{(t)}_{k}(1-\Attn^{(t)}_{k})^2}{K})\mid\{\xq=v_k\} \cap \cE^*\right]+6 p_k\exp \left(-\frac{c_0^2 N}{25 K^2}\right)\\
% &\leq O(\frac{\alpha_{k}^{(t)}}{K}).
% \end{align*}
%     Combining with the previous lemma that $\alpha_{k}^{(t)}\geq \Omega\left(\frac{1}{K^2}\right)$, then 
%     $$
% |\beta_{n,k}^{(t)}|\leq \max\{O(\frac{\alpha_{k}^{(t)}}{K}),O\left(\frac{1}{K^3}\right)\}=O(\frac{\alpha_{k}^{(t)}}{K}).
%     $$
\end{proof}
\begin{lemma}\label{pk4c}
    At each iteration $ T_{3,k}<t\leq T_{4,k}^{\epsilon}$, if \Cref{hpk4} holds, then given $k\geq 2$, for any $n\not=1,k$, $\beta_{k,n}^{(t)}$ satisfies 
    \begin{align*}
       - O\left(\frac{\alpha^{(t)}_k}{K}\right)\leq  \beta_{k,n}^{(t)}\leq 0.
    \end{align*}
\end{lemma}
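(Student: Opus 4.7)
The plan is to mirror the argument used for \Cref{pk3c,pk2c,pk1c}, but exploiting the sharper bounds on the attention scores available in Phase IV from \Cref{lemk4t1,lemk4t2}. Starting from the gradient formula in \Cref{app:lem:gd}, I would split the expectation defining $\beta_{k,n}^{(t)}$ into the high-probability event $\{\xq=v_k\}\cap\{\pit\in\esi\}$ and its complement, absorbing the complement into an additive error of order $p_k\exp\bigl(-\tfrac{\ci^2 N}{25K^2}\bigr)$, which is negligible relative to $\alpha_k^{(t)}\geq \Omega(\epsilon/K)$ from \Cref{pk4a} since $\epsilon\gg e^{-\polylog(K)}$.

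For the upper bound $\beta_{k,n}^{(t)}\leq 0$, I would invoke the algebraic inequality
\begin{align*}
\sum_{m\not= k}{\Attn^{(t)}_{m}}^2-\Attn^{(t)}_{n}-\Attn^{(t)}_{k}(1-\Attn^{(t)}_k)
&\leq -(1-\Attn^{(t)}_k)\bigl(\Attn^{(t)}_k-\max_{m\not=k}\Attn^{(t)}_m\bigr),
\end{align*}
which on the good event is at most $-\Omega(1-\Attn^{(t)}_k)$ since \Cref{lemk4t1,lemk4t2} give $\Attn^{(t)}_k=\Omega(1)$ while $\max_{m\neq k}\Attn^{(t)}_m\leq O(K^{-0.49})$. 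Multiplying by $\Attn^{(t)}_n=\Theta((1-\Attn^{(t)}_k)/K)$ (from \Cref{lemk4t2} for $n\neq 1,k$) yields a contribution of order $-\Omega(p_k(1-\Attn^{(t)}_k)^2/K)\leq -\Omega(\epsilon/K^2)$, which dominates the $\esi^c$ error and forces $\beta_{k,n}^{(t)}\leq 0$.

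For the lower bound, I would use the crude inequality
\begin{align*}
-\beta_{k,n}^{(t)}\leq \mathbb{E}\bigl[\mathbf{1}\{\xq=v_k\}\Attn^{(t)}_n(\Attn^{(t)}_n+\Attn^{(t)}_k(1-\Attn^{(t)}_k))\bigr],
\end{align*}
conditioning again on $\esi$ and substituting $\Attn^{(t)}_n=\Theta((1-\Attn^{(t)}_k)/K)$ to obtain
\begin{align*}
-\beta_{k,n}^{(t)}&\leq O\!\left(\frac{p_k\,\mathbb{E}[\Attn^{(t)}_k(1-\Attn^{(t)}_k)^2\mid \xq=v_k,\pit\in\esi]}{K}\right)+O\!\bigl(p_k e^{-\ci^2 N/(25K^2)}\bigr).
\end{align*}
The conditional expectation inside is, up to constants, exactly the expression that defines $\alpha_k^{(t)}$ via \Cref{app:lem:gd}, so this equals $O(\alpha_k^{(t)}/K)$, and since $\alpha_k^{(t)}\gg e^{-\poly(K)}$ the residual $\esi^c$ contribution is absorbed.

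The main obstacle, which is mild, is simply tracking the fact that in Phase IV the bound on $\Attn^{(t)}_n$ for $n\neq 1,k$ remains $\Theta((1-\Attn^{(t)}_k)/K)$ rather than degrading to $O((1-\Attn^{(t)}_k)/K^{0.49})$ as it does for $n=1$; this is precisely what allows the $1/K$ (rather than $1/K^{0.49}$) factor in the stated bound, and it is ensured by item 3 of \Cref{hpk4} together with the computation in \Cref{lemk4t2}.
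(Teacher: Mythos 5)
Your proposal is correct and takes essentially the same route as the paper's own proof: it uses the same algebraic inequality bounding $\sum_{m\neq k}{\Attn_m^{(t)}}^2-\Attn_n^{(t)}-\Attn_k^{(t)}(1-\Attn_k^{(t)})$ by $-(1-\Attn_k^{(t)})(\Attn_k^{(t)}-\max_{m\neq k}\Attn_m^{(t)})$, the same split over $\esi$ and $\esi^c$, and the same substitution $\Attn_n^{(t)}=\Theta((1-\Attn_k^{(t)})/K)$ from \Cref{lemk4t2} to recover the $\alpha_k^{(t)}/K$ scaling. Your closing remark distinguishing the $n\neq 1,k$ attention estimate ($\Theta(1/K)$) from the $n=1$ estimate ($O(1/K^{0.49})$) is exactly the point that separates \Cref{pk4c} from \Cref{pk4b}.
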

\begin{proof}
        Note that  conditioned on the event $\{\xq=v_k\} \cap \esi$, by \Cref{lemk4t1,lemk4t2}, we have $\Attn_k^{(t)}=\Omega(1)$, $\max_{m\not=k}\Attn_{m}=O\left(\frac{1}{K^{0.49}}\right)$. Thus, we obtain
    % \begin{align*}
    %     \frac{\Attn_k}{\Attn_n}=\exp(A_{k}^{(T_{1,k})}-B^{(T_{1,k})}_{n,k})\frac{|\cV_k|}{|\cV_n|}\geq \exp(A_{k}^{(T_{1,k})}-B^{(T_{1,k})}_{n,k}) \cdot\frac{L_k}{U_n}\geq 1
    % \end{align*}
\begin{align} \sum_{m\not= k}{\Attn^{(t)}_{m}}^2-\Attn^{(t)}_{n} -\Attn^{(t)}_{k}(1-\Attn^{(t)}_k)&\leq  \max_{m\not=k}\Attn^{(t)}_{m}\sum_{m\not= k}\Attn^{(t)}_{m}-\Attn^{(t)}_{k}(1-\Attn^{(t)}_k)\nonumber\\
&=-(1-\Attn^{(t)}_k)(\Attn^{(t)}_{k}-\max_{m\not=k}\Attn^{(t)}_{m})\nonumber\\
&\leq -\Omega (1-\Attn^{(t)}_k). \label{ik:eq3}
\end{align}
Therefore,
\begin{align*}
    \beta_{k,n}^{(t)}&\leq \mathbb{E}\left[\mathbf{1}\{\xq=v_k\cap \cE^*\}\Attn^{(t)}_{n}\cdot \left(
      \sum_{m\not= k}{\Attn^{(t)}_{m}}^2-\Attn^{(t)}_{n} -\Attn^{(t)}_{k}(1-\Attn^{(t)}_k)\right)\right]\\
      &\quad +\mathbb{E}\left[\mathbf{1}\{\xq=v_k\cap {\esi}^{c}\}\Attn^{(t)}_{n}\cdot \left(
      \sum_{m\not= k}{\Attn^{(t)}_{m}}^2\right)\right]\\
      &%\stackrel{(a)}
      {\leq}p_k\cdot\mathbb{P}(\pit\in\esi)\cdot\mathbb{E}\left[-\Omega(\frac{(1-\Attn_k)^2}{K})\bigg|\{\xq=v_k\} \cap \cE^*\right]+p_k\cdot\mathbb{P}(\pit\in {\esi}^{c})\\
&%\stackrel{(b)}
{\leq} p_k\cdot \left(-\Omega\left(\frac{\epsilon}{K}\right)\right)+3 p_k\exp \left(-\frac{\ci^2 N}{25 K^2}\right)\\
&{\leq} 0,
  \end{align*}
 % where $(a)$ follows from \Cref{eq3} and \Cref{lemk4t2}, $(b)$ follows from \Cref{lemk4t1}. 
where the last inequality holds since%and our choice of $p_k$.
\begin{align*}
\frac{\epsilon}{K}\gg \frac{\exp(-\operatorname{polylog}(K))}{K}\gg \exp \left(-\frac{\ci^2 N}{25 K^2}\right).
\end{align*}
Moreover, %following the similar analysis from \Cref{pk3c}, 
we have 
    \begin{align*}
     - \beta_{k,n}^{(t)}&\leq p_k\mathbb{E}\left[\Attn^{(t)}_{n}\cdot \left(
 \Attn^{(t)}_{n} +\Attn^{(t)}_{k}(1-\Attn^{(t)}_k)\right)\mid\{\xq=v_k\} \cap \esi\right]+2p_k\mathbb{P}({\esi}^{c})\\
 &\leq p_k\mathbb{E}\left[\Theta(\frac{1-\Attn^{(t)}_{k}}{K})\cdot O\left(
\Attn^{(t)}_{k}(1-\Attn^{(t)}_k)\right)\bigg|\{\xq=v_k\} \cap \esi\right]\\
&\qquad+6 p_k\exp \left(-\frac{\ci^2 N}{25 K^2}\right)\\
 &=p_k\mathbb{E}\left[O\left(\frac{\Attn^{(t)}_{k}(1-\Attn^{(t)}_{k})^2}{K}\right)\bigg|\{\xq=v_k\} \cap \esi\right]+6 p_k\exp \left(-\frac{\ci^2 N}{25 K^2}\right)\\
 &\leq O\left(\frac{\alpha_{k}^{(t)}}{K}\right).
    \end{align*}
%     Combining with the previous lemma that $\alpha_{k}^{(t)}\geq \Omega\left(\frac{1}{K^2}\right)$, then 
%     $$
% |\beta_{n,k}^{(t)}|\leq \max\{O(\frac{\alpha_{k}^{(t)}}{K}),O\left(\frac{1}{K^3}\right)\}=O(\frac{\alpha_{k}^{(t)}}{K}).
%     $$
\end{proof}
\subsubsection{End of Phase IV}
\begin{lemma}\label{end4}
    Given $k>1$, and $0<\epsilon<1$, suppose $\operatorname{polylog}(K)\gg \log(\frac{1}{\epsilon})$. Then \Cref{hpk4} holds for all  $T_{3,k}<t\leq T^{\epsilon}_{4,k}=T_{3,k}+O(\frac{K\log(K\epsilon^{-\frac{1}{2}})}{\eta\epsilon})$, and at iteration $t=T^{\epsilon}_{4,k}+1$, we have 
    \begin{enumerate}
        \item  $\tilde{\cL}_{k}(\theta^{T^{\epsilon}_{4,k}+1})<\frac{\epsilon}{2}$;
        \item If $\xq=v_k$ and $\pit\in\esi$, we have  $(1-\Attn_{k}^{(T^{\epsilon}_{4,k}+1)})^2\leq O(\epsilon)$.
    \end{enumerate}
       
\end{lemma}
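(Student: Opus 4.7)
My plan is to follow the same template used for the preceding lemmas \Cref{end2,end3} in the balanced phase, adapted to the finer three-way split $\{A_k^{(t)}, B_{k,1}^{(t)}, B_{k,n}^{(t)}\text{ for } n\neq 1,k\}$ that governs the under-represented case. The proof has three components: (i) show that $T_{4,k}^{\epsilon}$ exists with the stated bound; (ii) close the induction and verify \Cref{hpk4} at every iteration in $(T_{3,k},T_{4,k}^{\epsilon}]$; (iii) convert the attention-score concentration at the endpoint into the loss bound $\tilde{\cL}_k<\epsilon/2$.

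For (i), I would use \Cref{pk4a}: at every $t$ in this phase, $\alpha_k^{(t)}\geq \Omega(\epsilon/K)$, so each GD step increases $A_k^{(t)}$ by at least $\Omega(\eta\epsilon/K)$. Starting from $A_k^{(T_{3,k}+1)}\geq \log(K)$, it therefore takes at most $O\bigl(\frac{K}{\eta\epsilon}\log\bigl((\frac{K}{\Li_k})(\epsilon^{-1/2})\bigr)\bigr)=O\bigl(\frac{K\log(K\epsilon^{-1/2})}{\eta\epsilon}\bigr)$ iterations for $A_k^{(t)}$ to exceed the threshold defining $T_{4,k}^{\epsilon}$. For (ii), I would check \Cref{hpk4} holds at $t=T_{3,k}+1$ (inherited from the end of phase III with $|B_{k,n}^{(t)}|=O(\log(K)/K)$ for $n\neq 1,k$), and then argue inductively: \Cref{pk4a} gives monotonicity and positivity of $\alpha_k^{(t)}$, which together with the definition of $T_{4,k}^{\epsilon}$ forces $A_k^{(t)}\in[\log K,O(\log(K/\epsilon))]$; \Cref{pk4b} gives $\beta_{k,1}^{(t)}\leq 0$ and $|\beta_{k,1}^{(t)}|\leq O(\alpha_k^{(t)}/K^{0.49})$, so the total drift of $B_{k,1}^{(t)}$ in this phase is $O(\log(K/\epsilon)/K^{0.49})=O(\log(K)/K^{0.48})$, keeping $B_{k,1}^{(t)}$ in its stated interval; \Cref{pk4c} gives the analogous control for $B_{k,n}^{(t)}$ with $n\neq 1,k$, yielding $|B_{k,n}^{(t)}|\leq |B_{k,n}^{(T_{3,k}+1)}|+O(\log(K/\epsilon)/K)=O(\log(K/\epsilon)/K)$.

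For (iii), assuming $\{\xq=v_k\}\cap\{\pit\in\esi\}$, the exit condition $A_k^{(T_{4,k}^{\epsilon}+1)}\geq \log\bigl((\tfrac{e(1-\Li_1)K+\Ui_1 K^{0.51}}{\Li_k}-1)((\tfrac{3}{\epsilon})^{1/2}-1)\bigr)$ feeds into the identity
\begin{align*}
1-\Attn_k^{(t)}=\frac{\sum_{m\neq 1,k}\frac{|\cV_m|}{|\cV_k|}e^{B_{k,m}^{(t)}-A_k^{(t)}}+\frac{|\cV_1|}{|\cV_k|}e^{B_{k,1}^{(t)}-A_k^{(t)}}}{\sum_{m\neq 1,k}\frac{|\cV_m|}{|\cV_k|}e^{B_{k,m}^{(t)}-A_k^{(t)}}+\frac{|\cV_1|}{|\cV_k|}e^{B_{k,1}^{(t)}-A_k^{(t)}}+1}.
\end{align*}
Using $e^{B_{k,m}^{(t)}}\leq e^{O(\log(K/\epsilon)/K)}\leq e^{o(1)}$ for $m\neq 1,k$, $e^{B_{k,1}^{(t)}}\leq K^{-0.49}$, and the $\esi$-bounds on $|\cV_m|/|\cV_k|$, the numerator coefficient is bounded by $\frac{e(1-\Li_1)K+\Ui_1 K^{0.51}}{\Li_k}-1$, giving $1-\Attn_k^{(T_{4,k}^{\epsilon}+1)}\leq (\epsilon/3)^{1/2}$ by the usual $\frac{x}{x+1}$ monotonicity. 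Combined with \Cref{lemk4t2} (which gives $\Attn_n^{(t)}=\Theta((1-\Attn_k^{(t)})/K)$ for $n\neq 1,k$ and $\Attn_1^{(t)}\leq O((1-\Attn_k^{(t)})/K^{0.49})$), the expression in \Cref{app:lem:loss1} applied to $\tilde{\cL}_k$ yields
\begin{align*}
\tilde{\cL}_k(\theta^{(T_{4,k}^{\epsilon}+1)})\leq \tfrac{1}{2}\mathbb{P}(\pit\in\esi)\bigl(1+O(1/K)\bigr)\cdot \tfrac{\epsilon}{3}<\tfrac{\epsilon}{2},
\end{align*}
as required, and the attention concentration $(1-\Attn_k^{(T_{4,k}^{\epsilon}+1)})^2\leq O(\epsilon)$ is immediate from the same bound.

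The main obstacle, I expect, is not any single estimate but ensuring that the simultaneous control of three quantities $(A_k,B_{k,1},B_{k,n})$ with three different decay rates remains mutually consistent throughout the phase — in particular, showing that the extra drift $O(\log(K/\epsilon)/K^{0.48})$ of $B_{k,1}$ in this phase is small enough to preserve the interval $[-0.51\log K-O(\log K/K^{0.48}),-0.49\log K]$ without sharpening the slack constants further, and that $e^{B_{k,1}-A_k}\leq K^{-0.49}\cdot e^{-A_k}$ absorbs cleanly into the constant $\Ui_1 K^{0.51}/\Li_k$ appearing in the definition of $T_{4,k}^{\epsilon}$. Both reduce to arithmetic bookkeeping once the three preceding gradient lemmas are in hand.
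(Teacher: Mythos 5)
Your proposal follows exactly the same strategy as the paper's proof: invoke \Cref{pk4a} for the iteration count, close the induction on \Cref{hpk4} using \Cref{pk4a,pk4b,pk4c}, then convert the exit condition on $A_k^{(T_{4,k}^{\epsilon}+1)}$ into $1-\Attn_k^{(T_{4,k}^{\epsilon}+1)}\leq (\epsilon/3)^{1/2}$ and back into the loss bound. The only slip is the loss bound's slack factor, which should be $1+O(1/K^{0.49})$ rather than $1+O(1/K)$ since the dominant term in $\sum_{m\neq k}\Attn_m^{2}$ is $\Attn_1^{(t)}(1-\Attn_k^{(t)})=O((1-\Attn_k^{(t)})^2/K^{0.49})$, but this does not affect the conclusion.
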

\begin{proof}

The  existence of $T_{4,k}^{\epsilon}=T_{3,k}+O(\frac{K\log(K\epsilon^{-\frac{1}{2}})}{\eta\epsilon})$ directly follows from  \Cref{pk4a}.

It is easy to verify \Cref{hpk4} holds at $t=T_{3,k}+1$.  Now we suppose \Cref{hpk4} holds for all iterations $\leq t-1$, and prove it holds at $t$. 

By \Cref{pk4a}, we have $\alpha_{k}^{(t-1)}\geq 0$. Thus $A_{k}^{(t)}= A_{k}^{(t-1)}+\eta\alpha_{k}^{(t-1)}\geq \log(K)$. Moreover, %by the definition of $T_{1,k}$, 
by \Cref{pk4b}, we have $$|B_{k,1}^{(t)}- B_{k,1}^{(T_{3,k}+1)}|\leq O(\frac{A_{k}^{(t)}- A_{k}^{(T_{3,k}+1)}}{K^{0.49}}),$$
which immediately implies $$B_{k,1}^{(t)}\geq -O(A_{k}^{(t)} /K^{0.49})-O(\log(K)/K^{0.48})-0.51\log(K).$$ 

For $m\not=1,k$, by \Cref{pk4c}, we have $$|B_{k,m}^{(t)}-B_{k,m}^{(T_{3,k}+1)}|\leq O(\frac{A_{k}^{(t)}- A_{k}^{(T_{3,k}+1)}}{K})\leq O(\log(K/\epsilon)/K).$$ Thus $$|B_{k,m}^{(t)}|\leq O(\log(K/\epsilon)/K)+O(\log(K)/K)=O(\log(K/\epsilon)/K).$$

At iteration $t=T^{\epsilon}_{4,k}+1$, we have $$A_{k}^{(t)}\geq \log\left( \left(\frac{e(1-\Li_1)K+\Ui_1K^{0.51}}{\Li_k}-e\right)\left(\left(\frac{3}{\epsilon}\right)^{\frac{1}{2}}-1\right)\right).$$ Thus when $\{\xq=v_{k}\}\cap\{ \pit\in\esi\}$, we obtain
\begin{align*}
    1-\Attn^{(t)}_{k}
             &=\frac{\sum_{m\not=k}\frac{|\cV_m|}{|\cV_k|}\exp(B^{(t)}_{k,m}-A^{(t)}_k)}{\sum_{m\not=k}\frac{|\cV_m|}{|\cV_k|}\exp(B^{(t)}_{k,m}-A^{(t)}_k)+1}\\
             &\leq \frac{ \exp( \max_{m\not= 1,k} B^{(t)}_{k,m}-A^{(t)}_k) (\frac{N-|\cV_1|}{|\cV_k|}-1)+\exp( B^{(t)}_{k,1}-A^{(t)}_k) \frac{|\cV_1|}{|\cV_k|}}{\exp( \max_{m\not= 1,k} B^{(t)}_{k,m}-A^{(t)}_k) (\frac{N-|\cV_1|}{|\cV_k|}-1)+\exp( B^{(t)}_{k,1}-A^{(t)}_k) \frac{|\cV_1|}{|\cV_k|}+1}\\
             &\leq \frac{ \exp(1-A^{(t)}_k) (\frac{(1-\Li_1)K}{\Li_{k}}-1)+\exp(-0.49\log(K)-A^{(t)}_k) \frac{\Ui_1K}{\Li_{k}}}{ \exp(1-A^{(t)}_k) (\frac{(1-\Li_1)K}{\Li_{k}}-1)+\exp(-0.49\log(K)-A^{(t)}_k) \frac{\Ui_1K}{\Li_{k}}+1}\\
             &=  \frac{ \left( (\frac{e(1-\Li_1)K+\Ui_1K^{0.51}}{L_k}-e) \right) \exp(-A^{(t)}_k)  }{ \left( (\frac{e(1-\Li_1)K+\Ui_1K^{0.51}}{\Li_k}-e) \right) \exp(-A^{(t)}_k)+1}\\
             &\leq \frac{((\frac{3}{\epsilon})^{\frac{1}{2}}-1)^{-1}}{((\frac{3}{\epsilon})^{\frac{1}{2}}-1)^{-1}+1}\\
             &=(\epsilon/3)^{\frac{1}{2}}.
         \end{align*}
We further derive
\begin{align*}
  \tilde{\cL}_{k}(\theta^{(t)})
    &=\frac{1}{2}\mathbb{E}\left[\mathbf{1}\{\pit\in\esi\}\left(
        \sum_{m\not= k}{\Attn^{(t)}_{m}}^2+(1-\Attn^{(t)}_k)^2\right)\bigg| \xq=v_{k} \right]\\
        &\leq \frac{1}{2}\mathbb{P}\left( \pit\in\esi\right)\cdot\mathbb{E}\left[\left(O\left(\frac{1}{K^{0.49}}\right)+1\right)(1-\Attn^{(t)}_k)^2\bigg| \xq=v_{k}\cap \pit\in\esi \right]\\
        &\leq  \frac{1}{2}\left(1+O\left(\frac{1}{K^{0.49}}\right)\right)\cdot \frac{\epsilon}{3} \\
        &\leq \frac{\epsilon}{2}.
\end{align*} 
%\textcolor{red}{Should we put the bound of $\tilde{L}^{T_{2,k}^\epsilon+1}$ to Phase III, the first inequality use the Lemma 12 holding for $t \leq \tilde{L}^{T_{2,k}^\epsilon}$, we should apply Lemma 17.}

   % \paragraph{The existence of $\tilde{T}^{\epsilon}_{2,k}$.} %$T^{\epsilon}_{k}=T_{1,k}+O(\frac{K \log\left(K\epsilon^{-\frac{1}{2}}\right)}{\epsilon\eta})$ is implied by \Cref{p2a}. 
%         \begin{align*}
%       &\beta_{k,n}^{(T_{1,k})}\leq \mathbb{P}({\cE^*}^{c})+\mathbb{P}(\{\xq=v_k\} \cap \cE^*)\cdot\\&\mathbb{E}\left[\Attn_{n}\cdot \left(
%  \sum_{m\not= k}\Attn^2_{m}-\Attn_{n} -\Attn_{k}(1-\Attn_k)\right)\mid\{\xq=v_k\} \cap \cE^*\right]
%     \end{align*}
%     Note that at iteration $T_{1,k}$, $A_{k}^{(T_{1,k})}-B^{(T_{1,k})}_{n,k}\geq (1-O\left(\frac{1}{K}\right))\log(K)$, therefore, conditioning on the event $\{\xq=v_k\} \cap \cE^*$,$\Attn_k^{(T_{1,k})}=\Omega(1)$, thus 
%     % \begin{align*}
%     %     \frac{\Attn_k}{\Attn_n}=\exp(A_{k}^{(T_{1,k})}-B^{(T_{1,k})}_{n,k})\frac{|\cV_k|}{|\cV_n|}\geq \exp(A_{k}^{(T_{1,k})}-B^{(T_{1,k})}_{n,k}) \cdot\frac{L_k}{U_n}\geq 1
%     % \end{align*}
% \begin{align*} \sum_{m\not= k}\Attn^2_{m}-\Attn_{n} -\Attn_{k}(1-\Attn_k)&\leq  \max_{m\not=k}\Attn_{m}\sum_{m\not= k}\Attn_{m}-\Attn_{k}(1-\Attn_k)\\
% &=-(1-\Attn_k)(\Attn_{k}-\max_{m\not=k}\Attn_{m})\\
% &\leq -\Omega(1)
% \end{align*}
% Therefore,
% \begin{align*}
% \beta_{k,n}^{(T_{1,k})}\leq 3 \exp \left(-\frac{c_0^2 N}{25 K^2}\right)-\Omega\left(\frac{1}{K^2}\right)<0.
% \end{align*}
\end{proof}

\subsection{Proof of Theorem~\ref{thm: unblc} for Under-represented Features}\label{app:sec:un}
\begin{theorem}[Restatement of \Cref{thm: unblc} for Under-represented Features]
Suppose $p_1=\Theta(1)$ and $p_k=\Theta\left(\frac{1}{K}\right)$ for $2\leq k \leq K$. For any $0<\epsilon<1$, suppose $N\geq \poly(K)$, and $\polylog(K)\gg \log(\frac{1}{\epsilon})$. We apply GD to train the loss function given in \cref{eq:obj}. Then the following results hold.
%the 1-layer transformer with softmax attention trained via GD yields the following results: 
\begin{enumerate}[label={\arabic*.}]
    % \item The loss for {\bf dominant} feature converges: for $v_1$, with at most $T_1=O(\frac{\log(\epsilon^{-\frac{1}{2}})}{\eta\epsilon})$ gradient descent iterations, %we shall have \cref{eq-obj-k} converges: 
    % $\cL_1(\theta^{(T_1)})\leq \cL_{1}^{*}+\epsilon$, where $\cL^{*}_{1}=\Theta(e^{-\poly(K)})$ is the global minimum of \cref{eq-obj-k}; %Moreover,
    \item The prediction error for \textbf{under-represented} feature converges: for $v_k$ with $2\leq k \leq K$, 
    with at most 
    $T_k=O(\frac{\log(K)K^2}{\eta}+\frac{K \log\left(K\epsilon^{-\frac{1}{2}}\right)}{\epsilon\eta})$ GD iterations, %\cref{eq-obj-k} also converges:  
    $\cL_k(\theta^{(T_k)})\leq \cL_{k}^{*}+\epsilon$, where $\cL^{*}_{k}=\Theta(e^{-\poly(K)})$ is the global minimum of \cref{eq-obj-k};
    \item Attention score concentrates: for each
 $2\leq k\leq K$, if the query token is $v_k$, then after $T_{k}$ 
 iterations, with probability at least  $1-e^{-\Omega(\poly(K))}$, the one-layer transformer nearly ``pays all attention" to input tokens featuring $v_k$:  $(1-\Attn^{(T_{k})}_k)^2 \leq O(\epsilon)$.
\end{enumerate}
    \end{theorem}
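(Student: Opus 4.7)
The plan is to assemble the theorem directly from the four phase lemmas already proven in Sections~\ref{app:im:p1}--\ref{app:im:p4}, since the hard technical work (controlling $\alpha_k^{(t)}$, $\beta_{k,1}^{(t)}$, $\beta_{k,n}^{(t)}$ phase by phase) has been done. First I would set $T_k := T_{4,k}^{\epsilon}+1$ and verify the claimed iteration complexity by summing the four phase durations: $T_{1,k}=O(\log(K)K^{1.98}/\eta)$, $T_{2,k}-T_{1,k}=O(\log(K)K^{2}/\eta)$, $T_{3,k}-T_{2,k}=O(\log(K)K^{1.5}/\eta)$, and $T_{4,k}^\epsilon-T_{3,k}=O(K\log(K\epsilon^{-1/2})/(\eta\epsilon))$. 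The two dominating terms, $\log(K)K^2/\eta$ from Phase II and $K\log(K\epsilon^{-1/2})/(\eta\epsilon)$ from Phase IV, give exactly the $T_k$ in the statement.

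Next I would invoke \Cref{end4} to obtain the two key inputs at time $T_k$: the truncated loss bound $\tilde{\cL}_k(\theta^{(T_k)})<\epsilon/2$ and the attention-concentration bound $(1-\Attn_k^{(T_k)})^2\leq O(\epsilon)$ on the event $\{\xq=v_k\}\cap\{\pit\in\esi\}$. The second conclusion of the theorem then follows immediately from \Cref{app:lem:prob-im}, which guarantees $\mathbb{P}(\pit\in\esi)\geq 1-3\exp(-\ci^2 N/(25K^2))=1-e^{-\Omega(\poly(K))}$ since $N\geq\poly(K)$; the attention concentration bound holds on this high-probability event, proving the second item.

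For the first item, I would pass from $\tilde{\cL}_k$ to $\cL_k$ using \Cref{app:lem:opti2}, which gives
\[
\cL_k(\theta^{(T_k)})-\Loi_k \;\leq\; \tilde{\cL}_k(\theta^{(T_k)})+3\exp\!\left(-\tfrac{\ci^2 N}{25K^2}\right)\;\leq\;\tfrac{\epsilon}{2}+o(\epsilon)\;\leq\;\epsilon,
\]
where the last step uses $N\gg K^3$ so that the tail term is negligible compared to $\epsilon=\Omega(\exp(-\polylog(K)))$. Finally, \Cref{app:lem:opti1} supplies $\Loi_k\leq \cL_k^{*}$ and $\cL_k^{*}=\Theta(e^{-\poly(K)})$, which upgrades the bound to $\cL_k(\theta^{(T_k)})-\cL_k^{*}\leq\epsilon$ and delivers the quantitative characterization of the minimum.

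I do not expect a genuine obstacle in this assembly step, since the real work (switching of leading influence between $B_{k,1}^{(t)}$ and $A_k^{(t)}$, and matching the $A_k^{(t)}$ growth against the $\log(K/\epsilon)$ threshold defined by $T_{4,k}^\epsilon$) is already handled within the individual phase proofs. The only small point to be careful about is verifying that the induction hypotheses chain correctly across phase boundaries---in particular, that the bounds on $B_{k,1}^{(t)}$ and $B_{k,n}^{(t)}$ established at the end of each phase serve as valid initial conditions for the hypothesis of the next phase, and that the sum of phase durations indeed simplifies to the claimed $O(\log(K)K^2/\eta+K\log(K\epsilon^{-1/2})/(\epsilon\eta))$ expression.
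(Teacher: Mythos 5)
Your proposal is correct and follows essentially the same route as the paper's own proof: set $T_k=T_{4,k}^\epsilon+1$, invoke \Cref{end4} for $\tilde{\cL}_k(\theta^{(T_k)})<\epsilon/2$ and $(1-\Attn_k^{(T_k)})^2\leq O(\epsilon)$ on $\{\xq=v_k\}\cap\{\pit\in\esi\}$, then chain \Cref{app:lem:opti1} and \Cref{app:lem:opti2} to convert the truncated bound into $\cL_k(\theta^{(T_k)})-\cL_k^*\leq\epsilon$, with \Cref{app:lem:prob-im} supplying the $1-e^{-\Omega(\poly(K))}$ probability for the attention concentration. Your explicit tally of the four phase durations and the note about verifying that hypothesis bounds chain across phase boundaries are consistent with what the phase-ending lemmas already establish.
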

\begin{proof}
  The first statement is obtained by letting $T_k=T^{\epsilon}_{4,k}+1$, and  combining \Cref{end4}, \Cref{app:lem:opti1} and   \Cref{app:lem:opti2}, which lead to 
  \begin{align*}
     \cL_k(\theta^{(T_k)})-\cL_{k}^{*}\leq \cL_k(\theta^{(T_k)})-\Loi_k\leq \tilde{\cL}_{k}(\theta^{(T_k)})+3 \exp \left(-\frac{\ci^2 N}{25 K^2}\right)<\epsilon.
  \end{align*}
  The second statement directly follows from  \Cref{end4}.
\end{proof}
%Given stepsize $\eta>0$ and any error $\epsilon >0$, consider the training dynamics of the 1-layer transformer with SA defined in eq.(3) over the population loss $L$ in \cref{eq:obj} via gradient descend in eq. (3). If we take the initialization $Q^{(0)}$ and $v^{(0)}$ in \Cref{para: init}, then the population loss defined in \cref{eq:obj} converges to a global minimum. Specially, after at most $T^*=O(\frac{\log(K)K^2}{\eta}+\frac{K \log\left(K\epsilon^{-\frac{1}{2}}\right)}{\epsilon\eta})$ iterations,    
    % \begin{enumerate}[label={\arabic*.}]
    % \item With high probability, the 1-layer transformer almost attends to the features same as the query: $\Attn_k=\Omega(1)$ and $(1-\Attn_k)^2 \leq \epsilon$ if the query $x_{\text {query }}=v_k$.
    % \item The loss convergence to a global minimum: GD can find a point $Q^{(T^*)}$ achieving that $L(\theta^{(T^*)})-L^* \leq \epsilon$.
    % \end{enumerate}
\newpage
\section{Analysis for Imbalanced Case: Dominant Feature}\label{app:im:v1}

In this section, we delve into the analysis of prediction error when the query token features the dominant feature $v_1$. The training dynamics for the dominant feature $v_1$ are relatively straightforward, comprising only a single phase.

Note that at the beginning $t=0$, we already have the following lemma.
\begin{lemma}
    If $\xq=v_1$ and $\pit\in\esi$, at $t=0$, we have $\Attn_{1}^{(0)}=\Omega(1)$, $\Attn_{k}^{(0)}=O\left(\frac{1}{K}\right)$ for $k>1$. %Moreover, $\tilde{\cL}_1(\theta^{(t)})\leq O(1)$. 
\end{lemma}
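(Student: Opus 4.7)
The proof is essentially a direct computation at the initialization, so I would aim for a short and self-contained argument that just unpacks the definitions.

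The plan is to use the fact that at $t=0$ we have $Q^{(0)} = \mathbf{0}_{d\times d}$, so $A_k^{(0)} = B_{k,n}^{(0)} = 0$ for all feasible $k,n$. Plugging this into the definition of the attention scores in \Cref{attn}, every exponential in the softmax equals $e^0 = 1$, so each per-token attention satisfies $\attn_i^{(0)} = 1/N$ for $i \in [N]$, uniformly in the prompt. Then, by the definition of the feature-level attention score, $\Attn_k^{(0)} = \sum_{i \in \cV_k} \attn_i^{(0)} = |\cV_k|/N$ for every $k \in [K]$.

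Next I would invoke the high-probability event description from \Cref{app:lem:prob-im}. Conditioned on $\pit \in \esi$, we have $|\cV_1| \in [\Li_1 N, \Ui_1 N]$ with $\Li_1, \Ui_1 = \Theta(1)$, and $|\cV_k| \in [\Li_k N/K, \Ui_k N/K]$ with $\Li_k, \Ui_k = \Theta(1)$ for $k > 1$. Substituting, we immediately obtain
\begin{align*}
\Attn_1^{(0)} = \frac{|\cV_1|}{N} \in [\Li_1, \Ui_1] = \Omega(1),
\qquad
\Attn_k^{(0)} = \frac{|\cV_k|}{N} \in \left[\frac{\Li_k}{K}, \frac{\Ui_k}{K}\right] = O\!\left(\frac{1}{K}\right),
\end{align*}
for $k > 1$, which is exactly the claim. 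Note that the hypothesis $\xq = v_1$ does not actually enter the computation at $t=0$: since $Q^{(0)} = 0$, the softmax output is independent of $\xq$, and the query's identity only matters once attention weights start to differentiate in later phases.

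There is no genuine obstacle here; the only thing to be careful about is to verify that the constants $\Li_k, \Ui_k$ provided by \Cref{app:lem:prob-im} indeed lie in $(0, \infty)$ (which is guaranteed by the construction $\Li_k = p_k K - \ci > 0$ for $k > 1$ and $\Li_1 = p_1 - 0.01 \ci > 0$ for sufficiently large $K$, as was already checked in the proof of that lemma). This justifies the $\Omega(1)$ and $O(1/K)$ bounds being genuine asymptotic statements rather than vacuous.
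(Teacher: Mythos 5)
Your proof is correct and is essentially the argument the paper implicitly relies on (the paper states this lemma without a written proof, treating it as immediate from $Q^{(0)} = \mathbf{0}$, uniform attention $\attn_i^{(0)} = 1/N$, and the bounds on $|\cV_k|$ provided by \Cref{app:lem:prob-im}). Your additional remark that the hypothesis $\xq=v_1$ plays no role at $t=0$ is also accurate.
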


Thus, the learning process directly enters the convergence phase, which is defined as follows.  Given any  $0<\epsilon <1$, define 
  \begin{align*}
    T^{\epsilon}_{1,*}\triangleq \max\left\{t\geq 0: A_{1}^{(t)}-\max_{m\not=1} B_{1,m}^{(t)}\leq \log\left(\left(\frac{1}{\Li_1}-1\right)\left(\left(\frac{2}{\epsilon}\right)^{\frac{1}{2}}-1\right)\right) \right\}.
  \end{align*}
\begin{hypothesis}\label{hpi1}
    For $0 \leq t\leq T^{\epsilon}_{1,*}$, %\textcolor{blue}
    {suppose $\operatorname{polylog}(K)\gg \log(\frac{1}{\epsilon})$}. Then the following holds.
\begin{enumerate}[label={\alph*}.]
    \item $A_{1}^{(t)}$ is monotonically increasing and $A_{1}^{(t)}\in [0, O(\log(1/\epsilon))]$;
    \item $B_{k,n}^{(t)}$ is monotonically decreasing and $-O(\frac{A_{1}^{(t)}}{K}) \leq B_{1,n}^{(t)}\leq 0$ for any $n\not=1$.
    % \item At time $T_1$, 
    % $A_k^{(T_1)}-\max _{n \neq k} B_{k,n}^{(T_1)} \geqslant \Lambda_k$ for any $k\in[K]$.
\end{enumerate}
\end{hypothesis}
\subsection{Technical Lemmas}
We first introduce several useful technical lemmas.
\begin{lemma}\label{lemit1}
    Suppose \Cref{hpi1} holds at iteration $0<t\leq T_{1,*}^{\epsilon}$. %then for $k\in[K]$, 
    If $\xq=v_1$ and $\esi\in\pit$, then the following holds
    \begin{enumerate}
    \item  $\Attn^{(t)}_1=\Omega(1)$;
        \item $(1-\Attn^{(t)}_{1})^2\geq \Omega(\epsilon)=\Omega(
            \exp\left(-\operatorname{polylog}(K)\right))$.
    \end{enumerate}
\end{lemma}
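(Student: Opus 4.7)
The plan is to write $\Attn_1^{(t)}$ in the softmax form
\[
\Attn_1^{(t)} = \bigl(S^{(t)} + 1\bigr)^{-1}, \qquad S^{(t)} := \sum_{m\neq 1} \frac{|\cV_m|}{|\cV_1|} \exp\!\bigl(B_{1,m}^{(t)} - A_1^{(t)}\bigr),
\]
so that both claims reduce to upper- and lower-bounding $S^{(t)}$. Since $\pit\in\esi$ pins $|\cV_1|\in[\Li_1 N,\Ui_1 N]$, the mass ratio $(N-|\cV_1|)/|\cV_1|$ is $\Theta(1)$, which I will use throughout.

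For the first claim $\Attn_1^{(t)}=\Omega(1)$, I would use the easy side of Hypothesis~\ref{hpi1}: $B_{1,m}^{(t)}\le 0$ and $A_1^{(t)}\ge 0$ make every summand of $S^{(t)}$ at most $1$, giving $S^{(t)}\le(1-\Li_1)/\Li_1$ and hence $\Attn_1^{(t)}\ge\Li_1=\Omega(1)$. For the second claim the definition of $T_{1,*}^{\epsilon}$ becomes essential: it furnishes the lower bound $\max_{m\neq 1}B_{1,m}^{(t)}\ge A_1^{(t)}-C$ with $C:=\log\!\bigl((\tfrac{1}{\Li_1}-1)((\tfrac{2}{\epsilon})^{1/2}-1)\bigr)=\Theta(\log(1/\epsilon))$. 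To upgrade this single-$m$ lower bound into a lower bound on the full sum, I plan to exploit the hypothesis clause $B_{1,m}^{(t)}\ge -O(A_1^{(t)}/K)$: together with $B_{1,m}^{(t)}\le 0$ it shows that the spread of $\{B_{1,m}^{(t)}\}_{m\neq 1}$ is $O(A_1^{(t)}/K)=O(\log(1/\epsilon)/K)=o(1)$, so $\min_{m\neq 1}B_{1,m}^{(t)}\ge A_1^{(t)}-C-o(1)$. Termwise this yields $\exp(B_{1,m}^{(t)}-A_1^{(t)})\ge e^{-C-o(1)}=\Theta(\epsilon^{1/2})$, and summing against the $\Theta(1)$ mass ratio gives $S^{(t)}=\Omega(\epsilon^{1/2})$. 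Hence $1-\Attn_1^{(t)}=S^{(t)}/(S^{(t)}+1)=\Omega(\epsilon^{1/2})$, and squaring delivers $(1-\Attn_1^{(t)})^2=\Omega(\epsilon)$, as required.

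The main obstacle I foresee is bookkeeping the $o(1)$ slack in the exponent rather than any single inequality. A naive argument keeping only the argmax term $m^{*}$ would drag in the prefactor $|\cV_{m^{*}}|/|\cV_1|=\Theta(1/K)$ and collapse the bound to $\Omega(\epsilon^{1/2}/K)$, which would be too weak to support subsequent phase-IV-style gradient estimates. So the proof hinges both on the uniformity of the $B_{1,m}^{(t)}$'s granted by Hypothesis~\ref{hpi1} and on the scale separation $\log(1/\epsilon)\ll\polylog(K)\ll K$, which ensures the multiplicative $\exp(-o(1))$ correction is absorbed into an overall $\Theta(1)$ prefactor.
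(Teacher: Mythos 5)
Your proposal is correct and follows essentially the same route as the paper: both write $1-\Attn_1^{(t)} = S^{(t)}/(S^{(t)}+1)$, use monotonicity of $B_{1,m}^{(t)}-A_1^{(t)}$ from zero to bound $S^{(t)}$ above for the first claim, and for the second claim combine the $T_{1,*}^\epsilon$ stopping condition with the hypothesis bound $|B_{1,m}^{(t)}|=O(A_1^{(t)}/K)$ to control the spread $\max_m B_{1,m}^{(t)}-\min_m B_{1,m}^{(t)}=O(\log(1/\epsilon)/K)=o(1)$ so that every term in the sum, not just the argmax, inherits the $e^{-C}=\Theta(\epsilon^{1/2})$ lower bound. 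Your remark that keeping only the argmax term would cost a factor of $1/K$ correctly identifies the same issue the paper addresses via its $\Delta B_1^{(t)}$ quantity.
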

\begin{proof}
    Since $\xq=v_1$, we have
    \begin{align*}
\Attn^{(t)}_{1}&=\frac{|\cV_1|\exp(A^{(t)}_k)}{\sum_{m\not=1}|\cV_m|\exp(B^{(t)}_{1,m})+|\cV_1|\exp(A^{(t)}_k)}\\
        &=\frac{1}{\sum_{m\not=k}\frac{|\cV_m|}{|\cV_k|}\exp(B^{(t)}_{k,m}-A^{(t)}_k)+1}.
    \end{align*}
    By \Cref{hpi1}, we have
    \begin{align*}
        \Attn^{(t)}_{1}&
                \geq \frac{1}{\sum_{m\not=k}\frac{|\cV_m|}{|\cV_k|}\exp(B^{(0)}_{k,m}-A^{(0)}_k)+1}\\
                &\geq \frac{1}{(\frac{N}{\Li_1N}-1)+1}\geq \Omega(1).
            \end{align*}
    
    %we obtain $$\exp(B^{(t)}_{k,m}-A^{(t)}_k)\leq e^{O(\frac{\log(K/\epsilon)}{K})-\log(K)}\leq e^{O(\frac{\log(K)+\operatorname{polylog}(K)}{K})-\log(K)}\leq O\left(\frac{1}{K}\right).$$ 
%     thus
%      \begin{align*}
% \Attn^{(t)}_{k} &\geq\frac{1}{e^{O(\frac{\log(K)}{K})}(\frac{N}{|\cV_k|}-1)+1}\geq \frac{1}{e^{O(\frac{\log(K)}{K})}(K/L_k-1)+1}=\Omega\left(\frac{1}{K}\right)
%     \end{align*}
%     On the other hand, 
% Therefore,
%     \begin{align*}
% \Attn^{(t)}_{k} &\geq\frac{1}{O\left(\frac{1}{K}\right)(\frac{N}{|\cV_k|}-1)+1}\geq \frac{1}{O(\frac{1}{L_k}-\frac{1}{K})+1}\geq \Omega(1).
    % \end{align*}
   % where $0<C_k<1$ is some constant. Thus $1-\Attn_k=\Omega(1)$.
   On the other hand, by the definition of $T_{1,*}^{\epsilon}$, we have
   \begin{align*}
   1-\Attn^{(t)}_{1}
            &=\frac{\sum_{m\not=1}\frac{|\cV_m|}{|\cV_1|}\exp(B^{(t)}_{1,m}-A^{(t)}_k)}{\sum_{m\not=1}\frac{|\cV_m|}{|\cV_1|}\exp(B^{(t)}_{1,m}-A^{(t)}_1)+1}\\
            &\overset{(a)}{\geq} \frac{ \exp( \min_{m\not= 1} B^{(t)}_{1,m}-A^{(t)}_1) (\frac{N}{|\cV_1|}-1)}{\exp( \min_{m\not= 1} B^{(t)}_{1,m}-A^{(t)}_1) (\frac{N}{|\cV_1|}-1)+1}\\
            &\geq \frac{ \exp( \min_{m\not= 1} B^{(t)}_{1,m}-A^{(t)}_1) (\frac{1}{\Ui_{1}}-1)}{\exp( \min_{m\not= 1} B^{(t)}_{1,m}-A^{(t)}_1) (\frac{1}{\Ui_{1}}-1)+1}\\
            &= \frac{ \exp( \max_{m\not= 1} B^{(t)}_{1,m}-A^{(t)}_1-\Delta B_{1}^{(t)}) (\frac{1}{\Ui_{1}}-1)}{\exp( \max_{m\not= 1} B^{(t)}_{1,m}-A^{(t)}_1-\Delta B_{1}^{(t)}) (\frac{1}{\Ui_{1}}-1)+1}\\
            &\geq \frac{(\frac{1}{p_1L_1}-1)^{-1}((\frac{2}{\epsilon})^{\frac{1}{2}}-1)^{-1}\cdot e^{-O(\frac{\operatorname{polylog}(K)}{K})}(\frac{1}{\Ui_{1}}-1)}{(\frac{1}{\Li_1}-1)^{-1}((\frac{2}{\epsilon})^{\frac{1}{2}}-1)^{-1}(\frac{1}{\Ui_{1}}-1)e^{-O(\frac{\operatorname{polylog}(K)}{K})}+1}\\
            &\geq \Omega(\epsilon^{\frac{1}{2}}).
        \end{align*}
        where $\Delta B_{k}^{(t)}=\max_{m\not= k}B^{(t)}_{k,m}-\min_{m\not= k}B^{(t)}_{k,m}=O(\frac{ A_{k}^{(t)}}{K})$, $(a)$ follows from the fact that $\frac{x}{1+x}$ increases w.r.t. $x>0$.
\end{proof}
\begin{lemma}\label{lemit2}
     Suppose \Cref{hpi1} holds at iteration $ 0\leq  t\leq T^{\epsilon}_{1,*}$.  If $x_{\tau,query}=v_1$ and $P\in\cE^*$,  for $n\not=1$, the following holds
  $$\Attn^{(t)}_n=\Theta\left(\frac{(1-\Attn^{(t)}_{1})}{K}\right).$$
        % \item Combining with the \Cref{p1a}, we immediately have $\Attn_n=\Theta\left(\frac{1}{K}\right) $;
\end{lemma}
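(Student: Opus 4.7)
The plan is to follow the same template used for the analogous attention-ratio lemmas in earlier phases (e.g.\ Lemma~\ref{lem2t2}, Lemma~\ref{lemk4t2}), specializing to the case where the query token is the dominant feature $v_1$. Concretely, I would start by writing out the softmax expression
\[
\Attn^{(t)}_n \;=\; \frac{|\cV_n|\exp(B^{(t)}_{1,n})}{\sum_{m\neq 1}|\cV_m|\exp(B^{(t)}_{1,m})+|\cV_1|\exp(A^{(t)}_1)},
\]
and the complementary identity
\[
1-\Attn^{(t)}_1 \;=\; \frac{\sum_{m\neq 1}|\cV_m|\exp(B^{(t)}_{1,m})}{\sum_{m\neq 1}|\cV_m|\exp(B^{(t)}_{1,m})+|\cV_1|\exp(A^{(t)}_1)}.
\]
Dividing cancels the common denominator and leaves the clean ratio
\[
\frac{\Attn^{(t)}_n}{1-\Attn^{(t)}_1} \;=\; \frac{1}{\sum_{m\neq 1}\tfrac{|\cV_m|}{|\cV_n|}\exp\bigl(B^{(t)}_{1,m}-B^{(t)}_{1,n}\bigr)}.
\]

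The remaining task is to show that this denominator is $\Theta(K)$. For the exponential factor, I would invoke part (b) of \Cref{hpi1}: both $B^{(t)}_{1,m}$ and $B^{(t)}_{1,n}$ lie in $[-O(A^{(t)}_1/K),0]$, so their pairwise difference is at most $O(A^{(t)}_1/K)=O(\log(1/\epsilon)/K)$. Since we are in the regime $\polylog(K)\gg\log(1/\epsilon)$, this bound is $o(1)$, so $\exp(B^{(t)}_{1,m}-B^{(t)}_{1,n})=\Theta(1)$ uniformly in $m,n\neq 1$. For the count ratio $|\cV_m|/|\cV_n|$, I would use $\pit\in\esi$ from \Cref{app:lem:prob-im}: for all $m,n\neq 1$ we have $|\cV_m|,|\cV_n|=\Theta(N/K)$, hence $|\cV_m|/|\cV_n|=\Theta(1)$. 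Summing over the $K-1$ indices $m\neq 1$ then gives $\Theta(K)$ for the denominator, yielding the desired $\Attn^{(t)}_n/(1-\Attn^{(t)}_1)=\Theta(1/K)$.

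The steps are entirely mechanical once the inductive hypothesis and the high-probability event on $\pit$ are in hand; there is no real obstacle. The only subtlety worth flagging is that the induction hypothesis only constrains $B^{(t)}_{1,n}$ relative to zero (not a tight two-sided bound on $B^{(t)}_{1,m}-B^{(t)}_{1,n}$ directly), so I would be careful to apply the triangle inequality $|B^{(t)}_{1,m}-B^{(t)}_{1,n}|\le |B^{(t)}_{1,m}|+|B^{(t)}_{1,n}|=O(A^{(t)}_1/K)$, and to explicitly use the assumption $\polylog(K)\gg\log(1/\epsilon)$ to conclude that $A^{(t)}_1/K=o(1)$ so that the exponential factor lies in a constant-ratio window. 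Once that is in place, the $\Theta$-bound follows immediately.
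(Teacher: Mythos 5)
Your proposal is correct and follows essentially the same route as the paper's proof: form the ratio $\Attn^{(t)}_n/(1-\Attn^{(t)}_1)$, bound each exponential factor $\exp(B^{(t)}_{1,m}-B^{(t)}_{1,n})$ as $\Theta(1)$ via \Cref{hpi1} together with $\polylog(K)\gg\log(1/\epsilon)$, and use $\pit\in\esi$ to get $|\cV_m|/|\cV_n|=\Theta(1)$ so the $K-1$-term sum is $\Theta(K)$. Your explicit note about the triangle inequality for $|B^{(t)}_{1,m}-B^{(t)}_{1,n}|$ is a level of care the paper leaves implicit, but it is the same underlying step.
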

\begin{proof}

We first have
    \begin{align*}
\Attn^{(t)}_{n}&=\frac{|\cV_n|\exp(B^{(t)}_{1,n})}{\sum_{m\not=k}|\cV_m|\exp(B^{(t)}_{1,m})+|\cV_1|\exp(A^{(t)}_1)}.
 %       &=\frac{1}{\sum_{m\not=k}\frac{|\cV_m|}{|\cV_n|}\exp(B^{(t)}_{k,m}-B^{(t)}_{k,n})+\frac{|\cV_k|}{|\cV_n|}\exp(A^{(t)}_k-B^{(t)}_{k,n})}
    \end{align*}
    By \Cref{hpi1}, we have 
    $$e^{-O(\frac{p\log(\frac{1}{\epsilon})}{K})}\leq \exp(B^{(t)}_{k,m}-B^{(t)}_{k,n})\leq e^{O(\frac{p\log(\frac{1}{\epsilon})}{K})}.$$ Combining with the fact that $-\log(\epsilon)\ll \operatorname{polylog}(K)$,  %and  $e^{-O(\frac{\log(K)}{K})}\leq \exp(A^{(t)}_k-B^{(t)}_{k,n})\leq e^{\left(O(\log(K))+O(\frac{\log(K)}{K})\right)}$ 
    we obtain
%     \begin{align*}
% \Attn^{(t)}_{k} &\leq\frac{1}{e^{-O(\frac{\log(K)}{K})}(\frac{N}{|\cV_n|}-\frac{|\cV_k|}{|\cV_n|})+e^{-O(\frac{\log(K)}{K})}\frac{|\cV_k|}{|\cV_n|}}\leq \frac{U_ne^{O(\frac{\log(K)}{K})}}{ K}=O\left(\frac{1}{K}\right).
%     \end{align*}
    %For the second claim, 
    \begin{align*}
       \frac{\Attn^{(t)}_n}{1-\Attn^{(t)}_{k}} =\frac{|\cV_n|\exp(B^{(t)}_{k,n})}{\sum_{m\not=k}|\cV_m|\exp(B^{(t)}_{k,m})}= \frac{1}{\sum_{m\not=k}\frac{|\cV_m|}{|\cV_n|}\exp(B^{(t)}_{k,m}-B^{(t)}_{k,n})}=\Theta\left(\frac{1}{K}\right).
    \end{align*}
\end{proof}
\subsection{Controlling  Gradient Updates}
\begin{lemma}\label{pi2a}
    At each iteration $0 \leq t \leq T_{1,*}^{\epsilon}$, if \Cref{hpi1} holds then $\alpha_1^{(t)}\geq0$ and satisfies 
    \begin{align*}
        \alpha_{k}^{(t)}\geq \Omega({\epsilon}).
    \end{align*}
\end{lemma}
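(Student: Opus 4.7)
The plan is to mirror the structure used in the analogous lemmas for the target-feature growth in earlier phases (e.g.\ \Cref{p2a}, \Cref{pk4a}), taking advantage of the fact that for the dominant feature the attention score $\Attn_1^{(t)}$ is already at $\Omega(1)$ throughout this single phase. Non-negativity of $\alpha_1^{(t)}$ is immediate from the gradient formula in \Cref{app:lem:gd}: the integrand $\mathbf{1}\{\xq=v_1\}\Attn_1^{(t)}(\sum_{m\neq 1}{\Attn_m^{(t)}}^2+(1-\Attn_1^{(t)})^2)$ is a nonnegative random variable, so its expectation is $\geq 0$.

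For the quantitative lower bound, I would split the expectation along the high-probability event $\{\pit\in\esi\}$ and its complement, discard the complement (which only helps since the integrand is nonnegative), and then restrict attention to the event $\{\xq=v_1\}\cap\{\pit\in\esi\}$. On this event, \Cref{lemit1} gives $\Attn_1^{(t)}=\Omega(1)$ and $(1-\Attn_1^{(t)})^2=\Omega(\epsilon)$, while $p_1=\Theta(1)$ and $\mathbb{P}(\pit\in\esi)\geq 1-3\exp(-\ci^2 N/(25K^2))=1-o(1)$ by \Cref{app:lem:prob-im}. Dropping the $\sum_{m\neq 1}{\Attn_m^{(t)}}^2$ contribution (again nonnegative) and keeping only the $(1-\Attn_1^{(t)})^2$ piece gives
\begin{align*}
\alpha_1^{(t)}
&\geq p_1\cdot\mathbb{P}(\pit\in\esi)\cdot \mathbb{E}\!\left[\Attn_1^{(t)}(1-\Attn_1^{(t)})^2\,\big|\,\xq=v_1,\pit\in\esi\right]\\
&\geq \Theta(1)\cdot\Omega(1)\cdot\Omega(\epsilon)=\Omega(\epsilon),
\end{align*}
which is the claimed bound.

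There is no real technical obstacle here; the entire content of the lemma is contained in the two inputs $\Attn_1^{(t)}=\Omega(1)$ and $1-\Attn_1^{(t)}=\Omega(\epsilon^{1/2})$ supplied by \Cref{lemit1}, together with the fact that $p_1$ and $\mathbb{P}(\pit\in\esi)$ are both of constant order. The only small care-point is to make sure the prefactor $\mathbb{P}(\pit\in\esi)$, which is $1-3\exp(-\ci^2 N/(25K^2))$, is absorbed as $\Theta(1)$ under the standing assumption $N\gg K^3$, and that we keep the $(1-\Attn_1^{(t)})^2$ term (rather than the cross-term $\sum_m \Attn_m^{(t)2}$), since it is this term that carries the explicit $\epsilon$-dependence used in the bound.
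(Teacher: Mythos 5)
Your proposal is correct and matches the paper's own proof essentially step for step: both restrict to the high-probability event $\{\xq=v_1\}\cap\{\pit\in\esi\}$, drop the nonnegative cross-term, and invoke \Cref{lemit1} together with $p_1=\Theta(1)$ and $\mathbb{P}(\pit\in\esi)=\Theta(1)$ to conclude $\alpha_1^{(t)}\geq\Omega(\epsilon)$. No gaps.
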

\begin{proof}
    %The analysis is similar as \Cref{p1a}, but we need to be more careful about the lower bound of $1-\Attn^{(t)}_{k}$.
    By the gradient expression, we have
    \begin{align*}
         \alpha_{1}^{(t)}&=
        \mathbb{E}\left[\mathbf{1}\{\xq=v_1\}\Attn^{(t)}_{1 }\cdot \left(
         \sum_{m\not= 1}{\Attn_{m}^{(t)}}^2+(1-\Attn^{(t)}_1)^2\right)\right]\\
         &\geq p_1\cdot\mathbb{P}(\pit\in\esi)\mathbb{E}\left[\Attn^{(t)}_{k }\cdot \left(
         \sum_{m\not= k}{\Attn_{m}^{(t)}}^2+(1-\Attn^{(t)}_k)^2\right)\bigg|\{\xq=v_k\} \cap \esi\right]\\
        &\geq p_1\cdot\mathbb{P}(\pit\in\esi)\mathbb{E}\left[\Attn^{(t)}_{k }\cdot (1-\Attn^{(t)}_k)^2\bigg|\{\xq=v_k\} \cap \cE^*\right]\\
        &\geq \Omega({\epsilon})
        \end{align*}
%     \begin{align*}
% &  \alpha_{k}^{(t)}=
% \mathbb{E}\left[\mathbf{1}\{x_{ \tau,\text { query }}=v_k\}\Attn^{(t)}_{k }\cdot \left(
%  \sum_{m\not= k}{\Attn^{(t)}_{m}}^2+(1-\Attn^{(t)}_k)^2\right)\right]\\
%  &\geq \Omega(p_k)\mathbb{E}\left[\Attn^{(t)}_{k }\cdot \left(
%  \sum_{m\not= k}{\Attn^{(t)}_{m}}^2+(1-\Attn^{(t)}_k)^2\right)\mid\{x_{ \tau,\text { query }}=v_k\} \cap \cE^*\right]\\
% &\geq  \Omega(p_k)\cdot \Omega(\epsilon)\cdot\mathbb{P}\left[\{x_{ \tau,\text { query }}=v_k\} \cap \cE^*\right]\\
% &\geq \Omega(\frac{\epsilon}{K})
% \end{align*}
where the last inequality follows from \Cref{lemit1} and our choice of $p_1$.
\end{proof}
\begin{lemma}\label{pi2b}
    At each iteration $ 0 \leq t\leq T_{1,*}^{\epsilon}$, if \Cref{hpi1} holds, then for any $n\not=1$, $\beta_{1,n}^{(t)}$ satisfies 
    \begin{align*}
       - O\left(\frac{\alpha^{(t)}_1}{K}\right)\leq  \beta_{1,n}^{(t)}\leq 0.
    \end{align*}
\end{lemma}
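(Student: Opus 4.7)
\noindent\textbf{Proof Plan for Lemma \ref{pi2b}.}
The plan is to mirror the two-sided argument used earlier in Lemmas \ref{p2b}, \ref{p3b}, and \ref{pk4c}, adapted to the dominant-feature setting. Starting from the gradient expression in Lemma \ref{app:lem:gd},
\begin{align*}
\beta_{1,n}^{(t)}=\mathbb{E}\left[\mathbf{1}\{\xq=v_1\}\Attn^{(t)}_{n}\!\left(\sum_{m\not= 1}{\Attn^{(t)}_{m}}^2-\Attn^{(t)}_{n} -\Attn^{(t)}_{1}(1-\Attn^{(t)}_1)\right)\right],
\end{align*}
I would split the expectation according to whether $\pit\in\esi$ or not, and use Lemmas \ref{lemit1} and \ref{lemit2} on the good event to control each attention score: $\Attn_1^{(t)}=\Omega(1)$, $(1-\Attn_1^{(t)})^2\geq \Omega(\epsilon)$, and $\max_{m\neq 1}\Attn_m^{(t)}=\Theta((1-\Attn_1^{(t)})/K)=O(1/K)$.

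For the upper bound $\beta_{1,n}^{(t)}\leq 0$, I would use the standard bound $\sum_{m\neq 1}{\Attn_m^{(t)}}^2\leq \max_{m\neq 1}\Attn_m^{(t)}\cdot \sum_{m\neq 1}\Attn_m^{(t)}$ to rewrite the bracketed factor on the good event as
\begin{align*}
\sum_{m\not= 1}{\Attn^{(t)}_{m}}^2-\Attn^{(t)}_{n} -\Attn^{(t)}_{1}(1-\Attn^{(t)}_1)\leq -(1-\Attn^{(t)}_1)(\Attn^{(t)}_1-\max_{m\neq 1}\Attn^{(t)}_m)\leq -\Omega(1-\Attn^{(t)}_1),
\end{align*}
so its contribution is at most $-\Omega(p_1\,\epsilon/K)$ after multiplication by $\Attn_n^{(t)}=\Theta((1-\Attn_1^{(t)})/K)$ and conditioning on $\esi$. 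The bad-event contribution is at most $p_1\cdot 3\exp(-\ci^2 N/(25K^2))$ by Lemma \ref{app:lem:prob-im}, and since $\epsilon/K\gg \exp(-\poly(K))/K\gg \exp(-\ci^2N/(25K^2))$ under the hypothesis $\polylog(K)\gg\log(1/\epsilon)$ and $N\geq \poly(K)$, the good-event (negative) term dominates, yielding $\beta_{1,n}^{(t)}\leq 0$.

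For the lower bound $-\beta_{1,n}^{(t)}\leq O(\alpha_1^{(t)}/K)$, I would drop the negative $-\Attn_1^{(t)}(1-\Attn_1^{(t)})$ term and use the simpler upper bound
\begin{align*}
-\beta_{1,n}^{(t)}\leq \mathbb{E}\left[\mathbf{1}\{\xq=v_1\}\Attn^{(t)}_{n}\left(\Attn^{(t)}_{n}+\Attn^{(t)}_1(1-\Attn^{(t)}_1)\right)\right].
\end{align*}
Substituting $\Attn_n^{(t)}=\Theta((1-\Attn_1^{(t)})/K)$ on the good event makes the integrand at most $O(\Attn_1^{(t)}(1-\Attn_1^{(t)})^2/K)$, whose expectation is $O(\alpha_1^{(t)}/K)$ by comparison with the formula for $\alpha_1^{(t)}$ in Lemma \ref{pi2a}; the bad-event piece is again negligible since $\alpha_1^{(t)}\geq \Omega(\epsilon)\gg \exp(-\ci^2N/(25K^2))$.

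The main obstacle is really just bookkeeping: one must ensure that the exponentially small bad-event contribution $\exp(-\ci^2N/(25K^2))$ is dominated by both the good-event negative term (of order $\epsilon/K$) and by $\alpha_1^{(t)}/K$, which is guaranteed precisely by the assumption $\polylog(K)\gg\log(1/\epsilon)$ together with $N\geq\poly(K)$. No new technique is required beyond what was already set up for the under-represented case.
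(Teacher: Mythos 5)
Your proposal is correct and follows essentially the same route as the paper's proof: decomposing into the good event $\{\xq=v_1\}\cap\esi$ and its complement, using \Cref{lemit1,lemit2} to bound the attention scores, deriving the inequality $\sum_{m\neq 1}{\Attn_m^{(t)}}^2-\Attn_n^{(t)}-\Attn_1^{(t)}(1-\Attn_1^{(t)})\leq-(1-\Attn_1^{(t)})(\Attn_1^{(t)}-\max_{m\neq 1}\Attn_m^{(t)})\leq-\Omega(1-\Attn_1^{(t)})$ for the sign of $\beta_{1,n}^{(t)}$, dropping the negative $-\sum_{m\neq 1}{\Attn_m^{(t)}}^2$ term and comparing against $\alpha_1^{(t)}$ for the magnitude, and absorbing the $\exp(-\ci^2 N/(25K^2))$ bad-event contribution using $\epsilon=\Omega(\exp(-\polylog(K)))$ and $N\geq\poly(K)$. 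No meaningful deviation from the paper.
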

\begin{proof}
        Note that  conditioned on the event $\{\xq=v_1\} \cap \esi$,  by \Cref{lemit1,lemit2}, we have $\Attn_1^{(t)}=\Omega(1)$, and $\max_{m\not=1}\Attn^{(t)}_{m}=O\left(\frac{1}{K}\right)$. Thus, we further obtain
    % \begin{align*}
    %     \frac{\Attn_k}{\Attn_n}=\exp(A_{k}^{(T_{1,k})}-B^{(T_{1,k})}_{n,k})\frac{|\cV_k|}{|\cV_n|}\geq \exp(A_{k}^{(T_{1,k})}-B^{(T_{1,k})}_{n,k}) \cdot\frac{L_k}{U_n}\geq 1
    % \end{align*}
\begin{align} \sum_{m\not= 1}&{\Attn^{(t)}_{m}}^2-\Attn^{(t)}_{n} -\Attn^{(t)}_{1}(1-\Attn^{(t)}_1)\nonumber\\
&\leq  \max_{m\not=1}\Attn^{(t)}_{m}\sum_{m\not= 1}\Attn^{(t)}_{m}-\Attn^{(t)}_{1}(1-\Attn^{(t)}_1)\nonumber\\
&=-(1-\Attn^{(t)}_1)(\Attn^{(t)}_{1}-\max_{m\not=1}\Attn^{(t)}_{m})\nonumber\\
&\leq -\Omega (1-\Attn^{(t)}_1). \label{i1:eq3}
\end{align}
Therefore,
\begin{align*}
    \beta_{1,n}^{(t)}&\leq \mathbb{E}\left[\mathbf{1}\{\xq=v_1\cap \esi\}\Attn^{(t)}_{n}\cdot \left(
      \sum_{m\not= 1}{\Attn^{(t)}_{m}}^2-\Attn^{(t)}_{n} -\Attn^{(t)}_{1}(1-\Attn^{(t)}_1)\right)\right]\\
      &\quad +\mathbb{E}\left[\mathbf{1}\{\xq=v_1\cap {\esi}^{c}\}\Attn^{(t)}_{n}\cdot \left(
      \sum_{m\not= 1}{\Attn^{(t)}_{m}}^2\right)\right]\\
      &\stackrel{(a)}{\leq}p_1\cdot\mathbb{P}(\pit\in\esi)\cdot\mathbb{E}\left[-\Omega\left(\frac{(1-\Attn^{(t)}_1)^2}{K}\right)\bigg|\{\xq=v_1\} \cap \esi\right]+p_1\cdot\mathbb{P}({\esi}^{c})\\
&\stackrel{(b)}{\leq} p_1\cdot \left(-\Omega\left(\frac{\epsilon}{K}\right)\right)+3 p_1\exp \left(-\frac{\ci^2 N}{25 K^2}\right)\\
&{\leq} 0
  \end{align*}
  where $(a)$ follows from \cref{i1:eq3} and \Cref{lemit2}, $(b)$ follows from \Cref{lemit1}, and the last inequality holds since%and our choice of $p_k$.
\begin{align*}
\frac{\epsilon}{K}\gg \frac{\exp(-\operatorname{polylog}(K))}{K}\gg \exp \left(-\frac{\ci^2 p_2 N}{25 K^2}\right).
\end{align*}
Moreover, %following the similar analysis from \Cref{p1b}, 
we have 
    \begin{align*}
     - \beta_{1,n}^{(t)}&\leq p_1\mathbb{E}\left[\Attn^{(t)}_{n}\cdot \left(
 \Attn^{(t)}_{n} +\Attn^{(t)}_{1}(1-\Attn^{(t)}_1)\right)\mid\{\xq=v_1\} \cap \esi\right]+2p_1\mathbb{P}({\esi}^{c})\\
 &\leq p_1\mathbb{E}\left[\Theta\left(\frac{1-\Attn^{(t)}_{1}}{K}\right)\cdot O\left(
\Attn^{(t)}_{k}(1-\Attn^{(t)}_1)\right)\bigg|\{\xq=v_1\} \cap \esi\right]+6 p_1\exp \left(-\frac{\ci^2 N}{25 K^2}\right)\\
 &=p_1\mathbb{E}\left[O\left(\frac{\Attn^{(t)}_{1}(1-\Attn^{(t)}_{1})^2}{K}\right)\bigg|\{\xq=v_1\} \cap \esi\right]+6 p_1\exp \left(-\frac{\ci^2 p^2N}{25 K^2}\right)\\
 &\leq O\left(\frac{\alpha_{1}^{(t)}}{K}\right).
    \end{align*}
%     Combining with the previous lemma that $\alpha_{k}^{(t)}\geq \Omega\left(\frac{1}{K^2}\right)$, then 
%     $$
% |\beta_{n,k}^{(t)}|\leq \max\{O(\frac{\alpha_{k}^{(t)}}{K}),O\left(\frac{1}{K^3}\right)\}=O(\frac{\alpha_{k}^{(t)}}{K}).
%     $$
\end{proof}
\subsection{End of the Phase}
\begin{lemma}\label{i1end}
    Given $0<\epsilon<\frac{1}{2}$, suppose $\operatorname{polylog}(K)\gg \log(\frac{1}{\epsilon})$. Then \Cref{hpi1} holds for all  $ 0\leq t\leq T^{\epsilon}_{1,*}=O(\frac{\log(\epsilon^{-\frac{1}{2}})}{\eta\epsilon})$, and at iteration $t=T^{\epsilon}_{1,*}+1$, we have
    \begin{enumerate}
        \item $\tilde{\cL}_{1}(\theta^{T^{\epsilon}_{1,*}+1})<\epsilon/2$;
        \item If $\xq=1$ and $\pit\in\esi$, we have $(1-\Attn_{1}^{(T^{\epsilon}_{1,*}+1)})^2\leq O(\epsilon)$.
    \end{enumerate}
\end{lemma}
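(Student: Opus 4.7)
The plan is to mirror the two-part structure of \Cref{end4}: first confirm that \Cref{hpi1} survives throughout $[0, T^{\epsilon}_{1,*}]$, then at iteration $T^{\epsilon}_{1,*}+1$ use the defining gap condition to upper-bound $1-\Attn_1$ and, via $\Attn_m = \Theta((1-\Attn_1)/K)$ for $m \neq 1$, translate this into the loss bound.

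For the existence of $T^{\epsilon}_{1,*}$, observe that the quantity $D^{(t)} := A_{1}^{(t)} - \max_{m \neq 1} B_{1,m}^{(t)}$ evolves by at least $\eta\,(\alpha_{1}^{(t)} - \max_{m \neq 1}\beta_{1,m}^{(t)})$. By \Cref{pi2a,pi2b}, this is $\geq \eta(1 - O(1/K))\alpha_{1}^{(t)} = \Omega(\eta\epsilon)$ whenever the induction holds. Since the threshold $\log\bigl((\tfrac{1}{\Li_1}-1)((\tfrac{2}{\epsilon})^{1/2}-1)\bigr) = O(\log(\epsilon^{-1/2}))$, we can cross it in at most $T^{\epsilon}_{1,*} = O(\log(\epsilon^{-1/2})/(\eta\epsilon))$ steps. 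The induction hypothesis is verified in the usual inductive manner: monotonicity of $A_1^{(t)}$ and monotonicity (from above) of $B_{1,n}^{(t)}$ come directly from the signs established in \Cref{pi2a,pi2b}; the upper bound $A_1^{(t)} \leq O(\log(1/\epsilon))$ follows from the definition of $T^{\epsilon}_{1,*}$ together with $|B_{1,n}^{(t)}|= O(A_1^{(t)}/K)$; and the bound $|B_{1,n}^{(t)}| \leq O(A_1^{(t)}/K)$ follows by telescoping \Cref{pi2b} and comparing against the analogous telescoping of $\alpha_1^{(s)}$.

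At $t = T^{\epsilon}_{1,*}+1$, by definition $D^{(t)} > \log\bigl((\tfrac{1}{\Li_1}-1)((\tfrac{2}{\epsilon})^{1/2}-1)\bigr)$. Conditioned on $\xq = v_1$ and $\pit \in \esi$, we expand
\begin{align*}
1 - \Attn_1^{(t)}
&= \frac{\sum_{m \neq 1} \tfrac{|\cV_m|}{|\cV_1|} e^{B_{1,m}^{(t)} - A_1^{(t)}}}{1 + \sum_{m \neq 1} \tfrac{|\cV_m|}{|\cV_1|} e^{B_{1,m}^{(t)} - A_1^{(t)}}}
\leq \frac{e^{-D^{(t)}}(N/|\cV_1| - 1)}{1 + e^{-D^{(t)}}(N/|\cV_1| - 1)}
\leq \frac{e^{-D^{(t)}}(1/\Li_1 - 1)}{1 + e^{-D^{(t)}}(1/\Li_1 - 1)},
\end{align*}
where the first inequality uses $B_{1,m}^{(t)} \leq \max_{m \neq 1}B_{1,m}^{(t)}$ and monotonicity of $x \mapsto x/(1+x)$, and the second uses $|\cV_1| \geq \Li_1 N$ on $\esi$. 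Substituting the bound on $D^{(t)}$ yields $1 - \Attn_1^{(t)} \leq (\epsilon/2)^{1/2}$, hence $(1 - \Attn_1^{(t)})^2 \leq O(\epsilon)$, proving the second claim.

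For the loss bound, I would apply \Cref{lemit2} which gives $\Attn_m^{(t)} = \Theta((1-\Attn_1^{(t)})/K)$ for $m \neq 1$, so $\sum_{m \neq 1}(\Attn_m^{(t)})^2 \leq O(1/K)\,(1 - \Attn_1^{(t)})^2$. Therefore
\begin{align*}
\tilde{\cL}_1(\theta^{(t)})
&= \frac{1}{2}\,\mathbb{E}\Bigl[\mathbf{1}\{\pit \in \esi\}\Bigl(\sum_{m \neq 1}(\Attn_m^{(t)})^2 + (1 - \Attn_1^{(t)})^2\Bigr)\,\Big|\,\xq = v_1\Bigr] \\
&\leq \tfrac{1}{2}(1 + O(1/K))\cdot (\epsilon/2) \leq \epsilon/2,
\end{align*}
finishing the proof. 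The only mildly delicate step is keeping the constants in the threshold tight enough that the final ratio evaluates exactly to $(\epsilon/2)^{1/2}$ and not something worse; this is ensured by the specific form $(\tfrac{1}{\Li_1}-1)((\tfrac{2}{\epsilon})^{1/2}-1)$ chosen in the definition of $T^{\epsilon}_{1,*}$, so no additional work is needed beyond bookkeeping.
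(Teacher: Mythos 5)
Your proposal is correct and follows essentially the same route as the paper: establish the existence of $T^{\epsilon}_{1,*}$ via the rate $\alpha_1^{(t)} \geq \Omega(\epsilon)$ from \Cref{pi2a}, verify \Cref{hpi1} inductively using the sign and magnitude bounds from \Cref{pi2a,pi2b}, then at iteration $T^{\epsilon}_{1,*}+1$ exploit the threshold $\log\bigl((\tfrac{1}{\Li_1}-1)((\tfrac{2}{\epsilon})^{1/2}-1)\bigr)$ to bound $1-\Attn_1^{(t)} \leq (\epsilon/2)^{1/2}$ and combine with \Cref{lemit2} to bound $\tilde{\cL}_1$. The only cosmetic difference is in how you write the lower bound on the increment of $A_1^{(t)} - \max_{m\neq 1}B_{1,m}^{(t)}$ (you track $\alpha_1 - \max_m\beta_{1,m}$, the paper simply notes $B_{1,m}^{(t)}\leq 0$ so the gap dominates $A_1^{(t)}$ itself), which leads to the same $O(\log(\epsilon^{-1/2})/(\eta\epsilon))$ bound.
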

\begin{proof}
    We first prove the existence of $T_{1,*}^{\epsilon}$. 
    Recall that
    \begin{align*}
        T^{\epsilon}_{1,*}= \max\left\{t\geq 0: A_{1}^{(t)}-\max_{m\not=1} B_{1,m}^{(t)}\leq \log\left(\left(\frac{1}{\Li_1}-1\right)\left(\left(\frac{2}{\epsilon}\right)^{\frac{1}{2}}-1\right)\right) \right\}.
      \end{align*}
    
      When $t\in[0,T_{1,*}^{\epsilon}]$,  we can simply lower bound the update of $A_{k}^{(t)}-\max_{m\not=k} B_{k,m}^{(t)}$ as
      \begin{align*}
        A_{k}^{(t+1)}-\max_{m\not=k} B_{k,m}^{(t+1)}
        \geq A_{k}^{(t+1)}\geq A_{k}^{(t)}+\Omega\left(\frac{\eta\epsilon}{K}\right).
        %\geq  A_{k}^{(t)}-\max_{m\not=k} B_{k,m}^{(t)}+\Omega(\frac{\eta\epsilon}{K})
      \end{align*}
      %where  the second inequality follows from \Cref{p2a}. 
      Therefore,  at most $T_{1,*}^{\epsilon}=O(\frac{\log\left((\frac{1}{\Li_1}-1)((\frac{2}{\epsilon})^{\frac{1}{2}}-1)\right)}{\eta\epsilon})=O(\frac{\log(\epsilon^{-\frac{1}{2}})}{\eta\epsilon})$ iterations are needed before $A_{k}^{(t)}-\max_{m\not=k} B_{k,m}^{(t)}$ exceeds $\log\left((\frac{1}{\Li_1}-1)((\frac{2}{\epsilon})^{\frac{1}{2}}-1)\right)$.

      It is easy to verify \Cref{hpi1} holds at $t=0$.  Now we suppose \Cref{hpi1} holds for all iterations $0\leq t-1$, and prove it holds at $t$.

      By \Cref{pi2a}, we have $\alpha_{1}^{(t-1)}\geq 0$. Thus $A_{1}^{(t)}\geq A_{1}^{(t-1)}\geq 0$. By \Cref{pi2b}, we have $-O\left(\frac{\alpha_{1}^{(t-1)}}{K}\right) \leq \beta_{1,n}^{(t-1)}\leq 0$. Thus,
      \begin{align*}
         -B_{1,n}^{(t)}&\leq -B_{1,n}^{(t-1)}+\eta O\left(\frac{\alpha_{1}^{(t-1)}}{K}\right)\\
         &\leq O\left(\frac{A_{1}^{(t-1)}}{K}\right)+\eta O\left(\frac{\alpha_{1}^{(t-1)}}{K}\right)\\
         &\leq O\left(\frac{A_{1}^{(t)}}{K}\right).
      \end{align*}

 Moreover, by the definition of $T^{\epsilon}_{1,*}$, for any $t\leq T^{\epsilon}_{1,*}$, we immediately have %using the second claim in \Cref{hp2} at time $(t-1)$ we can obtain

\begin{align*}
A_{1}^{(t)}\leq A_{1}^{(t)}-\max_{m\not=1} B_{1,m}^{(t)}\leq \log\left(\left(\frac{1}{\Li_1}-1\right)\left(\left(\frac{2}{\epsilon}\right)^{\frac{1}{2}}-1\right)\right).
\end{align*}

Therefore, $A_{1}^{(t)}\leq O(\log(\frac{1}{\epsilon}))$.

At iteration $t=T^{\epsilon}_{1,*}+1$, we have $A_{1}^{(t)}-\max_{m\not=1} B_{1,m}^{(t)}> \log\left((\frac{1}{\Li_1}-1)((\frac{2}{\epsilon})^{\frac{1}{2}}-1)\right)$. Thus, when $\{\xq=v_{1}\}\cap \{\pit\in\esi\}$, we obtain
\begin{align*}
    1-\Attn^{(t)}_{1}
             &=\frac{\sum_{m\not=1}\frac{|\cV_m|}{|\cV_1|}\exp(B^{(t)}_{1,m}-A^{(t)}_1)}{\sum_{m\not=1}\frac{|\cV_m|}{|\cV_1|}\exp(B^{(t)}_{1,m}-A^{(t)}_1)+1}\\
             &\leq \frac{ \exp( \max_{m\not= 1} B^{(t)}_{1,m}-A^{(t)}_1) (\frac{N}{|\cV_1|}-1)}{\exp( \max_{m\not= 1} B^{(t)}_{1,m}-A^{(t)}_1) (\frac{N}{|\cV_1|}-1)+1}\\
             &\leq \frac{ \exp( \max_{m\not= 1} B^{(t)}_{1,m}-A^{(t)}_1) (\frac{1}{\Li_{1}}-1)}{\exp( \max_{m\not= 1} B^{(t)}_{1,m}-A^{(t)}_1) (\frac{1}{\Li_{1}}-1)+1}\\
             &\leq  \frac{ \left((\frac{1}{\Li_1}-1)((\frac{2}{\epsilon})^{\frac{1}{2}}-1)\right)^{-1}(\frac{1}{\Li_{1}}-1)}{\left((\frac{1}{\Li_1}-1)((\frac{2}{\epsilon})^{\frac{1}{2}}-1)\right)^{-1}(\frac{1}{\Li_{1}}-1)+1}\\
             &=(\epsilon/2)^{\frac{1}{2}}.
         \end{align*}
Similarly, we have
\begin{align*}
  \tilde{\cL}_{1}(\theta^{(t)})
    &=\frac{1}{2}\mathbb{E}\left[\mathbf{1}\{\pit\in{\esi}\}\left(
        \sum_{m\not= k}{\Attn^{(t)}_{m}}^2+(1-\Attn^{(t)}_k)^2\right)\bigg| \xq=v_{k}\right]\\
        &\leq \frac{1}{2}\mathbb{P}\left( \pit\in\esi \right)\cdot \mathbb{E}\left[\left(O\left(\frac{1}{K}\right)+1\right)(1-\Attn^{(t)}_k)^2\bigg|\xq=v_{k}\cap \pit\in\esi \right]\\
        &\leq \frac{1}{2}\cdot \left(1+O\left(\frac{1}{K}\right)\right)\cdot \frac{\epsilon}{2} \\
        &\leq \epsilon/2.
\end{align*} 
\end{proof}

\subsection{Proof of Theorem~\ref{thm: unblc} for Dominant Feature}
\begin{theorem}[Restatement of \Cref{thm: unblc} for Dominant Feature]
Suppose $p_1=\Theta(1)$ and $p_k=\Theta\left(\frac{1}{K}\right)$ for $2\leq k \leq K$. For any $0<\epsilon<1$, suppose $N\geq \poly(K)$, and $\polylog(K)\gg \log(\frac{1}{\epsilon})$. We apply GD to train the loss function given in \cref{eq:obj}. Then the following results hold.
%the 1-layer transformer with softmax attention trained via GD yields the following results: 
\begin{enumerate}[label={\arabic*.}]
    \item The prediction error for {\bf dominant} feature converges: for $v_1$, with at most $T_1=O(\frac{\log(\epsilon^{-\frac{1}{2}})}{\eta\epsilon})$ GD iterations, %we shall have \cref{eq-obj-k} converges: 
    $\cL_1(\theta^{(T_1)})\leq \cL_{1}^{*}+\epsilon$, where $\cL^{*}_{1}=\Theta(e^{-\poly(K)})$ is the global minimum of \cref{eq-obj-k}; %Moreover,
    % \item The loss for \textbf{underrepresented} feature converges: for $v_k$ with $2\leq k \leq K$, 
    % with at most 
    % $T_k=O(\frac{\log(K)K^2}{\eta}+\frac{K \log\left(K\epsilon^{-\frac{1}{2}}\right)}{\epsilon\eta})$ gradient descent iterations, %\cref{eq-obj-k} also converges:  
    % $\cL_k(\theta^{(T_k)})\leq \cL_{k}^{*}+\epsilon$, where $\cL^{*}_{k}=\Theta(e^{-\poly(K)})$ is the global minimum of \cref{eq-obj-k};
    \item Attention score concentrates: 
 $k=1$, if the query token is $v_k$, then after $T_{k}$ 
 iterations, with probability at least  $1-e^{-\Omega(\poly(K))}$, the one-layer transformer nearly ``pays all attention" to input tokens featuring $v_k$:  $(1-\Attn^{(T_{k})}_k)^2 \leq O(\epsilon)$.
\end{enumerate}
    \end{theorem}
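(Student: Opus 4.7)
The plan is to show that this theorem for the dominant feature follows almost mechanically from the machinery already assembled, chiefly \Cref{i1end} together with the loss-comparison lemmas \Cref{app:lem:opti1,app:lem:opti2}. In particular, I will take $T_1 := T^{\epsilon}_{1,*}+1$, so that the bound $T_1 = O(\log(\epsilon^{-1/2})/(\eta\epsilon))$ is inherited directly from the definition of $T^{\epsilon}_{1,*}$ and the existence argument inside \Cref{i1end}.

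For the convergence of the prediction error, I would first invoke \Cref{i1end} to obtain $\tilde{\cL}_{1}(\theta^{(T_1)}) < \epsilon/2$. Then \Cref{app:lem:opti2} gives the sandwich $\cL_{1}(\theta^{(T_1)}) - \Loi_{1} \leq \tilde{\cL}_{1}(\theta^{(T_1)}) + 3\exp(-\ci^2 N/(25K^2))$, and because $N \geq \poly(K)$ the error term $3\exp(-\ci^2 N/(25K^2))$ is far smaller than $\epsilon/2$. Finally, \Cref{app:lem:opti1} says $\Loi_{1}\leq \cL_{1}^{*}$ and identifies both quantities as $\Theta(e^{-\poly(K)})$; chaining these inequalities yields $\cL_{1}(\theta^{(T_1)}) - \cL_{1}^{*} \leq \cL_{1}(\theta^{(T_1)}) - \Loi_{1} \leq \epsilon$, and the statement $\cL_{1}^{*} = \Theta(e^{-\poly(K)})$ is restated verbatim.

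For the attention-score concentration claim, I will use the second conclusion of \Cref{i1end}: conditioned on $\{\xq = v_1\} \cap \{\pit \in \esi\}$, we already have $(1-\Attn_{1}^{(T_1)})^2 \leq O(\epsilon)$. The only remaining piece is probabilistic: \Cref{app:lem:prob-im} ensures $\mathbb{P}(\pit \in \esi) \geq 1 - 3\exp(-\ci^2 N/(25K^2))$, and since $N \geq \poly(K)$ this lower bound is of the form $1 - e^{-\Omega(\poly(K))}$, matching the probability asserted in the theorem.

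There is essentially no hard step here; the entire load has been carried by the single-phase dynamics analysis leading to \Cref{i1end} (which itself combines \Cref{pi2a,pi2b} to control $\alpha_{1}^{(t)}$ and $\beta_{1,n}^{(t)}$, together with the attention-score estimates of \Cref{lemit1,lemit2}). The only minor bookkeeping task is checking that each of the ``small'' error terms arising along the way — namely $3\exp(-\ci^2 N/(25K^2))$ from the probability of ${\esi}^c$ and $\Theta(e^{-\poly(K)})$ for $\cL_{1}^{*}$ and $\Loi_1$ — is negligible compared to $\epsilon \gg \exp(-\polylog(K))$, which follows from the assumption $\polylog(K) \gg \log(1/\epsilon)$ and $N \geq \poly(K)$.
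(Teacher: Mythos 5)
Your proposal is correct and follows the same route as the paper's own proof: set $T_1=T^{\epsilon}_{1,*}+1$, apply \Cref{i1end} for $\tilde{\cL}_1$ and the conditional attention bound, then chain \Cref{app:lem:opti2} and \Cref{app:lem:opti1} to pass to $\cL_1-\cL_1^*$, with \Cref{app:lem:prob-im} supplying the stated probability. The only difference is that you spell out the probabilistic step for the second claim, which the paper leaves implicit.
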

\begin{proof}
  The first statement is obtained by letting $T_1=T^{\epsilon}_{1,*}+1$, and  combining \Cref{i1end}, \Cref{app:lem:opti1} and   \Cref{app:lem:opti2}, which lead to
  \begin{align*}
     \cL_1(\theta^{(T_1)})-\cL_{1}^{*}\leq \cL_1(\theta^{(T_1)})-\Loi_1\leq \tilde{\cL}_{1}(\theta^{(T_1)})+3 \exp \left(-\frac{\ci^2 N}{25 K^2}\right)<\epsilon.
  \end{align*}
  The second statement directly follows from \Cref{i1end}.
\end{proof}
%Given stepsize $\eta>0$ and any error $\epsilon >0$, consider the training dynamics of the 1-layer transformer with SA defined in eq.(3) over the population loss $L$ in \cref{eq:obj} via gradient descend in eq. (3). If we take the initialization $Q^{(0)}$ and $v^{(0)}$ in \Cref{para: init}, then the population loss defined in \cref{eq:obj} converges to a global minimum. Specially, after at most $T^*=O(\frac{\log(K)K^2}{\eta}+\frac{K \log\left(K\epsilon^{-\frac{1}{2}}\right)}{\epsilon\eta})$ iterations,    
    % \begin{enumerate}[label={\arabic*.}]
    % \item With high probability, the 1-layer transformer almost attends to the features same as the query: $\Attn_k=\Omega(1)$ and $(1-\Attn_k)^2 \leq \epsilon$ if the query $x_{\text {query }}=v_k$.
    % \item The loss convergence to a global minimum: GD can find a point $Q^{(T^*)}$ achieving that $L(\theta^{(T^*)})-L^* \leq \epsilon$.
    % \end{enumerate}

\end{document}